\documentclass{article}

\usepackage[preprint]{neurips_2026}

\usepackage[utf8]{inputenc} 
\usepackage[T1]{fontenc}    
\usepackage{hyperref}       
\usepackage{url}            
\usepackage{booktabs}       
\usepackage{amsfonts}       
\usepackage{nicefrac}       
\usepackage{microtype}      
\usepackage{xcolor}         

\usepackage{microtype}
\usepackage{graphicx}
\usepackage{caption}
\usepackage{booktabs}
\usepackage{algorithm}
\usepackage{algorithmic}
\usepackage{natbib}
\setcitestyle{numbers,square,comma,sort&compress}
\usepackage{hyperref}

\usepackage{amsmath}
\usepackage{amssymb}
\usepackage{mathtools}
\usepackage{amsthm}
\usepackage{mathrsfs}
\usepackage{subcaption}

\newcommand{\R}{\mathbb{R}}

\newcommand{\N}{\mathbb{N}}

\newcommand{\E}{\mathbb{E}}

\newcommand{\eps}{\varepsilon}
\newcommand{\Prob}{\mathbb{P}}

\usepackage[capitalize,noabbrev]{cleveref}

\theoremstyle{plain}
\newtheorem{theorem}{Theorem}[section]
\newtheorem{proposition}[theorem]{Proposition}
\newtheorem{lemma}[theorem]{Lemma}
\newtheorem{corollary}[theorem]{Corollary}
\theoremstyle{definition}
\newtheorem{definition}[theorem]{Definition}
\newtheorem{assumption}[theorem]{Assumption}
\theoremstyle{remark}

\title{Optimal Initialization in Depth:\\ Lyapunov Initialization and Limit Theorems for Deep Leaky ReLU Networks}

\author{%
  Constantin Kogler\textsuperscript{1}\thanks{Correspondence: \texttt{kogler@ias.edu}}
  \quad Tassilo Schwarz\textsuperscript{2,3}
  \quad Samuel Kittle\textsuperscript{4} \\ \\
  \textsuperscript{1} School of Mathematics, Institute for Advanced Study
  \\
  \textsuperscript{2} Mathematical Institute, University of Oxford \\
  \textsuperscript{3} Max Planck Institute for Multidisciplinary Sciences \\
  \textsuperscript{4} Deparment of Mathematics, University College London
}

\begin{document}

\maketitle

\begin{abstract}
  Effective initialization in deep networks requires an understanding of random neural networks. In this work, a rigorous probabilistic analysis of deep bias-free random Leaky ReLU networks is provided. We prove a Law of Large Numbers and a Central Limit Theorem for the logarithm of the norm of network activations, establishing that, as the number of layers increases, their growth is governed by a parameter called the Lyapunov exponent. This parameter characterizes a sharp phase transition between vanishing and exploding activations, and we calculate the Lyapunov exponent explicitly for Gaussian or orthogonal weight matrices. Our results reveal that standard methods, such as He initialization or orthogonal initialization, do not guarantee activation stability for deep networks of low width. Based on these theoretical insights, we propose a novel initialization method, referred to as Lyapunov initialization, which sets the Lyapunov exponent to zero and thereby ensures that the neural network is as stable as possible, leading empirically to improved learning.
\end{abstract}

\section{Introduction}

Training a neural network consists of iteratively optimizing its weights until a local minimum 
is reached. To ensure convergence, beyond choosing a suitable optimization algorithm, one must therefore also determine where the optimization should begin. In other words, prior to training, an initial set of weights must be selected, which is referred to as the \textit{initialization} of the neural network. 

Setting all weights to the same constant value at initialization is undesirable because, in a fully connected MLP, this choice results in identical gradients for all parameters, which prevents the network from learning.
Thus, random initialization of the weights was already suggested almost 30 years ago \cite{lecunEfficientBackProp1998} and is now common in practice.

When only the biases of the neural network are initialized as $\equiv 0$, individual neurons still behave differently, allowing effective learning. For this reason, it is standard practice to set all the biases $\equiv 0$ at initialization. The most common initialization methods  \cite{glorotUnderstandingDifficultyTraining2010,heDelvingDeepRectifiers2015} therefore only randomly initialize the weight matrices while keeping the biases $\equiv 0$.
In this paper, we establish novel theoretical results for random bias-free Leaky ReLU networks,  leading to better behaved initializations, especially in low dimensions and when the depth of the neural network is larger than its width.

To introduce notation, suppose we have a bias-free neural network of fixed width $d \in \N_{\geq 1}$, where a depth-$\ell$ layer is defined iteratively for $\ell \geq 1$ as 
\begin{equation}\label{Def:Xell}
	X_{\ell} = \phi(W_\ell X_{\ell-1})   \quad\quad \text{ and } \quad\quad X_{0} = x_0,
\end{equation}
where $x_0 \in \R^d$ is the input, $W_{\ell} \in M_d(\R)$ is a $d\times d$ weight matrix and $\phi: \R \to \R$ is an activation 
function applied component-wise.  

This paper establishes unique and remarkable properties of Leaky ReLU networks that do not extend to other activation functions, making a strong case for their use in specific applications. We proceed with presenting our theoretical results and then outline the resulting novel initialization methods.

\begin{figure}
\begin{minipage}[c]{0.34\textwidth}
    \vspace{0pt}
    \centering
    \includegraphics[width=\linewidth]{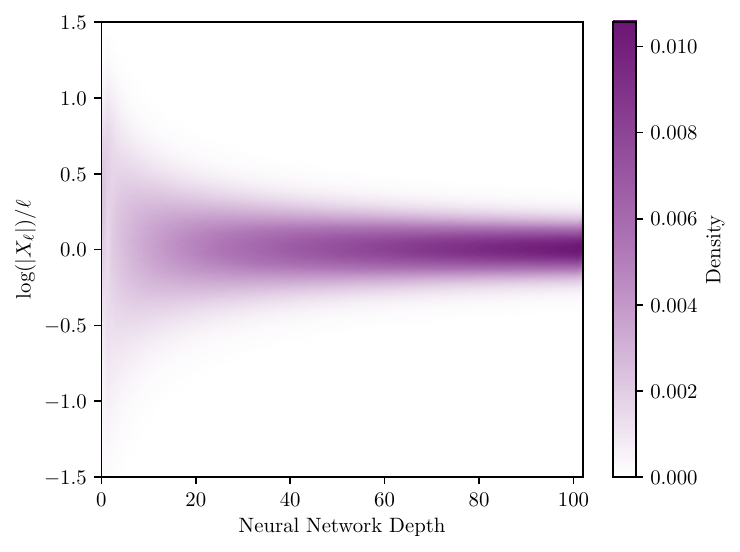}
\end{minipage}
\hfill
\begin{minipage}[c]{0.62\textwidth}
     \vspace{0pt}
    \centering
    
     \begin{subfigure}[t]{0.49\linewidth}
    \centering
    \includegraphics[width=\linewidth]{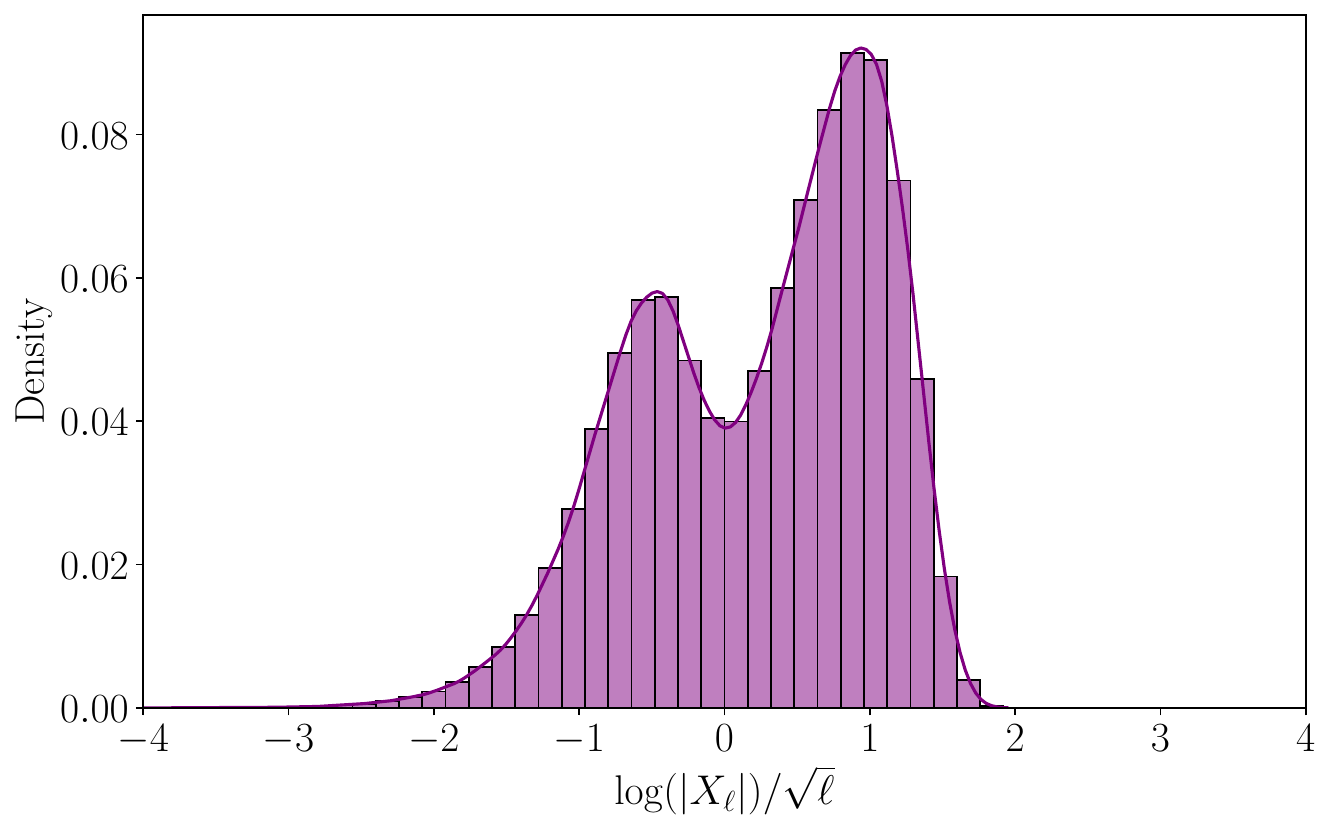}
    \caption{1 Layer}
  \end{subfigure}
  \begin{subfigure}[t]{0.49\linewidth}
    \centering
    \includegraphics[width=\linewidth]{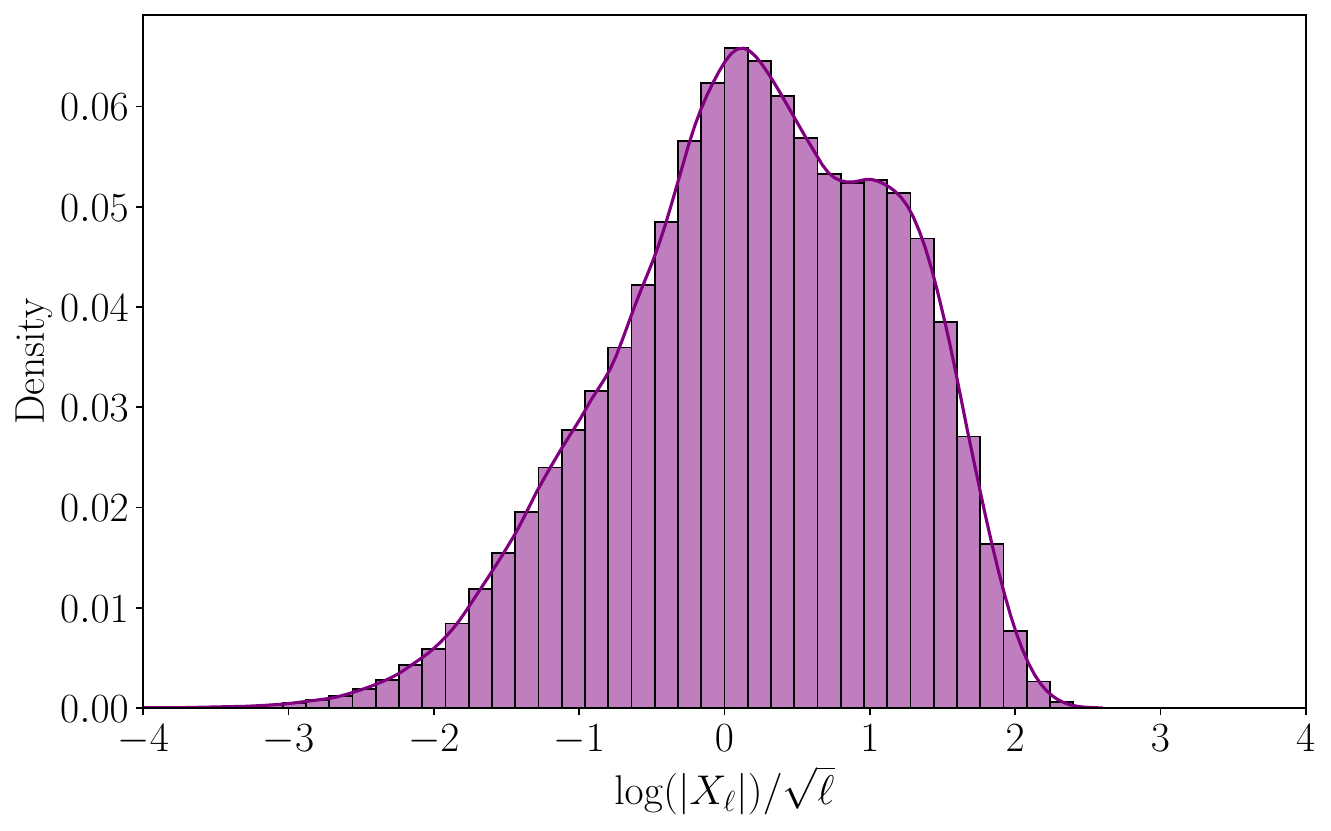}
    \caption{2 Layers}
  \end{subfigure}
  
  \begin{subfigure}[t]{0.49\linewidth}
    \centering
    \includegraphics[width=\linewidth]{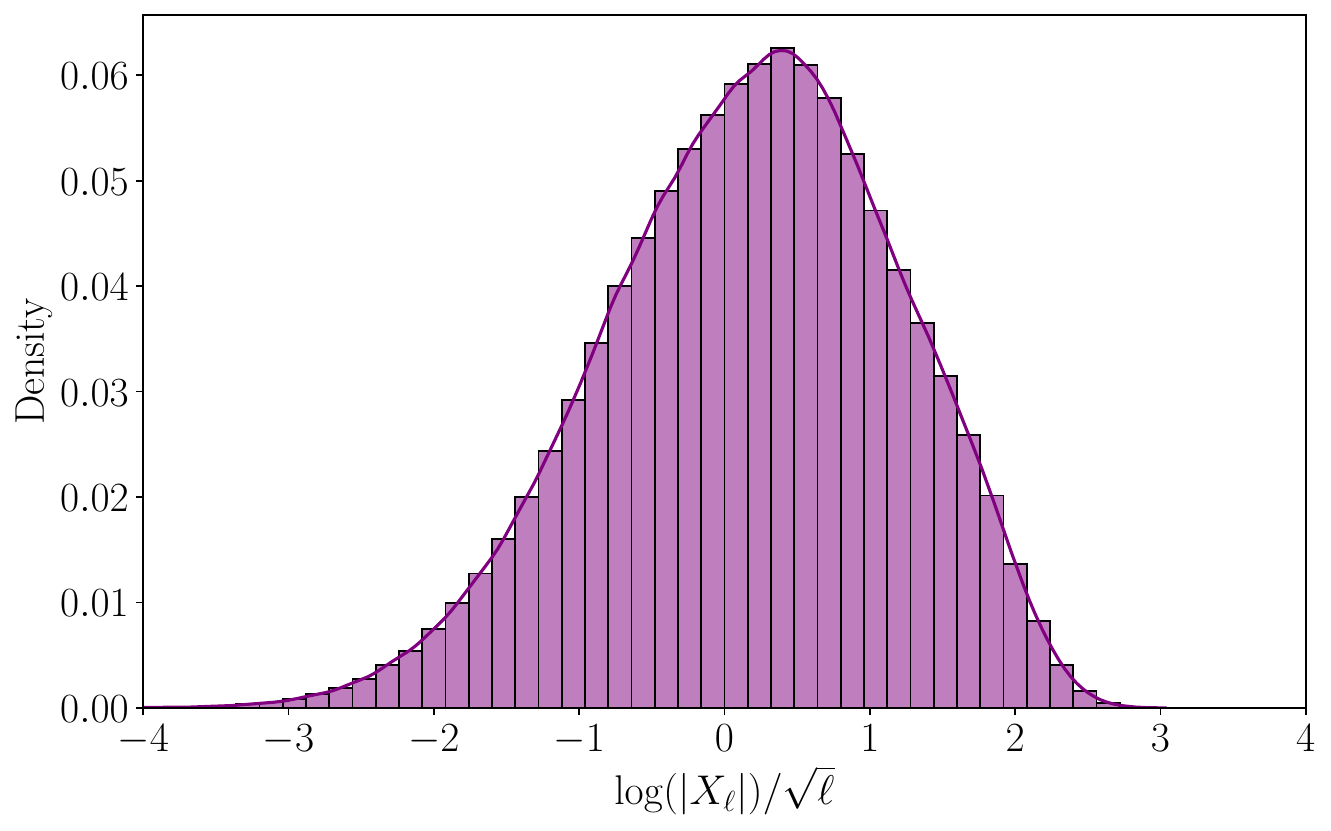}
    \caption{3 Layers}
\end{subfigure}
  \begin{subfigure}[t]{0.49\linewidth}
    \centering
    \includegraphics[width=\linewidth]{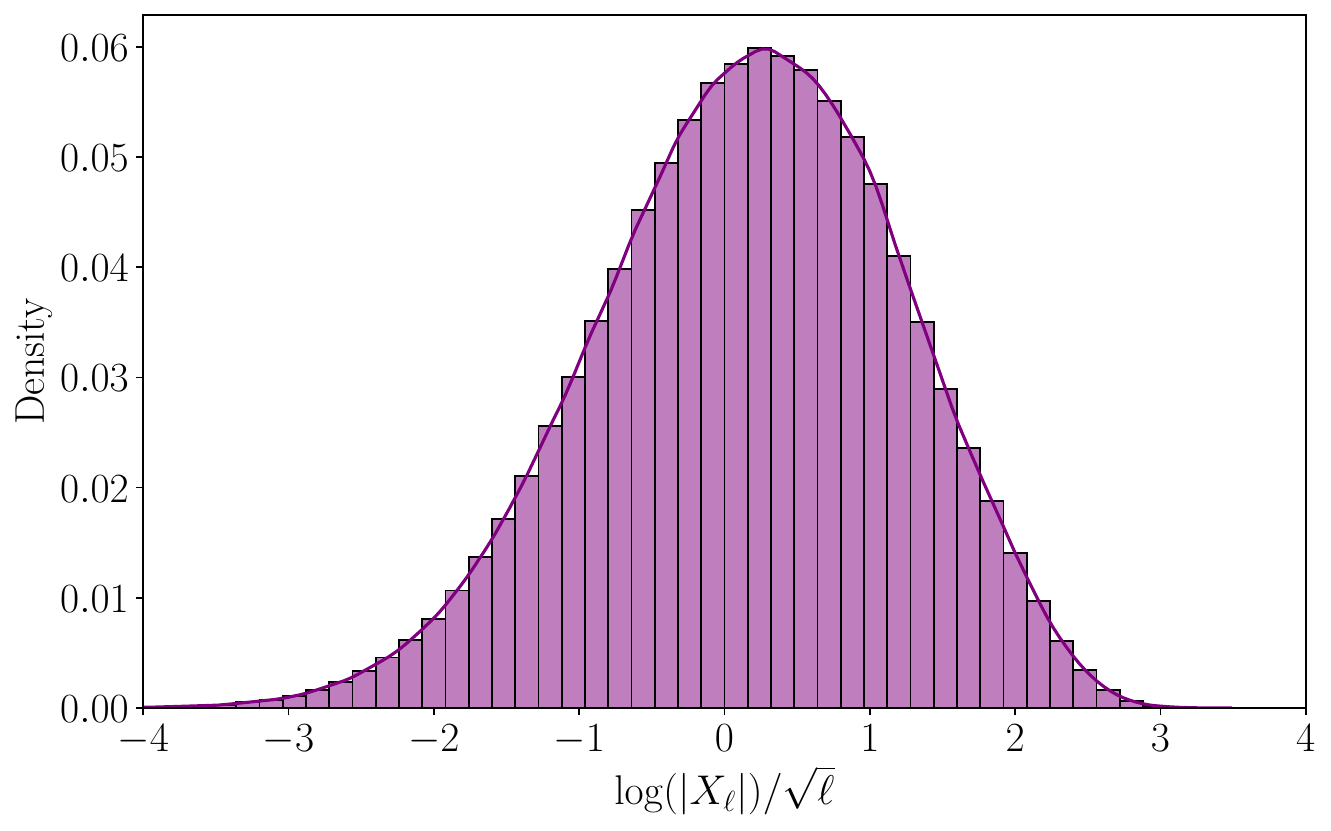}
    \caption{4 Layers}
  \end{subfigure}
\end{minipage}
\caption{Visualization of the Law of Large Numbers (Theorem~\ref{ExplicitLeakyReLULLN}) and Central Limit Theorem (Theorem~\ref{ExplicitLeakyReLUCLT}) for $\log |X_{\ell}|$ in dimension $d = 2$ with a Leaky ReLU activation function $\phi(x) = \max(x,\alpha x)$ and $\alpha = 0.1$. We initialized the matrix coefficients with independent Gaussian coefficients chosen such that the Lyapunov exponent is zero and therefore $\frac{1}{\ell}\log |X_{\ell}|$ converges to zero almost surely (left picture) and $\frac{1}{\sqrt{\ell}}\log |X_{\ell}|$ to a Gaussian with mean zero (4 right pictures), which is already visible after 4 layers.}
\label{fig:LLNFigure}
\end{figure}

\paragraph{Law of Large Numbers.} Throughout the rest of this paper, we consider a Leaky ReLU activation function defined for a fixed $\alpha \in \R_{\neq 0}$ as
\begin{equation}\label{LeakyReLUDef}
    \phi(x) = \max(x,\alpha x)
\end{equation}
for $x\in \R$. A main insight of this article is that while it is difficult to study the distribution of $|X_{\ell}|$ directly, we can instead study $\log |X_{\ell}|$, where $|\circ|$ is the Euclidean norm.

Indeed, we will show in Theorem~\ref{ExplicitLeakyReLULLN} the following analogue of the Law of Large Numbers: If $\mu$ is a distribution on the space of matrices satisfying suitable assumptions and the weight matrices $W_{\ell}$ are sampled independently from $\mu$, then there exists a number $\lambda_{\mu,\phi} \in \R$ such that almost surely 
\begin{equation}\label{EquationLLN}
    \frac{1}{\ell}\log |X_{\ell}|   \overset{\ell \to \infty}{\longrightarrow} \lambda_{\mu,\phi}.
\end{equation}
In the case where $\phi$ is the identity, $X_{\ell}$ is simply a product of independent random matrices applied to a vector. In the latter case, \eqref{EquationLLN} is well-known and was first established under certain assumptions by Furstenberg-Kesten \cite{FurstenbergKesten} in the 1960s. The novelty in our work lies in the extension to random Leaky ReLU neural networks, which is, to the authors' knowledge, a new non-linear Law of Large Numbers and is therefore also of mathematical significance. When $\phi$ is the identity, the limit in \eqref{EquationLLN} is called the Lyapunov exponent and therefore we call $\lambda_{\mu,\phi}$ the \textbf{Lyapunov exponent} of the neural network.

The Law of Large Numbers \eqref{EquationLLN} states that as a first approximation $|X_{\ell}| \approx  e^{\ell \cdot \lambda_{\mu,\phi}}.$ This implies that when $\lambda_{\mu,\phi} > 0$, then $|X_{\ell}|$ explodes as $\ell$ increases, while $|X_{\ell}|$ vanishes when $\lambda_{\mu,\phi} < 0$. Therefore, the Lyapunov exponent governs the exponential growth rate of the activations.

We will show that in low dimensions all the previous initialization methods have a negative Lyapunov exponent and therefore lead to vanishing activations and inefficient learning. For example, when $d = 2$ and $\alpha = 0.1$, He initialization (see Section~\ref{section:relatedwork}) has a Lyapunov exponent of $\lambda_{\mu,\phi} \approx -0.82$.  As the dimension increases, however, He initialization leads to a zero Lyapunov exponent. Thus, our work can also be viewed as a theoretical justification of He initialization in high dimensions.

\paragraph{Central Limit Theorem.} The Law of Large Numbers gives a rough estimate on the size of the values of $\log |X_{\ell}|$. We also establish a version of the Central Limit Theorem (Theorem~\ref{ExplicitLeakyReLUCLT}): If $\mu$ is again a matrix distribution satisfying suitable assumptions, the matrix weights are sampled independently from $\mu$ and $\lambda_{\mu,\phi}$ is the Lyapunov exponent, then there exists $\gamma_{\mu,\phi} \geq 0$ such that 
\begin{equation}\label{EquationCLT}
\frac{\log |X_{\ell}| - \ell \lambda_{\mu, \phi}}{\sqrt{\ell}} \overset{\ell \to \infty}{\longrightarrow} \mathcal{N}(0,\gamma_{\mu,\phi}^2),
\end{equation}
where the convergence holds in distribution.

The Central Limit Theorem \eqref{EquationCLT} provides a finer description of $\log |X_{\ell}|$ than the Law of Large Numbers, stating that $\log |X_{\ell}|$ roughly behaves like a Gaussian with mean $\ell \lambda_{\mu,\phi}$ and standard deviation $\sqrt{\ell} \gamma_{\mu,\phi}$.

\paragraph{Lyapunov Initialization.} Building on these theoretical insights, we propose a novel initialization strategy, referred to as \textit{Lyapunov initialization}. By choosing the weight distribution such that the Lyapunov exponent is exactly zero, we ensure that activations are as stable as possible for deep networks. We provide explicit formulas for calculating the optimal scaling factors for both Gaussian and uniformly sampled orthogonal weight matrices. 

According to the Central Limit Theorem, setting the Lyapunov exponent to zero prevents typical exponential explosion or vanishing, but it nonetheless establishes stochastic variation. In particular, the logarithm of the activation norm, $\log |X_{\ell}|$, behaves as a Gaussian variable with a standard deviation of $O(\sqrt{\ell})$, causing the activation magnitude $|X_{\ell}|$ to fluctuate between $e^{-O(\sqrt{\ell})}$ and $e^{O(\sqrt{\ell})}$. 

To counter these depth-dependent fluctuations, we propose a refinement of our method termed \textit{sampled Lyapunov initialization}: We suggest to generate $O(\sqrt{\ell})$ candidate initializations prior to the training phase. By doing so, we ensure with positive probability that at least one candidate yields an output norm $|X_{\ell}|$ close to one, allowing us to select the most stable configuration for training.

\paragraph{Experiments.} We provide experimental evidence that our new Lyapunov methods outperform previous standard initialization strategies. From our viewpoint, the reason why the previous methods are not optimal, is that their Lyapunov exponent is negative and far from zero. Therefore, not even small changes in their coefficients lead to a regime where the activations have roughly the right norm, preventing efficient learning.

\paragraph{Outline and Contributions.}

Our main contributions are:

\begin{itemize}
    \item \textbf{Limit Theorems for Deep Leaky ReLU Networks:} We prove a Law of Large Numbers and a Central Limit Theorem for the logarithm of the activation norms, establishing that as depth increases, the activation growth is governed by a parameter called the Lyapunov exponent.   (Section~\ref{section:LimitTheorems})
    
    \item \textbf{Analytical Formulas:} We derive explicit, closed-form formulas for the Lyapunov exponent for both Gaussian and orthogonal weight matrices. (Section~\ref{section:Initalization})
    
    \item \textbf{Lyapunov Initialization:} We propose a novel strategy that explicitly sets the Lyapunov exponent to zero to maximize stability. We further introduce Sampled Lyapunov Initialization to mitigate depth-dependent stochastic fluctuations predicted by our Central Limit Theorem. (Section~\ref{section:Initalization})

    \item \textbf{Theoretical Analysis of Standard Initialization:} We demonstrate that widely used methods, such as He and standard orthogonal initialization, yield negative Lyapunov exponents in low-width regimes, precisely quantifying the vanishing of activations in deep networks. On the other hand, in infinite width, He initialization and standard orthogonal initialization lead to a zero Lyapunov exponent. (Section~\ref{section:Initalization})
\end{itemize}

We discuss our experimental evidence in Section~\ref{section:Experiments} and conclude the paper with Section~\ref{section:Conclusion}.

\section{Related Work}\label{section:relatedwork}

\paragraph{Glorot and He initialization.}

In their seminal work, Glorot and Bengio \cite{glorotUnderstandingDifficultyTraining2010} proposed choosing all weights i.i.d.\ from distributions with mean zero and a variance of $\frac{1}{d}$, such as $\mathcal{N}(0,\frac{1}{d})$ or $\text{Unif}[-\sqrt{3/d},\sqrt{3/d}]$. Known as \textit{Glorot (or Xavier) initialization}, this method aims to keep the mean and variance of activations constant across layers. However, this approach relies on the simplifying assumption that the activation function is linear and therefore only works well in practice for activation functions that are linear near zero (i.e., $\phi(x) \approx x$), which holds for functions like $\tanh$ but fails for ReLU and Leaky ReLU.

To address this, \textit{He initialization} \cite{heDelvingDeepRectifiers2015} was developed for ReLU-based networks. For Leaky ReLU with a slope parameter $\alpha$, it samples weights with a variance of $2/(d(1+\alpha^{2}))$. While this choice successfully preserves the second moment of the activations, that is ensuring $\mathbb{E}[|X_{\ell}|^{2}]\!=\!\mathbb{E}[|X_{\ell'}|^{2}]$ for all $\ell,\! \ell' \!\geq\! 0$, it does not preserve the mean or variance of $|X_{\ell}|$. In this paper, we demonstrate that in low dimensions, He initialization results in a negative Lyapunov exponent that is far from zero, establishing quantitatively how quickly the activations vanish rather than stabilize. On the other hand, our results prove that He initialization has a Lyapunov exponent close to zero in high dimensions.

\paragraph{Orthogonal Initialization.}

\cite{SaxeOrthogonal} proposed initializing weights with random orthogonal matrices rather than independent Gaussian entries. While effective for deep linear networks, applying them to non-linear networks requires a scaling factor (see also \cite{Mishkinetal}). For Leaky ReLU activations, the standard scaling factor is $\sqrt{2/(1 + \alpha^2)}$ as it satisfies $\mathbb{E}[|X_{\ell}|^{2}]\!=\!\mathbb{E}[|X_{\ell'}|^{2}]$ for all $\ell,\! \ell' \!\geq\! 0$, yet it still leads to a negative Lyapunov exponent. In this paper, we establish the optimal scaling factor for orthogonal initializations of infinite-depth Leaky ReLU networks.

In a complementary direction, using mean-field theory for the infinite-width case, \cite{Penningtonetal} established the optimal scaling rate for certain networks with orthogonal initialization. \cite{Xiaoetal} further extended these mean-field limit based ideas to CNNs, allowing the training of a 10,000-layer CNN.

\paragraph{Lyapunov exponents and random matrix products.}

From a theoretical perspective, our work is inspired by classical results on random matrix products \cite{FurstenbergKesten,LePage}. Lyapunov exponents arising from such products play a central role in probability theory and dynamical systems. Our work differs from these classical results by analyzing compositions of random matrices and nonlinear activation functions. This difference fundamentally changes the dynamics, meaning that existing linear results do not directly apply and the proofs of our results require new ideas.

\paragraph{Lyapunov Exponents and Edge of Chaos in Deep Learning.}

While the application of Lyapunov exponents in our specific setting appears to be novel, the concept has been explored in other areas of deep learning. Notably, Recurrent Neural Networks (RNNs) have been viewed as a dynamical system and their Lyapunov exponents were empirically studied in \cite{Vogtetal} and \cite{Engelkenetal}. 

The \textit{edge of chaos} refers to a critical phase transition of neural networks, where the parameters are calibrated to a regime that maximizes stability and learning performance. The first papers exploring this phenomenon were \cite{Bertschineretal} and \cite{Boedeckeretal}. More recently, edge of chaos has been analyzed in the context of infinite-width networks using mean-field theory \cite{Pooleetal,Schoenholetal}. In contrast, the Lyapunov exponent in this paper quantifies the edge of chaos for the regime of infinite-depth neural networks.

We also mention \cite{Hanin2018}, which studies exploding and vanishing gradients in expectation and \cite{HaninNica2018} addressing neural tangent kernels. Moreover, \cite{Hayou19} analyzes the impact of the activation function on stability and \cite{Hayou21} discusses ResNets and also neural tangent kernels.
In \cite{Hayou2022}, the infinite-depth limit distribution of certain special neural networks is studied.

\section{Limit Theorems for Leaky ReLU networks}\label{section:LimitTheorems}

\paragraph{Probabilistic Setup and Notation.} For our results to be valid, we require that the weight matrices $W_{\ell}$ are independent and identically distributed. With this assumption, the stochastic process $(X_{\ell})_{\ell \geq 1}$ is a Markov chain 
in the sense that the transition probabilities do not depend on the layer. Let $\mu$ be a probability measure on $M_d(\R)$, the space of real $d\times d$ matrices. We work with the probability space
\begin{equation}\label{ProbSpace}
    (\Omega, \mathscr{F}, \mathbb{P}_{\mu}) = (M_d(\R)^{\mathbb{N}}, \mathscr{B}(M_d(\R))^{\mathbb{N}}, \mu^{\mathbb{N}}),
\end{equation}
that is, the space of $M_d(\R)$-sequences endowed with the natural product $\sigma$-algebra and the product measure coming from $\mu$. So every $\omega \in \Omega$ is a sequence of matrices $\omega = (W_1, W_2, W_3, \ldots)$ and for a given $\omega \in \Omega$ we denote by $W_n(\omega)$ the projection of $\omega$ to the n-th coordinate. Therefore, \eqref{Def:Xell} formally means that for $\omega \in \Omega$ we have for $\ell \geq 1$ and a fixed $x_0 \in \R^d$,
\begin{equation}\label{ProbNNDef}
    X_{\ell}(\omega) = \phi(W_\ell (\omega) X_{\ell -1}(\omega))   \quad \text{ and } \quad X_{0} = x_0.
\end{equation} 
Although our methods require $W_{\ell}$ to be independent and identically distributed, the choice of the underlying distribution is flexible. In particular, our theorems hold provided $\mu$ satisfies either of the following assumptions.

\begin{assumption}\label{MeasureAssumptions}
Let $\mu$ be a probability measure on the space of $d \times d$ real matrices $M_d(\mathbb{R})$ with independent entries. For each $1 \leq i, j \leq d$, assume the $(i, j)$-coordinate distribution has a density $p_{ij}$ satisfying:  

\begin{enumerate}
    \item[(1)] \textbf{Boundedness:} Each density is bounded, i.e., $\sup_{u \in \mathbb{R}} |p_{ij}(u)| < \infty.$
    \item[(2)] \textbf{Finite Second Moment:} Each density has a finite second moment:
    $\int_{-\infty}^{\infty} u^2 p_{ij}(u) \, du < \infty. $
    \item[(3)] \textbf{Local Positivity:} Each density is uniformly positive in a neighborhood of $0$. That is, there exists $\epsilon > 0$ such that for each $1\leq i,j \leq d$ it holds that
    $ \inf_{|u| \leq \epsilon} p_{ij}(u) > 0$. 
\end{enumerate}
\end{assumption}
 
For example, when $p_{ij}$ is $\mathcal{N}(m_{ij},\sigma_{ij})$ with $m_{ij} \in \R$ and $\sigma_{ij} > 0$ or $\mathrm{Unif}([-a_{ij}, b_{ij}])$ for $a_{ij}, b_{ij} > 0$, then Assumption 
\ref{MeasureAssumptions} is satisfied. However, when $p_{ij} \sim \mathrm{Unif}[0,a]$ for some $a > 0$, then Assumption 
\ref{MeasureAssumptions} (3) does not hold, and we will actually show that our main theorems fail in this case. 

Second, we want to deal with scaled orthogonal matrices, that is, for a parameter $\eta > 0$, we denote
\begin{equation}\label{DefetaOrthogonalGroup}
    \eta\cdot\mathrm{O}(d) = \{ \eta \cdot Q \,:\,  Q \in \mathrm{O}(d)  \} \quad\text{ for } \quad \mathrm{O}(d) = \{ Q \in M_d(\R) \,:\,  Q^TQ = QQ^T = \mathrm{Id}_d  \}.
\end{equation} 
Orthogonal groups have a natural volume probability measure denoted $m_{\mathrm{O}(d)}$ and called the Haar probability measure as defined and discussed in Section~\ref{section:Haar}. We denote for $\eta > 0$ by 
\begin{equation}\label{HaaretaOdef}
    m_{\eta\cdot \mathrm{O}(d)} = \eta_{*}m_{\mathrm{O}(d)}
\end{equation}
the pushforward of $m_{\mathrm{O}(d)}$ under the map $O \mapsto \eta O$ for $O \in \mathrm{O}(d)$. We also remark that an absolutely continuous measure with respect to $m_{\eta \cdot \mathrm{O}(d)}$ (see Definition~\ref{TopGpACDefinition}) is by Theorem~\ref{RadonNikodym} one with a density with respect to the measure $m_{\eta\cdot \mathrm{O}(d)}$.

\begin{assumption}\label{SecondMeasureAssumptions}
Let $\eta > 0$ and let $\mu$ be a probability measure that is absolutely continuous with respect to $m_{\eta \cdot \mathrm{O}(d)}$ with a uniformly positive density $p: \eta \cdot \mathrm{O}(d) \to \R$, i.e. $\inf_{\substack{ Q \in \eta\cdot \mathrm{O}(d)}}  p(Q) > 0.$
\end{assumption}

\paragraph{Law of Large Numbers and Central Limit Theorem.} We are now ready to state our version of the law of large numbers.

\begin{theorem}(Law of large numbers for $|X_{\ell}|$)\label{ExplicitLeakyReLULLN}
     Let $\mu$ be a probability measure on $M_d(\R)$ satisfying either Assumption~\ref{MeasureAssumptions} or Assumption~\ref{SecondMeasureAssumptions}. Let $(\Omega, \mathscr{F}, \mathbb{P}_{\mu})$ be as in \eqref{ProbSpace}, $(X_{\ell})_{\ell \geq 0}$ be as in \eqref{ProbNNDef} and let $\phi(x) = \max(x,\alpha x)$ be a Leaky ReLU activation function with $\alpha \in \mathbb{R}_{\neq 0}$. 
     
     Then there exists a real number $\lambda_{\mu,\phi} \in \R$ depending on $\mu$ and $\phi$ such that for each fixed $x_0 \in \R^d \backslash \{ 0\}$ it holds that
     \begin{equation}\label{EqNormLLN}
         \lim_{\ell \to \infty}\frac{1}{\ell}\log |X_{\ell}(\omega)|  =\lambda_{\mu,\phi}
     \end{equation} for $\mathbb{P}_{\mu}$-almost all $\omega \in \Omega$.  The convergence is moreover in $L^1$, uniformly for $x_0$ of constant modulus, i.e.,  $$\lim_{\ell \to \infty}\sup_{x_0 \in \R^d \backslash \{ 0\}} \mathbb{E}_{\omega \sim \mathbb{P}_{\mu}} \left[ \bigg|\frac{1}{\ell}\log\frac{|X_{\ell}(\omega)|}{|x_0|} - \lambda_{\mu,\phi} \bigg| \right] = 0.$$
\end{theorem}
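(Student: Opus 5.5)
The key structural fact is positive homogeneity: $\phi(tx) = t\phi(x)$ for $t > 0$, so the network map $F_W(x) = \phi(Wx)$ commutes with positive scalars. I would therefore pass to the projectivized (spherical) Markov chain $\bar X_\ell = X_\ell/|X_\ell|$ on the unit sphere $S^{d-1}$. The norm then telescopes as a cocycle:
\begin{equation*}
\log\frac{|X_\ell|}{|x_0|} = \sum_{k=1}^{\ell} \sigma(W_k, \bar X_{k-1}), \qquad \sigma(W,u) = \log|\phi(Wu)|.
\end{equation*}
This works because $\phi(Wu)\neq 0$ whenever $Wu \neq 0$ (as $\alpha\neq 0$), which holds almost surely under either assumption. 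This is exactly the Furstenberg--Kesten/Furstenberg cocycle setup, with $\mathrm{GL}_d$ acting on projective space replaced by the nonlinear maps $u \mapsto F_W(u)/|F_W(u)|$ on the sphere.

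The first step is to show that the chain $\bar X_\ell$ has a unique stationary measure $\nu$ on $S^{d-1}$, and in fact satisfies a Doeblin condition uniformly in the starting point. Under Assumption~\ref{MeasureAssumptions}, fix any $u \in S^{d-1}$. Then $Wu$ has a density on $\R^d$ that is bounded below near $0$, since each coordinate $(Wu)_i = \sum_j W_{ij}u_j$ is a convolution of densities that are uniformly positive on $[-\epsilon,\epsilon]$. Its direction is therefore spread over all of $S^{d-1}$ with a density bounded below by some $c>0$ uniformly in $u$. Applying $\phi$, which is a piecewise-linear bijection of $\R^d$ for $\alpha > 0$ and still maps onto orthants with controlled Jacobian for $\alpha<0$, preserves a lower bound on a fixed open set. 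This gives a minorization $P(u,\cdot) \geq c\,\rho$ with $\rho$ a fixed probability measure. Under Assumption~\ref{SecondMeasureAssumptions}, Haar measure pushes any $u$ to the uniform measure on the sphere, so $P(u,\cdot) \geq (\inf p)\, F_*(\mathrm{unif})$ directly. Doeblin's condition then yields a unique stationary $\nu$ and geometric ergodicity in total variation, uniformly in $u$.

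The second step is integrability: $\sup_u \E|\sigma(W,u)| < \infty$. The upper tail follows from $|\phi(Wu)| \leq \max(1,|\alpha|)\|W\|$ and the finite second moment (or boundedness of $\eta\cdot\mathrm{O}(d)$). The lower tail needs $\E\log^- |Wu|$ bounded uniformly in $u$; this follows from boundedness of the densities, since $\Prob(|(Wu)_i| < t) \lesssim t$ for some coordinate $i$ with $|u_i| \geq d^{-1/2}$. In the orthogonal case $|Wu| = \eta$, so $|\phi(Wu)| \geq \eta\min(1,|\alpha|)$ trivially. In fact I would show a uniform $L^2$ bound on $\sigma$ as well.

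I would then define $\lambda_{\mu,\phi} = \int\!\int \sigma(W,u)\,d\mu(W)\,d\nu(u)$ and consider the Markov chain $(W_k, \bar X_{k-1})$. Its unique invariant measure is $\mu\otimes\nu$, so Breiman's ergodic theorem for uniquely ergodic Feller chains, or simply the ergodic theorem started from $\nu$ followed by coupling via Doeblin to handle an arbitrary starting point $x_0$, gives the almost sure convergence~\eqref{EqNormLLN}. For the uniform $L^1$ statement, I would write $\frac1\ell\sum_k(\sigma_k - \E[\sigma_k \mid \mathscr F_{k-1}])$ as a martingale average. Its $L^2$ norm is $O(\ell^{-1/2})$ uniformly in $x_0$ by the uniform second-moment bound. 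The remaining term $\frac1\ell\sum_k \bar\sigma(\bar X_{k-1})$, with $\bar\sigma(u) = \E\sigma(W,u)$ a bounded function, converges to $\lambda_{\mu,\phi}$ uniformly by geometric ergodicity: $|P^k\bar\sigma(u) - \nu(\bar\sigma)| \leq C\theta^k$. I would also handle its fluctuation via the Poisson equation $g - Pg = \bar\sigma - \lambda_{\mu,\phi}$, which has a bounded solution.

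I expect the main obstacle to be the uniform minorization under Assumption~\ref{MeasureAssumptions}, particularly for $\alpha<0$, where $\phi$ is not injective and could concentrate mass, and for $u$ near coordinate axes. There I would try to bound the density of $Wu$ near the origin directly, using only the entries of one column with $|u_j|\geq d^{-1/2}$ while conditioning on the rest. I would then push this bound through $\phi$ restricted to a single orthant, where it is linear and invertible.
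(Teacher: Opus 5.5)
Your proposal is correct, but it takes a different route from the paper's in three places.

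\textbf{The limit theorem itself.} The paper does not prove the Law of Large Numbers directly. It cites the abstract cocycle theorem of Benoist--Quint (Theorem~3.9), once two inputs are known: uniqueness of the spherical stationary measure and $\int \sup_{\bar x}|c_{\mathrm{hom}}(\phi_W,\bar x)|\,d\mu(W)<\infty$. You instead argue directly, via the martingale decomposition plus the Poisson equation $g-Pg=\bar\sigma-\lambda_{\mu,\phi}$ with $g=\sum_k P^k(\bar\sigma-\lambda_{\mu,\phi})$. The paper only remarks that this route is possible, and uses the same series construction in Lemma~\ref{LemmaGeneralCLT} for the CLT.

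\textbf{The Doeblin minorization under Assumption~\ref{MeasureAssumptions}.} The paper avoids any density computation for $Wu$. It bounds $\mu\ge c_0\,m_{d^2}$ on the Frobenius ball $B_\eps$. It then notes that the pushforward of normalized Lebesgue measure on $B_\eps$ under $W\mapsto\overline{\phi(Wx)}$ does not depend on $x$, by right $\mathrm{O}(d)$-invariance of $\|\cdot\|_F$. Your route also works, including for $\alpha<0$, and gives an explicit minorizing measure, but it is more laborious. It runs as follows:
\begin{itemize}
\item in each row, isolate one entry from a column $j$ with $|u_j|\ge d^{-1/2}$;
\item use local positivity of the other entries to keep the residual sum small with probability bounded below;
\item use independence across rows;
\item restrict to the positive orthant, where $\phi$ is the identity.
\end{itemize}

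\textbf{Integrability.} The paper works with $\log N(W)=\log\max\{\|W\|,\|W^{-1}\|\}$, which needs the smallest-singular-value estimate of Lemma~\ref{lemma:singularity}. You only need $\sup_u\E[\sigma(W,u)^2]<\infty$, with the supremum outside the expectation. That follows from the bounded density of a single entry and never touches $\|W^{-1}\|$. One index slip: the relevant bound is on the density of $(Wu)_i$, which is at most $M/|u_j|$ for a column $j$ with $|u_j|\ge d^{-1/2}$, not a coordinate $i$ with $|u_i|\ge d^{-1/2}$.

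\textbf{What each route buys.} Your version is self-contained and needs weaker moment control. It gives an explicit $O(\ell^{-1/2})$ rate for the uniform $L^1$ convergence, and it directly yields the CLT via a martingale CLT. It also sidesteps the paper's use of a theorem stated for locally compact $G$ on the non-locally-compact semigroup $\mathscr{C}^{\times}_{\mathrm{hom}}(\R^d)$. For the same reason, you do not need the Breiman remark, which concerns compact state spaces. The paper's version is shorter. It also produces the stronger bound $\int\sup_{\bar x}|c_{\mathrm{hom}}|^2\,d\mu<\infty$, which it reuses to verify the hypotheses of the Benoist--Quint CLT.
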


We now formally define the \textbf{Lyapunov exponent} of the neural network \eqref{Def:Xell} as the number $\lambda_{\mu,\phi}$ from \eqref{EqNormLLN}. Next, we state a version of the central limit theorem in this setting.

\begin{theorem}
    (Central Limit Theorem for $|X_{\ell}|$)\label{ExplicitLeakyReLUCLT} 
     Let $\mu$ be a probability measure on $M_d(\R)$ satisfying either Assumption~\ref{MeasureAssumptions} or Assumption~\ref{SecondMeasureAssumptions}. Let $(\Omega, \mathscr{F}, \mathbb{P}_{\mu})$ be as in \eqref{ProbSpace}, $(X_{\ell})_{\ell\geq 0}$ be as in \eqref{ProbNNDef} and let $\phi(x) = \max(x,\alpha x)$ be a Leaky ReLU activation function with $\alpha \in \mathbb{R}_{\neq 0}$.

     Then there exists $\gamma_{\mu,\phi} \geq 0$ such that for any $x_0 \in \R^d\backslash \{ 0 \}$ it holds that as $\ell \to \infty$, $$\frac{\log |X_{\ell}| - \ell \lambda_{\mu, \phi}}{\sqrt{\ell}} \longrightarrow \mathcal{N}(0,\gamma_{\mu,\phi}^2),$$ where the convergence holds in distribution and $\lambda_{\mu, \phi} \in \R$ is from Theorem~\ref{ExplicitLeakyReLULLN}.
\end{theorem}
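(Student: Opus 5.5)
The idea is to reduce the statement to a central limit theorem for additive functionals of a Markov chain on the projective space (or sphere). Because $\phi$ is positively homogeneous, $\phi(tx)=t\phi(x)$ for $t>0$, the direction $\bar X_\ell = X_\ell/|X_\ell| \in S^{d-1}$ evolves as its own Markov chain:
\[
\bar X_\ell = \frac{\phi(W_\ell \bar X_{\ell-1})}{|\phi(W_\ell \bar X_{\ell-1})|}.
\]
The log-norm telescopes as
\[
\log|X_\ell| - \log|x_0| = \sum_{k=1}^{\ell} \sigma(W_k,\bar X_{k-1}), \qquad \sigma(W,x)=\log|\phi(Wx)|.
\]
Almost surely $Wx\neq 0$ under either assumption, since the law of $Wx$ is absolutely continuous, so the chain is well defined. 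The problem thus becomes a CLT for the cocycle sum $S_\ell=\sum_k \sigma(W_k,\bar X_{k-1})$, driven by the chain $(W_k,\bar X_{k-1})$ on $M_d(\R)\times S^{d-1}$.

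The first step is a uniform Doeblin (minorization) condition for the direction chain. Let $P$ be its transition kernel. I would show there exist $n_0$, $c>0$ and a probability measure $\nu$ on $S^{d-1}$ with $P^{n_0}(x,\cdot)\ge c\,\nu$ for every $x\in S^{d-1}$. Under Assumption~\ref{MeasureAssumptions}, local positivity of the densities near $0$ plus independence of the entries gives a density for $W\bar x$ bounded below on a small ball around any point of a neighbourhood of $0$. Composing with $\phi$, which is a piecewise linear homeomorphism of $\R^d$ since $\alpha\neq0$, and normalizing then gives a lower bound on an open set of directions. After a few steps this lower bound can be made uniform in $x$. Under Assumption~\ref{SecondMeasureAssumptions}, the density is bounded below by a multiple of Haar measure, and $Qx$ is uniform on the sphere for Haar-distributed $Q$; hence $P(x,\cdot)\ge c\,\phi_*(\text{uniform})$ normalized, which is a minorization with $n_0=1$ directly.

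Doeblin's condition makes the chain uniformly ergodic, with a unique stationary measure $\nu_\infty$ and geometric convergence $\|P^n(x,\cdot)-\nu_\infty\|_{TV}\le C\rho^n$ uniformly in $x$. I take as given (it is presumably what underlies Theorem~\ref{ExplicitLeakyReLULLN}) that
\[
\lambda_{\mu,\phi} = \int\!\!\int \sigma(W,x)\,d\mu(W)\,d\nu_\infty(x).
\]

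The second step is moment control: I need $\sup_x \E|\sigma(W,x)|^{2+\delta}<\infty$, or at least uniform second moments. The upper tail is easy, because
\[
\log|\phi(Wx)|\le \log(\max(1,|\alpha|)\,\|W\|),
\]
which is controlled by the finite second moments, or is trivial in the orthogonal case. The lower tail uses
\[
|\phi(Wx)|\ge \min(1,|\alpha|)\,|Wx| \quad\text{and}\quad |Wx|\ge |(Wx)_i| \text{ for each } i.
\]
Since $|x|=1$, some coordinate $x_j$ satisfies $|x_j|\ge d^{-1/2}$, so $(Wx)_i$ has a density bounded by $\sup p_{ij}/|x_j|$ uniformly in $x$. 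This gives $\Prob(|Wx|<t)\le Ct$, and therefore all moments of $\log^-|Wx|$ are bounded uniformly. In the orthogonal case $|Wx|=\eta$ and this issue disappears.

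The third step is the CLT via a Poisson equation (Gordin's martingale method). Set
\[
f(x)=\E_\mu \sigma(W,x)-\lambda_{\mu,\phi},
\]
which is bounded in $x$ by the previous step. Uniform ergodicity lets me solve $g-Pg=f$ with $g=\sum_n P^n f$ bounded. Then
\[
S_\ell-\ell\lambda_{\mu,\phi} = M_\ell + g(\bar X_0)-g(\bar X_\ell),
\]
where $M_\ell$ is a martingale with increments
\[
D_k=\sigma(W_k,\bar X_{k-1})-\E[\sigma(W_k,\bar X_{k-1})\mid\bar X_{k-1}]+g(\bar X_k)-Pg(\bar X_{k-1}).
\]
These increments are uniformly bounded in $L^{2+\delta}$. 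The boundary term is bounded and so vanishes after dividing by $\sqrt\ell$. Next I apply the martingale CLT (e.g. Brown's theorem). The conditional variances satisfy $\frac1\ell\sum_k \E[D_k^2\mid\mathcal F_{k-1}]\to\gamma^2$ almost surely, because they equal $\frac1\ell\sum_k h(\bar X_{k-1})$ for a bounded $h$, and the ergodic theorem for the uniformly ergodic chain started at any $x_0$ gives convergence to $\gamma_{\mu,\phi}^2=\int h\,d\nu_\infty$. The Lindeberg condition follows from the uniform $2+\delta$ moments.

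I expect the main obstacle to be the uniform minorization in the Gaussian/i.i.d.\ entries case: the lower bound must hold for every starting direction, including directions where $\phi$ collapses components, and $\phi$ is not smooth on coordinate hyperplanes. If a one-step bound fails, I would first try pushing the uniform density lower bound of $W\bar x$ near the origin through $\phi$ on a fixed orthant, where $\phi$ is linear. This gives a lower bound on a fixed open cone of directions independent of $x$, which is already a valid Doeblin minorization.
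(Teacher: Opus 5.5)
Your proposal is correct, and its architecture matches the paper's: the direction chain on $\mathbb{S}^{d-1}$ (not projective space), a one-step Doeblin minorization, a bounded solution of the Poisson equation via the Neumann series $\sum_n P^n f$, and then a CLT.

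The differences are in how each piece is supplied.

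\emph{The CLT step.} The paper shows that $c_{\mathrm{hom}}$ is centerable with a \emph{continuous} $\psi$ and then cites the Benoist--Quint CLT for cocycles. That result needs the stronger integrability $\int \sup_x |c(g,x)|^2\,d\mu(g)<\infty$, which is why the paper bounds $\log N(W)=\log\max\{\|W\|,\|W^{-1}\|\}$ using the smallest-singular-value lemma. You instead carry out Gordin's decomposition and Brown's martingale CLT by hand, which is the alternative the paper only mentions in passing. Your route needs only a bounded measurable $g$ and the weaker bound $\sup_x \E\bigl|\log|\phi(Wx)|\bigr|^{2+\delta}<\infty$. Your one-coordinate anti-concentration argument for that bound is simpler than the paper's and suffices for your route.

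\emph{The minorization.} The paper uses right $\mathrm{O}(d)$-invariance of normalized Lebesgue measure on a Frobenius ball, so the minorizing measure is automatically independent of $x$ without examining $\phi$ at all. Your explicit route also works in one step, so the obstacle you anticipate does not arise. Let $\gamma=\min_{i,j}\inf_{|u|\le\eps}p_{ij}(u)$. For $|v|\le\eps/2$ and unit $x$, the density of $(Wx)_i=\langle W_{i\cdot},x\rangle$ at $v$ is at least $\gamma^d$ times the $(d-1)$-volume of the slice $\{w\in[-\eps,\eps]^d:\langle w,x\rangle=v\}$. That slice contains a $(d-1)$-ball of radius $\sqrt{3}\,\eps/2$. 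Since the rows are independent, $Wx$ has density bounded below on $[-\eps/2,\eps/2]^d$, uniformly in $x$.

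\emph{One correction.} $\phi$ is not a homeomorphism when $\alpha<0$: it maps $\R$ onto $[0,\infty)$, and $\alpha=-1$ gives $|x|$. So drop the homeomorphism version and use your orthant version directly. Restrict to $Wx\in(0,\eps/2)^d$, where $\phi$ is $v\mapsto v$ for $\alpha\le 1$ and $v\mapsto \alpha v$ for $\alpha>1$. This gives an $x$-independent minorizing measure supported on the positive-orthant directions.
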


Theorem~\ref{ExplicitLeakyReLULLN} and Theorem~\ref{ExplicitLeakyReLUCLT} are visualized in Figure~\ref{fig:LLNFigure}. We remark that with the methods employed, as discussed in Appendix~\ref{Section:ProofofLLN}, we can establish a Law of Large Numbers and a Central Limit Theorem for more general measures $\mu$ and for any activation function of the form $\phi(x) = \max(\alpha_1x, \alpha_2 x)$ with $\alpha_1, \alpha_2 \in \R_{\neq 0}$.  However, the results do not carry over to other standard activation functions. The proof idea is explained in Appendix~\ref{section:outlineofproofs}, also exposing what part of the argument fails for other activation functions. 

The Law of Large Numbers and especially the Central Limit Theorem as stated above are subtle results. In particular, we show in Lemma~\ref{Lemma:LLNCounterexample} that the Law of Large Numbers (Theorem~\ref{ExplicitLeakyReLULLN}) fails for ReLU activations and also for Leaky ReLU activations, for example, when the coefficients of $\mu$ are distributed independently as $\mathrm{Unif}([0,a])$ for $a > 0$. In addition, we empirically show in Figure~\ref{fig:CLT_failure_for_tanh} that the Central Limit Theorem fails for $\tanh$ networks as the distribution of $\frac{\log |X_{\ell}|}{\sqrt{\ell}}$ is not approximately Gaussian.

\section{Lyapunov Initialization}\label{section:Initalization}

\paragraph{Formulas for Lyapunov Exponent.}

In dynamical systems, Lyapunov exponents are usually difficult to calculate. It is remarkable that for the most common initializations and for Leaky ReLU activation functions, we can actually give effective formulas for the Lyapunov exponent. We first state our formulas in the Gaussian case.

\begin{theorem}\label{LyapunovGaussianFormula}
    Let $\mu_{\sigma}$ be the probability measure on $M_d(\R)$  with all coefficients independent and distributed as $\mathcal{N}(0,\sigma^2)$ for some $\sigma > 0$. Let $\phi = \max(x,\alpha x)$ be a Leaky ReLU activation function with $\alpha \in \R_{\neq 0}$. Then the Lyapunov exponent $\lambda_{\mu_{\sigma},\phi}$ from Theorem~\ref{ExplicitLeakyReLULLN} satisfies 
    \begin{equation}\label{GaussianFormula}
        \lambda_{\mu_{\sigma},\phi} = \log(\sigma) + I(d,\alpha), \quad\quad \text{ for }  \quad\quad I(d,\alpha) = \int_0^\infty \frac{e^{-t} - \frac{1}{2^d} \left(\frac{1}{\sqrt{1+2 t}} + \frac{1}{\sqrt{1+2\alpha^2 t}}  \right)^d }{2t}  dt.
    \end{equation}
\end{theorem}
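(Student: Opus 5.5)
The idea is to use rotation invariance of the Gaussian measure to turn the Lyapunov exponent into the expectation of a single-layer quantity. By Theorem~\ref{ExplicitLeakyReLULLN}, with $L^1$ convergence uniform in $x_0$ of fixed modulus, we have
\[
\lambda_{\mu_\sigma,\phi} = \lim_{\ell\to\infty}\frac{1}{\ell}\mathbb{E}\log|X_\ell|
= \lim_{\ell\to\infty}\frac1\ell\sum_{k=1}^{\ell}\mathbb{E}\log\frac{|X_k|}{|X_{k-1}|}.
\]
Note that $X_k\neq 0$ almost surely, since $W_k X_{k-1}$ has a density whenever $X_{k-1}\neq0$ and $\phi$ is injective.

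The key observation is as follows. Condition on $X_{k-1}=x\neq 0$. Then $W_k x$ has law $\mathcal{N}(0,\sigma^2|x|^2\mathrm{Id}_d)$, because its rows are independent and each row, paired with $x$, is $\mathcal{N}(0,\sigma^2|x|^2)$. Hence $W_kx = \sigma|x| g$ with $g\sim\mathcal{N}(0,\mathrm{Id}_d)$, and $\phi$ is positively homogeneous. Therefore $|X_k|/|X_{k-1}| = \sigma|\phi(g)|$, and this is independent of $X_{k-1}$. So every increment has the same law, and
\[
\lambda_{\mu_\sigma,\phi} = \log\sigma + \tfrac12\,\mathbb{E}\log\Big(\sum_{i=1}^d \phi(g_i)^2\Big).
\]
Integrability holds: the upper tail is controlled by second moments, and near zero $\sum\phi(g_i)^2\ge \min(1,\alpha^2)|g|^2$ with $\mathbb{E}\log|g|^2>-\infty$. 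In fact the sum is then an i.i.d.\ sum and the strong LLN gives the result directly, so we only need Theorem~\ref{ExplicitLeakyReLULLN} to identify the constant.

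It remains to compute $\tfrac12\mathbb{E}\log S$ with $S=\sum_i\phi(g_i)^2$. I would use Frullani's identity $\log s = \int_0^\infty \frac{e^{-u}-e^{-su}}{u}\,du$ for $s>0$. Since $S>0$ almost surely, Tonelli/Fubini justifies
\[
\mathbb{E}\log S=\int_0^\infty\frac{e^{-u}-\mathbb{E}e^{-uS}}{u}\,du .
\]
To apply Fubini, I would split the integrand into its positive and negative parts on $\{S\ge1\}$ and $\{S<1\}$; each part has expectation bounded by $\mathbb{E}|\log S|<\infty$.

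By independence, $\mathbb{E}e^{-uS}=\big(\mathbb{E}e^{-u\phi(g_1)^2}\big)^d$. Here $\phi(g_1)^2$ equals $g_1^2$ with probability $1/2$ and $\alpha^2g_1^2$ with probability $1/2$, since $g_1$ is symmetric. Using $\mathbb{E}e^{-cg_1^2}=(1+2c)^{-1/2}$, we get
\[
\mathbb{E}e^{-u\phi(g_1)^2}=\tfrac12\big((1+2u)^{-1/2}+(1+2\alpha^2u)^{-1/2}\big).
\]
Substituting $u=t$ and multiplying by $\tfrac12$ yields exactly $I(d,\alpha)$ as in \eqref{GaussianFormula}.

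The main obstacle is the rigorous justification of the Frullani/Fubini interchange and the integrability near $t=0$ and $t=\infty$. Near $t=0$ the integrand is bounded, since the numerator is $O(t)$. As $t\to\infty$ the numerator decays like $t^{-d/2}$, and the $1/(2t)$ factor makes the integral converge. Everything else is a direct consequence of the Gaussian rotation invariance and Theorem~\ref{ExplicitLeakyReLULLN}.
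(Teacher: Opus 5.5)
Your proposal is correct. The computational half (Frullani's identity, independence of the coordinates, the $\chi^2_1$ moment generating function) is what the paper does in Lemma~\ref{lem:lyapunov-exponent-analytic}. Your sign-splitting on $\{S\ge 1\}$ and $\{S<1\}$ is a more careful justification of the Fubini step than the paper gives.

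You differ from the paper in how you reduce $\lambda_{\mu_\sigma,\phi}$ to the one-step quantity $\mathbb{E}\log|\phi(We_1)|$. The paper (Proposition~\ref{SphericalStatMeasure}) uses the stationary-measure formula \eqref{UniqueStatLyapFormula}, namely $\lambda=\int_X F\,d\nu$ with $F(x)=\int\log|\phi(Wx)|\,d\mu(W)$. It then uses right $\mathrm{O}(d)$-invariance of $\mu_\sigma$ to show $F$ is constant, so the unknown stationary measure $\nu$ drops out.

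You instead note that, given $\mathcal{F}_{k-1}=\sigma(W_1,\dots,W_{k-1})$, the vector $W_kX_{k-1}/|X_{k-1}|$ is exactly $\mathcal{N}(0,\sigma^2\mathrm{Id}_d)$. Hence the increments $\log(|X_k|/|X_{k-1}|)$ are i.i.d., not merely identically distributed, and $\log|X_\ell|-\log|x_0|$ is a genuine random walk. This gives more than the paper's argument:
\begin{itemize}
\item For $\mu_\sigma$ the almost-sure law of large numbers follows from the classical strong law. This also holds for Haar measure on $\eta\cdot\mathrm{O}(d)$, since $W_kS_{k-1}$ is uniform on the sphere of radius $\eta$ independently of the past.
\item You do not need uniqueness of the stationary measure, the Doeblin argument, or the cocycle machinery. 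Theorem~\ref{ExplicitLeakyReLULLN} is used only to name the limit.
\item The classical CLT gives the variance explicitly as $\gamma_{\mu_\sigma,\phi}^2=\mathrm{Var}(\log|\phi(g)|)$.
\end{itemize}
The paper's route stays inside its general cocycle framework. That framework is what Assumption~\ref{MeasureAssumptions} requires, since there the increments are generally neither independent nor identically distributed. For the Gaussian and Haar cases, your observation shows the framework is not needed.

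One small slip: $\phi$ is not injective when $\alpha<0$; for example $\phi(1)=\phi(1/\alpha)=1$. What you actually need is $\phi(v)=0$ if and only if $v=0$. This holds for every $\alpha\neq 0$, since $|\phi(v)|\ge\min(1,|\alpha|)\,|v|$.
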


Next, we address the scaled orthogonal group $\eta \cdot \mathrm{O}(d)$ as defined in \eqref{DefetaOrthogonalGroup}.

\begin{theorem}\label{LyapunovOrthogonalFormula}
    For $\eta > 0$ let $m_{\eta\cdot \mathrm{O}(d)}$ be the volume measure on $\eta\cdot \mathrm{O}(d)$ as defined in \eqref{HaaretaOdef}. Let $\phi = \max(x,\alpha x)$ be a Leaky ReLU activation function with $\alpha \in \R_{\neq 0}$. Then the Lyapunov exponent $\lambda_{m_{\eta\cdot\mathrm{O}(d)},\phi}$ from Theorem~\ref{ExplicitLeakyReLULLN} satisfies for $I(d,\alpha)$ from \eqref{GaussianFormula} that
    \begin{equation}\label{OrthogonalFormula}
        \lambda_{m_{\eta\cdot\mathrm{O}(d)},\phi} = \log(\eta) + I(d,\alpha)  - I(d,1). 
    \end{equation}
\end{theorem}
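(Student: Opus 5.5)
Since $\phi$ is positively homogeneous ($\phi(cx)=c\phi(x)$ for $c>0$), scaling every weight matrix by $\eta$ scales $X_\ell$ by $\eta^\ell$. Hence $\lambda_{m_{\eta\cdot\mathrm{O}(d)},\phi}=\log\eta+\lambda_{m_{\mathrm{O}(d)},\phi}$, and it suffices to treat $\eta=1$. The plan is to express both the orthogonal and the Gaussian exponents as expectations with respect to an explicitly known stationary measure, and then compare them using Theorem~\ref{LyapunovGaussianFormula}.

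\textbf{Step 1: one-step formula for the orthogonal case.} By the $L^1$ part of Theorem~\ref{ExplicitLeakyReLULLN}, we have $\lambda=\lim_\ell \frac{1}{\ell}\mathbb{E}\log|X_\ell|$, and this is uniform in the starting direction. Telescoping gives $\log|X_\ell|-\log|x_0|=\sum_{k=1}^{\ell}\log\big(|X_k|/|X_{k-1}|\big)$. Write $X_{k-1}=|X_{k-1}|u_{k-1}$ with $u_{k-1}\in S^{d-1}$. Then the $k$-th increment equals $\log|\phi(W_k u_{k-1})|$ by homogeneity. If $W_k$ is Haar on $\mathrm{O}(d)$ and independent of $u_{k-1}$, then by invariance of Haar measure $W_ku_{k-1}$ is uniform on $S^{d-1}$, whatever $u_{k-1}$ is. Consequently every increment has the same law as $\log|\phi(U)|$ with $U$ uniform on the sphere, and we obtain
\[
\lambda_{m_{\mathrm{O}(d)},\phi}=\mathbb{E}\log|\phi(U)|.
\]

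\textbf{Step 2: the Gaussian case reduces to the same quantity.} Apply the same telescoping argument with $W_k$ having i.i.d.\ $\mathcal{N}(0,1)$ entries. Rotation invariance of the Gaussian matrix shows that $W_k u$ is distributed as a standard Gaussian vector $G$ for every unit vector $u$. So the Gaussian exponent is $\mathbb{E}\log|\phi(G)|$, and $\sigma=1$ in Theorem~\ref{LyapunovGaussianFormula} gives $\mathbb{E}\log|\phi(G)|=I(d,\alpha)$. Now decompose $G=|G|\,U$, where $|G|$ and $U$ are independent and $U$ is uniform on $S^{d-1}$. Homogeneity gives
\[
\mathbb{E}\log|\phi(G)|=\mathbb{E}\log|G|+\mathbb{E}\log|\phi(U)|.
\]
Taking $\alpha=1$, so that $\phi$ is the identity, yields $\mathbb{E}\log|G|=I(d,1)$. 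Therefore
\[
\mathbb{E}\log|\phi(U)|=I(d,\alpha)-I(d,1),
\]
which together with Step 1 and the scaling reduction proves \eqref{OrthogonalFormula}.

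\textbf{Main obstacle.} The step requiring care is the justification that $\lambda=\lim\frac1\ell\mathbb{E}\log|X_\ell|$ and that the increments are integrable. Theorem~\ref{ExplicitLeakyReLULLN} applies because $m_{\mathrm{O}(d)}$ has constant density and so satisfies Assumption~\ref{SecondMeasureAssumptions}. For integrability, $|\phi(U)|$ lies between $\min(1,|\alpha|)$ and $\max(1,|\alpha|)$, so $\log|\phi(U)|$ is bounded. In the Gaussian case $\mathbb{E}|\log|G||<\infty$ holds for all $d\ge1$. The independence of $W_k$ from $u_{k-1}$ is immediate, since $u_{k-1}$ is measurable with respect to $W_1,\dots,W_{k-1}$.
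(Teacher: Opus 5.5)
Your argument is correct and is essentially the paper's proof. Both reduce the exponent to $\log\eta+\mathbb{E}\log|\phi(U)|$ with $U$ uniform on $\mathbb{S}^{d-1}$, write $U=G/|G|$ for a standard Gaussian vector $G$, split the logarithm by positive homogeneity, and apply the Gaussian formula at slopes $\alpha$ and $1$. The only minor difference is how the one-step formula is justified. You use the $L^1$ convergence plus exact telescoping of the expected increments, whereas the paper uses the stationary-measure representation \eqref{UniqueStatLyapFormula} and the constancy of $x\mapsto\mathbb{E}\log|\phi(Wx)|$. The paper also evaluates $\mathbb{E}\log|\phi(G)|$ directly via Lemma~\ref{lem:lyapunov-exponent-analytic} rather than reading it off Theorem~\ref{LyapunovGaussianFormula}.
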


Theorem~\ref{LyapunovGaussianFormula} and Theorem~\ref{LyapunovOrthogonalFormula} allow us to calculate Lyapunov exponents efficiently. Indeed, the integral $I(d,\alpha)$ from \eqref{GaussianFormula} can be easily approximated using a quadrature method. 

\begin{table}[b]
\caption{Lyapunov exponents for He initialization and standard scaled orthogonal initialization from \eqref{DefLambdaHe} as well as critical standard deviation and critical scaling factor \eqref{sigmacrit}. All values depend on $d$ and are given for $\alpha = 0.1$ and $\alpha = 0.01$. See Section~\ref{section:lookuptables} for more extensive lookup tables.}
\label{tab:lyapunov-table}
\centering
\begin{tabular}{ccccc}
\multicolumn{5}{c}{$\alpha = 0.1$} \\
\midrule
 $d$ & $\lambda_{\mathrm{He}}$ & $\lambda_{\mathrm{ortho}}$ & $\sigma_{\mathrm{crit}}$ & $\eta_{\mathrm{crit}}$ \\
\midrule
1          &  -1.445 & -0.810 &  5.968 & 3.162 \\
2          &  -0.822 & -0.533 &  2.263 & 2.398 \\
3          &  -0.561 & -0.376  & 1.423 & 2.050 \\
4          &  -0.415 & -0.280 &  1.066 & 1.862\\
8          &  -0.188 & -0.123 &  0.600 & 1.591  \\
16         &  -0.085 & -0.053 &  0.383 & 1.483 \\
32         &  -0.040 & -0.024 &  0.259 & 1.442 \\
64         &  -0.020 & -0.012 &  0.179 & 1.424 \\
128        &  -0.010 & -0.006 &  0.126 & 1.415 \\
1024       &  -0.001 & -0.001 &  0.044 & 1.408\\
\bottomrule
\end{tabular}
\begin{tabular}{ccccc}
\multicolumn{5}{c}{$\alpha = 0.01$} \\
\midrule
$d$ & $\lambda_{\mathrm{He}}$ & $\lambda_{\mathrm{ortho}}$ & $\sigma_{\mathrm{crit}}$ & $\eta_{\mathrm{crit}}$ \\
\midrule
1          &  -2.591 & -1.956  & 18.874 & 10.000 \\
2          &  -1.435 & -1.146 &  4.199 & 4.450 \\
3          &  -0.898 & -0.713 &  2.004 & 2.886 \\
4          &  -0.605 & -0.470 &  1.295 & 2.263 \\
8          &  -0.214 & -0.149 &  0.619 & 1.642 \\
16         &  -0.088 & -0.056 &  0.386 & 1.496 \\
32         &  -0.041 & -0.025 &  0.261 & 1.451 \\
64         &  -0.020 & -0.012 &  0.180 & 1.431 \\
128        &  -0.010 & -0.006 &  0.126 & 1.423 \\
1024       &  -0.001 & -0.001 &  0.044 & 1.415 \\
\bottomrule
\end{tabular}
\end{table}

Recall that for a Leaky ReLU function of slope $\alpha$, He initialization samples the matrix entries independently from $\mathcal{N}(0,\sigma_{\mathrm{He}}^2)$ with 
\begin{equation}\label{DefSigmaHe}
    \sigma_{\mathrm{He}} = \sqrt{\frac{2}{d(1 + \alpha^2)}}.
\end{equation}
We then denote by $\lambda_{\mathrm{He}}$ the Lyapunov exponent resulting from $\mu_{\sigma_{\mathrm{He}}}$ and $\phi$ and by $\lambda_{\mathrm{orth}}$ the Lyapunov exponent of $m_{\eta_{0}\cdot\mathrm{O}(d)}$ and $\phi$, for $\eta_{0} =\sqrt{\frac{2}{1 + \alpha^2}}$ the standard Leaky ReLU orthogonal scaling factor analogous to He initialization, that is, \begin{equation}\label{DefLambdaHe}
    \lambda_{\mathrm{He}} = \lambda_{\mu_{\sigma_{\mathrm{He}}},\phi} = \log(\sigma_{\mathrm{He}}) + I(d,\alpha) \quad\text{ and }\quad \lambda_{\mathrm{orth}} = \lambda_{\eta_{0}\cdot\mathrm{O}(d),\phi} = \log(\eta_{0})  + I(d,\alpha) - I(d,1).
\end{equation}

We show in Table~\ref{tab:lyapunov-table} and Section~\ref{section:lookuptables} some numerical values for $\lambda_{\mathrm{He}}$ and $\lambda_{\mathrm{orth}}$. We next state the following asymptotic expansion result for $d\to \infty$ for $I(d,\alpha)$.

\begin{theorem}\label{AsymptoticExpansion}(Asymptotic Expansion)
    Let $\alpha \in \R_{\neq 0}$ and let $I(d,\alpha)$ be from \eqref{GaussianFormula}. Then for $d\to \infty$ it holds that 
    \begin{equation}\label{AsymtoticExpansionIdalpha}
        I(d,\alpha) = \frac{1}{2}\log\left(d\frac{1+\alpha^2}{2}\right) - \frac{C_\alpha}{4d} + O_{\alpha}(d^{-2}),
    \end{equation}
where $C_\alpha = \frac{5 - 2\alpha^2 + 5\alpha^4}{(1+\alpha^2)^2}$ and the implied constant depends on $\alpha$.

In particular, for $\lambda_{\mu_{\sigma},\phi}$ from \eqref{GaussianFormula} it holds as $d\to \infty$ that
\begin{equation}\label{AsymtoticExpansionGaussian}
    \lambda_{\mu_{\sigma},\phi} = \frac{1}{2}\log\left(\sigma^2d\frac{1+\alpha^2}{2}\right) - \frac{C_\alpha}{4d} + O_{\alpha}(d^{-2})
\end{equation} and for $\lambda_{m_{\eta\cdot\mathrm{O}(d)},\phi}$ from \eqref{OrthogonalFormula} it holds as $d\to \infty$ that
\begin{equation}\label{AsymtoticExpansionOrthogonal}
    \lambda_{m_{\eta\cdot\mathrm{O}(d)},\phi} = \frac{1}{2}\log\left(\eta^2\frac{1+\alpha^2}{2}\right)+ \frac{2-C_{\alpha}}{4d} + O_{\alpha}(d^{-2}).
\end{equation}
\end{theorem}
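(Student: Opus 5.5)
The plan is to interpret $I(d,\alpha)$ probabilistically and then expand a logarithm around the mean. Let $g=(g_1,\dots,g_d)$ be a standard Gaussian vector in $\R^d$ and put $Y_i=\phi(g_i)^2$, so $Y_i=g_i^2$ if $g_i>0$ and $Y_i=\alpha^2 g_i^2$ otherwise. The $Y_i$ are i.i.d.; set $S_d=\sum_{i=1}^d Y_i=|\phi(g)|^2$. A direct Gaussian computation gives
\[
\E e^{-tY_1}=\tfrac12\Bigl(\tfrac{1}{\sqrt{1+2t}}+\tfrac{1}{\sqrt{1+2\alpha^2t}}\Bigr),
\]
so $\E e^{-tS_d}$ is exactly the $d$-th power appearing in \eqref{GaussianFormula}. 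Frullani's identity $\log x=\int_0^\infty \frac{e^{-t}-e^{-tx}}{t}\,dt$ for $x>0$, integrated against the law of $S_d$, then yields
\[
I(d,\alpha)=\tfrac12\,\E[\log S_d].
\]
Fubini is justified by splitting the integrand according to the sign of $\log S_d$ and using monotone convergence, since $\E|\log S_d|<\infty$.

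\textbf{Step 1: centre the logarithm.} Let $m=\E Y_1=\frac{1+\alpha^2}{2}$. Then
\[
I(d,\alpha)=\tfrac12\log(dm)+\tfrac12\,\E\log(1+U),\qquad U=\frac{S_d}{dm}-1 .
\]
We have $\E U=0$ and $\E U^2=\frac{\mathrm{Var}(Y_1)}{dm^2}$. Using $\E Y_1^2=\frac{3(1+\alpha^4)}{2}$,
\[
\mathrm{Var}(Y_1)=\frac{5-2\alpha^2+5\alpha^4}{4},\qquad \frac{\mathrm{Var}(Y_1)}{m^2}=C_\alpha .
\]

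\textbf{Step 2: Taylor expansion on the good event.} On $\{|U|\le 1/2\}$ write
\[
\log(1+U)=U-\tfrac{U^2}{2}+R(U),\qquad |R(U)|\le C|U|^3 .
\]
Since $U$ is an average of $d$ i.i.d.\ centred variables with all moments finite, standard moment bounds give $\E|U|^3=O(d^{-3/2})$. A finer bound is that $\E U^3=O(d^{-2})$ and the quartic remainder is $O(\E U^4)=O(d^{-2})$; to get this, expand to fourth order, $R(U)=U^3/3+O(U^4)$. Hence
\[
\E\bigl[\log(1+U)\,\mathbf 1_{|U|\le1/2}\bigr]=-\frac{C_\alpha}{2d}+O(d^{-2}),
\]
provided the removed terms $\E[U^k\mathbf 1_{|U|>1/2}]$ for $k=1,2,3$ are also $O(d^{-2})$. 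This follows from Chernoff/Markov bounds on $P(|U|>1/2)$ combined with Cauchy--Schwarz.

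\textbf{Step 3: the bad event (main obstacle).} Controlling $\E[|\log(1+U)|\,\mathbf 1_{|U|>1/2}]$ is the main difficulty, because $\log S_d$ is unbounded below near $S_d=0$.
\begin{itemize}
\item \emph{Upper side.} Here $\log(1+U)\le U$, so exponential moments of $Y_1$ near $0$ (a Cramér bound) make the contribution exponentially small.
\item \emph{Lower side.} Apply Cauchy--Schwarz: the contribution is at most $\E[\log^2 S_d\cdot \mathbf 1]^{1/2}\,P(S_d\le dm/2)^{1/2}$, up to the $\log(dm)$ shift. The probability is $e^{-cd}$ by the Chernoff bound $P(S_d\le dm/2)\le e^{sdm/2}(\E e^{-sY_1})^d$, choosing $s>0$ small enough that $e^{sm/2}\E e^{-sY_1}<1$.
\item \emph{Second moment of the logarithm.} We need $\E\log^2 S_d$ to grow at most polynomially in $d$. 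This holds because $S_d\ge Y_1$ and $Y_1$ has density $\asymp y^{-1/2}$ near $0$, so $\E\log^2 Y_1<\infty$; the upper tail is handled by $\log^2 S_d\le S_d^2+\dots$.
\end{itemize}
Altogether the bad-event contribution is $O(e^{-cd})$. This proves \eqref{AsymtoticExpansionIdalpha}.

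\textbf{Step 4: corollaries.} Formula \eqref{AsymtoticExpansionGaussian} follows by adding $\log\sigma=\tfrac12\log\sigma^2$ via Theorem~\ref{LyapunovGaussianFormula}. For \eqref{AsymtoticExpansionOrthogonal}, apply Theorem~\ref{LyapunovOrthogonalFormula} and subtract the expansion for $I(d,1)$, noting $C_1=8/4=2$ and $\frac{1+1^2}{2}=1$. The $\log d$ terms cancel, leaving
\[
\tfrac12\log\Bigl(\eta^2\tfrac{1+\alpha^2}{2}\Bigr)+\frac{2-C_\alpha}{4d}+O_\alpha(d^{-2}).
\]
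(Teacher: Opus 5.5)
Your proof is correct and follows essentially the same route as the paper. You write $I(d,\alpha)=\tfrac12\E[\log S_d]$ via Frullani and the moment generating function, center at $dm$, and expand $\log(1+U)$ to fourth order on a good event using $\E U=0$, $\E U^2=C_\alpha/d$ and $\E U^3,\E U^4=O(d^{-2})$, then show the bad event is exponentially negligible; the only difference is on the lower tail, where the paper uses $|\log x|\le x+x^{-1}$ with negative moments of $\chi^2_d$, while you use Cauchy--Schwarz with $\E\log^2 S_d$ controlled through $S_d\ge Y_1$, and both work.
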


We remark that it follows from \eqref{AsymtoticExpansionGaussian} that $\lambda_{\mu_{\sigma_{\mathrm{He}}},\phi} = O_{\alpha}(d^{-1})$ and thus as $d\to \infty$, He initialization leads to a zero Lyapunov exponent. Moreover, by \eqref{AsymtoticExpansionOrthogonal}, the standard orthogonal scaling factor $\eta_{\mathrm{0}} = \sqrt{\frac{2}{1 
+ \alpha^2}}$ also leads to a zero Lyapunov exponent in high dimensions. To prove the latter two insights, we believe that all the theory developed in this paper is necessary. However, in low dimensions, He initialization and standard orthogonal initialization do not give a zero Lyapunov exponent, as Table~\ref{tab:lyapunov-table} shows. 

\paragraph{Lyapunov Initialization.} Based on the Law of Large Numbers established in Theorem~\ref{ExplicitLeakyReLULLN}, to prevent vanishing ($\lambda_{\mu,\phi} < 0$) or exploding ($\lambda_{\mu,\phi} > 0$) activations, we suggest to choose our random weights such that the Lyapunov exponent is exactly zero, that is
$\lambda_{\mu,\phi} = 0.$

For the Gaussian case, where $\mu_{\sigma}$ is the measure with independent entries distributed as $\mathcal{N}(0, \sigma^{2})$, we denote by $\sigma_{\mathrm{crit}} > 0$ the specific standard deviation that satisfies $\lambda_{\mu_{\sigma_{\mathrm{crit}}},\phi} = 0$. Analogously, we denote by $\eta_{\mathrm{crit}}$ the scaling factor such that $\lambda_{m_{\eta_{\mathrm{crit}} \cdot \mathrm{O}(d)}} = 0$. Using the formula derived in Theorem~\ref{LyapunovGaussianFormula} and Theorem~\ref{LyapunovOrthogonalFormula}, these critical values can be calculated as
\begin{equation}\label{sigmacrit}
\sigma_{\mathrm{crit}} = \exp(-I(d, \alpha)) \quad\quad\text{ and } \quad\quad \eta_{\mathrm{crit}} = \exp(I(d, 1) - I(d, \alpha)).
\end{equation}

Numerical values for $\sigma_{\mathrm{crit}}$ and $\eta_{\mathrm{crit}}$ across various widths and slopes are provided in Table~\ref{tab:lyapunov-table} and in Section~\ref{section:lookuptables}.

\begin{figure}[t]

\centering
\caption{Algorithm for Lyapunov Gaussian (Orthogonal) and Sampled Lyapunov Gaussian (Orthogonal) initialization.}

\begin{minipage}{0.46\textwidth}

\begin{algorithm}[H]

   \caption{Lyapunov Gaussian (Orthogonal) Initialization}

   \label{alg:lyapunov-init}

\begin{algorithmic}

   \STATE {\bfseries Setting:} Network width $d$, Leaky ReLU slope $\alpha$.

   \STATE {\bfseries Compute numerically:} $\sigma_{\mathrm{crit}}$ or $\eta_{\mathrm{crit}}$.

   \STATE {\bfseries Initialize:} Sample weights $W_{\ell} \sim \mu_{\sigma_{\mathrm{crit}}}$ (or

   $W_\ell \sim m_{\eta_{\mathrm{crit}}\cdot \mathrm{O}(d)}$), and set biases $b_{\ell}=0$.

\end{algorithmic}

\end{algorithm}

\end{minipage}
\hspace{.02\linewidth}
\begin{minipage}{0.46\textwidth}

\begin{algorithm}[H]

   \caption{Sampled Lyapunov Gaussian (Orthogonal) Initialization}

   \label{alg:lyapunov-init-sampled}

\begin{algorithmic}

   \STATE {\bfseries Setting:} Network width $d$, Leaky ReLU slope $\alpha$, input distribution $\mathcal{D}_{\mathrm{input}}$.

   \STATE {\bfseries Compute numerically:} $\sigma_{\mathrm{crit}}$ or $\eta_{\mathrm{crit}}$.

   \STATE {\bfseries Initialize:} Sample $O(\sqrt{\ell})$ weights $W_{\ell} \sim \mu_{\sigma_{\mathrm{crit}}}$ (or $W_\ell \sim m_{\eta_{\mathrm{crit}}\cdot \mathrm{O}(d)}$), set $b_{\ell}=0$, and choose the one with $\mathbb{E}_{x_0\sim\mathcal{D}_{\mathrm{input}}}[|X_\ell|]$ closest to $1$.

\end{algorithmic}

\end{algorithm}

\end{minipage}
\end{figure}


\paragraph{Sampled Lyapunov Initialization.}

Sampled Lyapunov Initialization is a refined strategy to counter depth-dependent stochastic fluctuations. Indeed, the Central Limit Theorem predicts that the logarithm of the activation norm, $\log|X_{\ell}|$, behaves as a Gaussian variable with a standard deviation of $O(\sqrt{\ell})$. This causes the magnitude $|X_{\ell}|$ to fluctuate between $e^{-O(\sqrt{\ell})}$ and $e^{O(\sqrt{\ell})}$ as depth $\ell$ increases. To counter this, we generate $O(\sqrt{l})$ candidate initializations. Denote by $\mathcal{D}_{\mathrm{input}}$ the input distribution on $\R^d$, for example the uniform distribution on the input values. We then select the initialization where the expected output norm $\mathbb{E}_{x_0 \sim \mathcal{D}_{\mathrm{input}}}\!\left[|X_{\ell}|\right]$ is closest to $1$. We stress that the calculation of $\mathbb{E}_{x_0 \sim \mathcal{D}_{\mathrm{input}}}\!\left[|X_{\ell}|\right]$ should be performed before training and hyperparameter tuning, and is therefore a computationally efficient way to improve Lyapunov initialization.

\section{Experiments}\label{section:Experiments}

\begin{figure}[b]
\begin{minipage}{0.42\textwidth}
    \centering    
    \includegraphics[width=\linewidth]{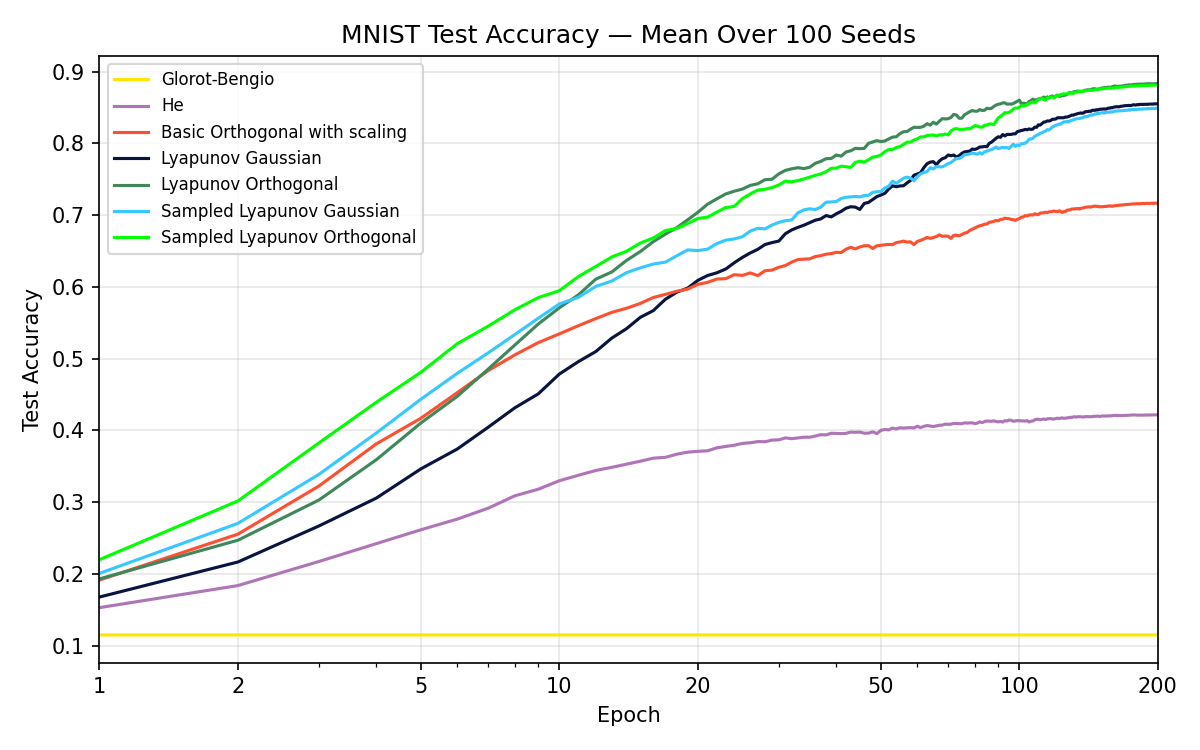}
    \end{minipage}
    \begin{minipage}{0.57\textwidth}
    \scriptsize
    \begin{tabular}{lcccccc}
\toprule
Method \quad\quad\quad\quad\quad  Epoch: & 1 & 5 & 15 & 50 & 200 \\
\midrule
Glorot-Bengio & 12\% & 12\% & 12\% & 12\% & 12\% \\
He &  15\% &	23\%	& 31\% &	34\% &	36\% \\
Scaled Orthogonal & 18\% &	36\% &	49\% &	55\% &	60\% \\
\textbf{Lyapunov Gaussian} & 16\% &	31\% &	49\% &	63\% &	76\% \\
\textbf{Lyapunov Orthogonal} & 19\% &	40\% &	62\% &	76\%	 & 84\% \\
\textbf{Sampled Lyapunov Gaussian} & 18\% &	38\% &	52\% &	61\% &	70\% \\
\textbf{Sampled Lyapunov Orthogonal} & 21\% &	44\% &	60\% &	70\% &	80\% \\
\bottomrule
\end{tabular}
\end{minipage}
\caption{MNIST results for a hidden dimension 10 and 100-layer network averaged over 100 seeds.}
\label{fig:exp:mnist100:losses}
\end{figure}

We perform three experiments to test for empirical advantages gained from our theoretical insights: 1) MNIST, 2) Learning a degree 5 polynomial and 3) Learning a score (discussed in the appendix). 

\paragraph{MNIST.} MNIST is a standard dataset for the task of recognizing handwritten digits. We score each initialization method on test accuracy, that is, the percentage of correctly recognized digits on a held-out test set. As all methods are inherently random, the results were averaged over 100 seeds. We trained a hidden dimension 10 model with 100 layers and a Leaky ReLU slope $\alpha = 0.1$, performed hyperparameter tuning and chose the best model with respect to a validation set. As we have 100 layers, the network is difficult to train and our new methods make training feasible. 

We compare our methods with Glorot-Bengio, He and standard scaled orthogonal initialization, that is scaled by $\eta_0 = \sqrt{2/(1 + \alpha^2)}$. All our four new methods strongly outperform the previous methods. We further remark that the sampled methods are better at epochs 1 and 5 than the unsampled ones, while for higher epochs the unsampled versions perform better. Thus, for MNIST, sampling gives an early advantage that surprisingly does not carry over to later epochs. 

In Figure~\ref{MNISTHists}, we show histograms for all 100 seeds. They reveal that either the model does not train at all or scores above $85\%$ on average. Therefore, the right initialization makes the training success much more likely. With this model setup, it appears to be difficult to reach more than $90\%$ accuracy, as for all our methods this happens rather rarely. The experiments are described in detail in Appendix~\ref{sec:app:exp-details:MNIST}.

\paragraph{Polynomial.} We learn the degree $5$ polynomial $p(x) = x^5+x^2-x$ on the interval $[-1.5, 1.5]$ with a $60$-layer network and width $d=2$. The same initialization methods as for MNIST are used. 

Since the Lyapunov exponent is negative for the previous methods, the initial polynomial is the zero polynomial, leading to a mean squared error of $\approx 3.19$, explaining why Glorot-Bengio and He initialization have this roughly constant error. On the other hand, the unsampled Lyapunov methods might have enormous coefficients by our Central Limit Theorem, leading to a large loss initially. Usually, this loss will become rapidly small during training, yet there are some outliers. We therefore, to avoid overweighting the large outliers, present in \Cref{fig:polynomiallosses} the median training loss from the best $80$ of initialization samples over 100 seeds. In Figure~\ref{PolynomialHists} we show all 100 seeds, showing precise statistics that our novel methods strongly improve on the previous ones.

All Lyapunov initializations outperform the previous initialization methods in the first few thousand training steps, while scaled orthogonal initialization eventually outperforms Lyapunov Gaussian initialization. These results are somewhat consistent with the proven benefits of orthogonal over Gaussian initialization as established in \cite{SaxeOrthogonal,Huorthogonal}. In this example, our sampled methods are useful, strongly outperforming the unsampled methods as the initializations with very large coefficients are discarded. The experiments are described in detail in Appendix~\ref{sec:app:exp-details:poly}.

\begin{figure}[t]
\begin{minipage}{0.42\textwidth}
    \centering    
    \includegraphics[width=\linewidth]{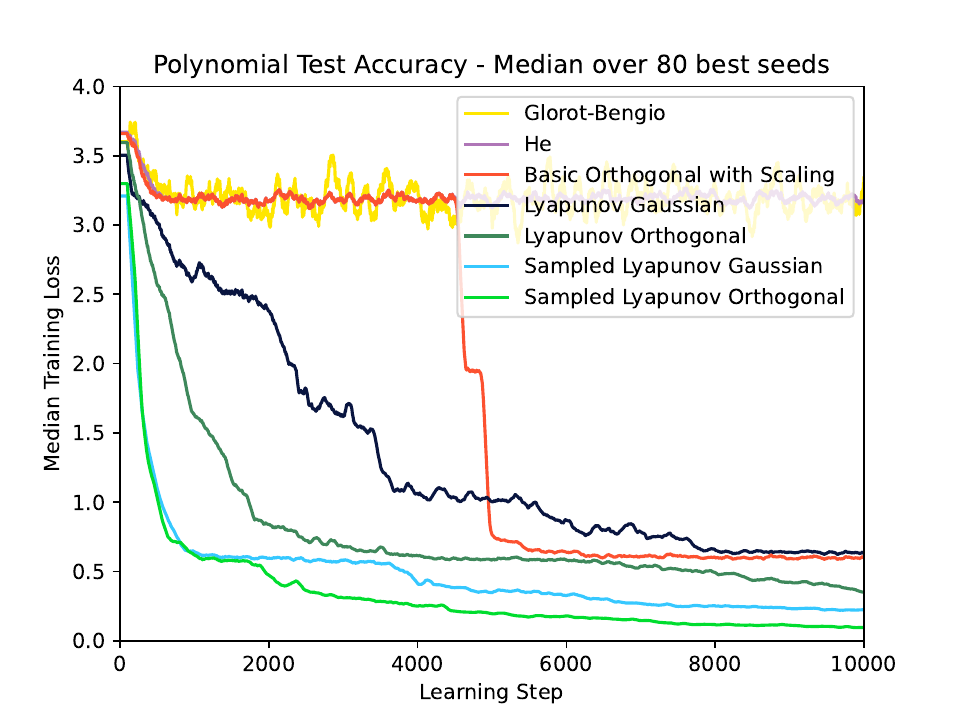}
    \end{minipage}
    \begin{minipage}{0.57\textwidth}
    \scriptsize
    \begin{tabular}{lccccc}
\toprule
Method \quad\quad\quad\quad  Training Step: & 500 & 2000  & 7000  & 10000 \\
\midrule
Glorot-Bengio & 3.27 &	3.02 &	3.20 &		3.34 \\
He &  3.22 &	3.17 &		3.18 &		3.18 \\
Scaled Orthogonal & 3.20 &	3.17 &		0.61 &	0.60 \\
\textbf{Lyapunov Gaussian} & 3.01 &	2.38 &	0.79 &		0.63 \\
\textbf{Lyapunov Orthogonal} & 2.57 &	0.84 &		0.54 &		0.35 \\
\textbf{Sampled Lyapunov Gaussian} & 1.10 &	0.60 &		0.27 &		0.22 \\
\textbf{Sampled Lyapunov Orthogonal} & 1.03 &	0.47  &		0.15 &		0.10 \\
\bottomrule
\end{tabular}
\end{minipage}
\caption{Results for polynomial experiment averaged over the best 80 seeds.}
\label{fig:polynomiallosses}
\end{figure}


\section{Limitations and Extensions} \label{sec:limitations-and-extensions}

\noindent \textbf{Beyond Leaky ReLU activation functions.} The main theoretical results of this paper only hold for Leaky ReLU activation functions and are likely false for all other non-linear activation functions. However, it would be interesting to investigate to what extent our results could at least partially be extended to other activation functions.

\noindent \textbf{Different Types of Architectures.} It is a natural question whether the results and methods of our paper could be adapted to $c$-homogeneous activation functions or to residual connections. However, this would not be straightforward and would require distinct ergodic results in comparison to the ones used in this paper and, therefore, would lead to novel directions of research.

\section{Conclusion}\label{section:Conclusion}

In this work, we established a rigorous probabilistic analysis of deep bias-free Leaky ReLU networks by proving a Law of Large Numbers and a Central Limit Theorem for the logarithm of network activations. Our results demonstrate that activation growth is governed by the Lyapunov exponent, a parameter that characterizes the sharp phase transition between vanishing and exploding activations. We revealed that standard initialization methods, such as He or orthogonal initialization, fail to guarantee stability in deep networks of low width because they result in negative Lyapunov exponents. To address this, we proposed Lyapunov initialization, which sets the Lyapunov exponent to zero to ensure maximum stability, alongside a sampled refinement to counter depth-dependent stochastic fluctuations resulting from the Central Limit Theorem. Empirical experiments confirm that these novel methods outperform previous initialization strategies in low-dimensional regimes, resulting in more efficient learning in deep architectures.

\newpage





\newpage 

\section*{Acknowledgments}

C.K.~ holds a Postdoc Mobility Fellowship from the
Swiss National Science Foundation (grant number 235409) and thanks the Institute for Advanced Study. 
T.S.~gratefully acknowledges financial support from the Rhodes Trust and the EPSRC Centre for Doctoral Training in Mathematics of Random Systems: Analysis, Modelling and Simulation (EP/S023925/1). S.K.~gratefully acknowledges support from
the Heilbronn Institute for Mathematical Research.
This work used the Scientific Compute Cluster at GWDG, the joint data center of the Max Planck Society (MPG) and the University of Göttingen. In part funded by the Deutsche Forschungsgemeinschaft (DFG, German Research Foundation), project number 405797229.

\bibliographystyle{plainnat}   
\bibliography{bibliography}

\newpage
\appendix
\onecolumn

\section{Organization of the Appendix}

For convenience of the reader, we comment on the organization of the appendix. In Section~\ref{section:Haar}, we briefly discuss Haar measures on topological groups such as Lie groups (including the cases $\mathrm{GL}_d(\R)$ and $\mathrm{O}(d)$). In Section~\ref{Section:ProofofLLN}, we prove the main results of this paper, that is, Theorem~\ref{ExplicitLeakyReLULLN} and Theorem~\ref{ExplicitLeakyReLUCLT} as well as Lemma~\ref{Lemma:LLNCounterexample}. In Section~\ref{section:AppendixFormulas}, we establish the formulas for the Lyapunov exponent, that is, we prove Theorem~\ref{LyapunovGaussianFormula}, Theorem~\ref{LyapunovOrthogonalFormula} and Theorem~\ref{AsymptoticExpansion} as well as provide lookup tables for the various parameters discussed in this paper. In Section~\ref{section:FailureCLT}, experimental evidence that our Central Limit Theorem (Theorem~\ref{ExplicitLeakyReLUCLT}) fails for $\tanh$ networks is shown. In Section~\ref{sec:Heorthcalc}, we show why He initialization and standard scaled orthogonal initialization preserve the second moment of the network activations across layers. We describe the experiments in detail in Section~\ref{sec:app:exp-details:MNIST} (MNIST), Section~\ref{sec:app:exp-details:poly} (polynomial) and Section~\ref{sec:app:score-long} (Score). 

\section{Haar Measures}\label{section:Haar}

In this section, we briefly discuss for convenience of the reader Haar measures on topological groups such as Lie groups.

A \textbf{topological group} is a group $G$ endowed with a topology such that the maps $(g,h) \mapsto gh$ and $g\mapsto g^{-1}$ are continuous. Recall that the Borel $\sigma$-algebra is the smallest $\sigma$-algebra containing all the open sets. A left Haar measure is a Borel measure that is invariant under left-multiplication and is defined as follows.

\begin{definition}\label{HaarDefinition}
    Let $G$ be a topological group. A Borel measure $m_G$ on $G$ is called a (left) \textbf{Haar measure} if $m_G(U) > 0$ for every open set $U\subset G$ and $$m_G(g \cdot C) = m_G(C)$$ for every Borel set $C\subset G$ and $g \in G$, where $g\cdot C = \{gc \,:\, c\in C \}$.
\end{definition}

To ensure that Haar measures exist, we need to introduce a few further definitions: A topological space $X$ is called 
\begin{enumerate}
    \item[(1)] \textbf{locally compact} if every point has a compact neighborhood, that is for every $x \in X$ there is an open set $U_x \subset X$ and a compact set $C \subset X$ with $x \in U_x$ and $U_x \subset C$. 
    \item[(2)] \textbf{Hausdorff} if for every $x,y \in X$ there exist open sets $U_x$ and $U_y$ satisfying that $x \in U_x$, $y \in U_y$ and that $U_x \cap U_y = \emptyset$.
\end{enumerate}

\begin{theorem}\citep[Chapter XI]{HalmosMT}\label{HaarTheorem}
    Let $G$ be a topological group that is locally compact and Hausdorff. Then there exists a left Haar measure. The Haar measure is unique up to scaling, that is if $m_G$ and $m_G'$ are left Haar measures then there is $\lambda > 0$ such that $m_G = \lambda \cdot m_G'$. 
    
    Moreover, $m_G(G) < \infty$ if and only if $G$ is compact.
\end{theorem}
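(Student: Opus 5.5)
We prove the existence of a left Haar measure by the classical covering-number construction, and uniqueness by a Fubini-type comparison argument. The compactness criterion is handled at the end.

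\textbf{Existence.} Fix a compact set $K_0$ with nonempty interior; one exists by local compactness. For compact $K$ and a neighborhood $V$ of the identity, let $(K:V)$ be the least number of left translates $gV$ needed to cover $K$. This number is finite by compactness. Define
$$h_V(K) = \frac{(K:V)}{(K_0:V)}.$$
Then $h_V$ is left invariant, monotone and subadditive, and it satisfies $0<(K:K_0)^{-1}\le h_V(K)\le (K:K_0)$ whenever $K$ has nonempty interior. Hence each $h_V$ lies in the compact product space $\prod_K [0,(K:K_0)]$. Take a cluster point $h$ of the net $(h_V)$ as $V$ shrinks to the identity, using Tychonoff.

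The resulting content $h$ is left invariant, monotone and subadditive. It is also additive on disjoint compact sets. To see this, separate two disjoint compacts $K_1,K_2$ by a neighborhood $V$ with $K_1V^{-1}\cap K_2V^{-1}=\emptyset$, using the Hausdorff property. For such small $V$ no translate $gV$ meets both sets, so $(K_1\cup K_2:V)=(K_1:V)+(K_2:V)$.

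From the content we pass to a measure. First define an inner/outer measure by
$$\mu^*(U)=\sup_{K\subset U}h(K)$$
on open sets, then take infima over open supersets for arbitrary sets. The Carathéodory construction shows that Borel sets are measurable, which yields a Radon measure. Left invariance is inherited from $h$. Positivity on open sets follows because an open set $U$ contains a compact set with interior, and finitely many translates of $U$ cover $K_0$.

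\textbf{Uniqueness.} Let $m,m'$ be two left Haar measures and $f,g\in C_c(G)$ with $\int g\,dm\neq 0$. Apply Fubini to $f(x)g(yx)$ together with the substitutions $x\mapsto y^{-1}x$ and $y\mapsto yx^{-1}$. This shows that the ratio $\int f\,dm/\int g\,dm$ equals an expression independent of $m$. In the cleanest approach, we compare $m$ and $m'$ through the modular function, or equivalently through a fixed $g$. Hence $m=\lambda m'$ on $C_c$, and by the Riesz representation theorem this gives equality as measures.

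\textbf{Compactness criterion.} If $G$ is compact, then $m_G(G)<\infty$ by the Radon property. Conversely, suppose $G$ is not compact. Take a compact symmetric neighborhood $V$ of the identity. We build a sequence $g_n\notin \bigcup_{i<n} g_iVV$, which is possible because each finite union of the compact sets $g_iVV$ is compact and so cannot exhaust $G$. The translates $g_nV$ are then pairwise disjoint, each has the same measure $m_G(V)>0$, and so $m_G(G)=\infty$.

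The main obstacle is the existence step, specifically proving additivity of the limiting content and the passage from content to a countably additive regular measure. We handle this by following Halmos's argument as cited, which is the approach referenced for the statement.
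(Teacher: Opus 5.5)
Your existence construction and the non-compactness argument via the disjoint translates $g_nV$ follow the Weil--Halmos route, which is what the paper relies on (it gives no proof, only the citation). The uniqueness step, however, does not work as written. After Fubini on $f(x)g(yx)$, the inner integral $\int g(yx)\,dm'(y)$ is a \emph{right} translate, so the substitution $y\mapsto yx^{-1}$ is not licensed by left invariance. The discrepancy is the modular function, but that function is normally defined through uniqueness: $E\mapsto m'(Ex)$ is a left Haar measure, hence a multiple of $m'$. Appealing to it here is therefore circular.

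The missing idea is to normalize by that quantity. Take $g\ge 0$, $g\not\equiv 0$ (not merely $\int g\,dm\neq 0$), so that $D(x)=\int g(yx)\,dm'(y)$ is continuous and strictly positive. Then apply Fubini to $f(x)g(yx)/D(x)$, using only the left translations $x\mapsto y^{-1}x$ in $m$ and $y\mapsto xy$ in $m'$. This gives
\[
\int f\,dm=\int g\,dm\cdot\int \frac{f(y^{-1})}{D(y^{-1})}\,dm'(y),
\]
so $\int f\,dm/\int g\,dm$ depends only on $f,g,m'$. The same identity with $m'$ in place of $m$ then yields $\int f\,dm=\lambda\int f\,dm'$ with $\lambda=\int g\,dm/\int g\,dm'$.

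You also silently use that Haar measures are Radon. You need this to integrate $C_c$ functions, to pass from functionals to measures by Riesz, and for compact $\Rightarrow m_G(G)<\infty$. This is part of Halmos's setting, where Borel measures are finite on compact sets by definition, but it is not part of the paper's Definition~\ref{HaarDefinition}. Without it the statement is false: counting measure on $\mathrm{O}(2)$ is a left-invariant Borel measure, positive on open sets, infinite on a compact group, and not a multiple of $m_{\mathrm{O}(2)}$. You should state this hypothesis explicitly.

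A minor slip: $(K_0:V)\le (K_0:K)(K:V)$ gives $h_V(K)\ge (K_0:K)^{-1}$, not $(K:K_0)^{-1}$; the latter fails for small $K\subset K_0$. This is harmless, since only the upper bound is needed for Tychonoff and $h(K_0)=1$ already ensures nontriviality.
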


In this paper, we work with the topological groups $$\mathrm{GL}_d(\R) = \{ W \in M_d(\R) \,:\, \det(M) \neq 0 \}$$ and  $$\mathrm{O}(d) = \{ W \in M_d(\R) \,:\, WW^T = W^TW = \mathrm{Id} \}.$$ These are Lie groups and therefore they are locally compact and Hausdorff. Thus they both have Haar measures that we denote as $m_{\mathrm{GL}_d(\R)}$ and $m_{\mathrm{O}(d)}$. Since $\mathrm{O}(d)$ is compact, we normalize $m_{\mathrm{O}(d)}$ to satisfy $m_{\mathrm{O}(d)}(\mathrm{O}(d)) = 1$ so that $m_{\mathrm{O}(d)}$ is a probability measure.

In the main part of the paper we have also discussed measures absolutely continuous to the Haar measure. These are defined as follows.

\begin{definition}\label{TopGpACDefinition}(Absolutely Continuous with respect to the Haar measure)
    Let $G$ be a topological group with Haar measure $m_G$ and let $\mu$ be a Borel measure on $G$. Then we say that $\mu$ is absolutely continuous to $m_G$ if for all Borel sets $C \subset G$ with $m_G(C) = 0$ it holds that $\mu(C) = 0$.
\end{definition}

The fundamental property of measures absolutely continuous to the Haar measure is that they have a density with respect to the Haar measure. This is an instance of the Radon-Nikodym Theorem, which we state here for the reader's convenience.

\begin{theorem}(\citep[See Chapter VI]{HalmosMT} Radon-Nikodym Theorem for Haar measures)\label{RadonNikodym}
    Let $G$ be a topological group with Haar measure $m_G$ and let $\mu$ be a Borel measure on $G$. Then $\mu$ is absolutely continuous to $m_G$ if and only if there is a density $p\in L^1(G,m_G)$ such that $d\mu = p\cdot dm_G$, i.e. for all Borel measurable sets $B \subset G$ it holds that $$\mu(B) = \int_B p(g) \, dm_G(g).$$
\end{theorem}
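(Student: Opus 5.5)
The plan is to reduce the statement to the classical Radon--Nikodym theorem for $\sigma$-finite measure spaces, with the groups $G$ relevant to this paper ($\mathrm{GL}_d(\R)$, $\mathrm{O}(d)$ and, after pushing forward, $\eta\cdot\mathrm{O}(d)$) in mind. Since the density is asked to lie in $L^1(G,m_G)$, I will read $\mu$ as a finite Borel measure, which covers every case used in the paper, where $\mu$ is a probability measure. For general $G$ one would instead assume that $G$ is second countable, or equivalently work with $\sigma$-finite restrictions.

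The easy direction comes first. Suppose $d\mu = p\, dm_G$ with $p \in L^1(G,m_G)$, and let $C$ be a Borel set with $m_G(C)=0$. Then $\mu(C)=\int_C p\, dm_G = 0$, because the integral of any integrable function over a null set vanishes. Hence $\mu$ is absolutely continuous with respect to $m_G$ in the sense of Definition~\ref{TopGpACDefinition}.

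For the converse, the key step is to check that $m_G$ is $\sigma$-finite, since the abstract Radon--Nikodym theorem requires this hypothesis.
\begin{enumerate}
\item A Lie group is a second countable, locally compact Hausdorff space. Covering $G$ by countably many relatively compact open sets therefore shows that $G$ is $\sigma$-compact, i.e.\ $G=\bigcup_n K_n$ with each $K_n$ compact.
\item A Haar measure, as constructed in Theorem~\ref{HaarTheorem}, is a Radon measure, so $m_G(K_n)<\infty$ for every $n$. Consequently $m_G$ is $\sigma$-finite.
\end{enumerate}
For compact $G$ such as $\mathrm{O}(d)$ this step is immediate, since $m_G(G)=1$ after normalization.

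With $\sigma$-finiteness of $m_G$ and finiteness of $\mu$ in hand, I apply the classical theorem. I expect to use the von Neumann proof as the concrete route:
\begin{enumerate}
\item First assume $m_G$ is finite, and consider $\nu=\mu+m_G$.
\item The map $f\mapsto\int f\, d\mu$ is a bounded linear functional on $L^2(\nu)$, so by the Riesz representation theorem it is given by some $h\in L^2(\nu)$.
\item One then shows that $0\le h<1$ holds $\nu$-almost everywhere, using $\mu\ll m_G$ to rule out the set $\{h=1\}$.
\item The density is $p=h/(1-h)$.
\item To remove the finiteness assumption on $m_G$, apply this construction on each $K_n\setminus\bigcup_{j<n}K_j$ and glue the resulting densities together.
\end{enumerate}
Integrability of $p$ follows from $\int_G p\, dm_G=\mu(G)<\infty$, and $p\ge0$ almost everywhere.

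The main obstacle is not the measure theory, which is standard, but the $\sigma$-finiteness requirement. For an arbitrary locally compact Hausdorff group that is not $\sigma$-compact, Haar measure can fail to be $\sigma$-finite, and the statement can then fail as literally written. I would therefore state explicitly that the theorem is applied only to second countable groups, which includes all Lie groups used here. For $\eta\cdot\mathrm{O}(d)$, I would transport the result through the homeomorphism $Q\mapsto\eta Q$, which carries $m_{\mathrm{O}(d)}$ to $m_{\eta\cdot\mathrm{O}(d)}$ by definition.
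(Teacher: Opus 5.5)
Your proposal is correct and matches the paper, which gives no argument of its own beyond citing the classical Radon--Nikodym theorem for $\sigma$-finite measure spaces in Halmos (Chapter VI). Your check that $m_G$ is $\sigma$-finite, via $\sigma$-compactness of Lie groups and finiteness of Haar measure on compact sets, is exactly what makes that citation applicable; the only difference is that you sketch von Neumann's $L^2$ proof where Halmos argues through the Hahn decomposition.

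Your caveats are also justified. As literally stated, the theorem needs $\mu$ finite, since for $\mu = m_G$ on $\mathrm{GL}_d(\R)$ the density $1$ is not in $L^1$. It also needs $m_G$ to be finite on compact sets and $\sigma$-finite, and none of these conditions appears in the statement or in Definition~\ref{HaarDefinition}.
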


\renewcommand{\E}{\mathbb{E}}

\section{Proof of Limit Theorems}
\label{Section:ProofofLLN}

In this section we prove the main limit theorems of this paper, that is Theorem~\ref{ExplicitLeakyReLULLN} and Theorem~\ref{ExplicitLeakyReLUCLT} as well as Lemma~\ref{Lemma:LLNCounterexample}. We first give the idea of the proofs in Section~\ref{section:outlineofproofs}. In Section~\ref{section:AdditiveCocycles}, we expose additive cocycles, which is an important general concept used to establish an abstract Law of Large Numbers and Central Limit Theorem. We then state a Law of Large Numbers for cocycles (Theorem~\ref{CocycleGeneralResult}) in Section~\ref{section:CocycleLLN} followed by a Central Limit Theorem for cocycles (Theorem~\ref{CLTCocycleGeneral}) in Section~\ref{section:CocyleCLT}. In Section~\ref{Section:LLNPosHom} we establish a Law of Large Numbers on $\R^d$ in a more general setting than addressed in the main part of this paper (Theorem~\ref{PositivelyHomLLN}), namely working with positively homogeneous functions. We finally deduce Theorem~\ref{ExplicitLeakyReLULLN} in Section~\ref{section:ProofExplicitLLN} conditional on two assumptions, namely that there is a unique spherical $\mu$-stationary probability measure and that we have a finite second logarithmic moment. We establish general conditions for the uniqueness of spherical measures in Section~\ref{section:UniqueSphericalMeasure} and finally prove suitable logarithmic moment bounds in Section~\ref{section:FinitenessofMoments}. The Central Limit Theorem (Theorem~\ref{ExplicitLeakyReLUCLT}) is proved in Section~\ref{section:ProofExplicitCLT} by establishing the necessary properties from Theorem~\ref{CLTCocycleGeneral}. Finally, Lemma~\ref{Lemma:LLNCounterexample} is shown in Section~\ref{section:CounterexamplesLLN}. 

\subsection{Idea of proofs}\label{section:outlineofproofs}

The proof of the main results is based on similar ideas as for random matrix products, yet with a new viewpoint. Indeed, we decompose $X_{\ell}$ into a directional component on the unit sphere and a magnitude component. More precisely, for the unit sphere $\mathbb{S}^{d-1} = \{x \in \R^d \,:\, |x| = 1 \}$, one studies the spherical distribution $$S_{\ell} = \frac{X_{\ell}}{|X_{\ell}|} \in \mathbb{S}^{d-1}.$$ In the linear case, one works with projective space instead of the unit sphere and the observation that the theory can be developed with activation functions using the unit sphere is the novelty of our proof. The core of the proof of our main results then relies on showing that the Markov chain $S_{\ell}$ is uniformly ergodic (see Theorem~\ref{DoeblinTheorem}) and mixes exponentially fast towards a unique stationary distribution. 

One then studies the magnitude component $$Z_{\ell} = \log \frac{|X_{\ell}|}{|X_{\ell - 1}|}$$ and observes $$\log |X_{\ell}| = Z_{\ell} + Z_{\ell - 1} + \ldots + Z_1.$$ In contrast to the standard Law of Large Numbers, the $Z_i$ are neither independent nor are they identically distributed. However, the ergodicity of $S_{\ell}$, in other words, the rapid forgetting of the initial input direction, allows to deduce our results from a certain abstract Law of Large Numbers (Theorem~\ref{CocycleGeneralResult}) and Central Limit Theorem (Theorem~\ref{CLTCocycleGeneral}). 

We now explain how exactly the ergodicity of $S_{\ell}$ leads to the main results, which also reveals the reason why our strategy only works for Leaky ReLU activation functions. Indeed, we observe that for $\phi = \max(x,\alpha x)$ with $\alpha \in \R_{\neq 0}$, $W\in M_d(\R)$ with $\det(W) \neq 0$ and assuming $X_{\ell} = \phi(WX_{\ell - 1})$ the following fundamental equality holds:
\begin{equation}\label{LeakyReLUReason}
    Z_{\ell} = \log \frac{|X_{\ell}|}{|X_{\ell - 1}|} = \log \frac{|\phi(WX_{\ell - 1})|}{|X_{\ell - 1}|} = \log |\phi(W\tfrac{X_{\ell - 1}}{|X_{\ell - 1}|})| = \log |\phi(WS_{\ell-1})|.
\end{equation}

The latter equation holds for the Leaky ReLU activation function 
as it is positively homogeneous, that is $\phi(\lambda x) = \lambda\phi(x)$ for every $\lambda > 0$ and $x \in \R$. Thus, $Z_{\ell}$ only depends on $S_{\ell-1}$ and therefore convergence results for $S_{\ell-1}$, using in our case uniform ergodicity, can be translated to limit theorems for $Z_{\ell}$. 

In contrast, other standard activation functions such as $\tanh$ or $\text{sigmoid}$ do not satisfy this positive homogeneity, which prevents the proof strategy from working. Indeed, as shown in Section~\ref{section:FailureCLT} the Central Limit Theorem fails for example for $\tanh$ networks.

\subsection{Additive Cocycles}\label{section:AdditiveCocycles}

We deduce Theorem~\ref{ExplicitLeakyReLULLN} and Theorem~\ref{ExplicitLeakyReLUCLT} with similar methods to well-known results in the theory of random walks on Lie groups. Two excellent references for this topic are \cite{BougerolLacroix1985} and \cite{BenoistQuintRandomBook}. We recall some abstract definitions and results from these two books that we use for our proofs.

\begin{definition}
    A \textbf{topological semigroup} $G$ is a semigroup $G$ that is endowed with a topology such that the multiplication map $G \times G \to G, (g_1,g_2) \mapsto g_1g_2$ is continuous. 
\end{definition}

\begin{definition}
    Let $G$ be a topological semigroup. We say that $G$ \textbf{acts continuously} on the topological space $X$ if there  is a continuous map $G \times X \to X, (g,x) \mapsto g. x$ such that $$(g_1g_2).x = g_1.(g_2.x)$$ for all $g_1, g_2 \in G$ and $x \in X$.
\end{definition}

\begin{definition}
    Let $G$ be a topological semigroup acting continuously on a topological space $X$. A continuous map $c: G \times X \to \R$ is called an \textbf{additive cocycle} if $$c(g_1g_2,x) = c(g_1, g_2.x) + c(g_2,x)$$ for all $g_1,g_2\in G$ and $x \in X$.
\end{definition}

To motivate this definition, we explain how these appear in the study of random matrix products. Indeed, we denote 
\begin{equation}\label{YlDef}
    Y_{\ell} = W_{\ell}W_{\ell - 1}\cdots W_1 x_0
\end{equation}
for some fixed $x_0$ and $W_i$, as before, an i.i.d sequence of matrices in $$\mathrm{GL}_d(\R) = \{ W \in M_d(\R) \,:\, \det(W) \neq 0 \}$$ distributed as the measure $\mu$. A law of large numbers for $|Y_{\ell}|$ analogously to Theorem~\ref{ExplicitLeakyReLULLN} is well-known and shown in \citep[Part A III.7] {BougerolLacroix1985} and \citep[Chapter 4.6]{BenoistQuintRandomBook}. 

The central idea in proving a law of large numbers for $Y_{\ell}$ is to consider the $\mathrm{GL}_d(\R)$ action on the space of directions of $\R^d$. Mathematically, this is captured by projective space $$\mathbb{P}\R^d = (\mathbb{R}^d \backslash \{0 \})/\R_{\neq 0},$$ i.e. the set of equivalence classes of $\R^d \backslash \{0 \}$ under the equivalence relation given for $x,y \in \R^d \backslash \{0 \}$ as $x \sim y$ if and only there is $\lambda \in \R_{\neq 0}$ such that $x = \lambda y$. For $x \in \R^d \backslash \{0\}$ we denote by $[x] \in \mathbb{P}\R^d$ the associated element in projective space. We then consider for $W \in \mathrm{GL}_d(\R)$ and $[x] \in \mathbb{P}\R^d$ the action $$W.[x] = [Wx],$$ which is well-defined since $W$ is a matrix and therefore a linear map.

We then want to study the additive cocycle 
\begin{equation}\label{MatrixCocycle}
    c_{\mathrm{mat}}(W,[x]) = \log \frac{|Wx|}{|x|}
\end{equation}
for $W \in \mathrm{GL}_d(\R)$ and $[x] \in \mathbb{P}\R^d$, which again is easily seen to be well-defined. The map $c_{\mathrm{mat}}$ is indeed an additive cocycle since for $W_1, W_2 \in G$ and $[x] \in \mathbb{P}\R^d$ we have 
\begin{align*}
    c_{\mathrm{mat}}(W_1W_2,[x]) &= \log \frac{|W_1W_2x|}{|x|} \\ &= \log \frac{|W_1W_2x|}{|W_2x|} + \log \frac{|W_2x|}{|x|} \\  &= c_{\mathrm{mat}}(W_1, W_2.[x]) + c_{\mathrm{mat}}(W_2,[x]).
\end{align*}

We now want to study the cocycle $c_{\mathrm{mat}}$ for the random walk $Y_{\ell}$. 

\subsection{Law of Large Numbers for Additive Cocycles}\label{section:CocycleLLN}

\begin{definition}
    Let $G$ be a semigroup acting continuously on a compact space $X$. Given a measure $\mu$ on $G$ and $\nu$ on $X$ we denote by $\mu*\nu$ the \textbf{convolution} of $\mu$ and $\nu$, which is the measure on $X$ determined uniquely by $$(\mu * \nu)(f) = \int f(g.x) \, d\mu(g) d\nu(x)$$ for every continuous function $f:X \to \R$.
\end{definition}

\begin{definition}
    Let $G$ be a semigroup acting continuously on a compact space $X$. Let $\mu$ be a measure on $G$ and $\nu$ be one on $X$. Then we say that $\nu$ is a \textbf{$\mu$-stationary measure} if $$\mu * \nu = \nu.$$
\end{definition}

Under the assumption that $\mu$ has a unique stationary measure on $X$, the following result is known and implies the Law of Large Numbers for $Y_{\ell}$, as we explain below. 

\begin{theorem}\citep[Follows from Theorem 3.9]{BenoistQuintRandomBook}\label{CocycleGeneralResult}
    Let $G$ be a topological semigroup acting continuously on a compact metric space $X$. Let $\mu$ be a probability measure on $G$ and assume that there is a unique $\mu$-stationary probability measure on $X$. Let $c:G \times X \to \R$ be an additive cocycle with 
    \begin{equation}\label{CocycleL1Condition}
        \int_G c_{\mathrm{sup}}(g) \, d\mu(g) < \infty\quad \quad \text{ for } \quad\quad c_{\mathrm{sup}}(g) = \sup_{x \in X}|c(g,x)|.
    \end{equation} 
    
    Then there is a constant $\lambda_{\mu, c} \in \R$ such that for all $x\in X$, and $\mathbb{P}_{\mu}$-almost all $\omega \in \Omega$ it holds that $$\frac{1}{\ell} c(g_{\ell}(\omega)\cdots g_1(\omega),x) \to \lambda_{\mu,c}$$ as $\ell \to \infty$. Moreover, the convergence also holds in $L^1$ uniformly for all $x \in X$, i.e. 
    \begin{equation}\label{UnifL1Conv}
       \lim_{\ell \to \infty} \sup_{x \in X} \mathbb{E}\left[ \bigg| \frac{1}{\ell} c(g_\ell\cdots g_1,x) - \lambda_{\mu,c} \bigg| \right] = 0. 
    \end{equation}

    In addition, if $\nu$ is the unique stationary probability measure on $X$, then we can express $\lambda_{\mu,c}$ as $$\lambda_{\mu,c} = \int_{G \times X} c(g,x) \, d\mu(g)d\nu(x).$$
\end{theorem}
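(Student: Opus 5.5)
The statement is Theorem~\ref{CocycleGeneralResult}, cited as following from \citep[Theorem 3.9]{BenoistQuintRandomBook}, so the plan is to derive it from the ergodic theory of the skew-product (random walk) system rather than from scratch.

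\textbf{Setup.} Consider the Markov operator $P f(x) = \int_G f(g.x)\, d\mu(g)$ on $C(X)$. Because $X$ is compact and the action is continuous, $\mu$-stationary probability measures exist (Markov--Kakutani, or a Krylov--Bogolyubov argument). The hypothesis says there is exactly one, $\nu$.

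Define the averaged cocycle $\bar c(x) = \int_G c(g,x)\, d\mu(g)$. It is continuous by dominated convergence, using the integrable majorant $c_{\sup}$ from \eqref{CocycleL1Condition}. Iterating the cocycle identity gives
\[
c(g_\ell\cdots g_1,x)=\sum_{k=1}^{\ell} c(g_k, x_{k-1}), \qquad x_k=g_k\cdots g_1.x .
\]
Write each term as $\bar c(x_{k-1}) + M_k$, where $M_k = c(g_k,x_{k-1})-\bar c(x_{k-1})$.

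\textbf{Martingale part.} The $M_k$ are martingale differences for the filtration generated by $g_1,\dots,g_k$. They are dominated by $c_{\sup}(g_k)+\E c_{\sup}$, which is i.i.d.\ and in $L^1$. A martingale strong law for differences dominated by an i.i.d.\ $L^1$ sequence (truncate at level $k$, then apply Kolmogorov's criterion, which needs only a first moment by the standard truncation argument) gives $\frac1\ell\sum_k M_k\to0$ almost surely and in $L^1$. The $L^1$ bound depends only on the law of $c_{\sup}(g)$, so it is uniform in $x$.

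\textbf{Ergodic average of $\bar c$.} This is the main step. Uniqueness of $\nu$ implies that for every $f\in C(X)$,
\[
\frac1\ell\sum_{k<\ell}P^kf \to \nu(f) \quad \text{uniformly on } X.
\]
To see this, suppose not. Then there are $x_\ell$ and a subsequence along which $\frac1\ell\sum_{k<\ell}\delta_{x_\ell}P^k$ converges weak-$*$ to some measure $\ne\nu$. Any such limit is stationary, which contradicts uniqueness. Applied to $f=\bar c$, this gives uniform convergence of the expectations $\frac1\ell\E_x\sum_k\bar c(x_{k-1})$ to $\int\bar c\,d\nu=\int c\,d\mu\,d\nu$. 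This settles the last claim of the theorem, that $\lambda_{\mu,c}=\int c\,d\mu\,d\nu$.

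To upgrade this to almost-sure convergence for every starting point $x$, use Breiman's strong law for Markov chains with a unique invariant measure on a compact space. Its proof repeats the martingale trick with $f$ replaced by $f-Pf+Pf$: the sum $\sum_k\big(f(x_k)-Pf(x_{k-1})\big)$ is a bounded-increment martingale and so is $o(\ell)$ almost surely. Together with the uniform convergence of Cesàro averages of $P^kf$ for all continuous $f$, this gives that almost surely every weak-$*$ limit of the empirical measures $\frac1\ell\sum\delta_{x_k}$ is stationary, hence equals $\nu$. It follows that $\frac1\ell\sum\bar c(x_{k-1})\to\nu(\bar c)$ almost surely for every $x$.

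For the uniform $L^1$ statement \eqref{UnifL1Conv}, note that $\bar c$ is bounded, so the ergodic part needs uniform convergence in probability. I expect to get this by applying the same weak-$*$ compactness argument to the laws of the empirical measures, uniformly in $x$. Combining this with the martingale part yields both the almost-sure and the uniform $L^1$ statements.
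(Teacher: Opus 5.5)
Your argument is correct, but it follows a different route from the paper. The paper gives no self-contained proof. It cites \citep[Theorem 3.9]{BenoistQuintRandomBook} and describes that proof as Birkhoff's ergodic theorem for the skew shift on $G^{\mathbb N}\times X$ combined with uniqueness of $\nu$. It only remarks that in its own Doeblin setting a martingale law of large numbers would also suffice.

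You instead write $c(g_\ell\cdots g_1,x)$ as the sum of two pieces: a martingale whose increments are dominated by the i.i.d.\ integrable variables $c_{\mathrm{sup}}(g_k)+\E[c_{\mathrm{sup}}]$, and the additive functional $\sum_k\bar c(x_{k-1})$ of the Feller chain on $X$. You treat the second piece with Breiman's strong law, which needs only uniqueness of $\nu$, not Doeblin. Your route buys several things:
\begin{itemize}
\item Almost sure convergence for \emph{every} starting point comes out directly. Birkhoff on $(G^{\mathbb N}\times X,\mathbb{P}_\mu\otimes\nu)$ by itself only gives $\nu$-a.e.\ $x$ and needs an extra step for every $x$ and for uniformity.
\item Uniformity in $x$ comes from bounds that depend only on the law of $c_{\mathrm{sup}}$ and on sup-norms.
\item It makes transparent the paper's claim that local compactness or second countability of $G$ is irrelevant. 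You only use continuity of the action and of $c$ on $G\times X$, and that $P$ preserves $C(X)$.
\item It is the same decomposition the paper later uses for the CLT, via the Poisson equation in Lemma~\ref{LemmaGeneralCLT}.
\end{itemize}
The Birkhoff route, in exchange, is more conceptual and still gives $\nu$-a.e.\ convergence for each ergodic stationary $\nu$ when uniqueness fails.

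Two places need tightening.

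First, in the Breiman step the uniform Ces\`aro convergence of $P^kf$ is not what does the work. You need the bounded-increment martingale bound for a countable dense family of $h\in C(X)$, so that one null set serves all of them. Then $\|Ph\|_\infty\le\|h\|_\infty$ and the Feller property show that almost surely every weak-$*$ limit of the empirical measures is stationary.

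Second, the uniform Ces\`aro convergence is exactly what closes the uniform $L^1$ statement, which you only sketch. Your compactness-in-law idea does work. The bound $\E_x\bigl|\frac1\ell\sum_{k\le\ell}(h(x_k)-Ph(x_{k-1}))\bigr|\le 2\|h\|_\infty\ell^{-1/2}$ is uniform in $x$. Hence every limit law of the empirical measures along a sequence $(x_j,\ell_j)$ is carried by stationary measures, and so equals $\delta_\nu$.

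A second-moment bound is shorter. Set $f=\bar c-\nu(\bar c)$ and $S_\ell=\sum_{k<\ell}f(x_k)$, and use $\E_x[f(x_j)f(x_k)]=\E_x[f(x_j)\,P^{k-j}f(x_j)]$ for $j<k$. Then
\[
\sup_{x\in X}\E_x\bigl[S_\ell^2\bigr]\le \ell\,\|f\|_\infty^2+2\|f\|_\infty\sum_{n=1}^{\ell-1}\Bigl\|\sum_{m=1}^{n}P^mf\Bigr\|_\infty=o(\ell^2).
\]
The right side is $o(\ell^2)$ because $\nu(f)=0$ and your uniform Ces\`aro convergence give $\|\sum_{m=1}^{n}P^mf\|_\infty=o(n)$.
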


We remark that \citep[Theorem 3.9]{BenoistQuintRandomBook} is stated with the additional assumption that $G$ is second countable and locally compact. However, this assumption is not necessary. Indeed, the proof in \citep[Theorem 3.9]{BenoistQuintRandomBook} proceeds via Birkhoff's ergodic theorem on the shift space $G^{\mathbb{N}}$ together with uniqueness of the $\mu$-stationary measure; neither ingredient requires second countability or local compactness of $G$. Moreover, in the setting in which we will apply Theorem~\ref{CocycleGeneralResult}, we can also conclude the required results by applying a martingale Law of Large Numbers together with the Doeblin condition from  section~\ref{section:UniqueSphericalMeasure}.

One can directly deduce a law of large numbers for $Y_{\ell}$ from Theorem~\ref{CocycleGeneralResult} under the assumption that a given measure $
\mu$ on $\mathrm{GL}_d(\R)$ has a unique stationary measure on $\mathbb{PR}^d$. For convenience of the reader, we state it here. 

\begin{corollary}\label{StandardMatrixLLN}
    Let $\mu$ be a probability measure on $\mathrm{GL}_d(\R)$ satisfying $\int \log N(W) \, d\mu(W) < \infty$. Let $x_0 \in \R^d\backslash \{0\}$ and let $Y_{\ell}$ be as in \eqref{YlDef}. Moreover, assume that there is a unique $\mu$-stationary measure on $\mathbb{PR}^d$. 
    
    Then there exists a real number $\lambda_{\mu} \in \R$ depending only on $\mu$ such that for each fixed $x_0 \in \R^d \backslash \{ 0\}$ as $\ell \to \infty$,
     \begin{equation}\label{MatNormLLN}
         \frac{1}{\ell} \log |Y_{\ell}(\omega)| \longrightarrow \lambda_{\mu}
     \end{equation} for $\mathbb{P}_{\mu}$-almost all $\omega \in \Omega$. Moreover, the convergence is in $L^1$ uniformly for $x_0$ of constant modulus, i.e. $$\lim_{\ell \to \infty}\sup_{x_0 \in \R^d \backslash \{ 0\}} \mathbb{E} \left[ \bigg|\frac{1}{\ell}\left( \log |Y_{\ell}(\omega)| - \log |x_0| \right) - \lambda_{\mu} \bigg| \right] =0.$$
\end{corollary}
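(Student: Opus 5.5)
We apply Theorem~\ref{CocycleGeneralResult} with $G = \mathrm{GL}_d(\R)$, $X = \mathbb{P}\R^d$, the action $W.[x] = [Wx]$ and the additive cocycle $c_{\mathrm{mat}}$ from \eqref{MatrixCocycle}. Here $N(W)$ is read as $N(W) = \max(\|W\|, \|W^{-1}\|)$ in operator norm, so that $\log N(W) \ge 0$. The three hypotheses of that theorem are checked in turn.

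\textbf{Setting.} $\mathrm{GL}_d(\R)$ is a topological group, hence a topological semigroup. $\mathbb{P}\R^d$ is a compact metric space, being a quotient of $\mathbb{S}^{d-1}$ by $x \sim -x$; use for instance the angle metric between lines. The map $(W,[x]) \mapsto [Wx]$ is well defined because $Wx \neq 0$ for invertible $W$, and it is continuous.

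\textbf{The cocycle.} $c_{\mathrm{mat}}$ is well defined on $\mathbb{P}\R^d$ since it is invariant under $x \mapsto \lambda x$. It is continuous, and the cocycle identity was verified right after \eqref{MatrixCocycle}.

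\textbf{Integrability.} For $|x| = 1$ we have
\[
\|W^{-1}\|^{-1} \le |Wx| \le \|W\|,
\]
so $c_{\mathrm{sup}}(W) \le \log N(W)$. Hence \eqref{CocycleL1Condition} follows from the assumed moment condition. Uniqueness of the $\mu$-stationary measure on $\mathbb{P}\R^d$ is assumed.

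\textbf{Applying the theorem.} Theorem~\ref{CocycleGeneralResult} yields $\lambda_\mu := \lambda_{\mu,c_{\mathrm{mat}}}$, which depends only on $\mu$. For every $[x]$, almost surely
\[
\tfrac{1}{\ell}\, c_{\mathrm{mat}}(W_\ell \cdots W_1, [x]) \to \lambda_\mu,
\]
with $L^1$ convergence uniform in $[x]$. Taking $[x] = [x_0]$ gives
\[
c_{\mathrm{mat}}(W_\ell\cdots W_1,[x_0]) = \log|Y_\ell| - \log|x_0|.
\]
Since $\tfrac{1}{\ell}\log|x_0| \to 0$ for fixed $x_0$, this gives \eqref{MatNormLLN}. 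The supremum over $x_0 \in \R^d\setminus\{0\}$ in the uniform statement equals the supremum over $[x] \in \mathbb{P}\R^d$, because the expression depends only on the direction of $x_0$. This gives the $L^1$ claim.

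\textbf{Main obstacle.} The only subtle point is fixing the meaning of $N(W)$ so that it dominates $c_{\mathrm{sup}}$. If $N(W)$ were intended as $\|W\|$ alone, the lower bound would need a separate hypothesis such as $\int \log\|W^{-1}\|\,d\mu < \infty$.
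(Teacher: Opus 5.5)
Your proposal is correct and follows the paper's proof. Both apply Theorem~\ref{CocycleGeneralResult} to $c_{\mathrm{mat}}$ on $\mathbb{P}\R^d$, and the only real check is $\sup_{[x]}|c_{\mathrm{mat}}(W,[x])| \le \log N(W)$, obtained from $\|W^{-1}\|^{-1} \le |Wx|/|x| \le \|W\|$. You also spell out two steps the paper leaves implicit: the identification $c_{\mathrm{mat}}(W_\ell\cdots W_1,[x_0]) = \log|Y_\ell| - \log|x_0|$, and the reduction of the supremum over $x_0$ to a supremum over $[x]\in\mathbb{P}\R^d$.
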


\begin{proof}
    This follows directly from Theorem~\ref{CocycleGeneralResult}. Indeed, in order to apply Theorem~\ref{CocycleGeneralResult} for the $\mathrm{GL}_d(\R)$ action on $\mathbb{PR}^d$ and to $c_{\mathrm{mat}}$ from \eqref{MatrixCocycle}, we need to check the condition \eqref{CocycleL1Condition}. The latter follows by proving for all $x \in \R^d\backslash\{0\}$ and $W \in \mathrm{GL}_d(\R)$ that 
    \begin{equation}\label{NormIneq}
        |c_{\mathrm{mat}}(W,x)| = \bigg| \log \frac{|Wx|}{|x|}\bigg| \leq \log N(W),
    \end{equation} which implies \eqref{CocycleL1Condition} by our assumption $\int \log N(W) \, d\mu(W) < \infty$.
    
    To prove \eqref{NormIneq}, note that by $\frac{|Wx|}{|x|} \leq ||W||$ as well as $|x| = |W^{-1}Wx| \leq ||W^{-1}|| \, |Wx|$. So it follows that $$-\log ||W^{-1}|| \leq \log \frac{|Wx|}{|x|} \leq \log ||W||.$$ Moreover, since $||W|| \leq N(W)$ we have $\log ||W|| \leq  \log N(W)$ and as $||W^{-1}|| \leq N(W)$ we have $-\log N(W) \leq - \log ||W^{-1}||$. Thus it follows that $$-\log N(W) \leq \log \frac{|Wx|}{|x|} \leq \log N(W),$$ which implies \eqref{NormIneq} as $1 = ||WW^{-1}|| \leq ||W|| \cdot ||W^{-1}||$ which shows that either $||W|| \geq 1$ or $||W^{-1}|| \geq 1$ and so $\log \max \{ ||W||, ||W^{-1}|| \} \geq 0$.
\end{proof}

In \cite{BougerolLacroix1985} and \cite{BenoistQuintRandomBook} results stronger than Corollary~\ref{StandardMatrixLLN} are proven. In fact, instead of assuming that $\mu$ has a unique stationary measure, it suffices to assume that $\mu$ is irreducible, that is, if a subspace $V \subset \R^d$ satisfies $gV \subset V$ for all $g \in \mathrm{supp}(\mu)$, then $V = \{0\}$ or $V = \R^d$. As in this paper, we care about absolutely continuous measures, we do not discuss the irreducible case further. It is well-known that if $\mu$ is an absolutely continuous probability measure on $\mathrm{GL}_d(\R)$, then there is a unique $\mu$-stationary measure on $\mathbb{PR}^d$.

\begin{proposition}\citep[Follows from Proposition 3.7]{BenoistQuintRandomBook} \label{StandardLLNUniqueStat} If $\mu$ is an absolutely continuous probability measure on $\mathrm{GL}_d(\R)$, then there is a unique $\mu$-stationary probability measure on $\mathbb{PR}^d$. In particular, if in addition $\int \log N(W) \, d\mu(W) < \infty$, then the conclusion of Corollary~\ref{StandardMatrixLLN} holds for $\mu$.
\end{proposition}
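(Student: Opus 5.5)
The statement has two parts: uniqueness of the $\mu$-stationary probability measure on $\mathbb{PR}^d$, and the law of large numbers, which then follows at once from Corollary~\ref{StandardMatrixLLN}. So the real task is uniqueness. I would prove it with a Doeblin-type minorization. The plan is to show that the one-step transition kernel $P(x,\cdot) = \mu * \delta_x$ on $\mathbb{PR}^d$ dominates a fixed positive multiple of a fixed probability measure, uniformly in $x$. Any two stationary measures would then be contracted in total variation and must coincide.

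\textbf{Step 1: local minorization for $\mu$.} Absolute continuity means $\mu = p \cdot m_{\mathrm{GL}_d(\R)}$ with $p \in L^1$ nonzero. By Lebesgue density, or by approximating $p$ from below by a continuous function, there is a nonempty open set $U \subset \mathrm{GL}_d(\R)$ and a constant $c>0$ with $\mu \geq c\, m_{\mathrm{GL}_d(\R)}|_U$ after trimming $p$ to a set of positive measure. Replacing $\mu$ by a convolution power $\mu^{*n}$ is harmless, since a $\mu$-stationary measure is also $\mu^{*n}$-stationary. Uniqueness for $\mu^{*n}$ therefore gives uniqueness for $\mu$. Convolutions of absolutely continuous measures on a Lie group have continuous densities, so $\mu^{*2}$ has a continuous density. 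This lets me assume $p \geq c > 0$ on an open ball $U = B(g_0,r)$.

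\textbf{Step 2: uniform minorization of the kernel.} For $[x] \in \mathbb{PR}^d$, the pushforward of $m_{\mathrm{GL}_d}|_{U}$ under $g \mapsto [gx]$ has a density on a neighbourhood of $[g_0 x]$ that is bounded below. The reason is that the orbit map $g\mapsto[gx]$ is a submersion, since $\mathrm{GL}_d$ acts transitively on $\mathbb{PR}^d$. The lower bounds vary continuously in $x$, and $\mathbb{PR}^d$ is compact, but the target neighbourhoods move with $x$, so one step is not enough. To fix this I would use two steps, i.e. work with $\mu^{*2}$ (or $\mu^{*k}$). The density of $\mu^{*k}$ is positive on $U^k$, and $U^k$ contains a larger open set once the semigroup generated by $U$ is taken into account. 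Concretely, I would pick $h_1, h_2 \in U$-products so that for each $[x]$ some element of $U^k$ sends a neighbourhood of $[x]$ into a fixed open set $V$. Compactness of $\mathbb{PR}^d$ then yields $P^k([x],\cdot) \geq \varepsilon\, \mathrm{Leb}|_V$ for all $[x]$.

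\textbf{Step 3: conclusion.} The Doeblin condition implies that $P^k$ is a strict contraction in total variation on probability measures, so $P^k$ has at most one invariant probability measure. Existence follows from compactness via Markov–Kakutani, or by taking Cesàro averages of $\mu^{*n}*\delta_x$. Combining this with the reduction in Step 1 gives uniqueness of the $\mu$-stationary measure. Finally, under $\int \log N(W)\,d\mu<\infty$, Corollary~\ref{StandardMatrixLLN} applies and gives the LLN.

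I expect the main obstacle to be Step 2: making the minorization uniform over all starting directions when $U$ is small and nowhere near the identity. My first attempt would use the fact that the closed semigroup generated by an open subset of the connected group $\mathrm{GL}_d^+$ (or of $\mathrm{GL}_d$) acts transitively and, for large $k$, $U^k$ contains elements mapping any given compact set of directions near a fixed direction. If this proves awkward, I would fall back on invariance: any $\mu$-stationary measure is quasi-invariant and absolutely continuous on $\mathbb{PR}^d$, and two distinct ergodic stationary measures would need disjoint supports invariant under the open set $U$. That is impossible because $U^k$ acts with open, eventually overlapping orbits.
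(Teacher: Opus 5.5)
Your plan (pass to $\mu^{*2}$, minorize the kernel, conclude by a Doeblin contraction, then invoke Corollary~\ref{StandardMatrixLLN}) is reasonable, but there is a genuine gap in Step~2, and it sits at the heart of the statement. To get $P^k([x],\cdot)\ge\varepsilon\,\mathrm{Leb}|_V$ with $V$ independent of $[x]$, the semigroup $S=\bigcup_k U^k$ must bring every direction into one fixed open set. That means you need a direction $\xi_0$ with $\xi_0\in\overline{S[x]}$ for all $[x]\in\mathbb{PR}^d$.

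Your justification is that the closed semigroup generated by an open subset of $\mathrm{GL}_d^+(\R)$ acts transitively on $\mathbb{PR}^d$. This is false for $d\ge 2$. Take $U$ a small neighbourhood of a matrix with positive entries and positive determinant. Then every element of $\overline{S}$ is entrywise nonnegative, so the set of directions of the closed positive orthant is a proper closed $\overline{S}$-invariant subset. This is the same mechanism as in Lemma~\ref{Lemma:LLNCounterexample}(2). Your fallback has the same hole: ``open, eventually overlapping orbits'' is just the unproved assertion that any two closed $S$-invariant sets (for instance supports of two ergodic stationary measures) must intersect.

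The missing ingredient is proximality.
\begin{itemize}
\item Because $S$ is open, it is Zariski dense in $\mathrm{GL}_d$, and so it contains a proximal element $h$ (Goldsheid--Margulis, Guivarc'h--Raugi). Let $\xi_h$ be its attracting direction and $H_h$ its repelling hyperplane.
\item For $[x]\notin\mathbb{P}H_h$ one has $h^n[x]\to\xi_h$.
\item For $[x]\in\mathbb{P}H_h$, first move off $\mathbb{P}H_h$ with some $g\in S$; this is possible because $S[x]$ is open.
\item Hence $\xi_h$ lies in every orbit closure.
\end{itemize}
Only then does your submersion-plus-compactness argument give a minorization. It is for an averaged kernel $\frac1M\sum_{k\le M}P^k$, since the number of steps depends on $[x]$, but that still suffices for uniqueness.

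The existence of a proximal element is the nontrivial input, and it is exactly what the paper outsources. The paper gives no argument of its own for this proposition. It cites Benoist--Quint, i.e.\ the general theory of linear random walks, where uniqueness on $\mathbb{PR}^d$ comes from strong irreducibility and proximality of $\Gamma_\mu$; both are automatic for absolutely continuous $\mu$. The LLN part then follows from Corollary~\ref{StandardMatrixLLN}, as in your proposal. The paper's one-step Doeblin proof of Proposition~\ref{UniquenessofStationaryMeasure} works only because Assumptions~\ref{MeasureAssumptions} and~\ref{SecondMeasureAssumptions} supply a rotation-invariant minorizing piece of $\mu$, which a general absolutely continuous $\mu$ lacks.

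A smaller point in Step~1: convolutions of $L^1$ densities need not be continuous. Even on $\R$, $|t|^{-1/2}\mathbf 1_{[-1,1]}$ convolved with itself is unbounded at $0$. What is true, and suffices, is that $\mathbf 1_A*\mathbf 1_B$ is continuous and not identically zero for sets $A,B$ of finite positive Haar measure. This gives $\mu^{*2}\ge c\,m_{\mathrm{GL}_d(\R)}|_U$ for some open $U$ and some $c>0$.
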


\subsection{Central Limit Theorem for Additive Cocycles}\label{section:CocyleCLT}

We next state a Central Limit Theorem for abstract cocycles from \cite{BenoistQuintCLT}. To do so, we introduce some further definitions from \cite{BenoistQuintCLT}.

\begin{definition}\label{MoreCocycleDefs}
    Let $G$ be a semigroup acting continuously on a compact space $X$ and let $c: G \times X \to \R$ be an additive cocycle. Moreover, let $\mu$ be a probability measure on $G$. We then say that $c$  
    \begin{enumerate}
        \item has \textbf{constant drift with average} $\lambda_{\mu,c} \in \R$ such that for all $x\in X$ we have that $$ \int c(g,x) \, d\mu(g) = \lambda_{\mu,c}.$$
        \item is \textbf{centerable} if there exists a constant drift additive cocycle $c_0:G \times X \to \R$ and a continuous function $\psi : X \to \R$ such that for all $g \in G$ and $x\in X$, $$c(g,x) = c_0(g,x) + \psi(x) - \psi(g.x).$$
    \end{enumerate}
\end{definition}

We can now state an abstract Central Limit Theorem.

\begin{theorem}\citep[Theorem 3.4]{BenoistQuintCLT}\label{CLTCocycleGeneral}
    Let $G$ be a topological semigroup acting continuously on a compact metric space $X$. Let $\mu$ be a probability measure on $G$ and assume that there is a unique $\mu$-stationary probability measure on $X$. Let $c:G \times X \to \R$ be a centerable additive cocycle with average $\lambda_{\mu,c}$ and satisfying
    \begin{equation}\label{BQCocycleL1Condition}
        \int_G c_{\mathrm{sup}}(g)^2 \, d\mu(g) < \infty\quad \quad \text{ for } \quad\quad c_{\mathrm{sup}}(g) = \sup_{x \in X}|c(g,x)|.
    \end{equation} 
    
    Then there exists a constant $\gamma_{\mu,c} \geq 0$ such that for every $x\in X$ we have for i.i.d. $g_{i} \sim \mu$ that \begin{equation}\label{CocycleCLT}
\frac{c(g_{\ell}\cdots g_1,x) - \ell \lambda_{\mu, c}}{\sqrt{\ell}} \overset{\ell \to \infty}{\longrightarrow} \mathcal{N}(0,\gamma_{\mu,c}^2),
\end{equation} where the convergence is in distribution.
\end{theorem}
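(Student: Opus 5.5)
The plan is to reduce the statement to a martingale central limit theorem, in three steps: remove the coboundary, write the constant-drift part as a sum of martingale differences, and control the conditional variances using uniqueness of the stationary measure.

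\textbf{Step 1 (reduction to constant drift).} Since $c$ is centerable, write $c = c_0 + \psi - \psi\circ g$ with $c_0$ of constant drift $\lambda_{\mu,c}$ and $\psi$ continuous on the compact space $X$, so $\|\psi\|_\infty<\infty$. By the cocycle property, along $x_k = g_k\cdots g_1.x$ we get
\[
c(g_\ell\cdots g_1,x) = c_0(g_\ell\cdots g_1,x) + \psi(x) - \psi(x_\ell).
\]
The boundary term is bounded by $2\|\psi\|_\infty$. Divided by $\sqrt{\ell}$ it tends to $0$ deterministically, so by Slutsky it suffices to prove \eqref{CocycleCLT} for $c_0$. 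The same identity gives $\sup_x|c_0(g,x)| \le c_{\mathrm{sup}}(g) + 2\|\psi\|_\infty$, so $c_0$ still satisfies \eqref{BQCocycleL1Condition}.

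\textbf{Step 2 (martingale decomposition).} Again by the cocycle identity,
\[
c_0(g_\ell\cdots g_1,x) - \ell\lambda_{\mu,c} = \sum_{k=1}^{\ell} M_k, \qquad M_k = c_0(g_k,x_{k-1}) - \lambda_{\mu,c}.
\]
Let $\mathscr{F}_k = \sigma(g_1,\dots,g_k)$. The point $x_{k-1}$ is $\mathscr{F}_{k-1}$-measurable and $g_k$ is independent of $\mathscr{F}_{k-1}$. The constant drift property then gives $\mathbb{E}[M_k\mid\mathscr{F}_{k-1}]=0$, so $(M_k)$ is a square-integrable martingale difference sequence. I will apply the martingale CLT (Brown / McLeish) with $\gamma_{\mu,c}^2 = \int_X F\,d\nu$, where $\nu$ is the unique stationary measure and
\[
F(y) = \int_G \bigl(c_0(g,y)-\lambda_{\mu,c}\bigr)^2\, d\mu(g).
\]
By construction $\mathbb{E}[M_k^2\mid\mathscr{F}_{k-1}] = F(x_{k-1})$. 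Two conditions need checking:
\begin{enumerate}
\item[(a)] $\frac1\ell\sum_{k=1}^\ell F(x_{k-1}) \to \int F\,d\nu$ in probability;
\item[(b)] the conditional Lindeberg condition.
\end{enumerate}
Condition (b) is straightforward. With the integrable majorant $h(g) = \bigl(c_{\mathrm{sup}}(g)+2\|\psi\|_\infty+|\lambda_{\mu,c}|\bigr)^2$ we have
\[
\mathbb{E}\bigl[M_k^2 1_{\{|M_k|>\varepsilon\sqrt\ell\}}\mid\mathscr{F}_{k-1}\bigr] \le \int h\,1_{\{h>\varepsilon^2\ell\}}\,d\mu,
\]
uniformly in $k$. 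The right-hand side tends to $0$ by dominated convergence.

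\textbf{Step 3 (ergodic averages; the main obstacle).} Condition (a) is where uniqueness of the stationary measure enters, and I expect it to be the main obstacle. First, $F$ is continuous: $c_0(g,\cdot)$ is continuous, and the integrand is dominated by $h(g)$, so dominated convergence applies. Second, the Markov operator $Pf(y)=\int f(g.y)\,d\mu(g)$ is Feller, again by continuity of the action and dominated convergence.

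For a Feller chain on a compact metric space with a unique stationary probability measure $\nu$, Breiman's ergodic theorem gives $\frac1\ell\sum_{k<\ell} f(x_k)\to\int f\,d\nu$ almost surely for every continuous $f$ and every starting point $x$. I would prove this directly as follows. Any weak-$*$ limit of the empirical measures $\frac1\ell\sum_{k<\ell}\delta_{x_k}$ is stationary. This is because $\frac1\ell\sum_{k<\ell}\bigl(f(x_{k+1})-Pf(x_k)\bigr)\to0$ almost surely for each $f$ in a countable dense subset of $C(X)$, by the martingale strong law for bounded increments. Uniqueness then forces every such limit to equal $\nu$.

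Applying this to $f=F$ gives (a), and the martingale CLT yields \eqref{CocycleCLT} for $c_0$. By Step 1 the same holds for $c$, with $\gamma_{\mu,c}^2=\int F\,d\nu\ge 0$.
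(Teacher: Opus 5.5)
Your proof is correct and is essentially the standard proof of the cited result \citep[Theorem 3.4]{BenoistQuintCLT}, since the paper gives no argument of its own beyond that citation: you remove the bounded coboundary, apply Brown's martingale CLT to the constant-drift part, get conditional Lindeberg from the second moment, and get convergence of the conditional variances from Breiman's law of large numbers (where uniqueness of the stationary measure is used), all without local compactness of $G$, as the paper's remark claims. The one detail worth stating is the degenerate case $\int F\,d\nu=0$: there Brown's normalization is unavailable, but $\frac{1}{\ell}\,\mathbb{E}\bigl[\bigl(\sum_{k\le\ell}M_k\bigr)^2\bigr]=\frac{1}{\ell}\sum_{k\le\ell}\mathbb{E}[F(x_{k-1})]\to 0$ by bounded convergence, which gives the claim directly.
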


As for Theorem~\ref{CocycleGeneralResult}, the original setting of \citep[Theorem 3.4]{BenoistQuintCLT} states that $G$ is locally compact, which again is not necessary. Furthermore, the Leaky ReLU Central Limit Theorem from this paper can be deduced by applying a martingale Central Limit Theorem together with the Doeblin condition from  section~\ref{section:UniqueSphericalMeasure}.

\subsection{A Law of Large Numbers for positively homogeneous functions}\label{Section:LLNPosHom}

We equally want to deduce Theorem~\ref{ExplicitLeakyReLULLN} from Theorem~\ref{CocycleGeneralResult}, yet we will apply Theorem~\ref{CocycleGeneralResult} to a different cocycle. The novel and central observation of this paper is that in order to apply Theorem~\ref{CocycleGeneralResult} in this setting, instead of working with projective space, we use the compact space $$X = (\R^d\backslash \{ 0\})/\R_{>0},$$ i.e. the set of equivalence classes of $\R^d \backslash \{0 \}$ under the equivalence relation given for $x,y \in \R^d \backslash \{0 \}$ as $x \sim y$ if and only there is $\lambda \in \R_{> 0}$ such that $x = \lambda y$. The difference to projective space is that $x$ and $-x$ are not in the same equivalence class. For an element $x \in \R^{d}$ we denote by $\overline{x} \in X$ the associated element in $X$. We observe that $X$ is in bijection to the $(d-1)$-dimensional sphere $\mathbb{S}^{d-1} = \{ x\in \R^d \,:\, |x| = 1 \}$ via the map $$\mathbb{S}^{d-1} \to X \quad\quad x \mapsto \overline{x}.$$
 
Our methods do not only apply to Leaky ReLU activation functions, but also to  $\phi(x) = \max(\alpha_1 x,\alpha_2 x)$ with $\alpha_1,\alpha_2 \in \R_{\neq 0}$, which we call a \textbf{generalized Leaky ReLU activation function}. For each $W \in \mathrm{GL}_d(\R)$ and $\overline{x} \in X$ we then consider the map $$\phi_W(\overline{x}) := \overline{\phi(Wx)} \in X
,$$ which is not only well-defined for Leaky ReLU activation functions, yet also for $\phi$ a generalized Leaky ReLU activation function. The function $\phi_W$ would not be well-defined on projective space, which is the reason we work with the space $X$. 

Instead of just working with the functions $\phi_W$, we establish in this section a Law of Large Numbers for a wider range of functions, namely positively homogeneous functions. We say that a function $f:\R^d \to \R^d$ is \textbf{positively homogeneous} if $$f(\alpha x) = \alpha f(x)$$ for all $\alpha > 0$ and $x \in \R^d$ and denote $$\mathscr{C}_{\mathrm{hom}}(\R^d) = \{ f:\R^d \to \R^d \text{ continuous and positively homogeneous}\}.$$ In order for a function $f \in \mathscr{C}_{\mathrm{hom}}(\R^d)$ to lead to a well-defined function on $X$, we need to assume that $f(x) \neq 0$ for all $x \in \R^d \backslash \{0\}$. So we consider the subset of $\mathscr{C}_{\mathrm{hom}}(\R^d)$ given by $$\mathscr{C}^{\times}_{\mathrm{hom}}(\R^d) = \{ f\in \mathscr{C}_{\mathrm{hom}}(\R^d) \,:\, f(x) \neq 0 \text{ for all } x\in \R^d\backslash \{ 0\} \}.$$

We note that $\mathscr{C}^{\times}_{\mathrm{hom}}(\R^d)$ forms a topological semigroup via the composition of functions and we consider the metric $d(f_1,f_2) = \sup_{x\in \mathbb{S}^{d-1}} |f_1(x) - f_2(x)|$ for $f_1, f_2 \in \mathscr{C}^{\times}_{\mathrm{hom}}(\R^d)$. The space $\mathscr{C}^{\times}_{\mathrm{hom}}(\R^d)$ is thus a topological semigroup that acts continuously on the compact space $X$. 

The cocycle we want to consider is given for $f \in \mathscr{C}^{\times}_{\mathrm{hom}}(\R^d)$ and $x \in \R^d\backslash\{0\}$ as 
\begin{equation}\label{Def:chom}
    c_{\mathrm{hom}}(f,\overline{x})  =  \log \frac{|f(x)|}{|x|}.
\end{equation}
for any representative $x \in \mathbb{R}^d \backslash \{0\}$ of $\overline{x}$. The map $c_{\mathrm{hom}}$ is indeed an additive cocycle since for $f_1,f_2 \in \mathscr{C}^{\times}_{\mathrm{hom}}(\R^d)$ and $x \in \R^d \backslash \{0\}$ we have 
\begin{align*}
    c_{\mathrm{hom}}(f_1\circ f_2,\overline{x}) &= \log \frac{|f_1(f_2(x))|}{|x|} \\ &= \log \frac{|f_1(f_2(x))|}{|f_2(x)|} + \log \frac{|f_2(x)|}{|x|} \\  &= c_{\mathrm{hom}}(f_1, f_2(\overline{x})) + c_{\mathrm{hom}}(f_2,\overline{x}).
\end{align*}

We use the following definition of a spherical stationary measure. This definition is justified as $X$ is diffeomorphic to $\mathbb{S}^{d-1}$.

\begin{definition}\label{Def:SS}
    Let $\eta$ be a Borel probability measure on $\mathscr{C}^{\times}_{\mathrm{hom}}(\R^d)$. We say that a probability measure $\nu$ on $X$ is a  \textbf{spherical $\eta$-stationary probability measure} if $\nu$ is an $\eta$-stationary measure on $X$.
\end{definition}

We can then state the following general law of large numbers for positively homogeneous functions. For a probability measure $\eta$ on $\mathscr{C}^{\times}_{\mathrm{hom}}(\R^d)$ we denote, analogously to \eqref{ProbSpace}, by $$(\Omega, \mathscr{F}, \mathbb{P}_{\eta}) = (\mathscr{C}^{\times}_{\mathrm{hom}}(\R^d)^{\mathbb{N}}, \mathscr{B}(\mathscr{C}^{\times}_{\mathrm{hom}}(\R^d))^{\mathbb{N}}, \eta^{\mathbb{N}})$$ the space of $\mathscr{C}^{\times}_{\mathrm{hom}}(\R^d)$-valued sequences and by $f_n(\omega)$ the $n$-th element of $\omega = (f_1, f_2,\ldots)$. We then write for $\omega = (f_1, f_2,\ldots)$, $$X_{\ell}(\omega) = f_{\ell}(X_{\ell -1}(\omega))   \quad\quad \text{ and } \quad\quad X_{0} = x_0$$

\begin{theorem}(Law of Large Numbers for Positively Homogeneous Functions)\label{PositivelyHomLLN}
     Let $\eta$ be a Borel probability measure on $\mathscr{C}^{\times}_{\mathrm{hom}}(\R^d)$ with a unique spherical $\eta$-stationary probability measure and assume for $c_{\mathrm{hom}}$ from \eqref{Def:chom} that \begin{equation}\label{HomCocycleL1Condition}
        \int_{\mathscr{C}^{\times}_{\mathrm{hom}}(\R^d)} c_{\mathrm{hom, sup}}(f) \, d\eta(f) < \infty\quad \quad \text{ with } \quad\quad c_{\mathrm{hom, sup}}(f) = \sup_{x \in X}|c_{\mathrm{hom}}(f,x)|.
    \end{equation} 
     
     Then there exists a real number $\lambda_{\eta} \in \R$ depending only on $\eta$ such that for each fixed $x_0 \in \R^d \backslash \{ 0\}$ as $\ell \to \infty$,
     \begin{equation}\label{NormLLN}
         \frac{1}{\ell} \log |X_{\ell}(\omega)| \longrightarrow \lambda_{\eta}
     \end{equation} for $\mathbb{P}_{\eta}$-almost all $\omega \in \Omega$. Moreover, the convergence is in $L^1$ uniformly for $x_0$ of constant modulus, i.e. $$\lim_{\ell \to \infty}\sup_{x_0 \in \R^d \backslash \{ 0\}} \mathbb{E} \left[ \bigg|\frac{1}{\ell}\left( \log \frac{|X_{\ell}(\omega)|}{|x_0|}  \right) - \lambda_{\eta} \bigg| \right] =0.$$

     In addition, if $\nu$ is the unique $\eta$-spherical stationary measure on $X$, then 
    \begin{equation}
        \lambda_{\eta} = \int_{\mathscr{C}^{\times}_{\mathrm{hom}}(\R^d)\times X} c_{\mathrm{hom}}(f,\overline{x}) \, d\eta(f)d\nu(\overline{x}).
    \end{equation}
\end{theorem}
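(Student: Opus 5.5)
The plan is to apply Theorem~\ref{CocycleGeneralResult} directly, with the following choices:
\begin{itemize}
\item the topological semigroup is $G = \mathscr{C}^{\times}_{\mathrm{hom}}(\R^d)$ with the sup-metric on $\mathbb{S}^{d-1}$;
\item the compact metric space is $X = (\R^d\backslash\{0\})/\R_{>0}$, identified with $\mathbb{S}^{d-1}$;
\item the measure is $\mu = \eta$;
\item the cocycle is $c = c_{\mathrm{hom}}$ from \eqref{Def:chom}.
\end{itemize}

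First I check the structural hypotheses.

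\textbf{The action is well defined and continuous.} The action $f.\overline{x} = \overline{f(x)}$ is well defined because $f$ is positively homogeneous and nonvanishing off $0$. For continuity, suppose $f_n \to f$ in the sup-metric and $x_n \to x$ on the sphere. Then
\[
|f_n(x_n) - f(x)| \leq d(f_n,f) + |f(x_n)-f(x)| \to 0 ,
\]
and normalization is continuous away from $0$. The same estimate shows that composition is jointly continuous. This uses that $f(\mathbb{S}^{d-1})$ is compact and bounded away from $0$, which also gives uniform continuity of $f$ on a compact annulus.

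\textbf{The cocycle identity and continuity.} The cocycle identity for $c_{\mathrm{hom}}$ was verified above. Continuity of $c_{\mathrm{hom}}$ on $G\times X$ follows from the same estimate together with continuity of $\log$ on $(0,\infty)$, since $|f(x)|$ is bounded below on the sphere.

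\textbf{Integrability and uniqueness.} The integrability condition \eqref{CocycleL1Condition} is exactly the hypothesis \eqref{HomCocycleL1Condition}. Uniqueness of the stationary measure is assumed via Definition~\ref{Def:SS}. One remaining point is measurability of $c_{\mathrm{hom,sup}}$: it is continuous on $G$, being a sup of an equicontinuous family over a compact set, so it is Borel.

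Next I translate the conclusion. By the cocycle identity and induction,
\[
\log\frac{|X_\ell|}{|x_0|} = c_{\mathrm{hom}}(f_\ell\circ\cdots\circ f_1,\overline{x_0}) = \sum_{i=1}^{\ell} c_{\mathrm{hom}}(f_i,\overline{X_{i-1}}).
\]
Theorem~\ref{CocycleGeneralResult} therefore gives three things:
\begin{itemize}
\item almost sure convergence of $\frac{1}{\ell}\log(|X_\ell|/|x_0|)$ to $\lambda_\eta := \lambda_{\eta,c_{\mathrm{hom}}}$. Since $\frac{1}{\ell}\log|x_0|\to 0$, this yields \eqref{NormLLN};
\item uniform $L^1$ convergence over $\overline{x}\in X$. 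Because the quantity depends only on the direction $\overline{x_0}$, the supremum over all $x_0\neq 0$ equals the supremum over $X$;
\item the formula $\lambda_\eta = \int c_{\mathrm{hom}}\,d\eta\,d\nu$ for the unique stationary $\nu$.
\end{itemize}

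The main obstacle is only the bookkeeping justifying that $\mathscr{C}^{\times}_{\mathrm{hom}}(\R^d)$ fits the abstract framework. This covers continuity of the action and the cocycle, and measurability. It also covers the fact that $G$ is not locally compact, which is handled by the remark following Theorem~\ref{CocycleGeneralResult}: that proof uses only Birkhoff's theorem on $G^{\N}$ and uniqueness of the stationary measure.
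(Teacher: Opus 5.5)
Your proposal is correct and follows the paper's route exactly: the paper's proof is the one-line application of Theorem~\ref{CocycleGeneralResult} to $c_{\mathrm{hom}}$, with $G=\mathscr{C}^{\times}_{\mathrm{hom}}(\R^d)$ acting on $X$. Your additional checks are accurate and simply make explicit what the paper leaves implicit: joint continuity of the action and of composition, continuity (hence Borel measurability) of $c_{\mathrm{hom,sup}}$, the reduction of the supremum over $x_0$ to a supremum over $X$, and the appeal to the remark that local compactness of $G$ is not needed.
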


\begin{proof}
    This follows directly from Theorem~\ref{CocycleGeneralResult} applied to $c_{\mathrm{hom}}$.
\end{proof}

Theorem~\ref{PositivelyHomLLN} is, to the authors' knowledge, the first instance of a Law of Large Numbers for compositions of functions beyond the linear case. 

In order to deduce Theorem~\ref{ExplicitLeakyReLULLN}, we state the following lemma.

\begin{lemma}\label{lemma:finmom}
    Let $\mu$ be a Borel probability measure on $\mathrm{GL}_d(\R)$ satisfying $\int (\log N(W))^k \, d\mu(W) < \infty$ for $k \geq 1$.  Then for a generalized Leaky ReLU activation function $\phi(x) = \max(\alpha_1 x,\alpha_2 x)$ with $\alpha_1, \alpha_2 \in \R_{\neq 0}$ it holds that $$\int c_{\mathrm{hom, sup}}(\phi_W)^k \, d\mu(W) < \infty\quad \quad \text{ for } \quad\quad c_{\mathrm{hom, sup}}(\phi_W) = \sup_{x \in X}|c_{\mathrm{hom}}(\phi_W,x)|.$$
\end{lemma}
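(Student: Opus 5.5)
Our plan is to reduce the statement to the matrix bound \eqref{NormIneq} from the proof of Corollary~\ref{StandardMatrixLLN}, together with a deterministic two-sided bound on how much $\phi$ can change the norm of a vector. Here $N(W)=\max\{\|W\|,\|W^{-1}\|\}$, as used in that proof.

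\textbf{Step 1 (norm comparison for $\phi$).} Set $a=\min(|\alpha_1|,|\alpha_2|)>0$ and $A=\max(|\alpha_1|,|\alpha_2|)$. For each real $t$ we have $\phi(t)=\max(\alpha_1t,\alpha_2t)\in\{\alpha_1t,\alpha_2t\}$, so $a|t|\le|\phi(t)|\le A|t|$. Applying this coordinatewise and summing squares gives, for all $y\in\R^d$,
\[
a|y|\le|\phi(y)|\le A|y|.
\]
In particular $\phi(Wx)\neq0$ whenever $x\neq0$ and $W\in\mathrm{GL}_d(\R)$. Together with continuity and positive homogeneity, this shows $\phi_W\in\mathscr{C}^{\times}_{\mathrm{hom}}(\R^d)$, so $c_{\mathrm{hom}}(\phi_W,\cdot)$ is well defined.

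\textbf{Step 2 (pointwise bound on the cocycle).} For $x\neq0$ we decompose
\[
c_{\mathrm{hom}}(\phi_W,\overline{x})=\log\frac{|\phi(Wx)|}{|Wx|}+\log\frac{|Wx|}{|x|}.
\]
By Step 1 the first term lies in $[\log a,\log A]$. By \eqref{NormIneq} the second term is bounded in absolute value by $\log N(W)$. Setting $C=\max(|\log a|,|\log A|)$, we obtain
\[
c_{\mathrm{hom,sup}}(\phi_W)\le C+\log N(W).
\]
Note that $\log N(W)\ge0$, as shown at the end of the proof of Corollary~\ref{StandardMatrixLLN}.

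\textbf{Step 3 (integrate).} By convexity, $(u+v)^k\le 2^{k-1}(u^k+v^k)$ for $u,v\ge0$. Hence
\[
\int c_{\mathrm{hom,sup}}(\phi_W)^k\,d\mu(W)\le 2^{k-1}\Big(C^k+\int(\log N(W))^k\,d\mu(W)\Big)<\infty.
\]

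There is no genuine obstacle in this argument. The one point needing care is Step 1, since it is exactly where the hypothesis $\alpha_1,\alpha_2\neq0$ enters: it keeps $\phi$ from collapsing vectors to zero or to arbitrarily small norm. A second point is measurability of $W\mapsto c_{\mathrm{hom,sup}}(\phi_W)$. This holds because the supremum over the compact sphere can be taken over a countable dense subset, using the continuity of $x\mapsto c_{\mathrm{hom}}(\phi_W,\overline{x})$.
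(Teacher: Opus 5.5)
Your proof is correct and follows essentially the same route as the paper: the coordinatewise two-sided bound $\min\{|\alpha_1|,|\alpha_2|\}\,|Wx|\le|\phi(Wx)|\le\max\{|\alpha_1|,|\alpha_2|\}\,|Wx|$, then the matrix bound \eqref{NormIneq}, then an elementary inequality of the form $(u+v)^k\le c_k(u^k+v^k)$ (you use $c_k=2^{k-1}$, the paper the cruder $2^k$). Your remarks on $\phi_W\in\mathscr{C}^{\times}_{\mathrm{hom}}(\R^d)$ and on measurability of the supremum are small additions that the paper leaves implicit.
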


\begin{proof}
    Note that  $$\min\{ |\alpha_1|,|\alpha_2| \} |Wx| \leq |\phi(Wx)| \leq \max\{ |\alpha_1|,|\alpha_2| \} |Wx|,$$ which follows coordinate-wise from $\phi^2(y_i)\in[\min(\alpha_1^2,\alpha_2^2)\,y_i^2,\;\max(\alpha_1^2,\alpha_2^2)\,y_i^2]$ and by summing over $i$ and taking square roots. Therefore, there is an absolute constant $A = A(\alpha_1, \alpha_2) > 0$ depending only on $\alpha_1$ and $\alpha_2$ such that $$\bigg|\log\frac{|\phi(Wx)|}{|x|}\bigg|^k \leq \left(A + \bigg|\log\frac{|Wx|}{|x|}\bigg|\right)^k.$$
    In the proof of Corollary~\ref{StandardMatrixLLN}, we have shown that $$\bigg|\log\frac{|Wx|}{|x|}\bigg| \leq \log N(W)$$ and so it suffices to deduce that $$\int (A + \log N(W))^k \, d\mu(W) < \infty.$$ The latter follows by applying the inequality $(a + b)^k \leq 2^k(a^k + b^k)$ for all $a,b \in \R_{\geq 0}$ and $k \geq 1$ and therefore $$\int (A + \log N(W))^k \, d\mu(W) \leq 2^k \int A^k + (\log N(W))^k \, d\mu(W) < \infty,$$ concluding the proof by having used the assumption.
\end{proof}

\subsection{Law of Large Numbers: General Result and Deduction of Theorem \ref{ExplicitLeakyReLULLN}}\label{section:ProofExplicitLLN}
We will deduce the Law of Large Numbers \ref{ExplicitLeakyReLULLN} stated in the main part of this paper as a special instance of more general results. Nonetheless, the verification that all the requirements are satisfied will take some further work established in the next subsection.

As before, we denote for $W \in \mathrm{GL}_d(\R)$ by 
\begin{equation}\label{N(W)Definition}
    N(W) =  \max\{ ||W||, ||W^{-1}|| \}.
\end{equation} In the previous section, we explained that our methods also apply to generalized Leaky ReLU activation functions, so we state our general Law of Large Numbers in the following generality.

\begin{theorem}(General Law of Large Numbers for $|X_{\ell}|$)\label{UniqueStatLeakyReLULLN}
     Let $\mu$ be a Borel probability measure on $\mathrm{GL}_d(\R)$. Let $(X_{\ell})_{\ell\geq 0}$ be as in \eqref{ProbNNDef} and let $\phi(x) = \max(\alpha_1 x,\alpha_2 x)$ be a generalized Leaky ReLU activation function with $\alpha_1, \alpha_2 \in \R_{\neq 0}$. Assume that $\mu$ has a unique spherical $\mu$-stationary probability measure\footnote{See Definition~\ref{Def:SS}} and has finite first logarithmic moment, that is $\int \log N(W) \, d\mu(W) < \infty$. 
     
     Then there exists a real number $\lambda_{\mu,\phi} \in \R$ depending on $\mu$ and $\phi$ such that for each fixed $x_0 \in \R^d \backslash \{ 0\}$ as $\ell \to \infty$,
     \begin{equation}\label{NormLLNGeneral}
         \frac{1}{\ell} \log |X_{\ell}(\omega)| \longrightarrow \lambda_{\mu,\phi}
     \end{equation} for $\mathbb{P}_{\mu}$-almost all $\omega \in \Omega$. Moreover, the convergence is in $L^1$ uniformly for $x_0$ of constant modulus, i.e.  $$\lim_{\ell \to \infty}\sup_{x_0 \in \R^d \backslash \{ 0\}} \mathbb{E} \left[ \bigg|\frac{1}{\ell}\left( \log \frac{|X_{\ell}(\omega)|}{|x_0|}\right) - \lambda_{\mu,\phi} \bigg| \right] =0.$$
    Moreover, if $\nu$ is the unique $\mu$-spherical stationary measure on $X$, then 
    \begin{equation}\label{UniqueStatLyapFormula}
        \lambda_{\mu,\phi} = \int_{\mathrm{GL}_d(\R)\times X} c_{\mathrm{hom}}(\phi_W,\overline{x}) \, d\mu(W)d\nu(\overline{x}).
    \end{equation}
\end{theorem}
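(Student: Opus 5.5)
The plan is to reduce Theorem~\ref{UniqueStatLeakyReLULLN} to Theorem~\ref{PositivelyHomLLN}, which itself rests on the abstract cocycle law of large numbers, Theorem~\ref{CocycleGeneralResult}.

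\textbf{Transporting $\mu$.} Consider the map $\Phi:\mathrm{GL}_d(\R)\to\mathscr{C}^{\times}_{\mathrm{hom}}(\R^d)$ given by $W\mapsto\phi_W:=\phi\circ W$, and let $\eta=\Phi_*\mu$ be the pushforward measure. We must check the following.

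\begin{enumerate}
\item[(i)] $\phi\circ W$ is continuous and positively homogeneous. Positive homogeneity holds because $\phi(\lambda t)=\lambda\phi(t)$ for $\lambda>0$.
\item[(ii)] $\phi(Wx)\neq0$ for $x\neq0$. This follows from invertibility of $W$ together with the coordinatewise bound $|\phi(y)|\ge\min(|\alpha_1|,|\alpha_2|)\,|y|$ used in the proof of Lemma~\ref{lemma:finmom}.
\item[(iii)] $\Phi$ is continuous, hence Borel, for the sup-metric on the sphere. This holds because $\phi$ is Lipschitz, so $\sup_{|x|=1}|\phi(Wx)-\phi(W'x)|\le\max(|\alpha_1|,|\alpha_2|)\,\|W-W'\|$.
\end{enumerate}

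Thus $\eta$ is a Borel probability measure on the semigroup. By construction, the sequence $X_\ell$ of \eqref{ProbNNDef}, driven by i.i.d.\ $W_\ell\sim\mu$, has the same law as the process $X_\ell=f_\ell(X_{\ell-1})$ with i.i.d.\ $f_\ell\sim\eta$. We can even realise it pathwise by applying $\Phi$ coordinatewise on $\Omega$, so almost sure and $L^1$ statements transfer directly.

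\textbf{Stationary measures and the moment condition.} The action of $\phi_W$ on $X$ is exactly $\overline{x}\mapsto\overline{\phi(Wx)}$. Hence $\eta*\nu=\int\nu\circ\phi_W^{-1}\,d\mu(W)$, and a measure on $X$ is spherical $\eta$-stationary if and only if it is spherical $\mu$-stationary in the sense of Definition~\ref{Def:SS}. Uniqueness is therefore inherited from the hypothesis. For the integrability condition \eqref{HomCocycleL1Condition}, apply Lemma~\ref{lemma:finmom} with $k=1$. This gives $\int c_{\mathrm{hom,sup}}(f)\,d\eta(f)=\int c_{\mathrm{hom,sup}}(\phi_W)\,d\mu(W)<\infty$ from $\int\log N(W)\,d\mu<\infty$.

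\textbf{Conclusion.} Theorem~\ref{PositivelyHomLLN} now yields $\lambda_\eta$ with almost sure convergence of $\frac1\ell\log|X_\ell|$ and uniform $L^1$ convergence. To see this, write
\[
\log\frac{|X_\ell|}{|x_0|}=c_{\mathrm{hom}}(f_\ell\circ\cdots\circ f_1,\overline{x_0}),
\]
which follows from the cocycle identity and positive homogeneity. Note that the $L^1$ statement depends on $x_0$ only through $\overline{x_0}$, so a supremum over $X$ is the same as a supremum over $\R^d\setminus\{0\}$. Set $\lambda_{\mu,\phi}=\lambda_\eta$. The change-of-variables formula for pushforwards then turns
\[
\int c_{\mathrm{hom}}(f,\overline{x})\,d\eta\,d\nu
\]
into \eqref{UniqueStatLyapFormula}.

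The main obstacle is measure-theoretic bookkeeping rather than new estimates. One must make sure that $\mathscr{C}^{\times}_{\mathrm{hom}}(\R^d)$ with the sup-metric is a legitimate topological semigroup acting continuously on $X$, and that the pushforward genuinely identifies the two path spaces. I would handle both points through the Lipschitz estimate in (iii) and joint continuity of $(f,\overline{x})\mapsto\overline{f(x)}$. Joint continuity follows from uniform convergence on the sphere combined with the nonvanishing guaranteed by (ii).
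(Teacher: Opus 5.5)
Your proposal is correct and follows the paper's proof: push $\mu$ forward to $\eta$ on $\mathscr{C}^{\times}_{\mathrm{hom}}(\R^d)$ via $W\mapsto\phi_W$, verify the integrability condition \eqref{HomCocycleL1Condition} with Lemma~\ref{lemma:finmom} for $k=1$, and apply Theorem~\ref{PositivelyHomLLN}. The extra checks you supply are details the paper leaves implicit, and all of them are sound: continuity and non-vanishing of $\phi\circ W$, measurability of $W\mapsto\phi_W$, the identification of $\eta$-stationary with $\mu$-stationary measures, and the change of variables giving \eqref{UniqueStatLyapFormula}.
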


It is straightforward to deduce Theorem~\ref{UniqueStatLeakyReLULLN} from the previous results.

\begin{proof}(of Theorem~\ref{UniqueStatLeakyReLULLN})
    We consider the pushforward measure $\eta = \phi_{*}\mu$. By Lemma~\ref{lemma:finmom}, the assumptions of Theorem~\ref{PositivelyHomLLN} are satisfied for $\eta$. So the claim follows from Theorem~\ref{PositivelyHomLLN}.
\end{proof}

To conclude the proof of Theorem~\ref{ExplicitLeakyReLULLN}, we therefore need to verify that when $\mu$ satisfies Assumption~\ref{MeasureAssumptions} or Assumption~\ref{SecondMeasureAssumptions} that then there is a unique spherical $\mu$-stationary probability measure and it holds that $\int \log N(W) \, d\mu(W)< \infty$. For convenience of the reader we summarize the completed proof of Theorem~\ref{ExplicitLeakyReLULLN} here using results from the next two subsections.

\begin{proof}(of Theorem~\ref{ExplicitLeakyReLULLN}) We distinguish the two cases:
\begin{enumerate}
    \item[(1)] \textbf{$\mu$ satisfies Assumption~\ref{MeasureAssumptions}:} We note that the measure almost surely takes values in $\mathrm{GL}_d(\R)$ so without loss of generality, we can view $\mu$ to be defined on $\mathrm{GL}_d(\R)$. To verify the two assumptions of Theorem~\ref{UniqueStatLeakyReLULLN}, we note that there is a unique spherical $\mu$-stationary probability measure by Proposition~\ref{UniquenessofStationaryMeasure} and the first logarithmic moment is finite by Proposition~\ref{FinitenessofMoment}.
    \item[(2)] \textbf{$\mu$ satisfies Assumption~\ref{SecondMeasureAssumptions}:} Since the measure is supported on a scalar multiple of the orthogonal group, that is $\eta \cdot \mathrm{O}(d)$ for some $\eta > 0$, it holds that $||W||_{\mathrm{op}} = \eta$ and $||W^{-1}||_{\mathrm{op}} = \eta^{-1}$ for all $W$ in the support of $\mu$. Thus, the first logarithmic moment is finite. Uniqueness of the stationary measure again follows from Proposition~\ref{UniquenessofStationaryMeasure}.
\end{enumerate}
\end{proof}

\subsection{Uniqueness of Spherical Stationary Measure}\label{section:UniqueSphericalMeasure}

Throughout this section, we again denote by $$X = (\R^d\backslash\{ 0\})/\R_{>0}$$ and for a generalized Leaky ReLU activation function $\phi(x) = \max(\alpha_1 x,\alpha_2 x)$ for $\alpha_1, \alpha_2 \in \R_{\neq 0}$ we consider the action given with $\phi_W$ for $W\in \mathrm{GL}_d(\R)$ and $\overline{x} \in X$ by $$\phi_W(\overline{x}) = \overline{\phi(Wx)} \in X.$$ 

To discuss the uniqueness of stationary measures, denote by $P^{\ell}$ the $\ell$-th step transition density of the Markov chain on $X$ given by $\mu$. Indeed, for $x \in X$ and $A \in \mathscr{B}(X)$, where $\mathscr{B}(X)$ is the Borel $\sigma$-algebra on $X$, $$P(x,A)  = \int 1_A(\phi(Wx)) \, d\mu(W) = \mathbb{P}[\phi(Wx) \in A]$$ and we recursively define $P^1(x,A) = P(x,A)$ and for $\ell \geq 1$, $$ P^{\ell+1}(x, A) = \int_{X} P(x,  dy) P^{\ell}(y, A). $$ To introduce further notation, for a signed measure $\lambda$ on $X$ we define $$||\lambda||_{\mathrm{TV}} = \sup_{\substack{f: X \to \mathbb{R} \\ ||f||_{\infty} \leq 1 }} |\lambda(f)|.$$

We state the next result with Theorem~\ref{ExplicitLeakyReLULLN} in mind. We remark that when $\mu$ satisfies Assumption~\ref{MeasureAssumptions}, then a sample from $\mu$ is almost surely in $\mathrm{GL}_d(\R)$ and therefore $\mu$ defines a well-defined Markov chain on $X$. 

\begin{proposition}\label{UniquenessofStationaryMeasure}
    Let $\mu$ be a probability measure on $M_d(\R)$ satisfying either Assumption~\ref{MeasureAssumptions} or Assumption~\ref{SecondMeasureAssumptions}.
    Then there is a unique spherical $\mu$-stationary probability measure $\nu$ on $X$ and there exists $C,\alpha > 0$ such that for all $\ell \geq 1$, $$\sup_{x \in X}||P^{\ell}(x,\cdot) - \nu||_{\mathrm{TV}} \leq Ce^{-\alpha \ell}.$$ 
\end{proposition}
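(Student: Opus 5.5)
The plan is to verify a uniform one-step \emph{Doeblin (minorization) condition} for the Markov chain on $X$ and then invoke the Doeblin theorem (Theorem~\ref{DoeblinTheorem}). Concretely, I will find a probability measure $\kappa$ on $X$ and a constant $\varepsilon>0$ such that
\[
P(\overline{x},A)\;\ge\;\varepsilon\,\kappa(A)\qquad\text{for all } \overline{x}\in X,\ A\in\mathscr{B}(X).
\]
Given this minorization, the standard coupling argument yields $\sup_{x}\|P^{\ell}(x,\cdot)-P^{\ell}(y,\cdot)\|_{\mathrm{TV}}\le 2(1-\varepsilon)^{\ell}$. This gives existence and uniqueness of the stationary $\nu$ and the bound $\sup_x\|P^\ell(x,\cdot)-\nu\|_{\mathrm{TV}}\le Ce^{-\alpha\ell}$ with $\alpha=-\log(1-\varepsilon)$.

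The key observation for both assumptions is the same: on the closed positive orthant $\R^d_{\ge0}$, the map $\phi$ is the identity, for every sign of $\alpha$. So I will only use the part of the law of $Wx$ that lands in the positive orthant. This sidesteps the fact that $\phi$ is not injective when $\alpha<0$. Then $P(\overline x,A)\ge \mathbb{P}[Wx\in \R^d_{>0},\ \overline{Wx}\in A]$, and it suffices to bound below the law of $Wx$ on (a part of) the positive orthant, uniformly over unit vectors $x$.

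\textbf{Orthogonal case (Assumption~\ref{SecondMeasureAssumptions}).} Write $W=\eta Q$. The pushforward of $m_{\mathrm{O}(d)}$ under $Q\mapsto Qx$ is the uniform measure $\sigma$ on $\mathbb{S}^{d-1}$ for each unit $x$, by invariance and transitivity. Since $d\mu\ge (\inf p)\, dm_{\eta\cdot\mathrm{O}(d)}$, we get $\mathbb{P}[\overline{Wx}\in A]\ge (\inf p)\,\sigma(A)$. Restricting to $A\subset \mathbb{S}^{d-1}\cap\R^d_{>0}$, where $\phi$ acts trivially, gives the minorization with $\kappa=\sigma(\cdot\cap\R^d_{>0})/\sigma(\R^d_{>0})$ and $\varepsilon=(\inf p)\,2^{-d}$.

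\textbf{Independent-entry case (Assumption~\ref{MeasureAssumptions}).} The coordinates $(Wx)_i=\sum_j W_{ij}x_j$ are independent across $i$ because the rows are independent, so it suffices to bound the density of a single coordinate from below on an interval $[0,a]$ with $a>0$ independent of $x$.

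\begin{itemize}
\item For $|x|=1$, pick $j$ with $|x_j|\ge d^{-1/2}$. Conditioning on $(W_{ik})_{k\ne j}$, the density of $(Wx)_i$ at $u$ is $|x_j|^{-1}p_{ij}\bigl((u-s)/x_j\bigr)$, where $s=\sum_{k\ne j}W_{ik}x_k$.
\item By local positivity, the event $\{|W_{ik}|\le \delta/d \ \forall k\ne j\}$ has probability at least $(c\,\delta/d)^{d-1}$ uniformly, and on it $|s|\le\delta$.
\item If $|u|\le a$ and $\delta$, $a$ are small enough that $(a+\delta)\sqrt d\le\epsilon$, then $|(u-s)/x_j|\le\epsilon$. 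Hence the conditional density is at least $\inf_{|v|\le\epsilon}p_{ij}(v)>0$.
\item Integrating, $(Wx)_i$ has density at least some $c_0>0$ on $[-a,a]$, uniformly in $x$ and $i$. So $Wx$ has density at least $c_0^d$ on $[0,a]^d$.
\end{itemize}

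Thus $P(\overline x,A)\ge c_0^d\,\mathrm{Leb}(\{y\in(0,a)^d:\overline y\in A\})$. This gives the minorization with $\kappa$ the normalized pushforward of Lebesgue measure on $(0,a)^d$ to $X$, which is well defined since $0\notin(0,a)^d$.

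I expect the main obstacle to be exactly this uniform density lower bound. The point is to make it uniform in the direction $x$, including directions with many tiny coordinates, and to keep $a$ independent of $x$. Choosing the coordinate $j$ with $|x_j|\ge d^{-1/2}$ and making all other entries small, as above, is what I would try. Boundedness of the densities is not needed here; it enters only in the moment estimates later. Finally, I would note that measurability of $P$ and the fact that $W\in\mathrm{GL}_d(\R)$ almost surely are routine.
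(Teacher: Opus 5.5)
Your proposal is correct. Like the paper, it proves a one-step Doeblin minorization and then invokes Theorem~\ref{DoeblinTheorem}. The difference is in how the minorizing measure is obtained.

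The paper never analyzes the law of $Wx$. In Case~1 it minorizes $\mu$ itself by $\gamma^{d^2}$ times Lebesgue measure on the Frobenius ball $B_{\eps}=\{W:\|W\|_F<\eps\}$, on which $\prod_{i,j}p_{ij}(W_{ij})\ge\gamma^{d^2}$. Its minorizing measure is the law of $\overline{\phi(Wx_*)}$ for $W$ uniform on $B_{\eps}$. Both $B_{\eps}$ and Lebesgue measure are invariant under $W\mapsto WQ$ for $Q\in\mathrm{O}(d)$, and $\mathrm{O}(d)$ acts transitively on the sphere, so this law does not depend on $x_*$. Hence the uniformity in the direction $x$, which you single out as the main obstacle, comes for free, and the shape of $\phi$ plays no role. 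Case~2 is the same argument with Haar measure in place of the ball.

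Your route instead proves a lower bound, uniform in $x$, on the density of the vector $Wx$ on a fixed cube, via the dominant-coordinate conditioning. This is more hands-on and longer. In exchange it gives an explicit minorizing measure and explicit constants, and it shows that only the one-step law of $Wx$ matters, not a symmetry of a reference measure on matrix space.

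Two small corrections.

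\begin{itemize}
\item $\phi$ is not the identity on $\R^d_{\ge 0}$ when $\alpha>1$; there $\phi(y)=\alpha y$. What holds for every $\alpha\neq 0$, and all you actually use, is $\overline{\phi(y)}=\overline{y}$ for $y$ in the positive orthant.
\item The restriction to the positive orthant is unnecessary, because non-injectivity of $\phi$ is irrelevant for a minorization. A bound $\mathcal{L}(Wx)\ge c\,\mathrm{Leb}|_{[-a,a]^d}$, uniform in $x$, pushes forward under the fixed map $y\mapsto\overline{\phi(y)}$ to $P(\overline{x},\cdot)\ge c\,(\overline{\phi})_*\mathrm{Leb}|_{[-a,a]^d}$. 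This is exactly the mechanism of the paper's proof. It also covers the generalized $\max(\alpha_1x,\alpha_2x)$ of that section, which for $\alpha_1,\alpha_2<0$ maps the positive orthant onto the negative one. In the orthogonal case it avoids losing the factor $2^{-d}$.
\end{itemize}
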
 

We prove Proposition~\ref{UniquenessofStationaryMeasure} by applying a standard result from the theory of Markov chains. To state the required result, let $P$ be a Markov transition kernel on a measurable space $(X, \mathscr{B})$. For a state $x \in X$ and a measurable set $A \in \mathscr{B}$, the kernel $P(x, A)$ gives the probability of transitioning from $x$ into $A$ in one step.

The Markov chain (or kernel $P$) is said to satisfy the \textbf{Doeblin condition} if there exists a real number $\delta > 0$ and a probability measure $\theta$ on $(X, \mathscr{B})$ such that for some $\ell \geq 1$
and all measurable sets $A \in \mathscr{B}$,
\begin{equation}\label{DoeblinConditionDefinition}
    \inf_{x \in X}P^{\ell}(x, A) \ge \delta \, \theta(A).
\end{equation}

We will then apply the following result for Markov chains on abstract state spaces with a stationary distribution $\nu$:

\begin{theorem}\citep[Special case of Theorem 16.0.2]{MeynTweedieBook}\label{DoeblinTheorem}
    A Markov chain with transition probability $P$ satisfies the Doeblin condition if and only if the Markov chain has a unique stationary measure $\nu$ and there exist $C,\alpha > 0$ such that for all $\ell \geq 1$, $$\sup_{x \in X}||P^{\ell}(x,\cdot) - \nu||_{\mathrm{TV}} \leq Ce^{-\alpha \ell}.$$
\end{theorem}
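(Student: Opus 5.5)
The plan is to prove the two implications separately. Throughout, $\|\cdot\|_{\mathrm{TV}}$ is the supremum over $\|f\|_\infty\le 1$, so two probability measures are at distance at most $2$. In the application to Proposition~\ref{UniquenessofStationaryMeasure} only the direction ``Doeblin $\Rightarrow$ uniform exponential ergodicity'' is needed, and that is the direction I would treat in full.

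\emph{Doeblin $\Rightarrow$ ergodicity.} Set $Q=P^{\ell}$, with $\ell$, $\delta$ and $\theta$ as in \eqref{DoeblinConditionDefinition}. We may assume $\delta<1$. Then $R(x,\cdot)=(Q(x,\cdot)-\delta\theta)/(1-\delta)$ is a Markov kernel, and
\[
Q(x,\cdot)=\delta\,\theta+(1-\delta)R(x,\cdot).
\]
For probability measures $\mu_1,\mu_2$ on $X$, the $\theta$-parts cancel in $\mu_1Q-\mu_2Q$. This gives the Dobrushin contraction
\[
\|\mu_1Q-\mu_2Q\|_{\mathrm{TV}}\le(1-\delta)\|\mu_1-\mu_2\|_{\mathrm{TV}}.
\]
The probability measures form a closed subset of the Banach space of finite signed measures with the TV norm, so they form a complete metric space. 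Banach's fixed point theorem therefore yields a unique $Q$-invariant probability measure $\nu$.

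Since $P$ commutes with $Q$, the measure $\nu P$ is also $Q$-invariant, hence $\nu P=\nu$. Conversely, every $P$-stationary measure is $Q$-stationary, which gives uniqueness for $P$. For the rate, write $n=k\ell+r$ with $0\le r<\ell$. Using that $P$ is TV-nonexpansive, we get
\[
\|P^{n}(x,\cdot)-\nu\|_{\mathrm{TV}}\le\|P^{r}(x,\cdot)Q^{k}-\nu Q^{k}\|_{\mathrm{TV}}\le 2(1-\delta)^{k},
\]
and this is at most $Ce^{-\alpha n}$ with $\alpha=-\ell^{-1}\log(1-\delta)$ and $C=2(1-\delta)^{-1}$.

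\emph{Ergodicity $\Rightarrow$ Doeblin.} This is the harder direction and the main obstacle. Uniform TV closeness of $P^n(x,\cdot)$ to $\nu$ gives $P^n(x,A)\ge\nu(A)-Ce^{-\alpha n}$, which is useless for sets of small $\nu$-measure. So one cannot simply take $\theta=\nu$. I would instead pass through small sets:

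1. Uniform ergodicity implies that the chain is $\nu$-irreducible. Indeed, if $\nu(A)>0$, then $P^n(x,A)\to\nu(A)>0$ for every $x$.
2. The state space $X\cong\mathbb{S}^{d-1}$ has a countably generated Borel $\sigma$-algebra. Nummelin's small-set theorem for $\psi$-irreducible chains then provides a set $C$ with $\nu(C)>0$, an integer $m$, a constant $\varepsilon>0$ and a probability measure $\theta$ such that $P^{m}(z,\cdot)\ge\varepsilon\,\theta$ for all $z\in C$.
3. By uniform ergodicity, choose $n$ with $Ce^{-\alpha n}\le\nu(C)/2$. Then $P^{n}(x,C)\ge\nu(C)/2$ for all $x\in X$.
4. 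By Chapman–Kolmogorov, for every $x$ and every measurable $A$,
\[
P^{n+m}(x,A)\ge\int_C P^{n}(x,dz)\,P^{m}(z,A)\ge\frac{\varepsilon\,\nu(C)}{2}\,\theta(A).
\]
This is \eqref{DoeblinConditionDefinition} with $\ell=n+m$ and $\delta=\varepsilon\nu(C)/2$.

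The only non-elementary input is the existence of the small set $C$ in step 2. This is precisely the content of the Meyn–Tweedie machinery, and I would quote it rather than reprove it.
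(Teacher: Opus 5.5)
Your proof is correct, and it takes a different route from the paper, which gives no argument and simply cites Meyn--Tweedie, Theorem 16.0.2. Condition \eqref{DoeblinConditionDefinition} is the item of that theorem stating that the whole state space is $\nu_m$-small. It is not what Meyn--Tweedie call Doeblin's Condition: for them, that is the requirement that $\phi(A)>\varepsilon$ forces $\inf_x P^m(x,A)>\delta$, together with aperiodicity. You work with the minorization form directly, which is why aperiodicity never has to be discussed.

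Meyn--Tweedie obtain their equivalence through a cycle of implications involving petite sets, uniform bounds on $\sup_x\mathbb{E}_x[\tau_C]$, the fact that petite sets are small for aperiodic chains, and drift conditions with bounded Lyapunov functions. Your forward direction replaces all of this by the Dobrushin contraction of $P^{\ell}$ and Banach's fixed point theorem. This is self-contained, produces the stationary measure without any compactness or Feller argument, and gives explicit constants $C=2/(1-\delta)$ and $\alpha=-\ell^{-1}\log(1-\delta)$. The paper only ever uses this direction (in Proposition~\ref{UniquenessofStationaryMeasure}, Lemma~\ref{LemmaGeneralCLT} and the proof of Theorem~\ref{ExplicitLeakyReLUCLT}), so your argument removes the dependence on the citation exactly where it matters.

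For the converse you need a single nontrivial input, the existence of a small set. After that, the uniform bound $P^n(x,C)\ge\nu(C)/2$ at one fixed time and Chapman--Kolmogorov spread the minorization over all of $X$. This is shorter than the Meyn--Tweedie route. The countably generated $\sigma$-algebra it relies on is their standing hypothesis anyway, so nothing is lost relative to the citation.

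One detail should be made explicit. The small-set theorem produces $C$ with $\psi(C)>0$ for the maximal irreducibility measure $\psi$, not directly $\nu(C)>0$. The latter follows from invariance: $\nu(C)=\int\sum_{n\ge1}2^{-n}P^n(x,C)\,\nu(dx)$, and $\psi$-irreducibility makes the integrand strictly positive for every $x$.
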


In \cite{MeynTweedieBook}, it is actually shown that the equivalent conditions from Theorem~\ref{DoeblinTheorem} are equivalent to numerous further conditions, particularly that the given Markov chain is \textbf{uniformly ergodic}, that is $$\sup_{x\in X} ||P^{\ell}(x, \cdot) - \nu||_{\mathrm{TV}} \to 0 \quad\quad \text{ as }\quad\quad \ell \to \infty.$$

Returning to our proof of Proposition~\ref{UniquenessofStationaryMeasure}, our strategy is simply to apply Theorem~\ref{DoeblinTheorem}. More precisely, we verify the Doeblin condition with $\ell = 1$.

\begin{proof}(of Proposition~\ref{UniquenessofStationaryMeasure})
We treat the two cases separately.

\medskip

\noindent
\textbf{Case 1: Assumption~\ref{MeasureAssumptions}.}
Let $m_{d^2}$ denote the Lebesgue measure on $M_d(\mathbb{R})\cong \mathbb{R}^{d^2}$, and let
\[
    B_{\eps} := \{W\in M_d(\mathbb{R}) : \|W\|_F < \eps \}
\]
be the Frobenius ball of radius $\eps$, where the Frobenius norm is defined as $\|W\|_F = \sqrt{\sum_{i,j} W_{ij}^2}$. Since $\alpha_1,\alpha_2\neq 0$, one has $\phi(v)=0$ if and only if $v=0$. Since $\mu$ is absolutely continuous with respect to $m_{d^2}$, the action $W.x=\phi(Wx)\in X$ is therefore defined for $\mu$-almost every $W$.

Then for every $W\in B_{\eps}$ and every $1\le i,j\le d$ we have
$|W_{ij}|\le \|W\|_F \le \eps.$
It therefore follows from Assumption~\ref{MeasureAssumptions} that for all $W\in B_{\eps}$ and some $\gamma > 0$,
\[
    \frac{d\mu}{dm_{d^2}}(W)
    =
    \prod_{i,j=1}^d p_{ij}(W_{ij})
    \ge \gamma^{d^2}.
\]
Set
\[
    c_0 := \gamma^{d^2}.
\]
Hence for every Borel set $E\subset B_{\eps}$,
\[
    \mu(E)\ge c_0\,m_{d^2}(E).
\]

Now let $m_{\eps}$ denote the normalized Lebesgue probability measure on $B_{\eps}$, namely
\[
    m_{\eps} := \frac{1}{m_{d^2}(B_{\eps})}\,m_{d^2}\big|_{B_{\eps}}.
\]
Fix $x_*\in X$ and define a probability measure $\nu$ on $X$ by
\[
    \nu(A)
    :=
    m_{\eps}\bigl(\{W\in B_{\eps} : W.x_*\in A\}\bigr),
    \qquad
    A\in\mathcal{B}(X).
\]
We claim that $\nu$ does not depend on the choice of $x_*$. Indeed, let $x\in X$. Since $O(d)$ acts transitively on the sphere, there exists $Q\in O(d)$ such that $x=Qx_*$. Since the Frobenius norm is right $O(d)$-invariant,
that is $\|WQ\|_F=\|W\|_F$ for all  $W\in M_d(\mathbb{R}),$
and therefore, the map $W\mapsto WQ$ preserves both $B_{\eps}$ and the normalized measure $m_{\eps}$, it follows that for every Borel set $A\subset X$,
\begin{align*}
    m_{\eps}\bigl(\{W\in B_{\eps} : W.x\in A\}\bigr)
    &=
    m_{\eps}\bigl(\{W\in B_{\eps} : W.(Qx_*)\in A\}\bigr) \\
    &=
    m_{\eps}\bigl(\{WQ\in B_{\eps} : WQ.x_*\in A\}\bigr) \\ &=
    m_{\eps}\bigl(\{V\in B_{\eps} : V.x_*\in A\}\bigr) =
    \nu(A).
\end{align*}
Thus we have established that for all $x \in X$ and all Borel sets $A$ in $X$,
\[
    \nu(A) = m_{\eps}\bigl(\{W\in B_{\eps} : W.x\in A\}\bigr) = \int_{B_{\eps}} \mathbf{1}_A(W.x)\,dm_{\eps}(W).
\]

We now prove the minorization. For every $x\in X$ and every Borel set $A\subset X$,
\begin{align*}
    P(x,A)
    =
    \int_{M_d(\mathbb{R})} \mathbf{1}_A(W.x)\,d\mu(W) &\ge
    \int_{B_{\eps}} \mathbf{1}_A(W.x)\,d\mu(W) \\ &\ge
    c_0 \int_{B_{\eps}} \mathbf{1}_A(W.x)\,dm_{d^2}(W) =
    c_0\,m_{d^2}(B_{\eps})\,\nu(A).
\end{align*}
Therefore, the Doeblin condition is established with $\ell = 1$ and $\delta = c_0\,m_{d^2}(B_{\eps}) >0$.

\medskip

\noindent
\textbf{Case 2: Assumption~\ref{SecondMeasureAssumptions}.}
The proof is very similar to that of Case 1. Let $m_{\eta O(d)}$ be the normalized Haar probability measure on $\eta O(d)$. Fix $x_*\in X$ and define a probability measure $\nu$ on $X$ by
\[
    \nu(A)
    :=
    m_{\eta O(d)}\bigl(\{W\in \eta O(d) : W.x_*\in A\}\bigr),
    \qquad
    A\in\mathcal{B}(X).
\]
As in Case 1 with the same proof, $\nu$ does not depend on the choice of $x_*$. Hence it holds for all $x \in X$ and all Borel sets $A \subset X$ that
\[
    \nu(A) = m_{\eta O(d)}(\{ W \in \eta O(d) \,:\, Wx \in A \}) = \int_{\eta O(d)} \mathbf{1}_A(W.x)\,dm_{\eta O(d)}(W).
\]

Since $\mu\ll m_{\eta O(d)}$ with density $p\ge c$ on $\eta O(d)$, it follows that for every $x\in X$ and every Borel set $A\subset X$,
\begin{align*}
    P(x,A)
    =
    \int_{\eta O(d)} \mathbf{1}_A(W.x)\,p(W)\,dm_{\eta O(d)}(W) \ge
    c \int_{\eta O(d)} \mathbf{1}_A(W.x)\,dm_{\eta O(d)}(W) =
    c\,\nu(A).
\end{align*}
Thus the Doeblin condition holds for $\ell = 1$ and with $\delta:=c$. This completes the proof.
\end{proof}

\subsection{Finiteness of Moments}\label{section:FinitenessofMoments}  

In this subsection, we prove the finiteness of moments for measures satisfying Assumption~\ref{MeasureAssumptions}. As previously, denote for $W \in \mathrm{GL}_d(\R)$ by 
\begin{equation}
    N(W) =  \max\{ ||W||, ||W^{-1}|| \}.
\end{equation}

\begin{proposition}\label{FinitenessofMoment}
Let $\mu$ be a probability measure on the space of $d \times d$ real matrices $M_d(\mathbb{R})$ with independent entries. For each $1 \leq i, j \leq d$, assume the $(i, j)$-coordinate distribution is absolutely continuous with density $p_{ij}$ being bounded and having finite second moment, that is $$\sup_{t \in \R} |p_{ij}(t)| < \infty \quad\quad \text{ and } \quad\quad \int_{-\infty}^{\infty} t^2p_{ij}(t) \, dt < \infty.$$

Then all $k$-th logarithmic moments of $\mu$ are finite, that is for all $k \geq 1$,
$$
\int |\log N(W)|^k \, d\mu(W) < \infty.
$$
\end{proposition}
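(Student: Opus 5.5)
Since $N(W)\geq 1$ on $\mathrm{GL}_d(\R)$, we have $|\log N(W)|^k = (\log N(W))^k \leq (\log\|W\|)_+^k + (\log\|W^{-1}\|)_+^k$. It therefore suffices to bound the two terms separately; note $W\in \mathrm{GL}_d(\R)$ $\mu$-almost surely, since $\mu$ is absolutely continuous. The $\|W\|$ term is easy. We use $\|W\|\leq \|W\|_F$ and $(\log t)_+^k \leq C_k t^2$ for $t\geq 1$, so that
\[
\int (\log\|W\|)_+^k\,d\mu \leq C_k \int \|W\|_F^2\,d\mu = C_k\sum_{i,j}\int t^2 p_{ij}(t)\,dt<\infty .
\]

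The main work is the inverse. Write $\|W^{-1}\| = 1/s_{\min}(W)$. Using the layer-cake formula,
\[
\int (\log\|W^{-1}\|)_+^k\,d\mu = \int_0^\infty k u^{k-1}\,\mu\bigl(s_{\min}(W) < e^{-u}\bigr)\,du ,
\]
so it suffices to show a small-ball bound $\mu(s_{\min}(W)<\varepsilon)\leq C\varepsilon^{c}$ for some $c>0$, possibly up to an additive tail term that is handled by the second moment. I plan to get this in three steps.

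\textbf{Reduction to a determinant bound.} For a unit vector $v$ we have $|Wv|\geq s_{\min}$, and $|\det W| \leq s_{\min}\|W\|^{d-1}$. Hence
\[
\mu(s_{\min}<\varepsilon)\leq \mu(|\det W|<\varepsilon R^{d-1}) + \mu(\|W\|_F>R).
\]
Choosing $R=\varepsilon^{-1/(2d)}$, Chebyshev with the second moment bounds the second term by $C\varepsilon^{1/d}$.

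\textbf{Small-ball estimate for the determinant.} I need $\mu(|\det W|<\delta)\leq C\delta^{c}$. Expand along the first column: $\det W=\sum_i W_{i1}M_i$, where the cofactors $M_i$ depend only on the other columns, which are independent of column 1. Condition on the other columns and pick $i$ with $|M_i|$ maximal, so $|M_i|\geq \|M\|_\infty$. Then $\det W$ is an affine function of $W_{i1}$ with slope $M_i$, while the other first-column entries are independent of $W_{i1}$. Since $p_{i1}$ is bounded by some $K$,
\[
\Prob(|\det W|<\delta\mid \cdot)\leq 2K\delta/|M_i| .
\]
This gives $\mu(|\det W|<\delta)\leq \E[\min(1, 2K\delta/\max_i|M_i|)]$, and the problem reduces to a small-ball estimate for $\max_i |M_i|$.

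\textbf{Induction on $d$.} Each $M_i$ is the determinant of a $(d-1)\times(d-1)$ matrix with independent entries satisfying the same hypotheses. By induction $\Prob(|M_1|<\tau)\leq C\tau^{c_{d-1}}$, with base case $d=1$ given by bounded density, $\Prob(|W_{11}|<\tau)\leq 2K\tau$. Splitting on whether $|M_1|<\sqrt\delta$ gives
\[
\mu(|\det W|<\delta)\leq 2K\sqrt\delta + C\delta^{c_{d-1}/2},
\]
so the claim holds with $c_d=\min(1/2,c_{d-1}/2)>0$.

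Combining the three steps, $\mu(s_{\min}<\varepsilon)\leq C\varepsilon^{c}$ for some $c>0$. The layer-cake integral is then bounded by $\int_0^\infty k u^{k-1}e^{-cu}\,du<\infty$, which finishes the proof.

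I expect the main obstacle to be the determinant small-ball estimate: setting up the conditioning and the induction so that only boundedness of the densities is used. Independence of the entries, rather than a joint density, is exactly what makes the conditional one-variable bound available.
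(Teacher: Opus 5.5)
Your proposal is correct. The $\|W\|$ part matches the paper up to presentation: you integrate $(\log\|W\|)_+^k\le C_k\|W\|_F^2$ directly, whereas the paper applies Markov's inequality to the tail.

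The treatment of $\|W^{-1}\|$ is genuinely different. The paper proves a small-ball bound for $s_{\min}$ directly (Lemma~\ref{lemma:singularity}). If $s_{\min}(W)\le\varepsilon$, then some column $C_j$ lies within $\sqrt d\,\varepsilon$ of the span $H_j$ of the other columns. The paper then picks a measurable \emph{unit} normal $n_j\perp H_j$ and conditions on every entry except the one of $C_j$ at the largest coordinate of $n_j$. This gives $\Prob(s_{\min}(W)\le\varepsilon)\le 2Md^2\varepsilon$, linear in $\varepsilon$ and using only bounded densities.

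Your cofactor vector $M=(M_i)_i$ is exactly an \emph{unnormalized} normal to $H_1$: one has $\det W=\langle M,C_1\rangle$ and $|\det W|=|M|\operatorname{dist}(C_1,H_1)$. So your one-variable conditioning is the same trick as the paper's. Because you do not normalize, the conditional bound $2K\delta/\max_i|M_i|$ forces you to control small values of the minors, which is why you need the induction on $d$. Because you pass through $|\det W|\le s_{\min}\|W\|^{d-1}$, you also need the second moment a second time, to control $\|W\|$.

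The result is $\mu(s_{\min}<\varepsilon)\le C\varepsilon^{c}$ with $c$ roughly $2^{-d}$, far weaker than the paper's linear bound. For the statement at hand this does not matter: all that is needed is $\Prob(\|W^{-1}\|>e^{u})\le Ce^{-cu}$ for some $c>0$. The paper's route buys an explicit, polynomial-in-$d$, linear-in-$\varepsilon$ estimate from bounded densities alone. Yours buys a self-contained argument in which the normal vector is an explicit polynomial in the entries, so no measurable Gram--Schmidt construction is needed. You still need a measurable argmax, handled as in the paper by splitting on the events $\{i^*=m\}$.
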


We first prove the following lemma.

\begin{lemma}[Smallest Singular Value Bound for a Random Matrix]
\label{lemma:singularity}
Let $\mu$ be a probability measure on the space of $d \times d$ real matrices $M_d(\mathbb{R})$ with independent entries. For each $1 \leq i, j \leq d$, assume the $(i, j)$-coordinate distribution is absolutely continuous with density $p_{ij}$ being bounded, that is $\sup_{t \in \R} |p_{ij}(t)| \leq M$ for some $M > 0$. Let $s_{\min}(W)$ be the smallest singular value of $W \in M_d(\R)$.

Then for $W\sim \mu$ and for any $\epsilon > 0$,
$$
\Prob(s_{\min}(W) \le \epsilon) \le 2Md^2 \varepsilon.$$
\end{lemma}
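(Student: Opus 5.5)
Our plan is to reduce the smallest singular value to a distance from one column to the span of the others, and then bound the probability that this distance is small using only the bounded density of a single entry, conditioning on everything else.

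Write $W$ with columns $w_1,\dots,w_d$ and let $H_k$ be the span of the columns other than $w_k$. The first step is the standard deterministic bound
\[
s_{\min}(W) \ \ge\ \frac{1}{\sqrt d}\,\min_{1\le k\le d} \mathrm{dist}(w_k,H_k).
\]
To prove it, take a unit vector $v$ with $|Wv| = s_{\min}(W)$ and choose $k$ with $|v_k|\ge 1/\sqrt d$. Then
\[
|Wv| \ge |v_k|\,\mathrm{dist}(w_k,H_k),
\]
since $Wv = v_k w_k + (\text{element of } H_k)$. Hence
\[
\{s_{\min}(W)\le\eps\}\subset \bigcup_k \{\mathrm{dist}(w_k,H_k)\le \sqrt d\,\eps\},
\]
and a union bound over $k$ reduces everything to a single column.

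Fix $k$ and condition on the other columns, which are independent of $w_k$. Almost surely $H_k$ has dimension at most $d-1$, so there is a unit normal vector $n$, measurable in the other columns, with $n \perp H_k$. Then
\[
\mathrm{dist}(w_k,H_k)\ \ge\ |\langle n, w_k\rangle| .
\]
Choose an index $i$ with $|n_i|\ge 1/\sqrt d$. Conditioning further on the entries $w_{jk}$ for $j\ne i$, we get $\langle n,w_k\rangle = n_i W_{ik} + c$ for a fixed $c$. Since $W_{ik}$ has density bounded by $M$, the variable $n_iW_{ik}+c$ has density bounded by $M/|n_i|\le M\sqrt d$. Therefore
\[
\Prob\big(|\langle n,w_k\rangle|\le t\big)\le 2tM\sqrt d .
\]

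Taking $t=\sqrt d\,\eps$, each column contributes at most $2M d\,\eps$, and summing over the $d$ columns gives $2Md^2\eps$, as claimed. The main obstacle is the bookkeeping of the $\sqrt d$ factors: one $\sqrt d$ enters through the choice of coordinate of $v$, another through the choice of coordinate of $n$, and the union bound contributes a factor $d$. These must combine to exactly $d^2$. The one point needing care is the measurability of the choice of $n$ and $i$. This is handled by conditioning on the other columns and selecting $i$ by a fixed rule, say the smallest index achieving $\max_i |n_i|$.
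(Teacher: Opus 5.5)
Your proof is correct and follows essentially the same route as the paper: the reduction $\{s_{\min}(W)\le\eps\}\subset\bigcup_k\{\mathrm{dist}(w_k,H_k)\le\sqrt d\,\eps\}$, then a measurable unit normal $n\perp H_k$, then conditioning on everything except the single entry $W_{ik}$ with $|n_i|\ge d^{-1/2}$. This gives $2Md\eps$ per column and $2Md^2\eps$ after the union bound, and your fixed-rule choice of $i$ matches the paper's restriction to the finitely many events $\{m_j=m\}$.
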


\begin{proof}

Write the columns of $W$ as $C_1,\dots,C_d\in\mathbb R^d$. We first claim that
\begin{equation}\label{eq:union-event}
\{s_{\min}(W)\le \varepsilon\}
\subseteq
\bigcup_{j=1}^d
\left\{\operatorname{dist}(C_j,H_j)\le \sqrt d\,\varepsilon\right\}
\qquad \text{where} \qquad 
H_j:=\operatorname{span}\{C_k:k\neq j\}.
\end{equation}
Indeed, if $s_{\min}(W)\le \varepsilon$, then as $s_{\min}(W) = \inf_{x \in \mathbb{S}^{d-1}} |Wx|$ there exists
$x=(x_1,\dots,x_d)\in \mathbb{S}^{d-1}$ such that
$|Wx|\le \varepsilon.$
Choose $j$ such that $|x_j|=\max_{1\le k\le d}|x_k|$. Since $|x|=1$, we have
$|x_j|\ge d^{-1/2}$. We have
\[
Wx=\sum_{k=1}^d x_k C_k \qquad \text{ and therefore } \qquad
x_j C_j=-\sum_{k\neq j}x_k C_k+Wx.
\]
Hence
\[
C_j=-\sum_{k\neq j}\frac{x_k}{x_j}C_k+\frac{1}{x_j}Wx.
\]
The first term on the right-hand side belongs to $H_j$, and therefore \eqref{eq:union-event} follows since
\[
\operatorname{dist}(C_j,H_j)
\le \left|\frac{1}{x_j}Wx\right|
\le \frac{\varepsilon}{|x_j|}
\le \sqrt d\,\varepsilon.
\]

We now bound each event in the union of \eqref{eq:union-event}. Fix $j\in\{1,\dots,d\}$, and let
\[
\mathcal F_j:=\sigma(C_k:k\neq j),
\]
the $\sigma$-algebra generated by all columns except $C_j$. Then $H_j$ is
$\mathcal F_j$-measurable. Since $H_j$ is spanned by at most $d-1$ vectors, it is
a proper subspace of $\mathbb R^d$. Therefore, one may choose an
$\mathcal F_j$-measurable random unit vector $n_j=(n_{j,1},\dots,n_{j,d})\in S^{d-1}$
such that $n_j\perp H_j.$
For instance, one may take $n_j$ to be the first vector in a Gram--Schmidt
orthonormal basis construction applied measurably to a fixed basis of
$\mathbb R^d$ modulo $H_j$.

If $\operatorname{dist}(C_j,H_j)\le \sqrt d\,\varepsilon$, then, since $n_j$ is a
unit vector orthogonal to $H_j$,
\[
|\langle n_j,C_j\rangle|
\le \operatorname{dist}(C_j,H_j)
\le \sqrt d\,\varepsilon.
\]
Hence
\begin{equation}\label{eq:distance-to-inner-product}
\{\operatorname{dist}(C_j,H_j)\le \sqrt d\,\varepsilon\}
\subseteq
\{|\langle n_j,C_j\rangle|\le \sqrt d\,\varepsilon\}.
\end{equation}

Next, choose an index $m_j\in\{1,\dots,d\}$ such that
$|n_{j,m_j}|=\max_{1\le i\le d}|n_{j,i}|.$
Since $|n_j|=1$, we again have
$|n_{j,m_j}|\ge d^{-1/2}.$ As $n_j$ is $\mathcal F_j$-measurable, so is $m_j$. Now, let
\[
\mathcal G_j:=\sigma\bigl(\mathcal F_j,\; W_{ij}: i\neq m_j\bigr).
\]
Conditionally on $\mathcal G_j$, the quantity $\langle n_j,C_j\rangle$ is an
affine function of the single remaining random variable $W_{m_j j}$:
\[
\langle n_j,C_j\rangle
=
n_{j,m_j}W_{m_j j}+b_j
\]
for some $\mathcal G_j$-measurable real random variable $b_j$. We remark that since $m_j$ takes only finitely many values and is $F_j$-measurable, this conditioning argument can be justified by restricting to the events $\{m_j=m\}, m=1,\dots,d$. Therefore,
conditionally on $\mathcal G_j$, the event
\[
\{|\langle n_j,C_j\rangle|\le \sqrt d\,\varepsilon\}
\]
forces $W_{m_j j}$ to lie in an interval of length
\[
\frac{2\sqrt d\,\varepsilon}{|n_{j,m_j}|}
\le 2d\,\varepsilon.
\]
Since $W_{m_j j}$ is independent of $\mathcal G_j$ and has density bounded by $M$,
we obtain
\[
\mathbb P\bigl(|\langle n_j,C_j\rangle|\le \sqrt d\,\varepsilon \,\big|\, \mathcal G_j\bigr)
\le 2Md\,\varepsilon
\qquad\text{a.s.}
\]
Using \eqref{eq:distance-to-inner-product} and taking expectations to use that $\mathbb{E}[Y] = \mathbb{E}[\mathbb{E}[Y|\mathcal{G}_j]]$ for any random variable $Y$, we conclude
\[
\mathbb P\bigl(\operatorname{dist}(C_j,H_j)\le \sqrt d\,\varepsilon\bigr)
\le
\mathbb P\bigl(|\langle n_j,C_j\rangle|\le \sqrt d\,\varepsilon\bigr)
\le 2Md\,\varepsilon.
\]

Finally, by \eqref{eq:union-event} and the union bound,
\[
\mathbb P\bigl(s_{\min}(W)\le \varepsilon\bigr)
\le
\sum_{j=1}^d
\mathbb P\bigl(\operatorname{dist}(C_j,H_j)\le \sqrt d\,\varepsilon\bigr)
\le 2Md^2\,\varepsilon.
\]
This proves the lemma.
\end{proof}

Using the lemma, we can deduce Proposition~\ref{FinitenessofMoment}.

\begin{proof}(of Proposition~\ref{FinitenessofMoment})
Denote  $Y = \log N(W)$. By the properties of operator norms, $1 = ||I|| = ||W W^{-1}|| \le ||W|| \cdot ||W^{-1}||$. This implies that $N(W)  \ge 1$. Therefore, $\log N(W) \ge \log(1) = 0$.

Since $Y$ is therefore a non-negative random variable, $\E[|Y|^k] = \E[Y^k]$, and we can compute the $k$-th moment using the tail probability formula, that is
$$
\E[Y^k] = \int_0^\infty k t^{k-1} \Prob(Y > t) \, dt
$$
This integral is finite if the tail probability $\Prob(Y > t)$ decays sufficiently fast. We will show that it decays exponentially.

Note that $\Prob(Y > t) =  \Prob(N(W) > e^t)$ and so we can split the probability
\begin{align*}
\Prob(Y > t) &= \Prob(\max(||W||, ||W^{-1}||) > e^t) \\
&\le \Prob(||W|| > e^t) + \Prob(||W^{-1}|| > e^t)
\end{align*}
We will bound the integral of these two terms separately.

First, we bound $\Prob(||W|| > e^t)$. We use the Frobenius norm $||W||_F^2 = \sum_{i,j=1}^d W_{ij}^2$. Since all norms on $\R^{d \times d}$ are equivalent, bounding the Frobenius norm is sufficient.
Since the $W_{ij}$ have a finite second moment,  $\E[W_{ij}^2] \leq \sigma^2 < \infty$ for some $\sigma > 0$ and all $1 \leq i,j \leq d$.
Then, by linearity of expectation:
$$ \E[||W||_F^2] = \E\left[\sum_{i,j} W_{ij}^2\right] = \sum_{i,j} \E[W_{ij}^2] \leq d^2 \sigma^2 $$
Using Markov's inequality for some $T > 0$ and a constant $C_1 > 0$,
$$ \Prob(||W|| > T) \le \Prob(||W||_F > C_1T) = \Prob(||W||_F^2 > C_1^2T^2) \le \frac{\E[||W||_F^2]}{C_1^2 T^2} = \frac{d^2 \sigma^2}{C_1^2T^2} $$
Thus, there exists a constant $C_1' = C_1^{-2}d^2 \sigma^2$ such that $\Prob(||W|| > T) \le C_1' T^{-2}$. Thus we conclude that $\Prob(||W|| > e^t) \le C_1' e^{-2t}$ and therefore $$\int_0^\infty k t^{k-1} \Prob(||W|| > e^t) \, dt \leq C_1' \int_0^\infty k t^{k-1} e^{-2t} \, dt < \infty.$$

Finally, we bound $\Prob(||W^{-1}|| > e^t)$ for which we use Lemma~\ref{lemma:singularity}. Let $s_{\min}(W)$ be the smallest singular value of $W$. Then $||W^{-1}|| = 1/s_{\min}(W)$ (for the operator norm). Thus, by Lemma~\ref{lemma:singularity},
$$ \Prob(||W^{-1}|| > e^{t}) = \Prob(s_{\min}(W) < e^{-t}) \leq C_2 e^{-t} $$ for some constant $C_2 > 0$. So we conclude that $$\int_0^\infty k t^{k-1} \Prob(||W^{-1}|| > e^t) \, dt \leq C_2 \int_0^\infty k t^{k-1} e^{-t} \, dt < \infty,$$ concluding the proof.
\end{proof}

\subsection{Central Limit Theorem}\label{section:ProofExplicitCLT}

We want to establish the Central Limit Theorem \ref{ExplicitLeakyReLUCLT} by using the general result stated in Theorem~\ref{CLTCocycleGeneral}. However, Theorem~\ref{CLTCocycleGeneral} requires some further assumptions on the cocycle, which will require some argument to establish.

\begin{theorem}(General Central Limit Theorem for $|X_{\ell}|$)\label{GeneralCLT}
    Let $\mu$ be a Borel probability measure on $\mathrm{GL}_d(\R)$.  Let $(X_{\ell})_{\ell\geq 0}$ be as in \eqref{ProbNNDef} and let $\phi(x) = \max(\alpha_1 x,\alpha_2 x)$ be a generalized Leaky ReLU activation function with $\alpha_1, \alpha_2 \in \R_{\neq 0}$. Assume that the resulting Markov chain on $X$ satisfies the Doeblin condition from \eqref{DoeblinConditionDefinition}. Moreover, assume that $\mu$ has finite second logarithmic moment, that is $\int \log N(W)^2 \, d\mu(W) < \infty$. 

     Then there exists $\gamma_{\mu,\phi} \geq 0$ such that for any $x_0 \in \R^d\backslash \{ 0 \}$ it holds that as $\ell \to \infty$, $$\frac{\log |X_{\ell}| - \ell \lambda_{\mu, \phi}}{\sqrt{\ell}} \longrightarrow \mathcal{N}(0,\gamma_{\mu,\phi}^2),$$ where the convergence holds in distribution and $\lambda_{\mu, \phi} \in \R$ is from Theorem~\ref{UniqueStatLeakyReLULLN}.
\end{theorem}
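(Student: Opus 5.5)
The plan is to apply Theorem~\ref{CLTCocycleGeneral} to the pushforward $\eta=\phi_*\mu$ on $\mathscr{C}^{\times}_{\mathrm{hom}}(\R^d)$, acting on the compact space $X\cong\mathbb{S}^{d-1}$, with the cocycle $c_{\mathrm{hom}}$ from \eqref{Def:chom}. By positive homogeneity we have $c_{\mathrm{hom}}(\phi_{W_\ell}\circ\cdots\circ\phi_{W_1},\overline{x_0})=\log|X_\ell|-\log|x_0|$. The fixed shift $\log|x_0|$ disappears after dividing by $\sqrt{\ell}$, so a CLT for the cocycle gives the claimed CLT for $\log|X_\ell|$.

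Three hypotheses of Theorem~\ref{CLTCocycleGeneral} need checking.
\begin{enumerate}
\item \textbf{Uniqueness of the stationary measure.} By Theorem~\ref{DoeblinTheorem}, the Doeblin condition gives a unique stationary measure $\nu$. It also gives the uniform exponential mixing $\sup_x\|P^\ell(x,\cdot)-\nu\|_{\mathrm{TV}}\le Ce^{-a\ell}$.
\item \textbf{Square integrability.} The bound \eqref{BQCocycleL1Condition} is exactly Lemma~\ref{lemma:finmom} with $k=2$.
\item \textbf{Centerability.} Define the drift $\bar c(x)=\int c_{\mathrm{hom}}(\phi_W,x)\,d\mu(W)$. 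It satisfies $|\bar c(x)|\le \int \log N(W)\,d\mu+A<\infty$. By Theorem~\ref{UniqueStatLeakyReLULLN}, $\nu(\bar c)=\lambda_{\mu,\phi}$. Solve the Poisson equation $\psi-P\psi=\bar c-\lambda_{\mu,\phi}$ with
\[
\psi=\sum_{n\ge 0}P^n\bigl(\bar c-\lambda_{\mu,\phi}\bigr).
\]
Mixing gives $\|P^n(\bar c-\lambda)\|_\infty\le 2\|\bar c\|_\infty Ce^{-an}$, so the series converges uniformly and $\psi$ is bounded. Set $c_0(g,x)=c(g,x)-\psi(x)+\psi(g.x)$. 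Integrating in $g$ gives $\bar c(x)-\psi(x)+P\psi(x)=\lambda_{\mu,\phi}$, so $c_0$ has constant drift $\lambda_{\mu,\phi}$.
\end{enumerate}

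The main obstacle is that Definition~\ref{MoreCocycleDefs} requires $\psi$ to be continuous, and the Doeblin condition only yields total-variation control. I would first try to show that $P$ is strong Feller, i.e. that $Pf$ is continuous for every bounded measurable $f$. Each $P^n(\bar c-\lambda)$ is of the form $Pf$ for bounded $f$, so each term in the series would be continuous. The uniform limit $\psi$ would then be continuous.

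For the strong Feller property I would use the structure of $\mu$: either a density on $M_d(\R)$, or a density on $\eta\,\mathrm{O}(d)$. In both cases write $x=Qx_*$ with $Q$ depending continuously on $x$ locally. Then $Pf(x)=\int f(\phi_{WQ}(x_*))\,d\mu(W)$. The change of variables $W\mapsto WQ$ turns this into an integral of a fixed function against the translate of the density by $Q^{-1}$. Translation is continuous in $L^1$, which gives continuity of $Pf$ in $Q$.

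This uses absolute continuity of $\mu$, which is slightly more than the Doeblin condition alone. If only the abstract Doeblin hypothesis is available, I would fall back on the route the paper indicates after Theorem~\ref{CLTCocycleGeneral}. Write
\[
\log|X_\ell|-\ell\lambda=\sum_{k=1}^{\ell} D_k+\psi(S_0)-\psi(S_\ell),
\]
where $D_k=c_0(\phi_{W_k},S_{k-1})-\lambda$ are martingale differences with uniformly bounded conditional second moments. Then apply a martingale CLT, such as Brown's theorem. The conditional variance condition $\frac1\ell\sum_k \mathbb{E}[D_k^2\mid\mathcal{F}_{k-1}]\to\gamma^2$ follows from the LLN (Theorem~\ref{CocycleGeneralResult}) applied to the bounded function $x\mapsto\int c_0(\phi_W,x)^2\,d\mu-\lambda^2$. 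The Lindeberg condition follows from the uniform $L^2$ domination by $c_{\mathrm{hom,sup}}^2$. The bounded term $\psi(S_0)-\psi(S_\ell)$ vanishes after dividing by $\sqrt\ell$. This identifies $\gamma_{\mu,\phi}^2=\int\!\!\int c_0(\phi_W,x)^2\,d\mu\,d\nu-\lambda^2\ge0$.
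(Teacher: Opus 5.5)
Your main route is the paper's proof. You apply Theorem~\ref{CLTCocycleGeneral} to $(W\mapsto\phi_W)_*\mu$, get uniqueness from the Doeblin condition, get the square-moment bound from Lemma~\ref{lemma:finmom} with $k=2$, and get centerability from the Neumann series $\psi=\sum_{n\ge0}P_\mu^n\varphi_0$ with $\varphi_0=\overline{c}_{\mathrm{hom}}-\lambda_{\mu,\phi}$, as in Lemma~\ref{LemmaGeneralCLT}.

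The obstacle you flag is not real, however, and strong Feller is the wrong tool for it. The ordinary Feller property suffices, because the function you iterate is already continuous:
\begin{itemize}
\item For each $W\in\mathrm{GL}_d(\R)$, the map $\overline{x}\mapsto c_{\mathrm{hom}}(\phi_W,\overline{x})$ is continuous on $X$ and bounded by $A+\log N(W)$. So $\varphi_0$ is continuous by dominated convergence; this is the paper's argument.
\item For $f\in C(X)$, $P_\mu f(\overline{x})=\int f(\phi_W(\overline{x}))\,d\mu(W)$ is continuous by dominated convergence, since $\overline{x}\mapsto\phi_W(\overline{x})$ is continuous and $f$ is bounded.
\end{itemize}
By induction every $P_\mu^n\varphi_0$ is continuous, and hence so is the uniform limit $\psi$.

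Your strong Feller detour has two further problems. It does not cover the $n=0$ term $\varphi_0$ without exactly this observation, since $\varphi_0$ is not of the form $P_\mu f$. It also uses absolute continuity of $\mu$, which Theorem~\ref{GeneralCLT} does not assume, so on its own it would only give the special case Theorem~\ref{ExplicitLeakyReLUCLT}.

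Your martingale fallback is a correct and genuinely different proof, which the paper only mentions in the remark after Theorem~\ref{CLTCocycleGeneral}. It has three advantages:
\begin{itemize}
\item it needs only a bounded measurable $\psi$;
\item it bypasses the Benoist--Quint theorem and the caveat about local compactness of $G$;
\item it gives the explicit variance $\gamma_{\mu,\phi}^2=\nu(v)$, where $v(\overline{x})=\int c_0(\phi_W,\overline{x})^2\,d\mu(W)-\lambda_{\mu,\phi}^2$.
\end{itemize}

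One step needs a different justification. Theorem~\ref{CocycleGeneralResult} concerns continuous additive cocycles. The average $\frac1\ell\sum_{k\le\ell}v(S_{k-1})$ is an ergodic average of a function along the chain, not a cocycle sum, since $(g,\overline{x})\mapsto v(\overline{x})$ is not a cocycle unless $v\equiv0$. Use the uniform mixing directly instead. Set $\bar v=v-\nu(v)$. For $j\le k$,
$|\mathbb{E}[\bar v(S_j)\bar v(S_k)]|=|\mathbb{E}[\bar v(S_j)\,(P_\mu^{k-j}\bar v)(S_j)]|\le C'e^{-a(k-j)}$ for some constant $C'$. Hence $\mathbb{E}\bigl[\bigl(\frac1\ell\sum_{k<\ell}\bar v(S_k)\bigr)^2\bigr]=O(1/\ell)$, and convergence in probability is all the martingale CLT requires.

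Your conditional Lindeberg argument is fine as stated: $|D_k|\le c_{\mathrm{hom,sup}}(\phi_{W_k})+2\|\psi\|_\infty+|\lambda_{\mu,\phi}|$, and $W_k$ is independent of $\mathcal{F}_{k-1}$.
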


Before proving Theorem~\ref{GeneralCLT}, we show how Theorem~\ref{GeneralCLT} implies Theorem~\ref{ExplicitLeakyReLUCLT}.

\begin{proof}(of Theorem~\ref{ExplicitLeakyReLUCLT})
    We just need to check that the assumptions of Theorem~\ref{GeneralCLT} are satisfied. When $\mu$ satisfies Assumption~\ref{MeasureAssumptions} or Assumption~\ref{SecondMeasureAssumptions}, the Doeblin condition \eqref{DoeblinConditionDefinition} holds as was proved in Proposition~\ref{UniquenessofStationaryMeasure} and therefore by Theorem~\ref{DoeblinTheorem}, $\mu$ has a unique spherical $\mu$-stationary probability measure. By Proposition~\ref{FinitenessofMoment}, if $\mu$ satisfies Assumption~\ref{MeasureAssumptions}, it has finite first and second logarithmic moment. If $\mu$ satisfies Assumption~\ref{SecondMeasureAssumptions}, then it clearly has finite first and second moments since, for $W \sim \mu$, we have $||W|| = \eta$ and $||W^{-1}|| = \eta^{-1}$. Therefore, the Law of Large Numbers follows by Theorem~\ref{UniqueStatLeakyReLULLN} and the result follows by Theorem~\ref{GeneralCLT}.
\end{proof}

Consider the previously studied cocycle $c_{\mathrm{hom}}: G \times X \to \mathbb{R}$, defined as $c_{\mathrm{hom}}(f, \overline{x}) = \log \frac{|f(x)|}{|x|}$ for $f \in \mathscr{C}_{\mathrm{hom}}^{\times}(\R^d)$ and $\overline{x} \in X$. Before proving Theorem~\ref{GeneralCLT}, we first show the following lemma that uses the Doeblin condition and for which we use the following notation: For $f:X \to \R$ continuous we denote by $P_{\mu}$ the operator given for $\overline{x} \in X$ by $$P_\mu f(\overline{x}) = \int_{G} f(\phi_W(\overline{x})) \, d\mu(W).$$

\begin{lemma}\label{LemmaGeneralCLT}
    Let $\mu$ and $X_{\ell}$ be as in Theorem~\ref{GeneralCLT} and let $\lambda_{\mu,\phi}$ be from Theorem~\ref{UniqueStatLeakyReLULLN}. Then there is a continuous function $\psi : X \to \R$ such that for all $\overline{x} \in X$
\begin{equation}\label{CohomEq}
    \psi(\overline{x}) - P_\mu \psi(\overline{x}) = \overline{c}_{\mathrm{hom}}(\overline{x}) - \lambda_{\mu,\phi}, \quad\quad \text{where} \quad\quad   \overline{c}_{\mathrm{hom}}(\overline{x}) = \int_{G} c_{\mathrm{hom}}(\phi_W, \overline{x}) \, d\mu(W).\\
\end{equation}
\end{lemma}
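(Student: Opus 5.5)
We solve the Poisson equation \eqref{CohomEq} with the Neumann series
\[
\psi := \sum_{n\geq 0} P_\mu^n\bigl(\overline{c}_{\mathrm{hom}} - \lambda_{\mu,\phi}\bigr),
\]
using the exponential mixing that the Doeblin condition gives through Theorem~\ref{DoeblinTheorem}.

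First, set $h := \overline{c}_{\mathrm{hom}} - \lambda_{\mu,\phi}$. This function is bounded, since $|\overline{c}_{\mathrm{hom}}(\overline{x})| \leq \int c_{\mathrm{hom,sup}}(\phi_W)\,d\mu(W) < \infty$ by Lemma~\ref{lemma:finmom} with $k=1$.

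Second, $h$ has mean zero against the unique stationary measure $\nu$. Indeed, $\nu(h)=0$ is exactly the formula \eqref{UniqueStatLyapFormula} for $\lambda_{\mu,\phi}$ in Theorem~\ref{UniqueStatLeakyReLULLN}.

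Third, since $P_\mu^n h(\overline{x}) = \int h \, dP^n(\overline{x},\cdot)$ and $\nu(h)=0$, Theorem~\ref{DoeblinTheorem} gives
\[
\sup_{\overline{x}}|P_\mu^n h(\overline{x})| = \sup_{\overline{x}}\bigl|(P^n(\overline{x},\cdot)-\nu)(h)\bigr| \leq \|h\|_\infty\, C e^{-\alpha n}.
\]
So the series defining $\psi$ converges uniformly on $X$, and $\psi$ is bounded.

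Fourth, we verify the equation. Since $P_\mu$ is a bounded operator on bounded measurable functions with norm at most $1$, we may apply it termwise to the uniformly convergent series. This gives $P_\mu\psi = \sum_{n\ge1}P_\mu^n h$, and therefore $\psi - P_\mu\psi = h$, which is \eqref{CohomEq}.

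The main obstacle is continuity of $\psi$. Continuity is needed because the centerability in Definition~\ref{MoreCocycleDefs} asks for a continuous $\psi$. Uniform limits of continuous functions are continuous, so it suffices to show that each $P_\mu^n h$ is continuous. This reduces to two facts.
\begin{itemize}
\item[(a)] $\overline{c}_{\mathrm{hom}}$ is continuous. For each $W\in\mathrm{GL}_d(\R)$ the map $\overline{x}\mapsto c_{\mathrm{hom}}(\phi_W,\overline{x})=\log|\phi(Wx)|$, with $x\in\mathbb{S}^{d-1}$, is continuous. It is dominated by $c_{\mathrm{hom,sup}}(\phi_W)$, which is $\mu$-integrable. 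Dominated convergence then gives continuity of $\overline{c}_{\mathrm{hom}}$.
\item[(b)] $P_\mu$ maps bounded continuous functions to continuous functions (the Feller property). For $f$ continuous and bounded, $\overline{x}\mapsto f(\phi_W(\overline{x}))$ is continuous for each invertible $W$, and it is bounded by $\|f\|_\infty$. Dominated convergence again gives continuity of $P_\mu f$.
\end{itemize}
By induction on $n$, every $P_\mu^n h$ is continuous, hence so is $\psi$.

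If one insists on $\mu$ living on $M_d(\R)$, the set of singular $W$ is $\mu$-null under the standing assumptions, so it does not affect these integrals.
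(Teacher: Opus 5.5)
Your proposal is correct and follows essentially the same route as the paper. Both take the Neumann series $\psi=\sum_{n\ge 0}P_\mu^n\bigl(\overline{c}_{\mathrm{hom}}-\lambda_{\mu,\phi}\bigr)$, use $\nu$-mean zero and exponential decay from Theorem~\ref{DoeblinTheorem}, and verify the equation by a telescoping sum. Your explicit Feller-property argument that each $P_\mu^n h$ is continuous supplies a step the paper only asserts when it concludes that $\psi$ is continuous.
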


\begin{proof}(of Lemma~\ref{LemmaGeneralCLT}) Note that by Cauchy-Schwarz, $\mu$ has finite first logarithmic moment too and so Theorem~\ref{UniqueStatLeakyReLULLN} can be applied. Let $\lambda_{\mu,\phi}$ be the Lyapunov exponent from Theorem~\ref{UniqueStatLeakyReLULLN}. For convenience, write for $\overline{x} \in X$
\begin{equation*}
    \varphi_0(\overline{x}) := \overline{c}_{\mathrm{hom}}(\overline{x}) - \lambda_{\mu,\phi}.
\end{equation*}
Note that $\overline{c}_{\mathrm{hom}}$ is continuous by dominated convergence using Lemma~\ref{lemma:finmom} and therefore so is $\varphi_0$. Moreover, recall that by equation~\ref{UniqueStatLyapFormula} it holds that $\lambda_{\mu,\phi} = \int_X \overline{c}_{\mathrm{hom}}(\overline{x}) \, d\nu(\overline{x})$, and therefore the function $\varphi_0$ has zero mean with respect to the unique stationary measure $\nu$, that is
    \begin{equation*}
        \int_X \varphi_0(\overline{x}) \, d\nu(\overline{x}) = \int_X (\overline{c}_{\mathrm{hom}}(\overline{x}) - \lambda_{\mu,\phi}) \, d\nu(\overline{x}) = 0.
    \end{equation*}

We now exploit the assumption that $\mu$ satisfies the Doeblin condition and apply Theorem~\ref{DoeblinTheorem}. Using the notation $P^{\ell}$ from Theorem~\ref{DoeblinTheorem}, we note that $P_{\mu}^{\ell}f(x) = P^\ell(x,f)$ for all $\ell \geq 1$, $f: X \to \R$ and $x \in X$. Thus, it follows from Theorem~\ref{DoeblinTheorem} using that $\nu(\varphi_0) = \int \varphi_0 \, \nu = 0$ that 
\begin{equation}
    \| P_\mu^\ell \varphi_0 \|_\infty \le \|\varphi_0 \|_\infty\sup_{x \in X}||P^{\ell}(x,\cdot) - \nu||_{\mathrm{TV}} \leq C||\varphi_0 ||_{\infty} e^{-\alpha \ell}.
\end{equation}

We can now invert the operator $(I - P_\mu)$ to define $\psi$ for $\overline{x} \in X$ as
\begin{equation}
    \psi(\overline{x}) = \sum_{k=0}^{\infty} P_\mu^k \varphi_0(\overline{x}).
\end{equation}
Due to the estimate $\| P_\mu^k \varphi_0 \|_\infty \le C||\varphi_0 ||_{\infty} e^{-\alpha k}$, this series converges uniformly on $X$. Therefore, $\psi$ is a well-defined continuous function.

We finally verify that this constructed $\psi$ solves \eqref{CohomEq} by applying the operator $(I - P_\mu)$:
\begin{align*}
    (I - P_\mu)\psi(x) &= \psi(x) - P_\mu \psi(x) \\
    &= \sum_{k=0}^{\infty} P_\mu^k \varphi_0(x) - \sum_{k=0}^{\infty} P_\mu^{k+1} \varphi_0(x) \\
    &= \varphi_0(x) + \sum_{k=1}^{\infty} P_\mu^k \varphi_0(x) - \sum_{j=1}^{\infty} P_\mu^j \varphi_0(x) \quad (\text{Telescoping sum}) \\
    &= \varphi_0(x) = \overline{c}_{\mathrm{hom}}(x) - \lambda_{\mu,\phi}.
\end{align*}
This concludes the proof of the lemma.
\end{proof}

\begin{proof}(of Theorem~\ref{GeneralCLT})
The moment condition of Theorem~\ref{CLTCocycleGeneral} follows from Lemma~\ref{lemma:finmom}. The Doeblin condition implies uniqueness of stationary measures. So by Theorem~\ref{CLTCocycleGeneral} applied to the measure $\eta = (W\mapsto \phi_W)_{*}\mu$, the proof reduces to showing that $c_{\mathrm{hom}}$ is centerable for which we use Lemma~\ref{LemmaGeneralCLT}. According to the definition of centerability, we must show that there is a continuous function $\psi: X \to \R$ and a cocycle $c_0$ with constant drift and average $\lambda_{\mu,\phi}$ such that for all $f\in \mathscr{C}_{\mathrm{hom}}^{\times}(\mathbb{R}^d)$ and $\overline{x} \in X$,
\begin{equation}\label{CenterableGoal}
    c_{\mathrm{hom}}(f, \overline{x}) = c_0(f, \overline{x}) + \psi(\overline{x}) - \psi(f(\overline{x})).
\end{equation}

Let $\psi$ be the function from Lemma~\ref{LemmaGeneralCLT} satisfying \eqref{CohomEq}. Then we set for $f\in \mathscr{C}_{\mathrm{hom}}^{\times}(\mathbb{R}^d)$ and $\overline{x} \in X$, $$c_0(f, \overline{x}) = c_{\mathrm{hom}}(f, \overline{x}) - \psi(\overline{x}) + \psi(f(\overline{x})).$$ It is clear that $c_0$ is continuous and it is a cocycle since for any $f,g\in \mathscr{C}_{\mathrm{hom}}^{\times}(\mathbb{R}^d)$ and $\overline{x} \in X$ we have that
\begin{align*}
    c_0(f\circ g,\overline{x}) &= c_{\mathrm{hom}}(f\circ g, \overline{x}) - \psi(\overline{x})  + \psi((f\circ g)(\overline{x}))\\
    &= c_{\mathrm{hom}}(f\circ g, \overline{x}) - \psi(\overline{x}) + \psi(g(\overline{x})) - \psi(g(\overline{x})) + \psi((f\circ g)(\overline{x})) \\
    &=  c_{\mathrm{hom}}(f, g(\overline{x})) + c_{\mathrm{hom}}(g,\overline{x}) - \psi(\overline{x}) + \psi(g(\overline{x}))- \psi(g(\overline{x})) + \psi((f\circ g)(\overline{x}))\\
    &= (c_{\mathrm{hom}}(f, g(\overline{x})) - \psi(g(\overline{x})) + \psi((f\circ g)(\overline{x}))) + (c_{\mathrm{hom}}(g,\overline{x}) - \psi(\overline{x}) + \psi(g(\overline{x})) \\
    &= c_0(f, g(\overline{x})) + c_0(g,\overline{x}),
\end{align*} having added $0 = \psi(g(\overline{x})) - \psi(g(\overline{x}))$ in the second line and used the cocycle property in the third.

Finally, $c_0$ has constant drift with average $\lambda_{\mu,\phi}$ as for any $\overline{x} \in X$ we have by using \eqref{CohomEq} that
\begin{align*}
    \int c_0(\phi_W,\overline{x}) \, d\mu(W) &= \int c_{\mathrm{hom}}(\phi_W, \overline{x}) - \psi(\overline{x}) + \psi(\phi(W\overline{x})) \, d\mu(W) \\
    &= \overline{c}_{\mathrm{hom}}(\overline{x}) - \psi(\overline{x}) + P_{\mu}\psi(\overline{x}) = \lambda_{\mu,\phi}.
\end{align*}
This concludes the proof that $c_0$ as defined above is an additive cocycle that has constant drift with average $\lambda_{\mu,\phi}$. Thus, $c_{\mathrm{hom}}$ is centerable and the proof is concluded.
\end{proof}

\subsection{Counterexamples to Law of Large Numbers}\label{section:CounterexamplesLLN}

We end this section by establishing the following lemma stating that the Law of Large Numbers is a subtle result and fails under certain conditions.

\begin{lemma}(Counterexample to Law of Large Numbers for $|X_{\ell}|$)\label{Lemma:LLNCounterexample}
    Let $\mu$ be a probability measure on $M_d(\R)$. Let $(\Omega, \mathscr{F}, \mathbb{P}_{\mu})$ be as in \eqref{ProbSpace}, $(X_{\ell})_{\ell\geq 0}$ be as in \eqref{ProbNNDef}. Then the conclusion of Theorem~\ref{ExplicitLeakyReLULLN} is false under either of the following assumptions:
    \begin{enumerate}
        \item[(1)] (ReLU activation function) When $\phi(x) = \max(x,0)$ and the entries of $\mu$ are independent and all distributed as $\mathcal{N}(0,\sigma^2)$ for $\sigma > 0$.
        \item[(2)] (Measure supported on matrices with positive entries) When $\phi(x) = \max( x,\alpha x)$ for $\alpha \in (0,1)$ and the entries of $\mu$ are independent and all distributed as $\mathrm{Unif}[0,a]$ for some $a > 0$.
    \end{enumerate}
\end{lemma}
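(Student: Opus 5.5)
We treat the two cases separately. In each we exhibit an event of positive probability on which the behaviour of $\frac1\ell\log|X_\ell|$ differs from its behaviour on another event of positive probability. This contradicts almost sure convergence to a deterministic constant $\lambda_{\mu,\phi}\in\R$, as asserted in Theorem~\ref{ExplicitLeakyReLULLN}.

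\textbf{Case (1), ReLU.} With $\phi(x)=\max(x,0)$, the event $\{W_1 x_0\in(-\infty,0]^d\}$ forces $X_1=0$, and then $X_\ell=0$ for all $\ell\ge1$. Since $W_1x_0\sim\mathcal N(0,\sigma^2|x_0|^2 I_d)$, this event has probability $2^{-d}>0$. On it, $\log|X_\ell|=-\infty$ for every $\ell$, so $\frac1\ell\log|X_\ell|$ cannot converge to any real number. More generally, at each step the coordinates of $W_\ell X_{\ell-1}$ are independent centered Gaussians whenever $X_{\ell-1}\neq 0$. Hence $\mathbb P(X_\ell\neq0\ \forall \ell)\le\prod_\ell(1-2^{-d})=0$, so almost surely the activations eventually vanish. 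Either observation contradicts the theorem. This case is essentially immediate.

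\textbf{Case (2), positive entries, $\alpha\in(0,1)$.} Now $W_\ell$ has entries in $[0,a]$, so $W_\ell$ maps the closed positive orthant into itself, and $\phi$ is the identity there. The idea is to compare two inputs, $x_0=e_1$ (nonnegative) and $x_0=-e_1$ (nonpositive).

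For $x_0=e_1$, every $X_\ell$ lies in the positive orthant, so $X_\ell=W_\ell\cdots W_1 e_1$ is a genuine product of random matrices. For $x_0=-e_1$, the vector $W_1x_0$ is nonpositive, so $\phi$ acts as multiplication by $\alpha$, and inductively $X_\ell=-\alpha^\ell W_\ell\cdots W_1e_1$. Applying both to the same $\omega$ gives
\[
\tfrac1\ell\log|X_\ell^{(-e_1)}|=\log\alpha+\tfrac1\ell\log|X_\ell^{(e_1)}|.
\]
If both limits equaled the same constant $\lambda_{\mu,\phi}$, we would get $\log\alpha=0$, contradicting $\alpha\in(0,1)$.

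The one thing to check is that the limit for $x_0=e_1$ exists and is finite almost surely; otherwise the conclusion of the theorem already fails for that input. For this, note that $\mu$ is absolutely continuous on $\mathrm{GL}_d(\R)$ (almost surely invertible), and $\int\log N(W)\,d\mu<\infty$ by Proposition~\ref{FinitenessofMoment}, since the entries have bounded densities and finite second moments. Proposition~\ref{StandardLLNUniqueStat} then gives $\frac1\ell\log|W_\ell\cdots W_1e_1|\to\lambda_\mu\in\R$ almost surely, while the input $-e_1$ yields $\lambda_\mu+\log\alpha\neq\lambda_\mu$. This step is the only mildly delicate point, and it is handled by results already in the paper. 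Alternatively, one can avoid it altogether: $|X_\ell^{(-e_1)}|/|X_\ell^{(e_1)}|=\alpha^\ell$ identically, so the two normalized limits cannot coincide whenever they exist.
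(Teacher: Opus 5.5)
Your proposal is correct and follows the paper's proof. In case (1) the positive-probability event $X_1=0$ forces $\log|X_\ell|=-\infty$; in case (2) orthant preservation gives $X_\ell=W_\ell\cdots W_1x_0$ for a nonnegative input and $\alpha^\ell$ times that product (up to sign) for a nonpositive one, and the matrix law of large numbers (Proposition~\ref{StandardLLNUniqueStat}) yields the distinct limits $\lambda_{\mathrm{mat}}$ and $\lambda_{\mathrm{mat}}+\log\alpha$. Your closing remark is a valid shortcut: the pathwise identity $|X_\ell^{(-e_1)}|=\alpha^\ell|X_\ell^{(e_1)}|$ on a common $\omega$ already rules out a common real limit. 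It therefore makes the appeal to the matrix LLN, and to the paper's ``analogously to Proposition~\ref{UniquenessofStationaryMeasure}'' (whose Doeblin argument uses positivity of the densities near $0$), unnecessary.
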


We briefly give an intuitive explanation why these results fail. In the ReLU case, with a positive probability $X_{\ell}$ is zero and so the Lyapunov exponent could only be $-\infty$, which contradicts the statement of Theorem~\ref{ExplicitLeakyReLULLN} that $\lambda_{\mu,\phi}$ is a finite real number. In the Leaky ReLU case and when $\mu$ consists of matrices with positive entries, the positive cone $\R^d_{>0}$ and negative cone $\R^d_{<0}$ are preserved, leading to different Lyapunov exponents depending on the starting point. Moreover, as we show in the proof of Lemma~\ref{Lemma:LLNCounterexample}, even for certain fixed starting points, the Lyapunov exponent does not converge almost surely to the same number.

\begin{proof}
    For (1), we simply note that for every $\ell$, with a positive probability $X_{\ell} = 0 \in \R^d$ and so $\log |X_{\ell}| = - \infty$ with positive probability. So the Lyapunov exponent is not $\in \R$, yet it holds that almost surely $\log |X_{\ell}| = - \infty$ for sufficiently large $\ell$.

    For (2), we first note that if we write $Y_{\ell} = W_{\ell} \dots W_1 x_0$ for the standard linear matrix product, then, analogously to Proposition~\ref{UniquenessofStationaryMeasure}, there is a unique $\mu$-stationary on $\mathbb{PR}^d$, it follows by Proposition~\ref{StandardLLNUniqueStat} that there is $\lambda_{\mathrm{mat}} \in \R$ such that almost surely $$\frac{1}{\ell} \log |Y_{\ell}| \overset{\ell\to \infty}{\longrightarrow} \lambda_{\mathrm{mat}}.$$ 
    
    Moreover, as the matrices in $\mu$ have almost surely positive entries, they map $\mathbb{R}^d_{> 0}$ to itself as well as $\mathbb{R}^d_{< 0}$ to itself. So if $x_0 \in \mathbb{R}^d_{> 0}$, then $X_{\ell} = Y_{\ell} \in \mathbb{R}^d_{> 0}$ for all $\ell \geq 1$  and hence almost surely 
    \begin{equation}\label{Limit1}
        \frac{1}{\ell}\log |X_\ell| \overset{\ell\to \infty}{\longrightarrow} \lambda_{\mathrm{mat}}.
    \end{equation}

    On the other hand, if $x_{0} \in \R^d_{< 0}$, then with probability one $\mu$ maps $\R^d_{<0}$ to $\R^d_{<0}$ and so it follows that $X_{\ell} \in R^d_{< 0}$ for all $\ell \geq 1$ and $$X_{\ell} = \alpha^{\ell}(W_{\ell}W_{\ell-1}\cdots W_1)x_0 = \alpha^{\ell}Y_{\ell}.$$ Thus it follows that \begin{equation}\label{Limit2}
        \frac{1}{\ell}\log |X_\ell| \overset{\ell\to \infty}{\longrightarrow} \lambda_{\mathrm{mat}} + \log \alpha.
    \end{equation}

    As $\alpha \in (0,1)$ the limits \eqref{Limit1} and \eqref{Limit2} are distinct. This contradicts the conclusion of Theorem~\ref{ExplicitLeakyReLULLN}, which requires the limit $\lambda_{\mu, \phi}$ to be independent of the starting point $x_0 \in \R^d_{\neq 0}$.

    Moreover, the Law of Large Numbers fails more strongly for points not in $\R^d_{> 0}$ or $\R^d_{< 0}$. Indeed, if for example $x_0 = (-1,1,1,\ldots) \in \R^d$, then with positive probability $X_{1}$ is in $\R^{d}_{>0}$ and also with positive probability it is in $\R^{d}_{<0}$. Therefore, it follows that the events  $$A_{1} = \left\{ \omega \in \Omega \,:\, \lim_{\ell \to \infty} \frac{1}{\ell}\log |X_{\ell}(\omega)| = \lambda_{\mathrm{mat}}  \right \}$$
    
    $$A_{2} = \left\{ \omega \in \Omega \,:\, \lim_{\ell \to \infty} \frac{1}{\ell}\log |X_{\ell}(\omega)| =   \lambda_{\mathrm{mat}}  + \log \alpha \right \}$$ have positive probability, i.e. $\mathbb{P}_{\mu}(A_{i}) > 0$ for $i = 1,2$. Thus, for such an $x_0$ the sequence $\frac{1}{\ell}\log |X_{\ell}(\omega)|$ does not converge to the same limit almost surely.
\end{proof}

\section{Lyapunov Exponent Calculations: Proof of Theorem~\ref{LyapunovGaussianFormula}, Theorem~\ref{LyapunovOrthogonalFormula} and Theorem~\ref{AsymptoticExpansion}}\label{section:AppendixFormulas}

In Section~\ref{section:LyapunovFormulas}, Theorem~\ref{LyapunovGaussianFormula} and Theorem~\ref{LyapunovOrthogonalFormula} are proved, while in Section~\ref{section:AsymptoticExpansion} we prove Theorem~\ref{AsymptoticExpansion}. We provide extensive lookup tables for the studied parameters in Section~\ref{section:lookuptables}.

\subsection{Integral Formulas for Lyapunov Exponents: Proof of Theorem~\ref{LyapunovGaussianFormula} and Theorem~\ref{LyapunovOrthogonalFormula}}\label{section:LyapunovFormulas}

In this section, we prove Theorem~\ref{LyapunovGaussianFormula} and Theorem~\ref{LyapunovOrthogonalFormula}, which we prove for generalized Leaky ReLU activation functions, as stated next.  

\begin{theorem}\label{GeneralLyapunovGaussianFormula}
    Let $\mu$ be the probability measure on $M_d(\R)$  with all coefficients being independent and distributed as $\mathcal{N}(0,\sigma^2)$ for some $\sigma > 0$. Let $\phi = \max(\alpha_1 x,\alpha_2 x)$ be a generalized Leaky ReLU activation function with $\alpha_1,\alpha_2 \in \R_{\neq 0}$. Then the Lyapunov exponent $\lambda_{\mu,\phi}$ from Theorem~\ref{UniqueStatLeakyReLULLN} satisfies $$\lambda_{\mu,\phi} = \log(\sigma) + I(d,\alpha_1,\alpha_2),$$
    where $I(d,\alpha_1,\alpha_2)$ is the integral given by
    \begin{equation}\label{GeneralInteralalphad}
        I(d,\alpha_1,\alpha_2) = \int_0^\infty \frac{e^{-t} - \frac{1}{2^d} \left(\frac{1}{\sqrt{1+2\alpha_1^2t}} + \frac{1}{\sqrt{1+2\alpha_2^2 t}}  \right)^d }{2t}  dt.
    \end{equation}
\end{theorem}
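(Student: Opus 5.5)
The plan is to use the integral representation \eqref{UniqueStatLyapFormula} from Theorem~\ref{UniqueStatLeakyReLULLN}, namely
\[
\lambda_{\mu,\phi} = \int_{\mathrm{GL}_d(\R)\times X} \log\frac{|\phi(Wx)|}{|x|}\, d\mu(W)\, d\nu(\overline{x}),
\]
where $\nu$ is the unique spherical stationary measure from Proposition~\ref{UniquenessofStationaryMeasure}. The Gaussian measure $\mu$ satisfies Assumption~\ref{MeasureAssumptions}, so the theorem applies. The key observation is that the stationary measure never needs to be computed. Since $\mu$ is invariant under $W\mapsto WQ$ for every $Q\in \mathrm{O}(d)$, the inner integral $\overline{c}_{\mathrm{hom}}(\overline{x}) = \int \log|\phi(Wx)|\,d\mu(W)$ for $x\in\mathbb{S}^{d-1}$ is independent of $x$: writing $x = Qe_1$ and substituting $W\mapsto WQ^{-1}$ gives $\overline{c}_{\mathrm{hom}}(\overline{x}) = \overline{c}_{\mathrm{hom}}(\overline{e_1})$. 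Integrating this constant against the probability measure $\nu$ yields
\[
\lambda_{\mu,\phi} = \E\bigl[\log|\phi(We_1)|\bigr].
\]
Here $We_1 = \sigma g$ with $g=(g_1,\dots,g_d)$ i.i.d.\ standard normal. Positive homogeneity of $\phi$ then gives $\lambda_{\mu,\phi} = \log\sigma + \E[\log|\phi(g)|]$. It therefore remains to show that $\E[\log|\phi(g)|] = I(d,\alpha_1,\alpha_2)$.

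For this computation, write $\log|\phi(g)| = \tfrac12 \log S$ with $S = \sum_{i=1}^d \phi(g_i)^2$, and use the Frullani-type identity
\[
\log s = \int_0^\infty \frac{e^{-t} - e^{-st}}{t}\,dt, \qquad s>0.
\]
Taking expectations and applying Tonelli/Fubini (justified below), one gets
\[
\E[\log|\phi(g)|] = \int_0^\infty \frac{e^{-t} - \E[e^{-tS}]}{2t}\,dt .
\]
By independence, $\E[e^{-tS}] = \bigl(\E[e^{-t\phi(g_1)^2}]\bigr)^d$. Since $g_1$ is symmetric, $\phi(g_1)^2$ equals $\alpha_1^2 g_1^2$ with probability $1/2$ and $\alpha_2^2 g_1^2$ with probability $1/2$. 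Using $\E[e^{-a g_1^2}] = (1+2a)^{-1/2}$, this gives
\[
\E[e^{-t\phi(g_1)^2}] = \tfrac12\Bigl((1+2\alpha_1^2t)^{-1/2} + (1+2\alpha_2^2t)^{-1/2}\Bigr),
\]
which is exactly the integrand of \eqref{GeneralInteralalphad}.

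The main technical point is the exchange of expectation and the $t$-integral, because the Frullani integrand changes sign. I would split it as
\[
\frac{e^{-t}-e^{-St}}{t} = \frac{(e^{-t}-e^{-St})\mathbf{1}_{S\ge 1}}{t} + \frac{(e^{-t}-e^{-St})\mathbf{1}_{S<1}}{t}.
\]
Each piece has a constant sign, so Tonelli applies to each separately. Their expectations are $\E[\log S\,\mathbf{1}_{S\ge1}]$ and $\E[\log S\,\mathbf{1}_{S<1}]$ respectively. Both are finite: the first because $\log S \le S$ and $S$ has finite mean, the second because $S \ge \min(\alpha_1^2,\alpha_2^2)\,|g|^2$ and $\E|\log|g|^2| < \infty$. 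Finiteness of $\E|\log S|$ also follows directly from Lemma~\ref{lemma:finmom}. Adding the two pieces recovers the formula.

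A secondary check is the rotation-invariance step. It uses that $\phi$ acts after $W$, so only the right-invariance $W\mapsto WQ$ of the Gaussian measure is needed, and this holds because the rows of $W$ are i.i.d.\ $\mathcal{N}(0,\sigma^2 I_d)$.
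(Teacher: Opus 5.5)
Your proposal is correct and follows the paper's proof essentially step for step. Like you, the paper uses the stationary-measure formula \eqref{UniqueStatLyapFormula} together with right $\mathrm{O}(d)$-invariance of the Gaussian measure (Proposition~\ref{SphericalStatMeasure}, Lemma~\ref{SphericalStatLemma}) to reduce to $\E[\log|\phi(We_1)|]$. It then evaluates this via homogeneity, Frullani's identity, Fubini, independence and the $\chi^2_1$ moment generating function (Lemma~\ref{lem:lyapunov-exponent-analytic}, Lemma~\ref{lem:mgf}). Your splitting into $\{S\ge 1\}$ and $\{S<1\}$ is a useful explicit justification of the Fubini interchange, which the paper invokes without further comment.
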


\begin{theorem}\label{GeneralLyapunovOrthogonalFormula}
    For $\eta > 0$ let $m_{\eta\cdot \mathrm{O}(d)}$ be the volume measure on $\eta\cdot \mathrm{O}(d)$ as defined in \eqref{HaaretaOdef}. Let $\phi = \max(\alpha_1 x,\alpha_2 x)$ be a generalized Leaky ReLU activation function with $\alpha_1,\alpha_2 \in \R_{\neq 0}$. Then the Lyapunov exponent $\lambda_{m_{\eta \cdot \mathrm{O}(d)},\phi}$ from Theorem~\ref{UniqueStatLeakyReLULLN} satisfies for $I(d,\alpha_1,\alpha_2)$ from \eqref{GeneralInteralalphad} that
    \begin{align}
        \lambda_{m_{\eta\cdot\mathrm{O}(d)},\phi} &= \log(\eta) + I(d,\alpha_1,\alpha_2)  - I(d,1,1).
    \end{align}
\end{theorem}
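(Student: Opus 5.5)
The approach is to identify the unique spherical stationary measure explicitly and then evaluate the integral formula \eqref{UniqueStatLyapFormula} from Theorem~\ref{UniqueStatLeakyReLULLN}. The measure $m_{\eta\cdot \mathrm{O}(d)}$ satisfies Assumption~\ref{SecondMeasureAssumptions} (constant density $1$), so by Proposition~\ref{UniquenessofStationaryMeasure} there is a unique stationary $\nu$ on $X$. In the proof of that proposition it was shown that for every $\overline{x}$ the law of $\phi_W(\overline{x})$ with $W\sim m_{\eta \cdot \mathrm{O}(d)}$ is one fixed measure $\nu_0$, independent of $\overline{x}$. Hence $\mu*\nu' = \nu_0$ for every probability measure $\nu'$ on $X$. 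In particular $\nu_0$ is stationary, so $\nu=\nu_0$.

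Concretely, $\nu$ is the law of $\overline{\phi(u)}$ with $u$ uniform on $\mathbb{S}^{d-1}$, since $Qx$ is uniform on the sphere for Haar $Q$. Plugging this into \eqref{UniqueStatLyapFormula}, for a stationary point $\overline{x}$ represented by $x=\phi(u)/|\phi(u)|$ we get
\[
c_{\mathrm{hom}}(\phi_W,\overline{x}) = \log\eta + \log|\phi(Qx)|,
\]
and $Qx$ is again uniform on $\mathbb{S}^{d-1}$, independent of $x$. Therefore
\[
\lambda = \log\eta + \mathbb{E}\bigl[\log|\phi(u)|\bigr], \qquad u \text{ uniform on } \mathbb{S}^{d-1}.
\]

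To evaluate $\mathbb{E}\log|\phi(u)|$, write a standard Gaussian $g\in\R^d$ as $g=|g|u$ with $|g|$ and $u$ independent. Positive homogeneity gives $\log|\phi(g)| = \log|g| + \log|\phi(u)|$, so
\[
\mathbb{E}\log|\phi(u)| = \mathbb{E}\log|\phi(g)| - \mathbb{E}\log|g|.
\]
We then compute $\mathbb{E}\log|\phi(g)|$. The same computation underlies Theorem~\ref{GeneralLyapunovGaussianFormula}: for Gaussian $W$, $Wx$ is $\mathcal{N}(0,\sigma^2 I)$ for any unit vector $x$, so $\lambda_{\mu_\sigma,\phi}=\log\sigma+\mathbb{E}\log|\phi(g)|$. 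We therefore aim for the identity $\mathbb{E}\log|\phi(g)| = I(d,\alpha_1,\alpha_2)$.

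For this we use Frullani's representation
\[
\tfrac12\log s = \int_0^\infty \frac{e^{-t}-e^{-st}}{2t}\,dt
\]
with $s=|\phi(g)|^2=\sum_i \phi(g_i)^2$, and then exchange expectation and integral by Fubini/Tonelli. This needs a short justification: split into $t\le1$ and $t>1$, using $\mathbb{E}|\log s|<\infty$ (valid since $s\ge \min\alpha_i^2|g|^2$ and $\log|g|$ is integrable). Independence of coordinates gives $\mathbb{E}e^{-ts}=\bigl(\mathbb{E}e^{-t\phi(g_1)^2}\bigr)^d$, and splitting on the sign of $g_1$,
\[
\mathbb{E}e^{-t\phi(g_1)^2} = \tfrac12\bigl((1+2\alpha_1^2t)^{-1/2}+(1+2\alpha_2^2t)^{-1/2}\bigr).
\]
This yields exactly $I(d,\alpha_1,\alpha_2)$.

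Applying the same identity with $\alpha_1=\alpha_2=1$, where $\phi$ is the identity, gives $\mathbb{E}\log|g| = I(d,1,1)$. Combining the pieces yields $\lambda = \log\eta + I(d,\alpha_1,\alpha_2) - I(d,1,1)$.

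The main obstacle is the rigorous identification of $\nu$. This hinges on the independence-of-starting-point observation already established in Proposition~\ref{UniquenessofStationaryMeasure}, so it should reduce cleanly, together with care in the Fubini exchange near $t=0$ and $t=\infty$.
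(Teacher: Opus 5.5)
Your proof is correct and follows essentially the paper's route. The paper also reduces to $\log\eta + \mathbb{E}\log|\phi(u)|$ with $u$ uniform on $\mathbb{S}^{d-1}$ (via Proposition~\ref{SphericalStatMeasure}), writes $u = We_1/|We_1|$ for a standard Gaussian $W$ to split off $\mathbb{E}\log|We_1|$, and evaluates both Gaussian terms by the Frullani/moment-generating-function computation of Lemma~\ref{lem:lyapunov-exponent-analytic}. The only difference is that you identify $\nu$ explicitly; this is harmless but unnecessary, since, as you note yourself, the $\mu$-average of $c_{\mathrm{hom}}(\phi_W,\overline{x})$ is already constant in $\overline{x}$, so integrating it against any probability measure on $X$ gives the same value.
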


Theorem~\ref{LyapunovGaussianFormula} and Theorem~\ref{LyapunovOrthogonalFormula} directly follow from Theorem~\ref{GeneralLyapunovGaussianFormula} and Theorem~\ref{GeneralLyapunovOrthogonalFormula} by specializing to $\alpha_1 = 1$ and $\alpha_2 = \alpha$.

As in the previous section, denote by $$X = (\R^d\backslash\{0\})/\R_{+}$$ and recall that $X$ is diffeomorphic to the sphere $\mathbb{S}^{d-1} = \{x \in \R^d \,:\, |x| = 1 \}$. There is the natural volume probability measure on the sphere, and we denote by $m_{\mathbb{S}^d}$ the resulting probability measure on $X$.

The main reason that the Lyapunov exponent can be computed in these cases is Lemma~\ref{SphericalStatLemma}, showing that the distributions in question are invariant under orthogonal transformations. This then leads to the following conclusion, which is a central ingredient in our proof of Theorem~\ref{GeneralLyapunovGaussianFormula} and Theorem~\ref{GeneralLyapunovOrthogonalFormula}.

\begin{proposition}\label{SphericalStatMeasure}
    Let $\phi = \max(\alpha_1 x, \alpha_2 x)$ for $\alpha_1, \alpha_2 \in \R_{\neq 0}$ and let $\mu$ be a measure on $M_d(\R)$ of one of the following two types:
    \begin{enumerate}
        \item[(1)] The coordinate distributions of $\mu$ are independent and all distributed as $\mathcal{N}(0,\sigma^2)$ for some $\sigma > 0$.
        \item[(2)] The Haar probability measure $ m_{\eta \cdot \mathrm{O}(d)}$ on $\eta \cdot \mathrm{O}(d)$ for some $\eta > 0$.
    \end{enumerate}
    Then for any $x\in \R^d$ with $|x| = 1$ and with $W\sim \mu$ it holds that $$\lambda_{\mu,\phi} = \mathbb{E}[\log |\phi(Wx)|].$$
\end{proposition}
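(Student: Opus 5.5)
The plan is to show that in both cases the uniform measure $m_{\mathbb{S}^d}$ on $X$ is the unique spherical $\mu$-stationary measure, and then to evaluate the integral formula \eqref{UniqueStatLyapFormula} from Theorem~\ref{UniqueStatLeakyReLULLN} using rotation invariance. The key input is the invariance of $\mu$ under right multiplication by orthogonal matrices: for $Q\in\mathrm{O}(d)$, if $W\sim\mu$ then $WQ\sim\mu$. In case (1) this holds because the rows of $W$ are i.i.d.\ $\mathcal{N}(0,\sigma^2 \mathrm{Id}_d)$ vectors, and the standard Gaussian on $\R^d$ is $\mathrm{O}(d)$-invariant, so each row $r$ is mapped to $Q^Tr$, which has the same law. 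In case (2) it follows from the right invariance of Haar measure on the compact group $\mathrm{O}(d)$, which is also left invariant since compact groups are unimodular, pushed forward under $O\mapsto \eta O$.

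\textbf{Step 1 (law of $\phi(Wx)$ does not depend on the unit vector $x$).} Fix unit vectors $x,y$ and choose $Q\in \mathrm{O}(d)$ with $Qy=x$. Then $\phi(Wx)=\phi(WQy)$, and since $WQ\sim W$, the random vectors $\phi(Wx)$ and $\phi(Wy)$ have the same law. In particular, the push-forward of $\mu$ under $W\mapsto \phi_W(\overline{x})$ is a fixed probability measure $\rho$ on $X$, independent of $\overline{x}$.

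\textbf{Step 2 (identification of the stationary measure).} Step 1 gives $P(\overline{x},\cdot)=\rho$ for every $\overline{x}$. Hence for any probability measure $\nu$ on $X$ we have $\mu*\nu=\int P(\overline{x},\cdot)\,d\nu(\overline{x})=\rho$. Therefore $\rho$ is stationary, and any stationary $\nu$ satisfies $\nu=\mu*\nu=\rho$, so $\rho$ is the unique stationary measure. Uniqueness is also guaranteed by Proposition~\ref{UniquenessofStationaryMeasure}, since both measures satisfy Assumption~\ref{MeasureAssumptions} or Assumption~\ref{SecondMeasureAssumptions}. Note that $\rho$ need not be uniform on the sphere, because $\phi$ breaks the symmetry, but this does not matter for the formula.

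\textbf{Step 3 (evaluation).} By \eqref{UniqueStatLyapFormula},
\[
\lambda_{\mu,\phi}=\int_X\int c_{\mathrm{hom}}(\phi_W,\overline{y})\,d\mu(W)\,d\rho(\overline{y}) .
\]
Choosing the unit representative $y$ of $\overline{y}$, the inner integral equals $\mathbb{E}[\log|\phi(Wy)|]$. By Step 1 this is the same for every unit $y$, namely $\mathbb{E}[\log|\phi(Wx)|]$. Integrating a constant against the probability measure $\rho$ gives the claim. To justify the interchange and finiteness, use Lemma~\ref{lemma:finmom} together with Proposition~\ref{FinitenessofMoment} in case (1), and the bound $\|W\|=\eta$ in case (2).

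The only delicate point is Step 1, and specifically justifying the right-invariance of $\mu$ in each case. For the Gaussian case this should be argued row by row, using the independence of the rows. For Haar measure it requires invoking bi-invariance on a compact group. Everything else is formal.
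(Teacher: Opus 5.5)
Your argument is correct and is essentially the paper's proof. Right-$\mathrm{O}(d)$-invariance of $\mu$ (the content of Lemma~\ref{SphericalStatLemma}) makes $x\mapsto\mathbb{E}[\log|\phi(Wx)|]$ constant on $\mathbb{S}^{d-1}$, and integrating this constant against the unique stationary measure in \eqref{UniqueStatLyapFormula} gives the claim; your observation that $P(\overline{x},\cdot)=\rho$ for every $\overline{x}$ is a bonus, since it identifies the stationary measure explicitly and gives uniqueness without the Doeblin argument. Delete the opening sentence asserting that the uniform measure $m_{\mathbb{S}^d}$ is the stationary measure: it is false in general (e.g.\ for $d\ge 2$ and $\alpha_1\neq\alpha_2$), as you yourself note in Step~2, though nothing in Steps~1--3 uses it.
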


We first prove the following key preliminary lemma.

\begin{lemma}\label{SphericalStatLemma}
    Let $\mu$ be a measure as in Proposition~\ref{SphericalStatMeasure}. Then if $W \sim \mu$ and $Q \in \mathrm{O}(d)$ then it holds that $WQ$ and $W$ have the same distribution.
\end{lemma}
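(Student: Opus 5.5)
The plan is to treat the two types of measures in Proposition~\ref{SphericalStatMeasure} separately. In each case we show that the pushforward of $\mu$ under the map $R_Q : W \mapsto WQ$ equals $\mu$. Since $R_Q$ is a linear bijection of $M_d(\R)$ with inverse $R_{Q^T}$, equality of distributions reduces to checking $\mu(R_Q^{-1}(B)) = \mu(B)$ for Borel sets $B$. For this it suffices to compare densities, or to use an invariance property of the measure.

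\textbf{Case (1), Gaussian entries.} The density of $\mu$ with respect to Lebesgue measure $m_{d^2}$ on $M_d(\R)\cong\R^{d^2}$ is
\[
p(W) = (2\pi\sigma^2)^{-d^2/2}\exp\bigl(-\|W\|_F^2/(2\sigma^2)\bigr).
\]
The first observation is that $\|WQ\|_F^2 = \mathrm{tr}(WQQ^TW^T) = \|W\|_F^2$, so $p\circ R_Q = p$. The second observation is that $R_Q$ preserves Lebesgue measure on $\R^{d^2}$. Row-wise, $R_Q$ acts as $w_i \mapsto Q^T w_i$ on each row $w_i$, so it is block diagonal with $d$ copies of $Q^T$, and $|\det R_Q| = |\det Q|^d = 1$. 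The change of variables formula then gives, for any bounded measurable $f$,
\[
\E[f(WQ)] = \int f(WQ)p(W)\,dm_{d^2}(W) = \int f(V)p(VQ^T)\,dm_{d^2}(V) = \E[f(W)].
\]
An alternative argument avoids Lebesgue measure entirely. The rows $w_i$ of $W$ are i.i.d. $\mathcal N(0,\sigma^2 I_d)$. The rows of $WQ$ are $Q^Tw_i$, which are again i.i.d. $\mathcal N(0,\sigma^2Q^TQ) = \mathcal N(0,\sigma^2 I_d)$, because Gaussian vectors are determined by mean and covariance. I will likely present this second version, since it is shorter.

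\textbf{Case (2), Haar measure on $\eta\cdot\mathrm{O}(d)$.} Here $W = \eta O$ with $O\sim m_{\mathrm{O}(d)}$, so $WQ = \eta(OQ)$, and it suffices to show that $OQ$ has law $m_{\mathrm{O}(d)}$. Definition~\ref{HaarDefinition} gives only left invariance, while we need right invariance. The approach is to note that $m'(C) := m_{\mathrm{O}(d)}(CQ^{-1})$ is again a left Haar measure, because $gCQ^{-1} = g(CQ^{-1})$ and open sets are mapped to open sets. By the uniqueness in Theorem~\ref{HaarTheorem}, $m' = \lambda m_{\mathrm{O}(d)}$ for some $\lambda>0$. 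Evaluating at $C=\mathrm{O}(d)$ gives $\lambda = 1$, since both measures are probability measures. Then
\[
\Prob(OQ\in C) = m_{\mathrm{O}(d)}(CQ^{-1}) = m_{\mathrm{O}(d)}(C),
\]
which is the claim.

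The only mildly delicate point is Case (2). The paper's Haar measure is defined as left invariant, so right invariance must be derived. The uniqueness-plus-normalisation argument above handles this, using the fact that $\mathrm{O}(d)$ is compact, so the modular function is trivial. Case (1) is routine.
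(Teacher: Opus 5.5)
Your proof is correct and follows the paper's approach. In the Gaussian case the paper uses exactly your density argument, namely invariance of $\|W\|_F$ under $W\mapsto WQ$ together with a unit Jacobian, and your row-wise shortcut via $Q^T w_i \sim \mathcal{N}(0,\sigma^2 I_d)$ is equally valid. In the orthogonal case the paper simply cites right invariance of Haar measure on the compact group $\mathrm{O}(d)$ (Loomis), whereas you derive it from left invariance, the uniqueness in Theorem~\ref{HaarTheorem}, and the normalisation $m_{\mathrm{O}(d)}(\mathrm{O}(d))=1$; this is a correct, self-contained justification of the same step.
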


\begin{proof}
    For (2) , the conclusion holds since the Haar probability measure on $m_{\eta \cdot \mathrm{O}(d)}$ is also right invariant as shown in \citep[\S 30]{LoomisBook}.
    So assume that (1) holds. Then the joint probability density function of $W$ is the product of the independent marginal densities of its entries and so, for $||W||_F$ the Frobenius norm, it holds that
\begin{align*}
p(W) &= \prod_{i,j=1}^d \frac{1}{\sqrt{2\pi\sigma^2}} \exp\left( -\frac{W_{ij}^2}{2\sigma^2} \right) \\ &= \frac{1}{(2\pi\sigma^2)^{d^2/2}}\exp\left( -\frac{1}{2\sigma^2} \sum_{i,j} W_{ij}^2 \right) 
= \frac{1}{(2\pi\sigma^2)^{d^2/2}}\exp\left( -\frac{1}{2\sigma^2} \|W\|_F^2 \right).
\end{align*}
Consider the linear transformation $W \mapsto WQ$. The Frobenius norm is invariant under right-multiplication by an orthogonal matrix, as shown by the cyclic property of the trace:
\[
\|WQ\|_F^2 = \mathrm{tr}((WQ)^T WQ) =  \mathrm{tr}(Q^T W^T W Q) =  \mathrm{tr}(W^T W Q Q^T) = \|W\|_F^2.
\]
Furthermore, the Jacobian determinant of this transformation is 1, since $|\det(I \otimes Q)| = |\det(Q)|^d = 1$.
Combining the invariance of the norm with this Jacobian property, we conclude that the density is preserved under orthogonal transformations.
\end{proof}

\begin{proof}(of Proposition~\ref{SphericalStatMeasure})
    By Proposition~\ref{UniquenessofStationaryMeasure}, there exists a unique spherical $\mu$-stationary probability measure $\nu$ on $X$. So we can apply Theorem~\ref{UniqueStatLeakyReLULLN} and therefore by equation \eqref{UniqueStatLyapFormula} it holds that
\[
\lambda_{\mu,\phi}
=
\int_X \left( \int \log |\phi(Wx)|\, d\mu(W) \right) d\nu(x).
\]
Define
\[
F(x):=\int \log |\phi(Wx)|\, d\mu(W), \qquad \text{ for } \qquad x\in \mathbb{S}^{d-1}.
\]
We claim that it follows by Lemma~\ref{SphericalStatLemma} that $F$ is constant on $\mathbb{S}^{d-1}$. Indeed, let $x,y\in \mathbb{S}^{d-1}$ and choose $Q\in O(d)$ such that $Qx=y$. Then, by Lemma~\ref{SphericalStatLemma}, $WQ \stackrel{d}{=} W$ and hence
\begin{align*}
F(y)
=
\int \log |\phi(Wy)|\, d\mu(W)
&=
\int \log |\phi(WQx)|\, d\mu(W)
\\ &=
\int \log |\phi((WQ)x)|\, d\mu(W)
=
\int \log |\phi(Wx)|\, d\mu(W)
=
F(x).
\end{align*}
Thus, $F$ is constant on $\mathbb{S}^{d-1}$ and therefore,
\[
\lambda_{\mu,\phi}
=
\int_X F(x)\, d\nu(x)
=
F(x)
=
\int \log |\phi(Wx)|\, d\mu(W)
= \mathbb{E}[\log |\phi(Wx)|]  \] 
for every $x\in \mathbb{S}^{d-1}$, as claimed.
\end{proof}

We first deal with the Gaussian case in the following lemma.

\begin{lemma}\label{lem:lyapunov-exponent-analytic} Let $\mu$ be the measure on $M_d(\R)$ such that the coordinate distributions of $\mu$ are independent and all distributed as $\mathcal{N}(0,\sigma^2)$ for some $\sigma > 0$. Let $\phi = \max(\alpha_1 x, \alpha_2 x)$ for $\alpha_1, \alpha_2 \in \R_{\neq 0}$. Then for the standard basis vector $e_1 = (1, 0,0,\ldots ,0) \in \R^d$ and $W\sim \mu$ it holds that
    $$ \E[\log|{\phi(We_1)}|] =\log(\sigma) + \frac{1}{2}  \int_0^\infty \frac{e^{-t} - \frac{1}{2^d} \left(\frac{1}{\sqrt{1+2\alpha_1^2 t}} + \frac{1}{\sqrt{1+2\alpha_2^2 t}}  \right)^d }{t}  dt.$$
\end{lemma}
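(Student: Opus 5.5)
Since $We_1$ is the first column of $W$, its entries $g_1,\dots,g_d$ are i.i.d.\ $\mathcal{N}(0,\sigma^2)$. Writing $g_i=\sigma Z_i$ with $Z_i\sim\mathcal{N}(0,1)$ and using positive homogeneity of $\phi$, we get $|\phi(We_1)|=\sigma|\phi(Z)|$. So it suffices to show
\[
\E\big[\log|\phi(Z)|\big] = \tfrac12\E\big[\log S\big],\qquad S:=\sum_{i=1}^d \phi(Z_i)^2,
\]
equals the stated integral. Each summand $\phi(Z_i)^2$ equals $\alpha_1^2Z_i^2$ on $\{Z_i>0\}$ when $\alpha_1>0$, and in general is $\alpha_1^2Z_i^2$ or $\alpha_2^2Z_i^2$, each with probability $1/2$. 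Indeed, $\max(\alpha_1 x,\alpha_2 x)$ restricted to $x>0$ and to $x<0$ is a linear function with slope $\alpha_1$ or $\alpha_2$; since $\phi$ is nonzero off $0$, the two half-lines use the two slopes, or, if $\alpha_1=\alpha_2$, the same slope. In every case, by symmetry of $Z_i$, the Laplace transform is
\[
\E\big[e^{-t\phi(Z_i)^2}\big]=\tfrac12\Big((1+2\alpha_1^2t)^{-1/2}+(1+2\alpha_2^2t)^{-1/2}\Big),
\]
using $\E[e^{-s Z^2}\mathbf 1_{Z>0}]=\tfrac12(1+2s)^{-1/2}$. By independence,
\[
\E[e^{-tS}]=2^{-d}\Big((1+2\alpha_1^2t)^{-1/2}+(1+2\alpha_2^2t)^{-1/2}\Big)^d .
\]

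Next apply the Frullani-type identity: for $s>0$,
\[
\log s=\int_0^\infty \frac{e^{-t}-e^{-ts}}{t}\,dt .
\]
Set $s=S$ and take expectations. If Fubini is justified, this gives $\E[\log S]=\int_0^\infty \frac{e^{-t}-\E[e^{-tS}]}{t}\,dt$, and dividing by $2$ and adding $\log\sigma$ yields the claim.

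The main obstacle is justifying the interchange, since the integrand changes sign depending on whether $S<1$ or $S>1$. My approach is to split the expectation over the events $\{S\ge 1\}$ and $\{S<1\}$. On each event the integrand $\frac{e^{-t}-e^{-tS}}{t}$ has constant sign, so Tonelli applies separately on each piece. It then suffices to check that $\E[|\log S|]<\infty$, so that both pieces are finite and can be recombined.

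Finiteness of $\E[|\log S|]$ comes from two bounds. First, $m^2|Z|^2\le S\le M^2|Z|^2$ with $m=\min|\alpha_i|$ and $M=\max|\alpha_i|$. Second, $|Z|^2$ is $\chi^2_d$, which has a finite logarithmic moment: its density near $0$ is $O(u^{d/2-1})$, which is integrable against $|\log u|$, and it has exponential tails. With both pieces finite, the Tonelli computations add up to the Fubini identity, and the integral converges absolutely as an iterated integral.
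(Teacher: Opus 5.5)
Your proposal is correct and follows essentially the same route as the paper: you pull out $\sigma$, apply Frullani's identity to $S=\sum_i \phi(Z_i)^2$, interchange expectation and integral, and factor $\E[e^{-tS}]$ via independence and the half-line Laplace transform of $Z^2$, and your split over $\{S\ge 1\}$ and $\{S<1\}$ together with $\E|\log S|<\infty$ supplies the Fubini justification that the paper only cites by name. One small slip: on $\{Z_i>0\}$ one has $\phi(Z_i)^2=\max(\alpha_1,\alpha_2)^2 Z_i^2$, not $\alpha_1^2 Z_i^2$ whenever $\alpha_1>0$; your next sentence and the symmetric Laplace-transform formula already handle this correctly, so the argument is unaffected.
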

Before we prove Lemma~\ref{lem:lyapunov-exponent-analytic}, we derive the following moment generating function.
\begin{lemma} \label{lem:mgf}
For $Z\sim \mathcal{N}(0,1)\in \R$, the moment generating function of $\phi^2(Z)$ is for $t\in \R$ with $t< \min\{\frac{1}{2\alpha_1^2}, \frac{1}{2\alpha_2^2 }\}$ equal to
$$ M_{\phi^2(Z)}(t):= \E[\exp(t\phi^2(Z))] = \frac{1}{2} \left( \frac{1}{\sqrt{1-2\alpha_1^2 t}} + \frac{1}{\sqrt{1-2\alpha_2^2 t}} \right).$$
\end{lemma}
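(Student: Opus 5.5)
We split the expectation according to the sign of $Z$. Since $\phi(z)=\max(\alpha_1 z,\alpha_2 z)$ is piecewise linear, $\phi(z)^2$ equals $\beta_+^2 z^2$ on $\{z>0\}$ and $\beta_-^2 z^2$ on $\{z<0\}$, where $\{\beta_+,\beta_-\}=\{\alpha_1,\alpha_2\}$. Indeed, for $z>0$ the maximum is $\max(\alpha_1,\alpha_2)z$, and for $z<0$ it is $\min(\alpha_1,\alpha_2)z$. Thus one slope is used on each half-line, and the two slopes are distinct unless $\alpha_1=\alpha_2$, in which case the claim is trivial. Because only $\phi^2$ enters, which of the two slopes is attached to which half-line is irrelevant. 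We therefore write
\[
\E[e^{t\phi^2(Z)}]=\E[e^{t\beta_+^2Z^2}\mathbf 1_{\{Z>0\}}]+\E[e^{t\beta_-^2Z^2}\mathbf 1_{\{Z<0\}}].
\]

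Next we use symmetry of the standard Gaussian. Since $Z$ and $-Z$ have the same law and $Z^2$ is even, each term equals $\tfrac12\E[e^{t\beta^2 Z^2}]$ for the corresponding $\beta$. The event $\{Z=0\}$ is null and can be ignored.

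Finally we compute the Gaussian integral
\[
\E[e^{sZ^2}]=\frac{1}{\sqrt{2\pi}}\int_{\R}e^{-(1-2s)z^2/2}\,dz=\frac{1}{\sqrt{1-2s}}, \qquad s<\tfrac12,
\]
obtained by rescaling $z\mapsto z/\sqrt{1-2s}$. Applying this with $s=\beta^2 t$ requires $t<1/(2\beta^2)$ for both slopes, which is exactly the stated range $t<\min\{1/(2\alpha_1^2),1/(2\alpha_2^2)\}$. For $t$ in this range every integrand is integrable, so all expectations are finite. Summing the two halves gives
\[
\tfrac12\bigl((1-2\alpha_1^2t)^{-1/2}+(1-2\alpha_2^2t)^{-1/2}\bigr).
\]

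The only point needing care is the first step: checking that $\phi^2$ really is $\alpha_i^2 z^2$ on complementary half-lines even when $\alpha_1$ and $\alpha_2$ have opposite signs. It holds regardless of signs, since for fixed $z\neq0$ the map $\alpha\mapsto\alpha z$ is monotone, so the maximum is attained at one endpoint on $z>0$ and at the other on $z<0$. Everything else is routine.
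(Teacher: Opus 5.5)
Your proof is correct and follows essentially the same route as the paper. The paper also splits on the sign of $Z$ (after assuming $\alpha_1\le\alpha_2$ without loss of generality, where you use monotonicity of $\alpha\mapsto\alpha z$), uses symmetry of $Z$ to write each half as $\tfrac12\E[e^{t\beta^2Z^2}]$, and then quotes the $\chi^2_1$ moment generating function $(1-2s)^{-1/2}$, which you instead compute directly by rescaling the Gaussian integral.
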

\begin{proof}
   Since both \(\phi(x)=\max(\alpha_1x,\alpha_2x)\) and the claimed formula are symmetric in \(\alpha_1,\alpha_2\), we may assume without loss of generality that \(\alpha_1\le \alpha_2\). We note 
   \begin{align*}
       \E[\mathrm{exp}(t \phi^2 \left(Z \right))] 
        &= \E[\exp(t\alpha_1^2 Z^2)\mathbf{1}_{Z<0}] + \E[\exp(t\alpha_2^2 Z^2) \mathbf{1}_{Z \geq 0}] \\
        &=  \E[\exp(t \alpha_1^2  Z^2)] \E[\mathbf{1}_{Z<0}]
        +\E[\exp(t \alpha_2^2 Z^2)] \E[\mathbf{1}_{Z\geq 0}] \\
        &= \frac{1}{2}(\E[\exp(t \alpha_1^2  Z^2)] + \E[\exp(t \alpha_2^2 Z^2)] ),
   \end{align*}
   where in the second line we have used that the distribution of $Z^2$ conditional on $Z<0$ is identical to the distribution of $Z^2$ conditional on $Z\geq 0$ and they are both distributed as $Z^2$ without conditioning. The distribution of $Z^2$ is called the $\chi_1^2$-distribution and it is well-known that the moment generating function of $\chi_1^2$ satisfies $$M_{\chi_1^2}(u) = \mathbb{E}[\exp(u\chi_1^2)] = \frac{1}{\sqrt{1 - 2u}} \quad\quad \text{ for } \quad\quad u \in \R \,\text{ with }\, u < \frac{1}{2}.$$ Thus it holds that 
   \begin{align*}
       \E[\mathrm{exp}(t \phi^2 \left(Z \right))]  &= \frac{1}{2}(M_{\chi_1^2}(t\alpha_1^2) + M_{\chi_1^2}(t\alpha_2^2) ) \\
       &= \frac{1}{2} \left( \frac{1}{\sqrt{1-2\alpha_1^2 t}} + \frac{1}{\sqrt{1-2\alpha_2^2 t}} \right)
   \end{align*} for $t< \min\{\frac{1}{2\alpha_1^2}, \frac{1}{2\alpha_2^2 }\}$.
\end{proof}

We also prove the following identity. 

\begin{lemma}(Frullani's identity)\label{Frullani}
    For $x > 0$, it holds that $$\log(x) = \int_0^\infty \frac{e^{-t} - e^{-xt}}{t} dt.$$
\end{lemma}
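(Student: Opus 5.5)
The plan is to write $\log(x)$ as an integral in $t$ via Fubini/Tonelli, after expressing $1/t$ as $\int_0^\infty e^{-st}\,ds$ or, more directly, to write $e^{-t}-e^{-xt}$ as the integral of a derivative in an auxiliary variable. Concretely, for $t>0$ we have
\[
\frac{e^{-t}-e^{-xt}}{t} = \int_1^x e^{-st}\,ds,
\]
since $\frac{d}{ds}\left(-\frac{e^{-st}}{t}\right) = e^{-st}$. (When $x<1$ the integral $\int_1^x$ is understood with orientation, i.e. as $-\int_x^1$.) This identity is elementary and is the first step.

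Next I would integrate over $t\in(0,\infty)$ and interchange the order of integration:
\[
\int_0^\infty \frac{e^{-t}-e^{-xt}}{t}\,dt = \int_0^\infty\int_1^x e^{-st}\,ds\,dt = \int_1^x \int_0^\infty e^{-st}\,dt\,ds = \int_1^x \frac{ds}{s} = \log x.
\]
The interchange is justified by Tonelli. The integrand $e^{-st}$ is nonnegative, and on the $s$-interval between $1$ and $x$ (so $s\geq \min(1,x)>0$) we have $\int_0^\infty e^{-st}\,dt = 1/s \leq 1/\min(1,x)$. Hence the double integral of $|e^{-st}|$ over the relevant rectangle is finite. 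For $x<1$ one first pulls out the sign and applies the same argument to $\int_x^1$. The case $x=1$ is trivial, since both sides vanish.

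The only point needing any care is the convergence and interchange. The integrand $\frac{e^{-t}-e^{-xt}}{t}$ is bounded near $t=0$: it tends to $x-1$ as $t\to 0$. It also decays exponentially as $t\to\infty$, so the integral exists as a Lebesgue integral. Tonelli applied to the nonnegative function $e^{-st}$ on the rectangle $[\min(1,x),\max(1,x)]\times(0,\infty)$ makes the computation rigorous. I do not expect a genuine obstacle here; this is the standard Frullani argument.
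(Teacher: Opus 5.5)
Your proof is correct and is essentially the paper's argument in integrated form. The paper differentiates $I(x)$ under the integral sign to get $I'(x)=\int_0^\infty e^{-xt}\,dt=1/x$ and then fixes the constant with $I(1)=0$, whereas you write the integrand as $\int_1^x e^{-st}\,ds$ and swap the two integrals; your Tonelli justification on the nonnegative integrand $e^{-st}$ is, if anything, slightly cleaner than the paper's dominated-convergence step for differentiating under the integral sign.
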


\begin{proof}
For convenience, write $I(x) = \int_0^\infty \frac{e^{-t} - e^{-xt}}{t} dt$. Differentiating with respect to $x$ (upon applying dominated convergence) we obtain
\begin{align*}
    I'(x) &= \int_0^\infty \frac{d}{d x} \left(\frac{e^{-t} - e^{-xt}}{t} \right) dt = \int_0^\infty e^{-xt} \, dt = \frac{1}{x}.
\end{align*}
Integrating with respect to $x$, we have $I(x) = \log(x) + C$. To determine the constant $C$, we observe that $I(1) = 0$, which implies $ 0 = \log(1) + C$ and so $C = 0$, concluding the proof.
\end{proof}

With these lemmas at hand, we are now in a position to derive the integral expression for the Lyapunov exponent.

\begin{proof}[Proof of Lemma \ref{lem:lyapunov-exponent-analytic}]
    We first note that the lemma reduces to proving 
    $$ \E\left[\log\left(\left(\sum_{1\leq i \leq d}\phi^2(Y_i)\right)^{1/2}\right)\right]$$
    for $Y_i \sim \mathcal{N}(0,\sigma^2)$ independent. We further observe that $\phi(Y_i) = \sigma \phi(Z_i)$ for the $Z_i \sim \mathcal{N}(0,1)$ independent standard normal random variables, so that we obtain 
    \begin{equation*}
        \E[\log(|\phi(W e_1)|)] = \E\left[\log\left(\sigma\left(\sum_{1\leq i \leq d}\phi^2(Z_i)\right)^{1/2}\right)\right]= \log(\sigma) + \frac{1}{2} \E\left[\log\left(\sum_{1\leq i \leq d} \phi^2(Z_i) \right)\right]
    \end{equation*}
    Since we have by Frullani's identity $\log(x) = \int_0^\infty \frac{\exp(-t) - \exp(-xt)}{t} dt$ for $x > 0$ as shown in Lemma~\ref{Frullani} and as $\sum_{1\leq i \leq d} \phi^2(Z_i) > 0$ almost surely as it is a sum of squares, we obtain 
    \begin{align}
        \E\left[\log\left(\sum_{1\leq i \leq d} \phi^2(Z_i) \right)\right]
        &=\E\left[\int_0^\infty \frac{\exp(-t) - \exp(-t \sum_{i \in [d]} \phi^2(Z_i))}{t}  dt \right] \tag{Frullani} \\
        &= \int_0^\infty \frac{\exp(-t) - \E\left[\exp(-t \sum_{i \in [d]} \phi^2(Z_i)) \right]}{t}  dt \tag{Fubini}\\
        &= \int_0^\infty \frac{\exp(-t) - \E\left[\prod_{i = 1}^d\exp(-t \phi^2(Z_i)) \right]}{t}  dt \\
        &= \int_0^\infty \frac{\exp(-t) - M_{\phi^2(Z_1)}(-t)^d }{t}  dt \\
        &= \int_0^\infty \frac{\exp(-t) -2^{-d} \left((1+2\alpha_1^2 t)^{-1/2}+ (1+2\alpha_2^2 t)^{-1/2} \right)^d }{t}  dt,
    \end{align}
    where we used independence of the $Z_i$ in the penultimate step and have applied Lemma~\ref{lem:mgf} in the last step noting that the requirement on $t$ is satisfied. Hence the claim follows.
\end{proof}

We finally can deduce Theorem~\ref{GeneralLyapunovGaussianFormula} and Theorem~\ref{GeneralLyapunovOrthogonalFormula}, where we deduce Theorem~\ref{GeneralLyapunovOrthogonalFormula} by a little trick from Theorem~\ref{GeneralLyapunovGaussianFormula}.

\begin{proof}(Theorem~\ref{GeneralLyapunovGaussianFormula})
    This follows directly from Proposition~\ref{SphericalStatMeasure} and Lemma~\ref{lem:lyapunov-exponent-analytic}
\end{proof}

\begin{proof}(Theorem~\ref{GeneralLyapunovOrthogonalFormula})
    Denote by $\mu_1$ the measure on $M_d(\R)$ such that the coordinate distributions of $\mu_1$ are independent and all distributed as standard Gaussians $\mathcal{N}(0,1)$. Then, as follows by Lemma~\ref{SphericalStatLemma}, the term $\frac{We_1}{|We_1|}$ is distributed as the volume probability measure on $\mathbb{S}^{d-1}$. Thus, it follows by  Proposition~\ref{SphericalStatMeasure} and Lemma~\ref{lem:lyapunov-exponent-analytic} that for $W\sim \mu_1$,
    \begin{align*}
        \lambda_{m_{\eta\cdot\mathrm{O}(d)},\phi} &= \E[\log |\phi(\tfrac{\eta We_1}{|We_1|})|] \\
        &= \E[\log |\tfrac{\eta}{|We_1|}\phi(We_1)|] \\
        &= \log(\eta) + \E[\log |\phi(We_1)|] - \E[\log |We_1|] \\
        &= \log(\eta) + I(d,\alpha_1,\alpha_2) - I(d,1,1),
    \end{align*} having used Proposition~\ref{SphericalStatMeasure} in the first line, that $\phi$ is positively homogeneous in the second line and Lemma~\ref{lem:lyapunov-exponent-analytic} in the fourth once for $\phi(x) = \max(\alpha_1x,\alpha_2x)$ and once for $\phi(x) = x$.
\end{proof}

\subsection{Asymptotic Expansion for $I(d,\alpha)$: Proof of Theorem~\ref{AsymptoticExpansion}}\label{section:AsymptoticExpansion}

As in the main part of the paper, we write for $d\in \mathbb{N}_{\neq 0}$ and $\alpha \in \R_{\neq 0}$, $$I(d,\alpha) = \int_0^\infty \frac{e^{-t} - \frac{1}{2^d} \left(\frac{1}{\sqrt{1+2 t}} + \frac{1}{\sqrt{1+2\alpha^2 t}}  \right)^d }{2t}  dt.$$ We prove the following asymptotic expansion for $I(d,\alpha)$.

\begin{proposition}\label{MainIdalphbound}
For $d\in \mathbb{N}_{\neq 0}$ and $\alpha \in \R_{\neq 0}$ it holds as $d \to \infty$ that
\[
I(d,\alpha) =  \frac{1}{2}\log\left(d\frac{1+\alpha^2}{2}\right) - \frac{C_\alpha}{4d} + O_{\alpha}(d^{-2})
\]
where $C_\alpha = \frac{5 - 2\alpha^2 + 5\alpha^4}{(1+\alpha^2)^2}$ and the implied constant depends on $\alpha$.
\end{proposition}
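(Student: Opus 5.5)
We go back to the probabilistic meaning of the integral. By the proof of Lemma~\ref{lem:lyapunov-exponent-analytic} (Frullani plus Fubini, with $\alpha_1=1,\alpha_2=\alpha$), we have
\[
I(d,\alpha)=\tfrac12\,\E\bigl[\log S_d\bigr],\qquad S_d=\sum_{i=1}^d \phi^2(Z_i),
\]
where the $Z_i$ are i.i.d.\ standard normal. So it suffices to expand $\E[\log S_d]$ to order $d^{-2}$.

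Write $Y_i=\phi^2(Z_i)$. By Lemma~\ref{lem:mgf} (or directly), its cumulants are
\[
m=\E Y=\tfrac{1+\alpha^2}{2},\qquad \E Y^2=\tfrac{3(1+\alpha^4)}{2},\qquad v=\operatorname{Var}Y=\tfrac{3(1+\alpha^4)}{2}-\tfrac{(1+\alpha^2)^2}{4}.
\]
Setting $\bar Y=S_d/d$, we get $\E\log S_d=\log(dm)+\E\log(1+U)$ with $U=(\bar Y-m)/m$. Taylor expansion gives
\[
\log(1+U)=U-\tfrac{U^2}{2}+\tfrac{U^3}{3}+R(U).
\]
The moments are $\E U=0$, $\E U^2=v/(m^2d)$, and $\E U^3=\kappa_3/(m^3d^2)=O(d^{-2})$. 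Hence
\[
\E\log S_d=\log(dm)-\frac{v}{2m^2d}+O(d^{-2}).
\]
For the constant: $v/m^2=\frac{6(1+\alpha^4)-(1+\alpha^2)^2}{(1+\alpha^2)^2}=\frac{5-2\alpha^2+5\alpha^4}{(1+\alpha^2)^2}=C_\alpha$. Halving gives exactly
\[
\tfrac12\log\bigl(d\tfrac{1+\alpha^2}{2}\bigr)-\tfrac{C_\alpha}{4d}+O_\alpha(d^{-2}),
\]
which is the claim.

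The main obstacle is controlling the remainder $\E R(U)=O(d^{-2})$, since $\log$ is singular at $U=-1$. The plan is to split on the event $\{|U|\le 1/2\}$ and its complement.

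On $\{|U|\le 1/2\}$ we have $|R(U)|\le C U^4$. Since all moments of $Y$ are finite (Gaussian tails), the standard bound $\E U^4=O(d^{-2})$ for centered sums gives a contribution of $O(d^{-2})$.

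On $\{|U|>1/2\}$ we proceed as follows.
\begin{itemize}
\item The Taylor polynomial terms $U,U^2,U^3$ are handled by Cauchy--Schwarz against the probability of the event.
\item For $\log(1+U)$ itself, the upper tail is harmless because $\log(1+U)\le U$.
\item The lower tail needs $\E[|\log \bar Y|\,;\,\bar Y<m/2]$ to be tiny. A Chernoff bound gives $\Prob(\bar Y<m/2)\le e^{-cd}$, since $Y\ge0$ has all exponential moments $\E e^{-tY}$. The quantity $\E[\log^2\bar Y]$ is bounded uniformly for $d\ge1$, because $S_d\ge \min(1,\alpha^2)Z_1^2$ gives $\log S_d\ge \log\min(1,\alpha^2)+\log Z_1^2$, whose second moment is finite. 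Cauchy--Schwarz then yields a contribution of $O(e^{-cd/2})$.
\end{itemize}
All constants depend only on $\alpha$, which gives the stated $O_\alpha(d^{-2})$.

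An alternative is to expand the integral directly. Write $\bigl(\tfrac12((1+2t)^{-1/2}+(1+2\alpha^2t)^{-1/2})\bigr)^d=M(-t)^d$, substitute $t=s/d$, and use $\log M(-s/d)=-ms/d+\tfrac{\E Y^2 s^2}{2d^2}-\dots$. This is messier because the range $t\gtrsim 1$ needs separate treatment, so we prefer the probabilistic route above.
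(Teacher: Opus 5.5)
Your proposal is correct and takes essentially the same route as the paper: the representation $I(d,\alpha)=\tfrac12\E[\log S_d]$, a third-order Taylor expansion of $\log(1+U)$ on $\{|U|\le 1/2\}$ with $\E U^4=O(d^{-2})$, exponential tail bounds (Bernstein in the paper, Chernoff for you), and Cauchy--Schwarz on the complement, where the paper uses $|\log x|\le x+x^{-1}$ with negative moments of $\chi^2_d$ in place of your upper/lower-tail split. One small slip: since $\log\bar Y=\log S_d-\log d$, the bound $S_d\ge\min(1,\alpha^2)Z_1^2$ only gives $\E[\log^2\bar Y]=O_\alpha(\log^2 d)$, not a uniform bound, but this is still negligible against the $e^{-cd}$ tail probability.
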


We can easily conclude the proof of Theorem~\ref{AsymptoticExpansion} by Proposition~\ref{MainIdalphbound}.

\begin{proof}(of Theorem~\ref{AsymptoticExpansion})
    Equation~\ref{AsymtoticExpansionIdalpha} is exactly Proposition~\ref{MainIdalphbound} and equation~\ref{AsymtoticExpansionGaussian} follows by Proposition~\ref{MainIdalphbound} and Theorem~\ref{LyapunovGaussianFormula}. For equation~\ref{AsymtoticExpansionOrthogonal} we calculate by using that $C_1 = 2$ that
    \begin{align*}
        \lambda_{m_{\eta\cdot\mathrm{O}(d)},\phi} &= \log(\eta) + I(d,\alpha) - I(d,1) \\
        &= \log(\eta) + \left( \frac{1}{2}\log\left(d\frac{1+\alpha^2}{2}\right) - \frac{C_\alpha}{4d} + O_{\alpha}(d^{-2}) \right) - \left(  \frac{1}{2}\log(d) - \frac{1}{2d} + O_{\alpha}(d^{-2})\right) \\
        &= \frac{1}{2}\log\left(\eta^2\frac{1+\alpha^2}{2}\right) + \frac{2-C_{\alpha}}{4d} + O_{\alpha}(d^{-2}),
    \end{align*} concluding the proof by having used Theorem~\ref{LyapunovOrthogonalFormula} in the first line and Proposition~\ref{MainIdalphbound} for $I(d,\alpha)$ and $I(d,1)$ in the second.
\end{proof}

Let $W$ be a Gaussian matrix with i.i.d $\mathcal{N}(0,1)$ entries. Then for $\phi(x) = \max(x,\alpha x)$ and $e_1 = (1,0,\ldots , 0)^T \in \R^d$ we have by Proposition~\ref{SphericalStatMeasure} and Theorem~\ref{LyapunovGaussianFormula} that $$I(d,\alpha) = \E[\log |\phi(We_1)|].$$ Let $V = (V_1, \ldots , V_d) = W e_1$. Since the $W_{ij}$ are i.i.d.\ $\mathcal{N}(0, 1)$, the vector $V$ follows a multivariate Gaussian distribution $V \sim \mathcal{N}(0,  I_d)$. Write $X_i = \phi(V_i)^2$ and denote
\begin{equation}
S_d  = \sum_{i=1}^d X_i = \sum_{i=1}^d \phi(V_i)^2
\end{equation} and note that we seek to compute $$\mathbb{E}\left[ \log |\phi(We_1)| \right] = \frac{1}{2}\E[\log S_d].$$ 

Note that, similarly to Lemma~\ref{lem:mgf}, it holds that the mean of $X_i$ is
\begin{align*}
    \E[X_i] &= \E[X_i|V_i \geq 0]\mathbb{P}[V_i \geq 0] + \E[X_i|V_i < 0]\mathbb{P}[V_i < 0] \\
    &= \frac{1}{2}(\E[V_i^2] + \E[\alpha^2 V_i^2]) = \frac{1}{2}(1 + \alpha^2)
\end{align*} and, by a similar argument, the variance 
\begin{align*}
    \text{Var}(X_i) &= \E[X_i^2] - \E[X_i]^2 \\
    &= \frac{1}{2}(1 + \alpha^4)\E[V_i^4] - \frac{1}{4}(1 + \alpha^2)^2 \\
    &= \frac{1}{2}(3 + 3\alpha^4) - \frac{1}{4}(1 + \alpha^2)^2 \\ &= \frac{1}{4}(6 + 6\alpha^4 - (1 + 2\alpha^2 +\alpha^4)) = \frac{1}{4}(5 - 2\alpha^2 + 5\alpha^4)
\end{align*}
For convenience, we denote
\begin{equation}
\mu = \E[X_i] = \frac{1}{2}(1+\alpha^2), \quad\text{ and } \quad \tau^2 = \text{Var}(X_i) = \frac{1}{4}(5 - 2\alpha^2 + 5\alpha^4).
\end{equation}
and define the normalized deviation as $$\Delta = \frac{S_d - d\mu}{d\mu}.$$ We can then write
\begin{equation}\label{logSdExpression}
\E[\log S_d] = \E[\log(d\mu (1 + \Delta))] = \log(d\mu) + \E[\log(1 + \Delta)] = \log\left(\frac{d}{2}(1 + \alpha^2) \right) + \E[\log(1 + \Delta)].
\end{equation}

In order to do a Taylor expansion of $\log(1 + \Delta)$, we establish the following results on the moments of $\Delta$.

\begin{lemma}\label{AsymptoticExpansionFirstLemma}
    In the above setting, fix $\delta > 0$ and denote $E_{\delta} = \{ |\Delta| \leq \delta \}$. Then there are constants $C_1, C_2 > 0$ depending on $\delta$ and $\alpha$ such that
    $$\Prob(E_\delta^c) = \Prob(|S_d - d\mu| > \delta d\mu) \le C_1 e^{-C_2 d}.$$
\end{lemma}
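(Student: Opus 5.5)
We would prove this with a Chernoff-type large deviation bound for the sum $S_d=\sum_{i=1}^d X_i$ of i.i.d.\ nonnegative variables $X_i=\phi(V_i)^2$. The point is that, although $X_i$ is not bounded, it is sub-exponential. By Lemma~\ref{lem:mgf} (with $\alpha_1=1,\alpha_2=\alpha$), its moment generating function
\[
M(t)=\E[e^{tX_i}]=\tfrac12\big((1-2t)^{-1/2}+(1-2\alpha^2t)^{-1/2}\big)
\]
is finite for all $t<t_0:=\min\{\tfrac12,\tfrac1{2\alpha^2}\}$. It is also finite for all $t\le 0$. We split the event $E_\delta^c$ into the upper tail $\{S_d>(1+\delta)d\mu\}$ and the lower tail $\{S_d<(1-\delta)d\mu\}$ and bound each separately.

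For the upper tail, fix $t\in(0,t_0)$. Markov's inequality and independence give
\[
\Prob(S_d>(1+\delta)d\mu)\le \exp\big(-d\,[t(1+\delta)\mu-\log M(t)]\big).
\]
Set $\Lambda(t)=\log M(t)$. It is smooth near $0$, with $\Lambda(0)=0$ and $\Lambda'(0)=\E[X_i]=\mu$. Hence $g(t)=t(1+\delta)\mu-\Lambda(t)$ satisfies $g(0)=0$ and $g'(0)=\delta\mu>0$. So there is a small $t_1>0$, depending only on $\delta$ and $\alpha$, with $g(t_1)>0$. This gives the bound $e^{-g(t_1)d}$.

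For the lower tail, apply the same argument to $-S_d$ with a parameter $-t$, $t>0$, where $M(-t)$ is finite for all $t>0$:
\[
\Prob(S_d<(1-\delta)d\mu)\le \exp\big(-d\,[-t(1-\delta)\mu-\Lambda(-t)]\big).
\]
Here $h(t)=-t(1-\delta)\mu-\Lambda(-t)$ has $h(0)=0$ and $h'(0)=\delta\mu>0$, so some $t_2>0$ gives $h(t_2)>0$. If $\delta\ge1$ the lower tail event is empty, since $S_d\ge0$, and nothing needs to be done.

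Combining the two tails gives the claim with $C_1=2$ and $C_2=\min\{g(t_1),h(t_2)\}>0$. The only step requiring any care is justifying that $\Lambda$ is differentiable at $0$ with derivative $\mu$. This follows from the explicit formula for $M$, or alternatively from the elementary inequality $\Lambda(t)\le \mu t+Ct^2$ for $|t|\le t_0/2$, obtained by Taylor-expanding the explicit $M$. We do not expect any genuine obstacle here: it is a standard Cramér-type bound.
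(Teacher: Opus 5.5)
Your argument is correct, but it takes a more hands-on route than the paper.

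\textbf{The paper's route.} The paper computes nothing explicitly. It notes that $X_i=\phi(V_i)^2\le \max\{1,\alpha^2\}V_i^2$ and uses $\|V_i^2\|_{\psi_1}\le \|V_i\|_{\psi_2}^2$ to conclude that the centered variables $X_i-\mu$ have sub-exponential norm at most some $K_\alpha$. It then quotes Bernstein's inequality for sums of independent centered sub-exponential variables. This treats both tails at once and gives $C_1=2$ with the explicit rate $C_2=c\min\{\delta^2\mu^2/K_\alpha^2,\ \delta\mu/K_\alpha\}$, where $c$ is a universal constant.

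\textbf{Your route.} You run the Cram\'er--Chernoff argument yourself. You use the closed-form moment generating function from Lemma~\ref{lem:mgf}, and you use the nonnegativity of $X_i$, so that $M(-t)<\infty$ for every $t>0$, to handle the lower tail separately. Positivity of both exponents then follows from $g'(0)=h'(0)=\delta\mu>0$.

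\textbf{Comparison.} Your version is self-contained and reuses a lemma already in the paper, so it avoids the $\psi_1/\psi_2$ machinery entirely. Its cost is that $C_2$ is only shown to exist, not given explicitly; optimizing over $t$ would recover the optimal Cram\'er exponent. The paper's black-box bound instead makes the $\delta^2$-behaviour for small $\delta$ visible.

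The later proof of Proposition~\ref{MainIdalphbound} only needs $\Prob(E_\delta^c)$ to decay exponentially in $d$ for one fixed $\delta$. Either version therefore suffices.
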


\begin{proof}
We make extensive use of material from \cite[Section 2]{VershyninBook}. Denote by $||\cdot||_{\psi_2}$ the subgaussian norm as defined in \cite[Section 2.6.1]{VershyninBook} and by $||\cdot||_{\psi_1}$ the subexponential norm as in \cite[Section 2.8.2]{VershyninBook}. Recall that by \cite[Lemma 2.8.6]{VershyninBook} it holds that $||X^2||_{\psi_1} \leq ||X||_{\psi_2}^2$ for any random variable $X$. Thus, since \(V_i\sim N(0,1)\) is Gaussian and therefore $||X||_{\psi_2} < \infty$, \(V_i^2\) is subexponential, that is $||V_i^2||_{\psi_1} < \infty$. As
$0\le X_i=\phi(V_i)^2\le \max\{1,\alpha^2\}V_i^2,$
the centered variables \(Y_i=X_i-\mu\) are independent, mean-zero, and satisfy
\(\|Y_i\|_{\psi_1}\le K_\alpha\) for some constant \(K_\alpha<\infty\) depending
only on \(\alpha\). Bernstein's inequality for averages of independent centered
sub-exponential random variables \cite[Corollary 2.9.1]{VershyninBook}
therefore gives
\begin{align*}
\mathbb P(E_\delta^c)
=
\mathbb P\left(
\left|\frac1d\sum_{i=1}^d Y_i\right|>\delta\mu
\right)  \le
2\exp\left[
-cd\min\left\{
\frac{\delta^2\mu^2}{K_\alpha^2},
\frac{\delta\mu}{K_\alpha}
\right\}
\right].
\end{align*}
Thus \(\mathbb P(E_\delta^c)\le C_1e^{-C_2d}\), where
$C_1=2$
and $C_2
=
c\min\left\{
\frac{\delta^2\mu^2}{K_\alpha^2},
\frac{\delta\mu}{K_\alpha}
\right\}>0.$
\end{proof}


\begin{lemma}\label{AsymptoticExpansionSecondLemma}
    In the above setting, it holds that: \begin{enumerate}
        \item[(1)] $\E[\Delta] = 0$.
        \item[(2)] $\E[\Delta^2] = \frac{\tau^2}{d\mu^2}$.
        \item[(3)] $\E[\Delta^3] = O_{\alpha}(d^{-2})$.
        \item[(4)] $\E[\Delta^4] = O_{\alpha}(d^{-2})$.
    \end{enumerate}
\end{lemma}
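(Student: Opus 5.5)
Write $\Delta = \frac{1}{d\mu}\sum_{i=1}^d Y_i$ with $Y_i = X_i - \mu$. The $Y_i$ are i.i.d., have mean zero and variance $\tau^2$, and have finite moments of all orders: $0 \le X_i \le \max\{1,\alpha^2\}V_i^2$ with $V_i \sim \mathcal{N}(0,1)$. The plan is to expand the powers of the sum and use independence, so that every term containing some $Y_i$ exactly once vanishes.

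For (1), linearity gives $\E[\Delta] = \frac{1}{d\mu}\sum_i \E[Y_i] = 0$. For (2), expand $\E[(\sum_i Y_i)^2] = \sum_{i,j}\E[Y_iY_j]$. Off-diagonal terms factor as $\E[Y_i]\E[Y_j]=0$, leaving $d\tau^2$. Hence $\E[\Delta^2] = \frac{d\tau^2}{d^2\mu^2} = \frac{\tau^2}{d\mu^2}$.

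For (3), in $\E[(\sum_i Y_i)^3] = \sum_{i,j,k}\E[Y_iY_jY_k]$ only the diagonal terms $i=j=k$ survive. This gives $d\,\E[Y_1^3]$, so $\E[\Delta^3] = \frac{\E[Y_1^3]}{d^2\mu^3}$. Here $\E[|Y_1|^3] < \infty$ depends only on $\alpha$, via the Gaussian moments of $V_1$.

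For (4), the surviving index patterns in $\sum_{i,j,k,l}\E[Y_iY_jY_kY_l]$ are:
\begin{itemize}
\item all four indices equal, contributing $d\,\E[Y_1^4]$;
\item two distinct pairs, contributing $3d(d-1)\tau^4$.
\end{itemize}
Thus $\E[\Delta^4] = \frac{d\E[Y_1^4] + 3d(d-1)\tau^4}{d^4\mu^4} \le \frac{\E[Y_1^4] + 3\tau^4}{\mu^4}\, d^{-2}$, which is $O_\alpha(d^{-2})$.

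The only point needing care is that the constants are finite and depend only on $\alpha$. Since $\mu = \frac{1}{2}(1+\alpha^2) > 0$, this reduces to $\E[Y_1^4] \le 8(\E[X_1^4] + \mu^4) \le 8(105\max\{1,\alpha^8\} + \mu^4)$, using $\E[V_1^8]=105$. There is no real obstacle here; the combinatorial counting of index patterns is the only step to check.
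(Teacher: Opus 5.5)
Your proof is correct and follows the same route as the paper. Both arguments write $\Delta$ as a normalized sum of the i.i.d.\ centered $Y_i$, expand the powers, and use independence so that only index patterns without a singleton index survive. You additionally make explicit the count $d\,\E[Y_1^4] + 3d(d-1)\tau^4$, which the paper leaves as a ``brief combinatorial calculation''. You also bound the moments via $X_i \le \max\{1,\alpha^2\}V_i^2$ rather than via subexponential norms, which works equally well.
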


\begin{proof}
    (1) is clear and for (2) we calculate $\E[\Delta^2] = \frac{1}{d^2\mu^2} \E[(S_d - d\mu)^2] = \frac{d\cdot\mathrm{Var}(X_i)}{d^2\mu^2} = \frac{\tau^2}{d\mu^2}.$ For (3) write for convenience $Y_i = X_i - \mu$ so that $\E[\Delta^3] = \frac{1}{d^3\mu^3}\sum_{i,j,k = 1}^d \E[Y_iY_jY_k]$. By independence of the $Y_i$ and as $\E[Y_i] = 0$ it follows that the latter sum is only non-zero if $i=j=k$. Also, since $Y_i$ is subexponential, it follows that $E[Y_i^3] < \infty$ and hence as the $Y_i$ are identically distributed, $\E[\Delta^3] = \frac{1}{d^3\mu^3}\sum_{i = 1}^d \E[Y_i^3] = \frac{d\E[Y_1^3]}{d^3\mu^3} = O_{\alpha}(d^{-2})$. Finally for (4) a similar argument applies. Indeed, $\E[\Delta^4] = \frac{1}{d^4\mu^4}\sum_{i,j,k,h = 1}^d \E[Y_iY_jY_kY_h]$ is only non-zero if every index is equal to at least one other index. A brief combinatorial calculation then leads to $\E[\Delta^4] = O_{\alpha}(d^{-2})$.
\end{proof}

\begin{proof}(of Proposition~\ref{MainIdalphbound})
    The notation introduced above is used in the proof. We use the event $E_{\delta}$ for some fixed $\delta \in (0,1/2)$ to split the expectation $\mathbb E[\log(1+\Delta)]$ into
\[
\mathbb E[\log(1+\Delta)]
=
\mathbb E[\log(1+\Delta)\mathbf 1_{E_\delta}]
+
\mathbb E[\log(1+\Delta)\mathbf 1_{E_\delta^c}].
\]

On the event $E_\delta$, we have $\Delta\in[-\delta,\delta]$, so Taylor's theorem gives
\[
\log(1+\Delta)
=
\Delta-\frac{\Delta^2}{2}+\frac{\Delta^3}{3}+R_4(\Delta),
\]
for some function $R_4$. On the interval $\Delta \in [-\delta,\delta]$ we have since $\delta \in (0,1/2)$ that
$|R_4(\Delta)|\le C_\delta |\Delta|^4$
for some constant $C_\delta>0$. Therefore,
\[
\mathbb E[\log(1+\Delta)\mathbf 1_{E_\delta}]
=
\mathbb E[\Delta\mathbf 1_{E_\delta}]
-\frac12\mathbb E[\Delta^2\mathbf 1_{E_\delta}]
+\frac13\mathbb E[\Delta^3\mathbf 1_{E_\delta}]
+O_\alpha\!\bigl(\mathbb E[\Delta^4\mathbf 1_{E_\delta}]\bigr).
\]
Using $\mathbf 1_{E_\delta}\le 1$ and Lemma \ref{AsymptoticExpansionSecondLemma} (4) it follows $\mathbb E[\Delta^4\mathbf 1_{E_\delta}] = O_{\alpha}(d^{-2})$. To deal with $E[\Delta^3\mathbf 1_{E_\delta}]$ we use Hölder's inequality and  Lemma \ref{AsymptoticExpansionFirstLemma} as well as Lemma \ref{AsymptoticExpansionSecondLemma} (4) to deduce $|\E[\Delta^3\mathbf 1_{E_\delta^c}]| \leq \mathbb{E}[\Delta^4]^{3/4} \mathbb{P}[E_{\delta}^c]^{1/4} = O_{\alpha}(d^{-3/2}e^{-cd}) = O_{\alpha}(d^{-2})$ and therefore by \ref{AsymptoticExpansionSecondLemma} (3) it follows $\E[\Delta^3\mathbf 1_{E_\delta}] = \E[\Delta^3] + O_{\delta}(d^{-2}) = O_{\alpha}(d^{-2})$.  So we conclude
\begin{equation}\label{StepCalculation}
\mathbb E[\log(1+\Delta)\mathbf 1_{E_\delta}]
=
\mathbb E[\Delta\mathbf 1_{E_\delta}]
-\frac12\mathbb E[\Delta^2\mathbf 1_{E_\delta}]
+O_\alpha(d^{-2}).
\end{equation}

We now compare truncated moments with full moments. By Cauchy--Schwarz, Lemma \ref{AsymptoticExpansionFirstLemma} and Lemma \ref{AsymptoticExpansionSecondLemma} (2) and (4),
$\bigl|\mathbb E[\Delta\mathbf 1_{E_\delta^c}]\bigr|
\le
\bigl(\mathbb E[\Delta^2]\bigr)^{1/2}\mathbb P(E_\delta^c)^{1/2}
=
O_\alpha(d^{-1/2}e^{-c d})
=
O_\alpha(d^{-2})$
and similarly
$\mathbb E[\Delta^2\mathbf 1_{E_\delta^c}]
\le
\bigl(\mathbb E[\Delta^4]\bigr)^{1/2}\mathbb P(E_\delta^c)^{1/2}
=
O_\alpha(d^{-1}e^{-c d})
=
O_\alpha(d^{-2}).$
Hence
\[
\mathbb E[\Delta\mathbf 1_{E_\delta}]
=
\mathbb E[\Delta]+O_\alpha(d^{-2})
=
O_\alpha(d^{-2}) \qquad \text{ and } \qquad
\mathbb E[\Delta^2\mathbf 1_{E_\delta}]
=
\mathbb E[\Delta^2]+O_\alpha(d^{-2})
=
\frac{\tau^2}{d\mu^2}+O_\alpha(d^{-2}).
\]
Combining these two estimates with equation \eqref{StepCalculation} and Lemma~\ref{AsymptoticExpansionSecondLemma} (1) and (2) we conclude
\begin{equation}\label{EdeltaCalc}
\mathbb E[\log(1+\Delta)\mathbf 1_{E_\delta}]
=
-\frac{\tau^2}{2d\mu^2}+O_\alpha(d^{-2}).
\end{equation}

It remains to bound the complement term $\mathbb E[\log(1+\Delta)\mathbf 1_{E_\delta^c}]$. Since $1+\Delta=S_d/(d\mu)>0$ and we have for all $x>0$ that
$|\log x|\le x+x^{-1},$
it follows that
\[
|\log(1+\Delta)|
\le
\frac{S_d}{d\mu}+\frac{d\mu}{S_d}.
\]
Also, because $\alpha\neq 0$,
\[
X_i=\phi(V_i)^2\ge \beta V_i^2,\qquad \beta:=\min\{1,\alpha^2\}>0,
\]
so
\[
S_d\ge \beta \sum_{i=1}^d V_i^2.
\]
Since $\sum_{i=1}^d V_i^2\sim \chi_d^2$, its negative second moment is finite for all sufficiently large $d$, and thus
\[
\mathbb E\!\left[\left(\frac{d\mu}{S_d}\right)^2\right]=O_\alpha(1).
\]
Similarly, since $X_i$ has finite fourth moment by Lemma~\ref{AsymptoticExpansionSecondLemma} (4),
\[
\mathbb E\!\left[\left(\frac{S_d}{d\mu}\right)^2\right]=O_\alpha(1).
\]
Hence, by Cauchy--Schwarz and Lemma D.9,
\[
\bigl|\mathbb E[\log(1+\Delta)\mathbf 1_{E_\delta^c}]\bigr|
\le
\left(
\mathbb E\!\left[\left(\frac{S_d}{d\mu}+\frac{d\mu}{S_d}\right)^2\right]
\right)^{1/2}
\mathbb P(E_\delta^c)^{1/2}
=
O_\alpha(e^{-c d})
=
O_\alpha(d^{-2}).
\]

Combining the estimates on $E_\delta$ and $E_\delta^c$, we obtain
\[
\mathbb E[\log(1+\Delta)]
=
-\frac{\tau^2}{2d\mu^2}+O_\alpha(d^{-2}).
\]
Therefore,
\[
\mathbb E[\log S_d]
=
\log(d\mu)+\mathbb E[\log(1+\Delta)]
=
\log(d\mu)-\frac{\tau^2}{2d\mu^2}+O_\alpha(d^{-2}),
\]
and so
\[
I(d,\alpha)
=
\frac12\,\mathbb E[\log S_d]
=
\frac12\log(d\mu)-\frac{\tau^2}{4d\mu^2}+O_\alpha(d^{-2}).
\]
Since
$\mu=\frac{1+\alpha^2}{2}$ and $
\tau^2=\frac{5-2\alpha^2+5\alpha^4}{4},$
we conclude that
\[
I(d,\alpha)
=
\frac12\log\!\left(\frac d2(1+\alpha^2)\right)
-\frac{5-2\alpha^2+5\alpha^4}{4d(1+\alpha^2)^2}
+O_\alpha(d^{-2})=
\frac12\log\!\left(\frac d2(1+\alpha^2)\right)
-\frac{C_\alpha}{4d}
+O_\alpha(d^{-2})\]
for $C_\alpha:=\frac{5-2\alpha^2+5\alpha^4}{(1+\alpha^2)^2}
$. This concludes the proof.
\end{proof}


\newpage

\subsection{Some more computational results}\label{section:lookuptables}

In this section, we give extensive lookup tables for the various parameters discussed in this paper for $\alpha = 0.1$, $\alpha = 0.01$ and $\alpha = 0.001$. The value $I(d,\alpha)$ is defined in \eqref{GaussianFormula}. When the Leaky ReLU activation function has slope $\alpha$, we set $\sigma_{\mathrm{He}} = \sqrt{\frac{2}{d(1 + \alpha^2)}}$ as in \eqref{DefSigmaHe}. We then set $\lambda_{\mathrm{He}}$ as in \eqref{DefLambdaHe} so that $\lambda_{\mathrm{He}}$ is the Lyapunov exponent of He initialization. $\lambda_{\mathrm{orth}}$ defined in \eqref{DefLambdaHe} is the Lyapunov exponent of standard scaled orthogonal initialization. Finally, $\sigma_{\mathrm{crit}} = \exp(-I(d,\alpha))$ as in \eqref{sigmacrit} is the standard deviation such that the Gaussian initialization has Lyapunov exponent $0$. Similarly, $\eta_{\mathrm{crit}} = \exp(I(d,1) - I(d,\alpha))$  as given in \eqref{sigmacrit} is the scaling factor of the orthogonal group such that the Lyapunov exponent is zero.

\vfill

\begin{table}[h!]
\centering
\caption{Lookup table for $\alpha = 0.1$}
\label{tab:lookup0.1}
\begin{tabular}{cccccccc}
\midrule
 $d$ & $I(d,\alpha)$ & $I(d,1)$ & $\lambda_{\mathrm{He}}$ & $\lambda_{\mathrm{orth}}$ & $\sigma_{\mathrm{He}}$ &$\sigma_{\mathrm{crit}}$ & $\eta_{\mathrm{crit}}$ \\
\midrule
1          & -1.786474 & -0.6351814 & -1.4448755 & -0.8096941 & 1.4071951 & 5.9683707 & 3.1622777 \\
2          & -0.816599 & 0.0579658 & -0.8215742 & -0.5329663 & 0.9950372 & 2.262791 & 2.3978315 \\
3          & -0.3529343 & 0.3648186 & -0.560642 & -0.3761544 & 0.8124445 & 1.4232376 & 2.0498217 \\
4          & -0.0636424 & 0.5579658 & -0.4151912 & -0.2800097 & 0.7035975 & 1.0657112 & 1.8619199 \\
5          & 0.1390395 & 0.6981519 & -0.324081 & -0.217514 & 0.6293168 & 0.8701936 & 1.7491193 \\
6          & 0.2914356 & 0.8079658 & -0.2628457 & -0.1749317 & 0.574485 & 0.7471901 & 1.6762014 \\
7          & 0.4118081 & 0.8981519 & -0.2195485 & -0.1447454 & 0.5318698 & 0.6624514 & 1.626359 \\
8          & 0.5104289 & 0.9746324 & -0.1876934 & -0.1226051 & 0.4975186 & 0.6002381 & 1.5907467 \\
9          & 0.593532 & 1.0410091 & -0.1634819 & -0.1058786 & 0.469065 & 0.5523729 & 1.5643604 \\
10         & 0.6651223 & 1.0996324 & -0.1445718 & -0.0929117 & 0.4449942 & 0.5142106 & 1.5442064 \\
16         & 0.9600277 & 1.3543943 & -0.0846682 & -0.0527682 & 0.3517988 & 0.3828823 & 1.4834443 \\
20         & 1.0900216 & 1.4724499 & -0.0662461 & -0.0408299 & 0.3146584 & 0.3362092 & 1.4658398 \\
30         & 1.31609 & 1.6837469 & -0.0429103 & -0.0260585 & 0.2569175 & 0.2681819 & 1.4443465 \\
32         & 1.3511818 & 1.7170803 & -0.0400877 & -0.0243 & 0.2487593 & 0.2589341 & 1.4418088 \\
40         & 1.471102 & 1.8318356 & -0.0317393 & -0.0191352 & 0.2224971 & 0.2296722 & 1.4343813 \\
50         & 1.5892275 & 1.9459448 & -0.0251856 & -0.0151189 & 0.1990074 & 0.2040832 & 1.428632 \\
60         & 1.6846978 & 2.0387927 & -0.020876 & -0.0124964 & 0.1816681 & 0.1855005 & 1.4248903 \\
64         & 1.7183043 & 2.0715884 & -0.0195388 & -0.0116856 & 0.1758994 & 0.17937 & 1.4237355 \\
70         & 1.764823 & 2.1170707 & -0.0178262 & -0.0106493 & 0.168192 & 0.1712171 & 1.4222608 \\
80         & 1.8338609 & 2.1847373 & -0.015554 & -0.009278 & 0.1573292 & 0.1597954 & 1.4203117 \\
90         & 1.8945107 & 2.2443287 & -0.0137957 & -0.0082195 & 0.1483314 & 0.1503919 & 1.4188092 \\
100        & 1.9485921 & 2.2975684 & -0.0123946 & -0.0073779 & 0.1407195 & 0.1424745 & 1.4176156 \\
128        & 2.0747663 & 2.4220987 & -0.0096504 & -0.005734 & 0.1243796 & 0.1255858 & 1.4152871 \\
200        & 2.3014107 & 2.6466545 & -0.0061495 & -0.0036454 & 0.0995037 & 0.1001175 & 1.4123342 \\
256        & 2.426194 & 2.770633 & -0.0047963 & -0.0028406 & 0.0879497 & 0.0883725 & 1.4111981 \\
300        & 2.5062035 & 2.8502227 & -0.0040893 & -0.0024207 & 0.0812444 & 0.0815774 & 1.4106057 \\
400        & 2.6510708 & 2.9944812 & -0.0030631 & -0.001812 & 0.0703598 & 0.0705756 & 1.4097472 \\
500        & 2.763257 & 3.1063034 & -0.0024486 & -0.0014479 & 0.0629317 & 0.063086 & 1.409234 \\
512        & 2.7751729 & 3.1181851 & -0.002391 & -0.0014138 & 0.0621898 & 0.0623387 & 1.409186 \\
600        & 2.8548269 & 3.197631 & -0.0020395 & -0.0012056 & 0.0574485 & 0.0575658 & 1.4088927 \\
700        & 2.9321943 & 3.2748255 & -0.0017475 & -0.0010328 & 0.053187 & 0.05328 & 1.4086492 \\
800        & 2.9991788 & 3.3416806 & -0.0015286 & -0.0009033 & 0.0497519 & 0.049828 & 1.4084668 \\
900        & 3.0582404 & 3.4006416 & -0.0013585 & -0.0008027 & 0.0469065 & 0.0469703 & 1.4083251 \\
1000       & 3.1110568 & 3.4533774 & -0.0012225 & -0.0007222 & 0.0444994 & 0.0445538 & 1.4082118 \\
1024       & 3.1229437 & 3.4652474 & -0.0011938 & -0.0007053 & 0.0439748 & 0.0440274 & 1.4081879 \\
\bottomrule
\end{tabular}
\end{table}

\vfill

\newpage

\vfill

\begin{table}[h!]
\centering
\caption{Lookup table for $\alpha = 0.01$}
\label{tab:lookup0.01}
\begin{tabular}{cccccccc}
\midrule
 $d$ & $I(d,\alpha)$ & $I(d,1)$ & $\lambda_{\mathrm{He}}$ & $\lambda_{\mathrm{orth}}$ & $\sigma_{\mathrm{He}}$ &$\sigma_{\mathrm{crit}}$ & $\eta_{\mathrm{crit}}$ \\
\midrule
1          & -2.9377665 & -0.6351814 & -2.5912429 & -1.9560615 & 1.4141429 & 18.8736452 & 10.0 \\
2          & -1.4349252 & 0.0579658 & -1.4349752 & -1.1463674 & 0.99995 & 4.1993309 & 4.4499416 \\
3          & -0.6949542 & 0.3648186 & -0.8977368 & -0.7132492 & 0.8164558 & 2.0036174 & 2.8857154 \\
4          & -0.2587831 & 0.5579658 & -0.6054067 & -0.4702253 & 0.7070714 & 1.2953528 & 2.2631301 \\
5          & 0.0236727 & 0.6981519 & -0.4345227 & -0.3279556 & 0.6324239 & 0.9766053 & 1.9630104 \\
6          & 0.2203017 & 0.8079658 & -0.3290544 & -0.2411405 & 0.5773214 & 0.8022767 & 1.7997793 \\
7          & 0.365729 & 0.8981519 & -0.2607025 & -0.1858994 & 0.5344958 & 0.6936908 & 1.7030537 \\
8          & 0.4788695 & 0.9746324 & -0.2143277 & -0.1492393 & 0.499975 & 0.6194833 & 1.6417503 \\
9          & 0.5706 & 1.0410091 & -0.1814887 & -0.1238855 & 0.471381 & 0.5651862 & 1.6006488 \\
10         & 0.6474594 & 1.0996324 & -0.1573095 & -0.1056494 & 0.4471912 & 0.5233738 & 1.5717238 \\
16         & 0.9515602 & 1.3543943 & -0.0882106 & -0.0563106 & 0.3535357 & 0.3861381 & 1.4960588 \\
20         & 1.0827713 & 1.4724499 & -0.0685712 & -0.043155 & 0.316212 & 0.3386557 & 1.4765061 \\
30         & 1.3098905 & 1.6837469 & -0.0441845 & -0.0273328 & 0.258186 & 0.2698496 & 1.4533284 \\
32         & 1.3450864 & 1.7170803 & -0.0412579 & -0.0254702 & 0.2499875 & 0.2605172 & 1.450624 \\
40         & 1.4652935 & 1.8318356 & -0.0326226 & -0.0200184 & 0.2235956 & 0.2310102 & 1.4427371 \\
50         & 1.5836255 & 1.9459448 & -0.0258624 & -0.0157957 & 0.19999 & 0.2052297 & 1.4366576 \\
60         & 1.6792239 & 2.0387927 & -0.0214248 & -0.0130452 & 0.1825651 & 0.1865187 & 1.4327114 \\
64         & 1.7128689 & 2.0715884 & -0.020049 & -0.0121958 & 0.1767679 & 0.1803476 & 1.4314951 \\
70         & 1.7594363 & 2.1170707 & -0.0182877 & -0.0111108 & 0.1690224 & 0.1721419 & 1.4299428 \\
80         & 1.8285374 & 2.1847373 & -0.0159523 & -0.0096763 & 0.158106 & 0.1606484 & 1.4278929 \\
90         & 1.8892353 & 2.2443287 & -0.014146 & -0.0085698 & 0.1490637 & 0.1511874 & 1.4263139 \\
100        & 1.9433543 & 2.2975684 & -0.0127072 & -0.0076905 & 0.1414143 & 0.1432227 & 1.4250603 \\
128        & 2.0696008 & 2.4220987 & -0.0098907 & -0.0059743 & 0.1249938 & 0.1262362 & 1.4226167 \\
200        & 2.2963348 & 2.6466545 & -0.0063003 & -0.0037961 & 0.099995 & 0.100627 & 1.4195213 \\
256        & 2.421152 & 2.770633 & -0.0049131 & -0.0029575 & 0.0883839 & 0.0888192 & 1.4183313 \\
300        & 2.5011791 & 2.8502227 & -0.0041886 & -0.00252 & 0.0816456 & 0.0819883 & 1.417711 \\
400        & 2.6460716 & 2.9944812 & -0.0031371 & -0.001886 & 0.0707071 & 0.0709293 & 1.4168125 \\
500        & 2.7582728 & 3.1063034 & -0.0025076 & -0.0015069 & 0.0632424 & 0.0634012 & 1.4162755 \\
512        & 2.7701901 & 3.1181851 & -0.0024487 & -0.0014714 & 0.0624969 & 0.0626501 & 1.4162252 \\
600        & 2.8498527 & 3.197631 & -0.0020885 & -0.0012547 & 0.0577321 & 0.0578528 & 1.4159183 \\
700        & 2.9272271 & 3.2748255 & -0.0017895 & -0.0010748 & 0.0534496 & 0.0535453 & 1.4156636 \\
800        & 2.9942169 & 3.3416806 & -0.0015653 & -0.00094 & 0.0499975 & 0.0500758 & 1.4154728 \\
900        & 3.0532827 & 3.4006416 & -0.0013911 & -0.0008353 & 0.0471381 & 0.0472037 & 1.4153246 \\
1000       & 3.1061023 & 3.4533774 & -0.0012518 & -0.0007516 & 0.0447191 & 0.0447751 & 1.4152061 \\
1024       & 3.1179899 & 3.4652474 & -0.0012224 & -0.0007339 & 0.044192 & 0.044246 & 1.4151811 \\
\bottomrule
\end{tabular}
\end{table}

\vfill

\newpage

\vfill

\begin{table}[h!]
\centering
\caption{Lookup table for $\alpha = 0.001$}
\label{tab:lookup0.001}
\begin{tabular}{cccccccc}
\midrule
 $d$ & $I(d,\alpha)$ & $I(d,1)$ & $\lambda_{\mathrm{He}}$ & $\lambda_{\mathrm{orth}}$ & $\sigma_{\mathrm{He}}$ &$\sigma_{\mathrm{crit}}$ & $\eta_{\mathrm{crit}}$ \\
\midrule
1          & -4.0890591 & -0.6351814 & -3.742486 & -3.1073045 & 1.4142129 & 59.6837066 & 31.6227766 \\
2          & -2.0150469 & 0.0579658 & -2.0150474 & -1.7264396 & 0.9999995 & 7.5010792 & 7.9487339 \\
3          & -0.9881306 & 0.3648186 & -1.1908637 & -1.0063761 & 0.8164962 & 2.6862083 & 3.8688187 \\
4          & -0.4073402 & 0.5579658 & -0.7539143 & -0.6187329 & 0.7071064 & 1.5028153 & 2.6255908 \\
5          & -0.0518063 & 0.6981519 & -0.5099522 & -0.4033852 & 0.6324552 & 1.0531718 & 2.1169116 \\
6          & 0.181837 & 0.8079658 & -0.3674697 & -0.2795557 & 0.57735 & 0.8337372 & 1.870356 \\
7          & 0.3460547 & 0.8981519 & -0.2803272 & -0.2055241 & 0.5345222 & 0.7074738 & 1.7368918 \\
8          & 0.4687565 & 0.9746324 & -0.2243912 & -0.1593028 & 0.4999998 & 0.6257799 & 1.6584375 \\
9          & 0.5653643 & 1.0410091 & -0.1866749 & -0.1290717 & 0.4714043 & 0.5681532 & 1.6090514 \\
10         & 0.6447186 & 1.0996324 & -0.1600009 & -0.1083408 & 0.4472134 & 0.5248102 & 1.5760376 \\
16         & 0.951425 & 1.3543943 & -0.0882963 & -0.0563963 & 0.3535532 & 0.3861903 & 1.4962611 \\
20         & 1.0826938 & 1.4724499 & -0.0685992 & -0.043183 & 0.3162276 & 0.3386819 & 1.4766205 \\
30         & 1.3098279 & 1.6837469 & -0.0441977 & -0.0273459 & 0.2581988 & 0.2698665 & 1.4534195 \\
32         & 1.3450249 & 1.7170803 & -0.04127 & -0.0254823 & 0.2499999 & 0.2605332 & 1.4507133 \\
40         & 1.465235 & 1.8318356 & -0.0326317 & -0.0200275 & 0.2236067 & 0.2310237 & 1.4428216 \\
50         & 1.5835691 & 1.9459448 & -0.0258693 & -0.0158027 & 0.1999999 & 0.2052413 & 1.4367387 \\
60         & 1.6791688 & 2.0387927 & -0.0214304 & -0.0130508 & 0.1825741 & 0.186529 & 1.4327904 \\
64         & 1.7128142 & 2.0715884 & -0.0200542 & -0.0122011 & 0.1767766 & 0.1803575 & 1.4315734 \\
70         & 1.7593821 & 2.1170707 & -0.0182924 & -0.0111156 & 0.1690308 & 0.1721512 & 1.4300203 \\
80         & 1.8284839 & 2.1847373 & -0.0159564 & -0.0096803 & 0.1581138 & 0.160657 & 1.4279694 \\
90         & 1.8891822 & 2.2443287 & -0.0141496 & -0.0085734 & 0.1490711 & 0.1511954 & 1.4263896 \\
100        & 1.9433016 & 2.2975684 & -0.0127104 & -0.0076937 & 0.1414213 & 0.1432303 & 1.4251354 \\
128        & 2.0695489 & 2.4220987 & -0.0098932 & -0.0059768 & 0.1249999 & 0.1262427 & 1.4226906 \\
200        & 2.2962838 & 2.6466545 & -0.0063018 & -0.0037976 & 0.1 & 0.1006321 & 1.4195937 \\
256        & 2.4211013 & 2.770633 & -0.0049143 & -0.0029586 & 0.0883883 & 0.0888237 & 1.4184032 \\
300        & 2.5011286 & 2.8502227 & -0.0041896 & -0.002521 & 0.0816496 & 0.0819924 & 1.4177826 \\
400        & 2.6460213 & 2.9944812 & -0.0031379 & -0.0018868 & 0.0707106 & 0.0709329 & 1.4168837 \\
500        & 2.7582227 & 3.1063034 & -0.0025082 & -0.0015075 & 0.0632455 & 0.0634044 & 1.4163464 \\
512        & 2.77014 & 3.1181851 & -0.0024492 & -0.001472 & 0.0625 & 0.0626532 & 1.4162961 \\
600        & 2.8498027 & 3.197631 & -0.002089 & -0.0012552 & 0.057735 & 0.0578557 & 1.4159891 \\
700        & 2.9271772 & 3.2748255 & -0.0017899 & -0.0010753 & 0.0534522 & 0.053548 & 1.4157343 \\
800        & 2.9941671 & 3.3416806 & -0.0015657 & -0.0009404 & 0.05 & 0.0500783 & 1.4155434 \\
900        & 3.0532329 & 3.4006416 & -0.0013914 & -0.0008356 & 0.0471404 & 0.0472061 & 1.4153951 \\
1000       & 3.1060525 & 3.4533774 & -0.0012521 & -0.0007519 & 0.0447213 & 0.0447774 & 1.4152765 \\
1024       & 3.1179401 & 3.4652474 & -0.0012227 & -0.0007342 & 0.0441942 & 0.0442482 & 1.4152515 \\
\bottomrule
\end{tabular}
\end{table}

\vfill 

\newpage

\section{Failure of the Central Limit Theorem for $tanh$ activations}\label{section:FailureCLT}

In this section we present a plot that shows that the Central Limit Theorem fails for $\tanh$-networks.

\begin{figure*}[!ht]

   \begin{subfigure}{0.33\linewidth}
    \centering
    \includegraphics[width=\linewidth]{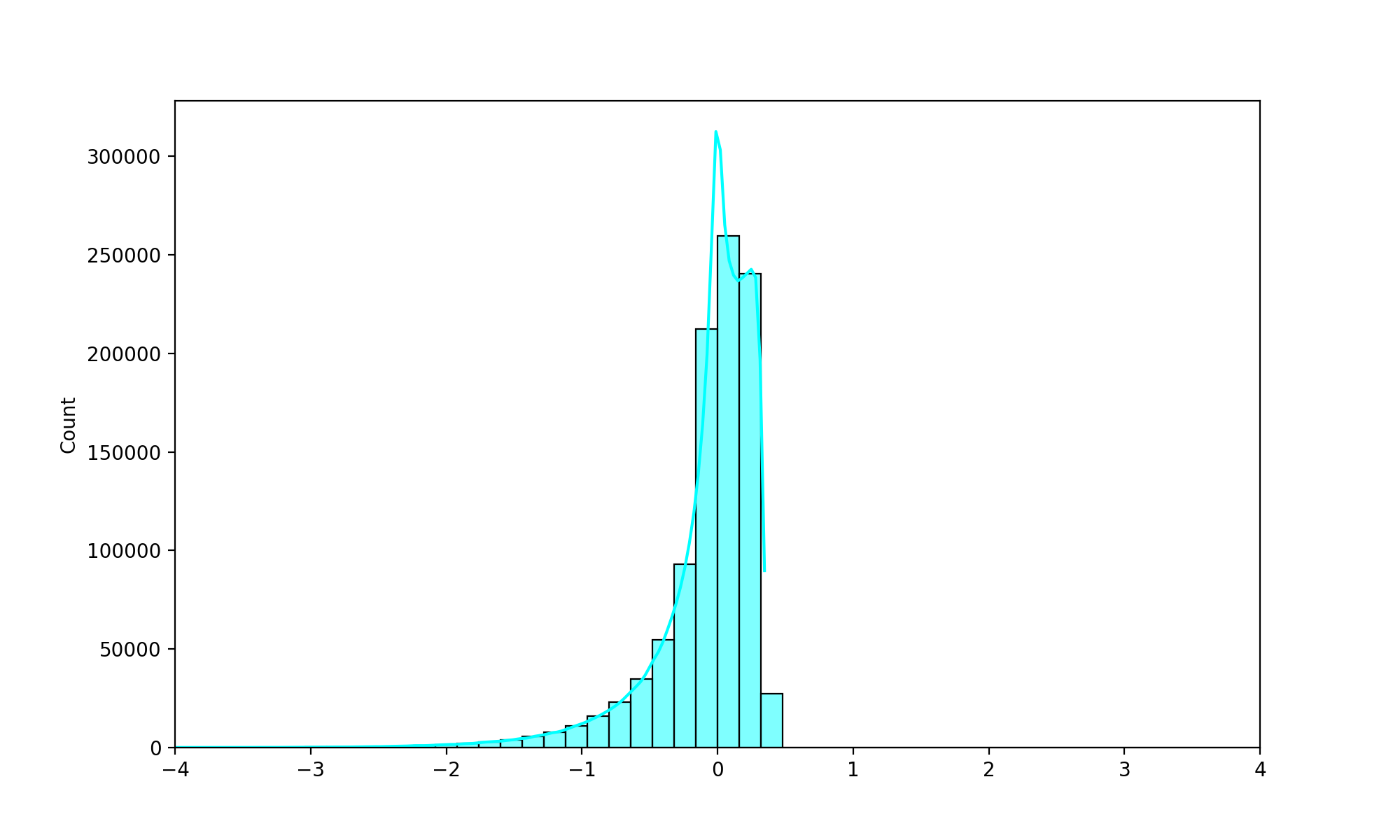}
    \caption{1 Layer}
  \end{subfigure}
  \begin{subfigure}{0.33\linewidth}
    \centering
    \includegraphics[width=\linewidth]{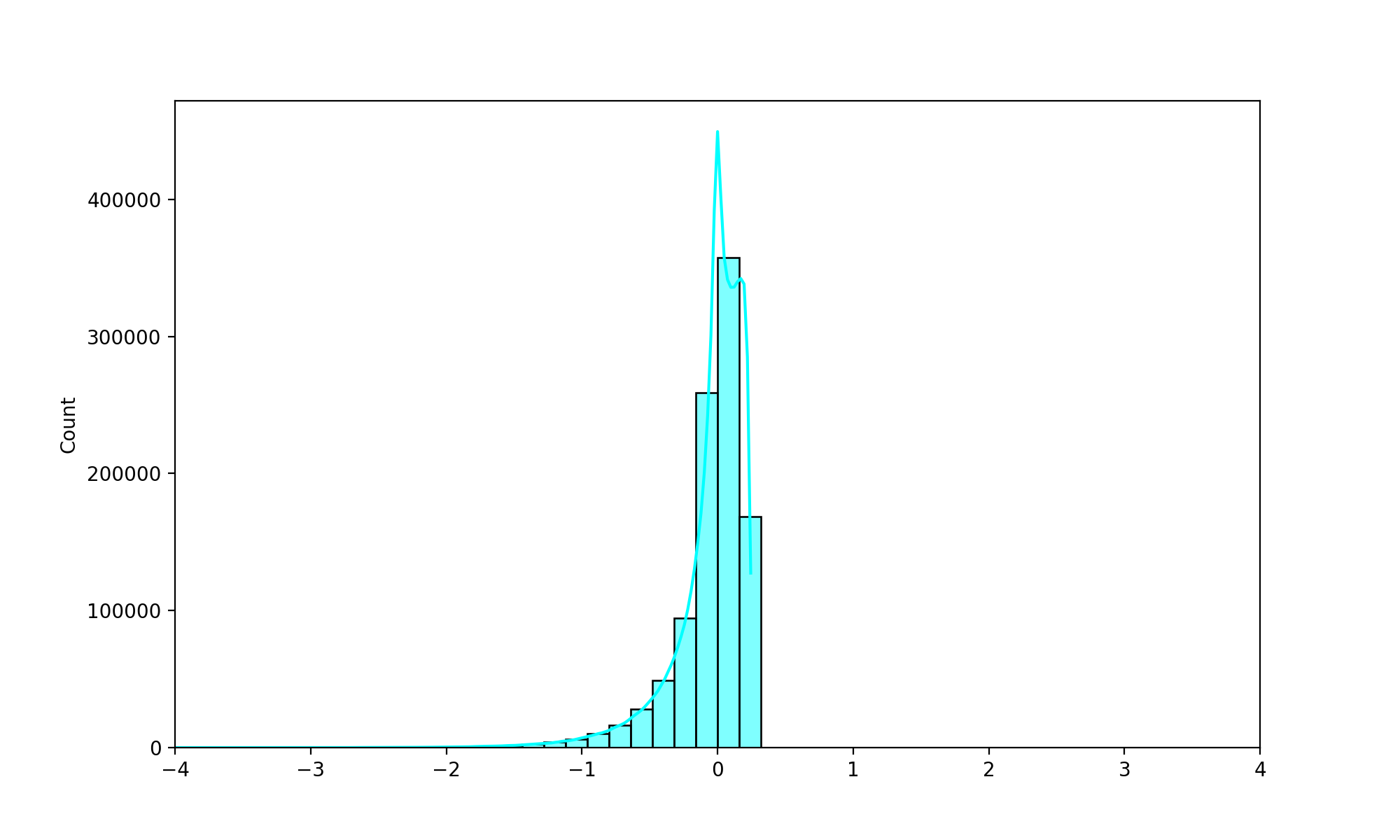}
    \caption{2 Layers}
  \end{subfigure}
  \begin{subfigure}{0.33\linewidth}
    \centering
    \includegraphics[width=\linewidth]{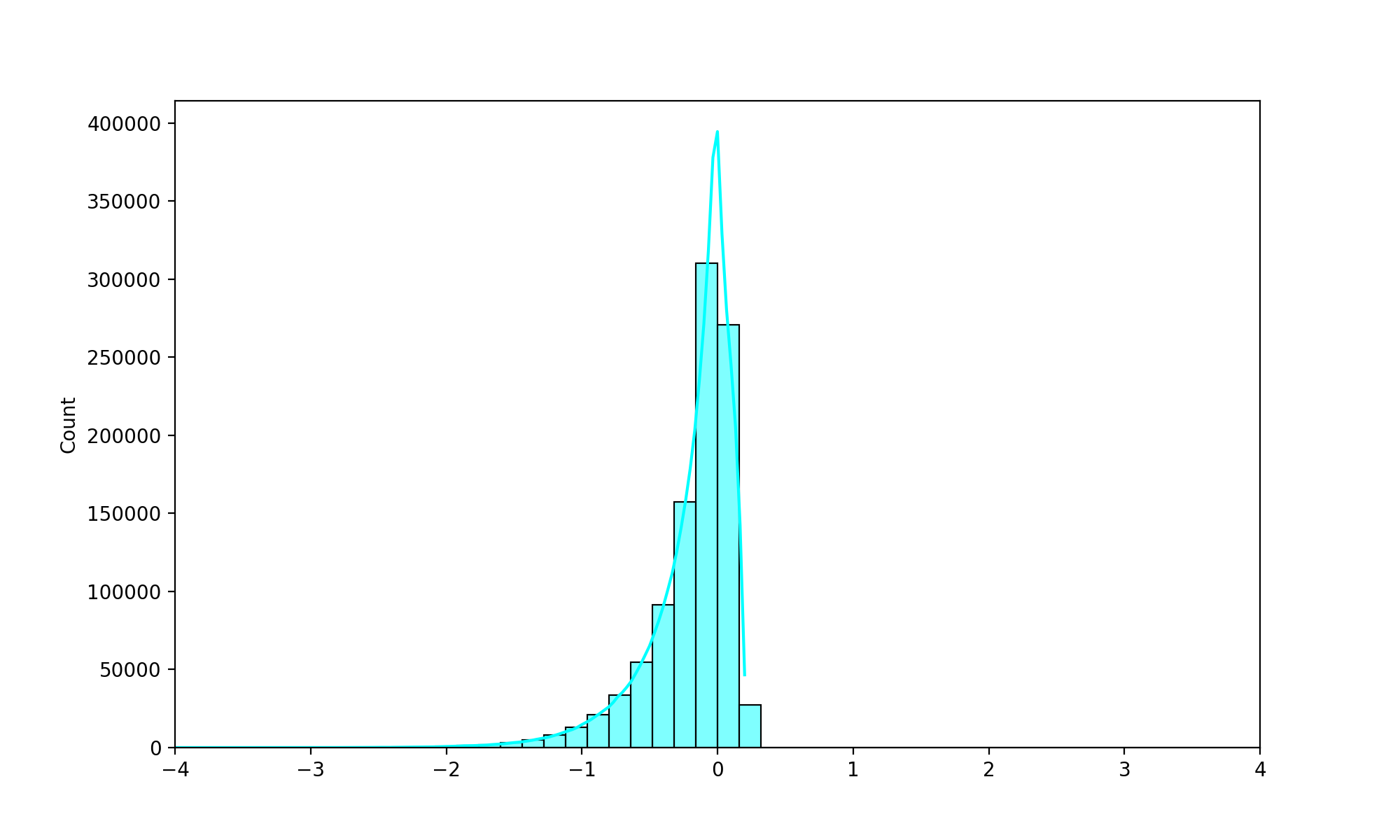}
    \caption{3 Layers}
\end{subfigure}

  \begin{subfigure}{0.33\linewidth}
    \centering
    \includegraphics[width=\linewidth]{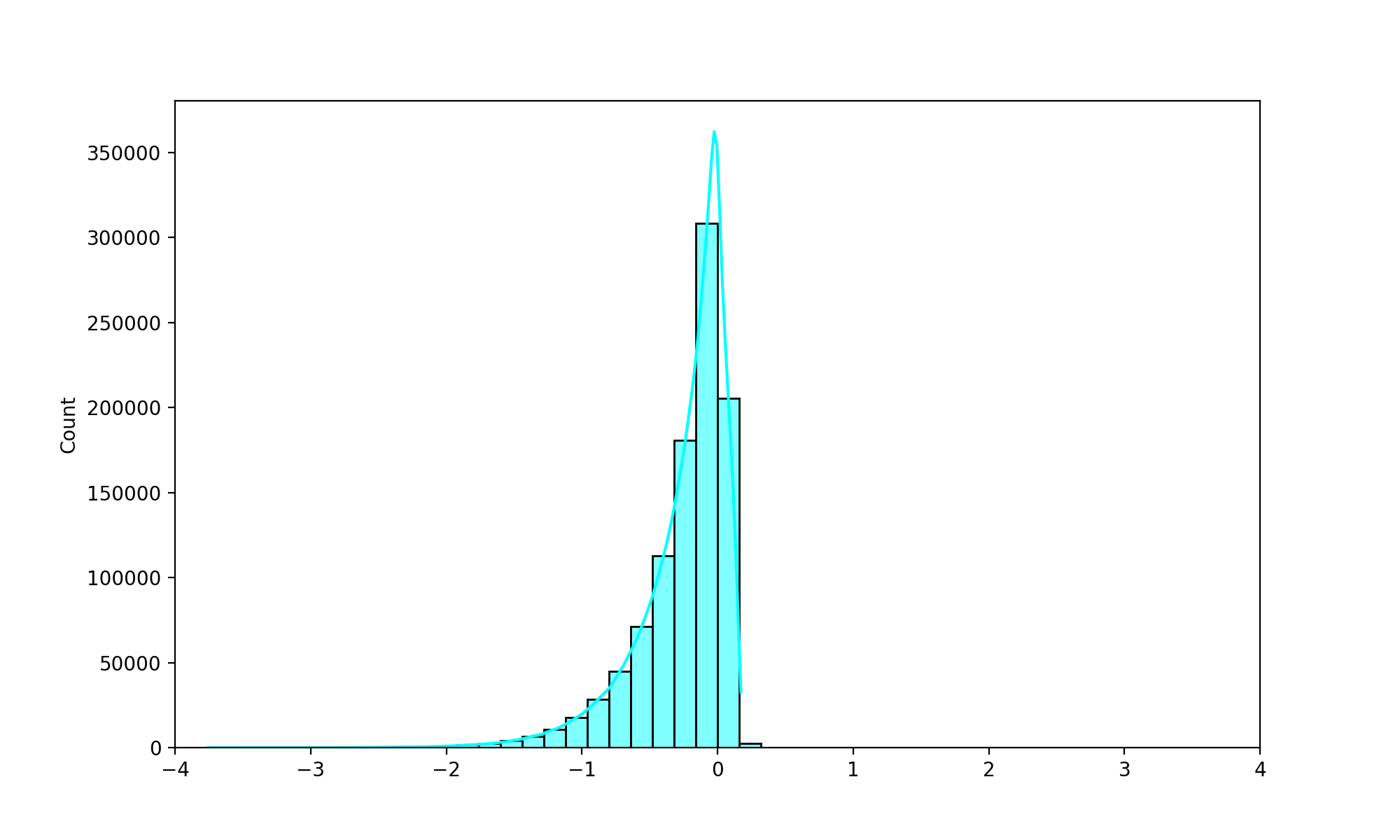}
    \caption{4 Layers}
  \end{subfigure}
  \begin{subfigure}{0.33\linewidth}
    \centering
    \includegraphics[width=\linewidth]{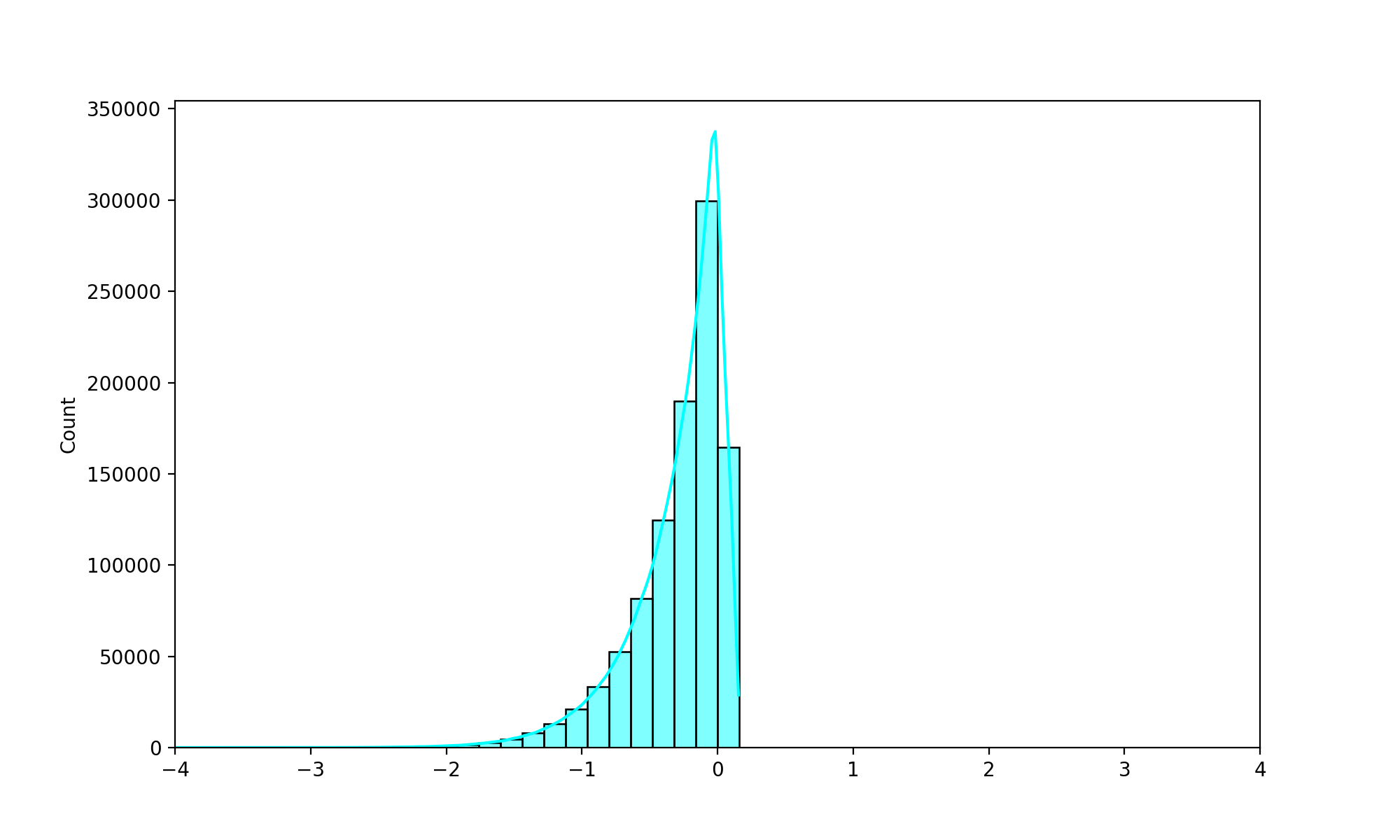}
    \caption{5 Layers}
  \end{subfigure}
  \begin{subfigure}{0.33\linewidth}
    \centering
    \includegraphics[width=\linewidth]{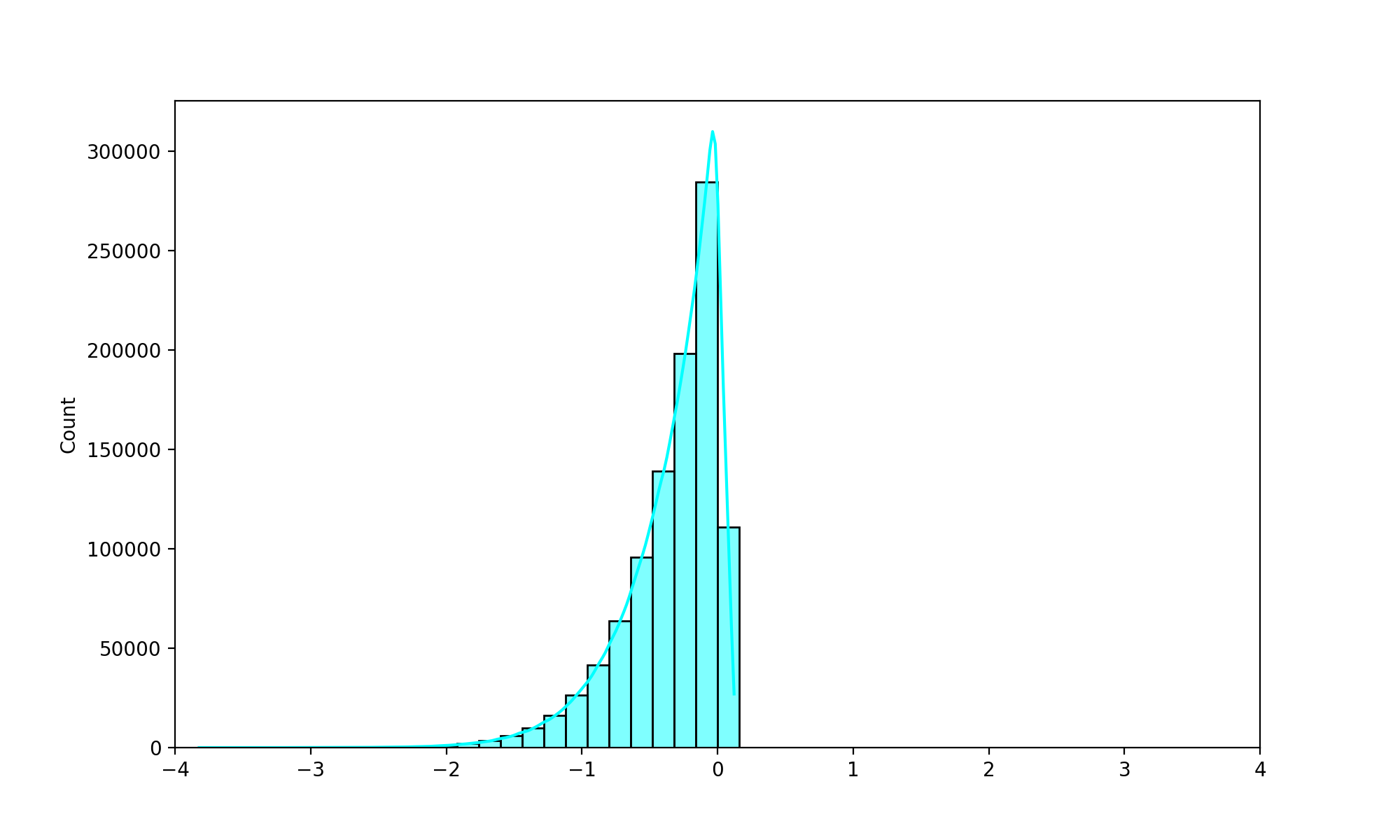}
    \caption{8 Layers}
  \end{subfigure}

  \caption{Visualization that Theorem~\ref{ExplicitLeakyReLUCLT} fails for $\tanh$ networks: Plot of $\frac{\log |X_{\ell}|}{\sqrt{\ell}}$ for $d = 2$ and $\ell = 1,2,3,4,5$ and $8$ for $X_{\ell}$ a $\tanh$ network initialized with Gaussian weights of mean zero and unit variance. We sampled 1'000'000 points and showed the histograms. The distribution never looks Gaussian.}
  \label{fig:CLT_failure_for_tanh}
\end{figure*}

\section{Calculations for He and orthogonal initialization}\label{sec:Heorthcalc}

In this section, we explain why for He and scaled orthogonal initialization, it holds that $\E[|X_{\ell}|^2] = \E[|X_{\ell-1}^2|]$, yet neither the mean nor the variance of $|X_{\ell}|$ are preserved. This gives an informal reason why He initialization and scaled orthogonal initialization lead to a zero Lyapunov exponent as $d\to \infty$. 

The main reason this holds is the following: If $Z$ is a symmetric one dimensional continuous random variable (that is $Z$ and $-Z$ have the same distribution), then we have for $\phi(x) = \max(x,\alpha x)$ with $\alpha \in \R_{\neq 0}$ that $$\mathbb{E}[\phi(Z)^2] = \frac{1}{2}\mathbb{E}[Z^2|Z\geq 0] + \frac{1}{2}\mathbb{E}[\alpha^2 Z^2|Z\leq 0] = \frac{1 + \alpha^2}{2}\mathbb{E}[Z^2].$$

Denote as in the rest of the paper by $X_{\ell}$ a neural network as in \eqref{ProbNNDef}.

\paragraph{He initialization.}  Assume now that the matrix coefficients $W_{\ell}^{ij}$ are all independent and identically distributed as a fixed symmetric continuous distribution of mean zero and variance $\sigma^2$ for some $\sigma > 0$. We fix for simplicity $x_0 \in \R^d \backslash \{0\}$ with $X_0 = x_0$ and write $$s_{\ell} = \mathbb{E}[|X_{\ell}^j|^2].$$ Then it follows that $\sum_j W_{\ell}^{ij}X^j_{\ell - 1}$ is symmetric because the $W_{\ell}^{ij}$ are symmetric and therefore we conclude by using linearity of expectations and independence of the $W_{ij}$ that $$s_{\ell} = \frac{1 + \alpha^2}{2}\E\left[\sum_{j = 1}^d(W_{\ell}^{ij}X^j_{\ell - 1})^2\right] = \frac{1 + \alpha^2}{2}d\sigma^2 s_{\ell -1}.$$ Thus we conclude  that $$s_{\ell} = \left( \sigma^2 \frac{d(1 + \alpha^2)}{2} \right)^{\ell}.$$

For He initialization, we aim to have $s_{\ell} = 1$ for all $\ell$, which is equivalent to $\sigma^2 \frac{d(1 + \alpha^2)}{2} = 1$ or $\sigma = \sqrt{\frac{2}{d(1 + \alpha^2)}},$ leading to \eqref{DefSigmaHe} from He initialization as discussed previously.

\paragraph{Orthogonal Initialization.} An analogous argument also applies in the orthogonal case. Indeed, assume now that $W^{ij}$ is distributed as $m_{\eta\cdot\mathrm{O}(d)}$. If $O\in \mathrm{O}(d)$ is distributed as $m_{\mathrm{O}(d)}$, then the distribution of each coordinate of $Ox_0$ is symmetric. Therefore it holds that $$\E[|X_{\ell}|^2] = \E[|\phi(\eta O X_{\ell - 1})|^2] = \eta^2 \frac{1 + \alpha^2}{2} \E[|OX_{\ell - 1}|^2] = \eta^2 \frac{1 + \alpha^2}{2} \E[|X_{\ell - 1}|^2]. $$ Thus, for $\E[|X_{\ell}|^2]$ to be preserved we need to have 
$\eta =\sqrt{\frac{2}{1 + \alpha^2}}$.






\section{Experimental Details: MNIST} \label{sec:app:exp-details:MNIST}                                      
                  
  \paragraph{Dataset.}
  We train on the MNIST handwritten digit dataset, consisting of 60,000 training images of size $28 \times 28$ pixels. We flatten each image to a
  784-dimensional vector and normalize pixel values to $[0,1]$. The standard 10,000-sample test set is partitioned into a validation set of 5,000 samples   
  and a held-out test set of 5,000 samples using a fixed random permutation (seed 42); this split is used consistently across all methods.
                                                              
  \paragraph{Architecture and Input/Output Layer.}
  We use a deep network of depth 100 and hidden dimension $d = 10$, with Leaky ReLU activations with slope $\alpha = 0.1$ and all biases initialized to
  zero. Since the hidden dimension equals the number of classes, no separate output head is required; the final layer maps directly from $\mathbb{R}^{10}$  
  to $\mathbb{R}^{10}$ logits. The input layer converts from 784 dimensions to $d = 10$ dimensions. The initialization of the first weights has vanishing significance compared to the initialization of the $10 \times 10 = 100$ weights in each of the 98 hidden layers. We initialize the first layer with standard He initialization in the Gaussian cases and scaled orthogonal initialization in the orthogonal cases. 

  \paragraph{Learning Rate Decay.}
  We employ cosine learning rate decay from an initial learning rate $\mathrm{lr}_\mathrm{init}$ to a fixed final learning rate $\mathrm{lr}_\mathrm{final}
  = 10^{-5}$. At step $i$ out of $N$ total training steps, the learning rate is                                                                             
  $$
  \mathrm{lr}(i) = \mathrm{lr}_\mathrm{final} + \bigl(\mathrm{lr}_\mathrm{init} - \mathrm{lr}_\mathrm{final}\bigr) \cdot \frac{1 + \cos\!\left(\pi\, i /    
  N\right)}{2}.                                                                                                                                             
  $$
  We use the AdamW optimizer with weight decay $\lambda$, batch size 256, and train for 200 epochs.            
  
\paragraph{Hyperparameter Tuning.}
  We tune over initial learning rates $\mathrm{lr}_\mathrm{init} \in \{3 \times 10^{-4},\, 10^{-3},\, 3 \times 10^{-3}\}$ and weight decay coefficients     
  $\lambda \in \{10^{-4},\, 10^{-3}\}$, yielding six combinations per method. Each combination is evaluated over 100 random seeds. The best combination is   
  selected by mean final validation accuracy. 
  
  The training time for a single seed and hyperparameter on an A100 GPU was around 10 minutes. 

  \begin{table}[h]
  \centering                                             \caption{Best hyperparameters for each initialization method on the MNIST classification experiment, selected by mean final validation accuracy over 100   
  seeds. Batch size 256 and final learning rate $\mathrm{lr}_\mathrm{final} = 10^{-5}$ are fixed across all methods.}                                       
  \label{tab:best-hyperparams:mnist}                                                                                            
  \begin{tabular}{lcccc}
  \toprule                                                                                                                                                  
  Method & Scale & $\mathrm{lr}_\mathrm{init}$ & $\lambda$ \\
  \midrule
  Glorot--Bengio          & $\sigma = 0.316$  & $10^{-3}$          & $10^{-3}$ \\                                                                           
  He                      & $\sigma = 0.445$  & $3 \times 10^{-4}$ & $10^{-3}$ \\
  Basic Orthogonal        & $\eta = 1.408$    & $3 \times 10^{-4}$ & $10^{-3}$ \\                                                                           
  Lyapunov Gaussian       & $\sigma_{\mathrm{crit}} = 0.514$ & $3 \times 10^{-4}$ & $10^{-4}$ \\                                                                          
  Lyapunov Orthogonal     & $\eta_{\mathrm{crit}} = 1.5442$ & $3 \times 10^{-4}$ & $10^{-3}$ \\                                                                           
  Sampled Lyapunov Gaussian & $\sigma_{\mathrm{crit}} = 0.514$ & $3 \times 10^{-4}$ & $10^{-3}$ \\                                                                        
  Sampled Lyapunov Orthogonal & $\eta_{\mathrm{crit}} = 1.5442$ & $3 \times 10^{-4}$ & $10^{-4}$ \\                                                                       
  \bottomrule                                                                                                                                               
  \end{tabular}                                                                                                                                                                                                                                                                 
  \end{table}      

 \paragraph{Model Selection.}
  We select the best hyperparameter choice based on the mean final validation accuracy across seed values. The mean test accuracy over 100 seeds is shown in Figure~\ref{fig:exp:mnist100:losses} and we show in Figure~\ref{MNISTHists} the histograms over all 100 seeds giving precise statistics on how our methods outperform the previous methods. In fact, our results are robust under the choice of hyperparameters.

\begin{figure}[htbp]

\centering

\includegraphics[width=0.22\textwidth]{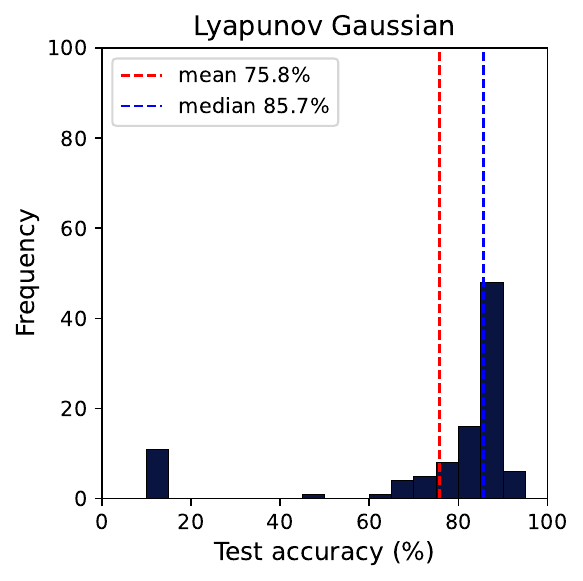}\hfill
\includegraphics[width=0.22\textwidth]{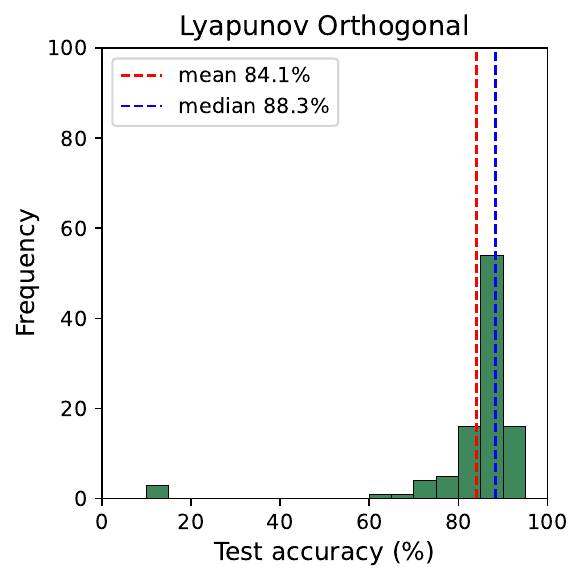}\hfill
\includegraphics[width=0.22\textwidth]{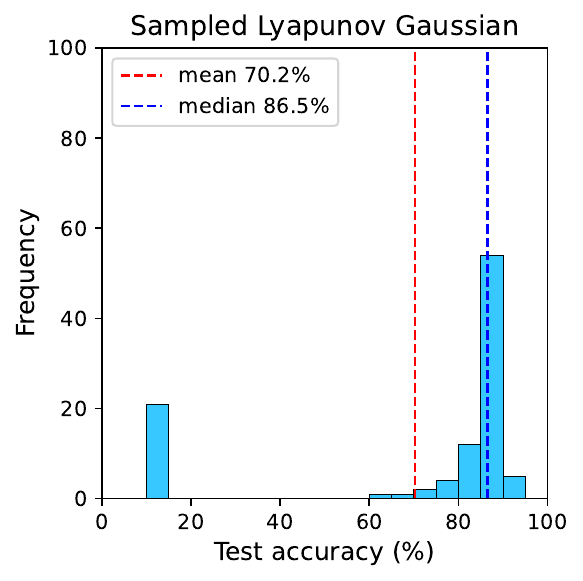}\hfill
\includegraphics[width=0.22\textwidth]{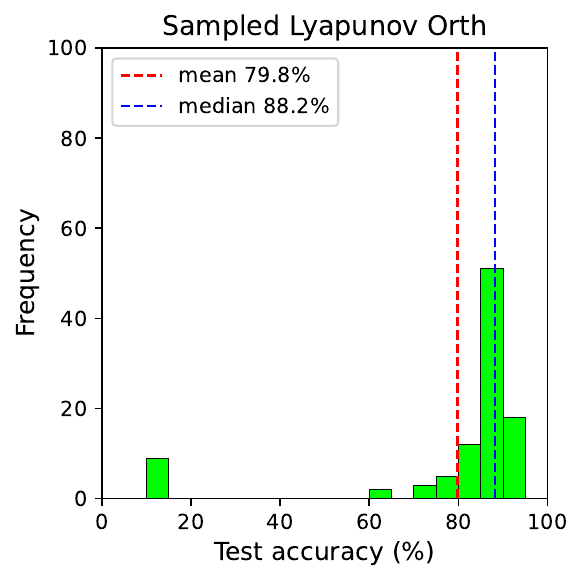}
\vspace{0.5cm}

\makebox[\textwidth][c]{%

  \includegraphics[width=0.22\textwidth]{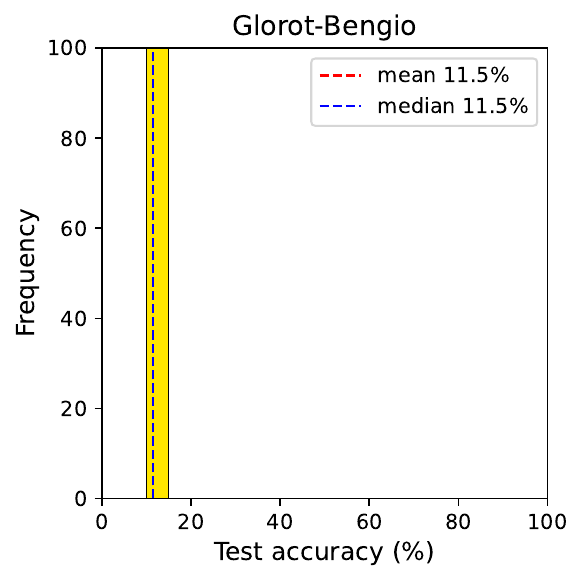}\hspace{0.04\textwidth}%

  \includegraphics[width=0.22\textwidth]{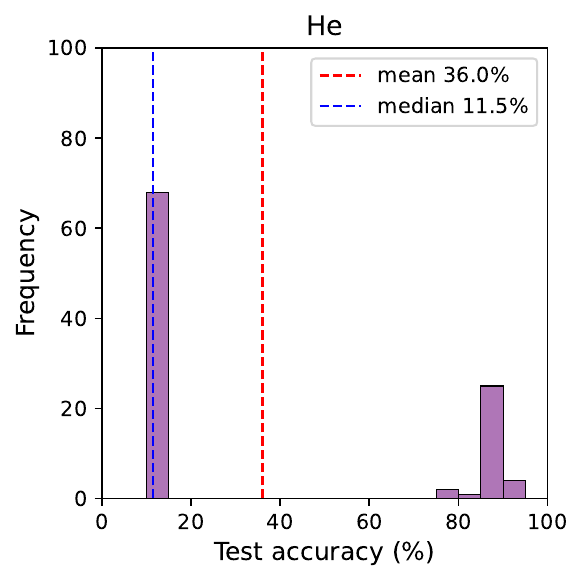}\hspace{0.04\textwidth}%

  \includegraphics[width=0.22\textwidth]{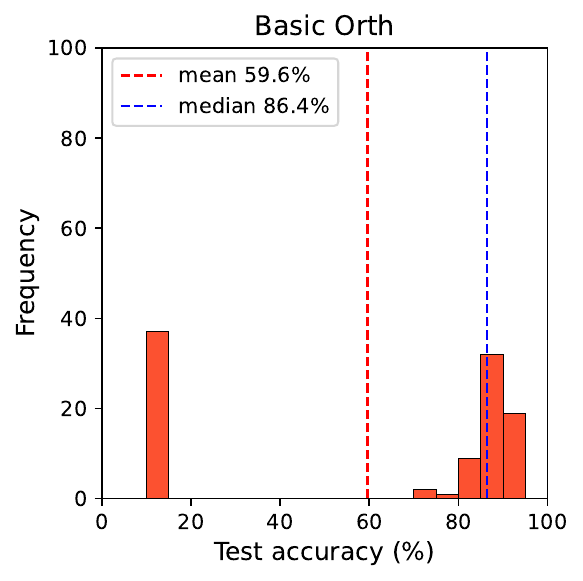}%

}

\caption{Histograms for the MNIST experiments at epoch 200. For each method, we chose the best hyperparameter and show 100 seeds.}
\label{MNISTHists}
\end{figure}

\section{Experimental Details: Polynomial} \label{sec:app:exp-details:poly}

\begin{figure}[b]

\centering

\includegraphics[width=0.22\textwidth]{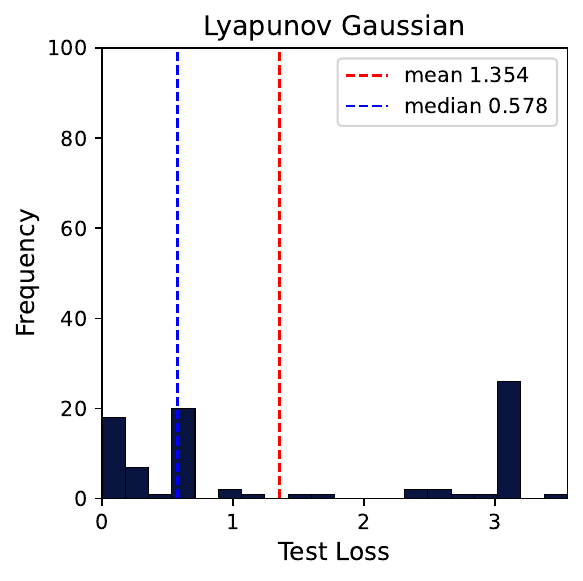}\hfill
\includegraphics[width=0.22\textwidth]{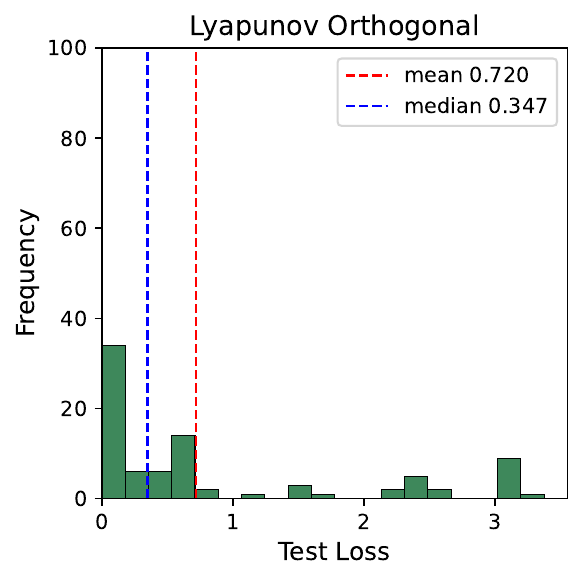}\hfill
\includegraphics[width=0.22\textwidth]{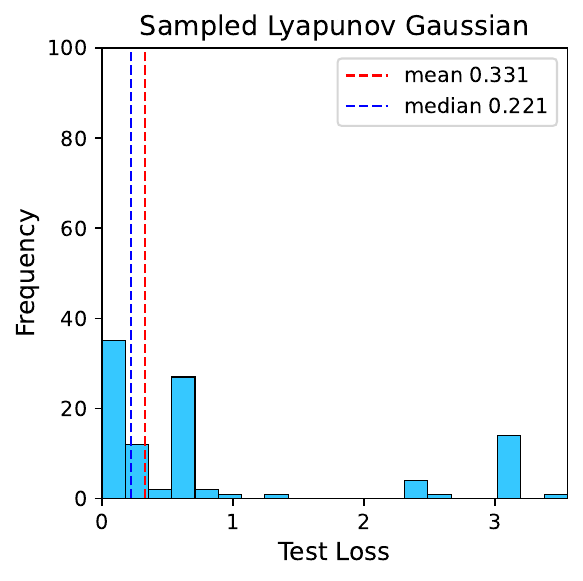}\hfill
\includegraphics[width=0.22\textwidth]{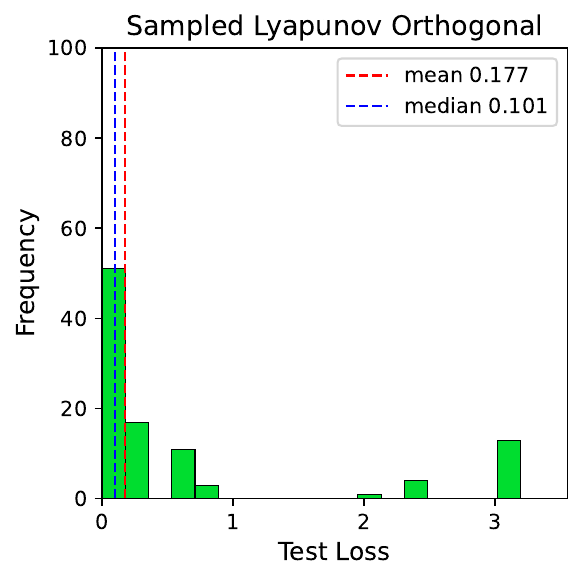}
\vspace{0.5cm}

\makebox[\textwidth][c]{%

  \includegraphics[width=0.22\textwidth]{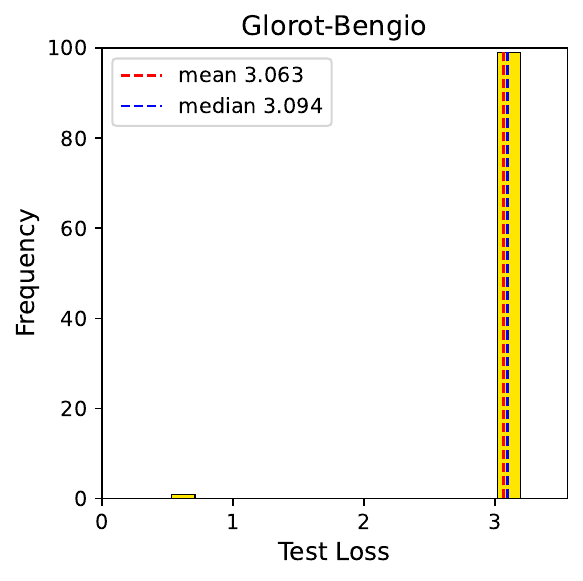}\hspace{0.04\textwidth}%

  \includegraphics[width=0.22\textwidth]{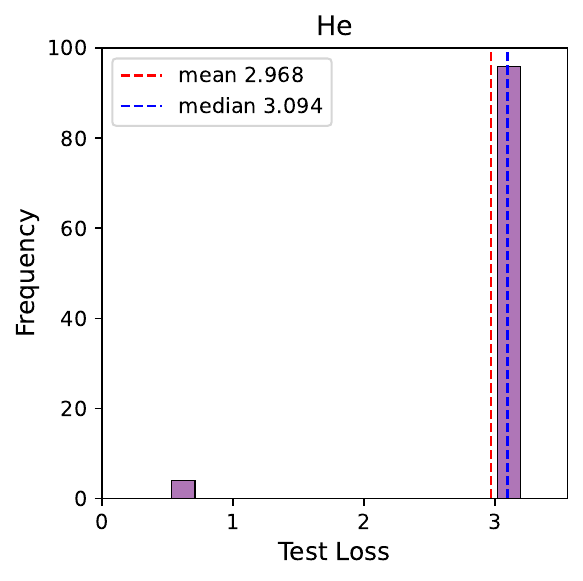}\hspace{0.04\textwidth}%

  \includegraphics[width=0.22\textwidth]{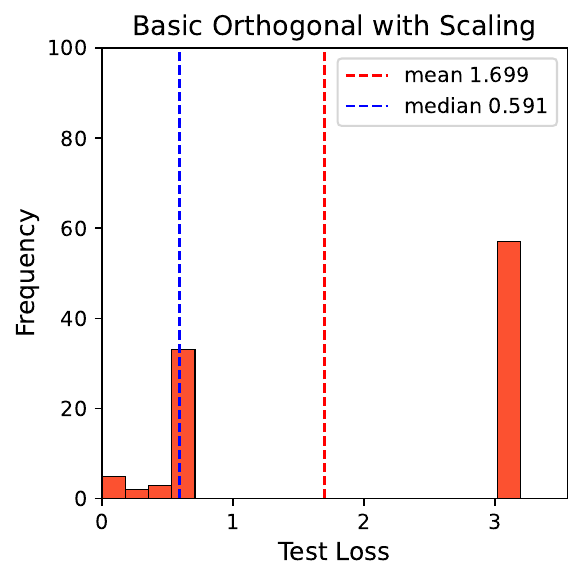}%

}

\caption{Histograms for the polynomial experiments at training step 10'000. For each method, we chose the best hyperparameter according to the performance for the top 80 seeds. In this plot we show the histograms for all 100 seeds.}
\label{PolynomialHists}
\end{figure}

\paragraph{Dataset.}
  We aim to learn the polynomial $p(x) = x^5 + x^2 - x$ on the interval $[-1.5, 1.5]$. At each training step, we randomly sample batch size many datapoints $x_i \in [-1.5,1.5]$ and use $(x_i,p(x_i))$ to calculate the loss. For the validation and test set, we have sampled 10'000 datapoints in $[-1.5,1.5]$ for each.

\paragraph{Input/Output layer.} We add, in addition to the deep neural network, one input layer and one output layer. We do this for the following reason:
For polynomial learning, we use a deep network of depth $60$ and with $d=2$. Since the polynomial is a $1d$ function, we add an input and output layer converting from $1d$ to $2d$ and vice versa. The initialization of these four weights has vanishing significance compared to the initialization of the $240$ weights in the $60$ layers in between, but we initialize them with the Lyapunov Gaussian / He Gaussian  / Glorot uniform distribution for consistency.

\begin{figure}
 \centering
    \begin{subfigure}[t]{0.48\linewidth}
        \centering
        \includegraphics[width=\linewidth]{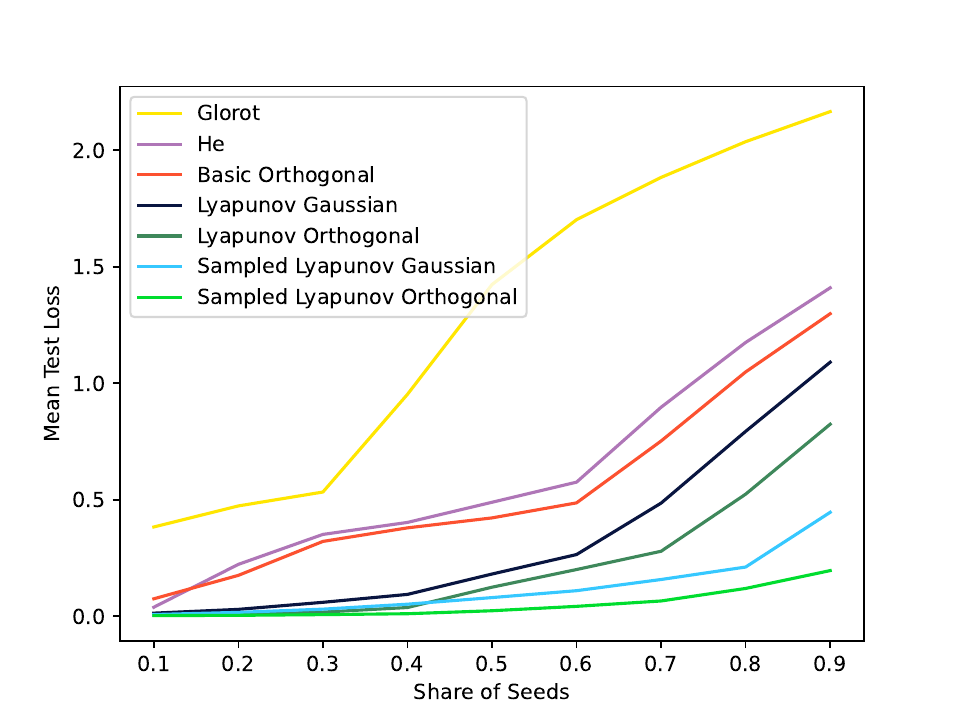}
        \caption{Polynomial learning}
        \label{fig:app:avg-test-loss-per-share-to-keep:poly}
    \end{subfigure}
    \hfill
    \begin{subfigure}[t]{0.48\linewidth}
        \centering
        \includegraphics[width=\linewidth]{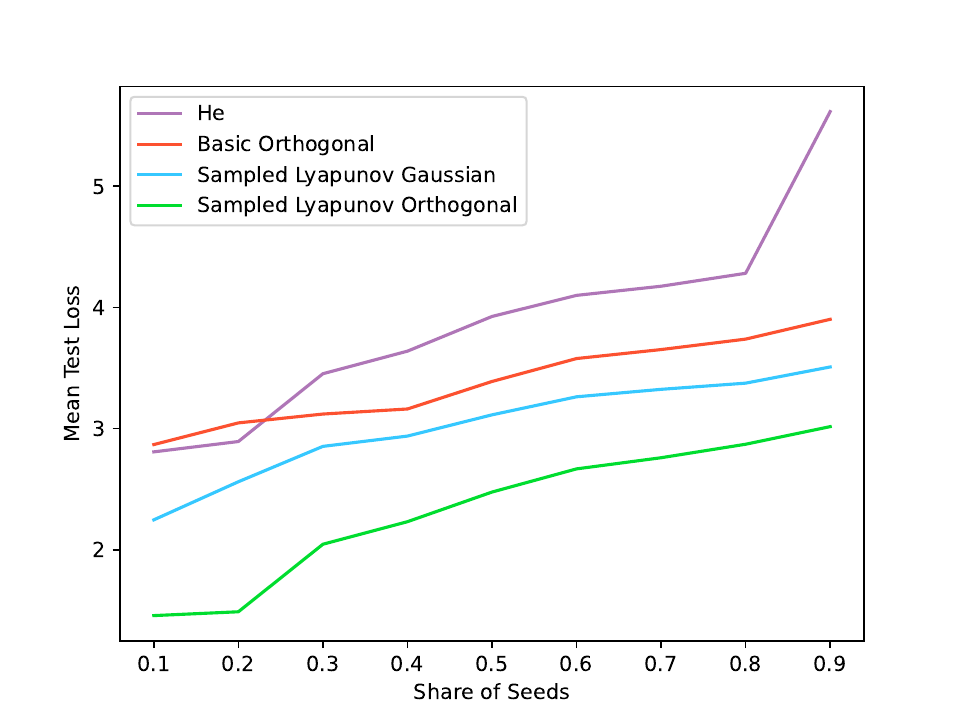}
        \caption{Score learning}
        \label{fig:app:avg-test-loss-per-share-to-keep:score}
    \end{subfigure}
    \caption{Robustness to choice of share of seed values to consider when learning the polynomial and the score: For robustness to outliers, we consider for every initialization method only the best $0 \ll p<1$ samples.}
    \label{fig:app:avg-test-loss-per-share-to-keep}
\end{figure}

\paragraph{Learning Rate Decay.}
We employ learning rate decay from an initial learning rate to a final learning rate, both of which are hyperparameters for which we consider different values during hyperparameter tuning. At step $i$, with $N$ total learning steps, the learning rate is
$$ \mathrm{lr}_\mathrm{init}- \left(\mathrm{lr}_\mathrm{init} - \mathrm{lr}_\mathrm{final} \right)\left(\frac{i}{N}\right)^2.$$

\paragraph{Hyperparameter tuning.}
 We tune over initial learning rates $\mathrm{lr}_\mathrm{init} \in \{3 \times 10^{-4},\, 10^{-3}\}$, final learning rates $\mathrm{lr}_\mathrm{final} \in \{5 \times 10^{-5},\, 10^{-4}, 10^{-3} \}$ and batch sizes    
  $\{ 50,100,500,1000 \}$, yielding 24 combinations per method.

The training time for a single seed and hyperparameter on an A100 GPU was around 1 minute. 

\paragraph{Moving average.}

When reporting loss curves, we apply a moving average (median) for stability. We use window size 100 for polynomial learning. 

\begin{table}[t]
\centering
\caption{Training hyperparameters and average test loss across methods for polynomial learning.}
\label{tab:best-hyperparams:poly}
\begin{tabular}{lcccc}
\toprule
Method & $\mathrm{lr}_{\mathrm{init}}$ & $\mathrm{lr}_{\mathrm{final}}$ & Batch Size & Avg Test Loss \\ \midrule
Glorot & 1.00e-04 & 1.00e-04 & 1000 & 2.05 \\
He & 1.00e-04 & 1.00e-04 & 500 & 1.20 \\
Basic Orthogonal & 1.00e-04 & 1.00e-04 & 1000 & 1.07 \\
Lyapunov Gaussian & 1.00e-04 & 1.00e-04 & 1000 & 0.83 \\
Lyapunov Orthogonal & 1.00e-03 & 1.00e-03 & 500 & 0.55 \\
Sampled Lyapunov Gaussian & 1.00e-03 & 1.00e-04 & 1000 & 0.22 \\
Sampled Lyapunov Orthogonal & 1.00e-03 & 1.00e-03 & 1000 & 0.12 \\
\bottomrule
\end{tabular}
\end{table}

\paragraph{Model selection.}

We select the best hyperparameter choice based on the mean of the best performing $p=80\%$ of samples measured in terms of test loss. We choose $p=80\%$ to be robust to outliers with large loss caused by the Central Limit Theorem. However, our results are robust to the choice of $p$, as we depict in \Cref{fig:app:avg-test-loss-per-share-to-keep} for both the polynomial and the score learning case and as long as we don't use $p \approx 1$.

We show in Figure~\ref{fig:polynomiallosses} the median losses for the best hyperparameter and in Figure~\ref{PolynomialHists} the histograms of the losses for all seeds at the final training step, showing that our new methods strongly outperform the previous methods. As for MNIST, our experimental results are rather robust under the choice of hyperparameters.

\section{Experiment: Score}

As a third experiment, we learn the score of a distribution. This is well-motivated as learning the score of a probability distribution is \emph{the} learning task in diffusion models  \citep{sohl-dicksteinDeepUnsupervisedLearning2015, songScoreBasedGenerativeModeling2021, hoDenoisingDiffusionProbabilistic2020}. The quantity of interest is the score $\nabla \ln p$ where $p$ is typically a mixture of Gaussians. 

We use exactly the same experimental setup as for polynomial learning, so we will be brief in our explanations. 

In order to emphasize the early advantage of our sampled methods, we show the first 1,000 training steps as well as the 130,000 training steps. In addition, we only show He, basic orthogonal and the sampled Lyapunov methods as they are the best methods from our previous two experiments.

\begin{table}[t]
\centering
\caption{Training hyperparameters and average test loss across methods for score learning.}
\label{tab:best-hyperparams:score}
\begin{tabular}{lcccc}
\toprule
Method & $\mathrm{lr}_{\mathrm{init}}$ & $\mathrm{lr}_{\mathrm{final}}$ & $\mathrm{Batch Size}$ & Avg Test Loss \\ \midrule
He & 1.00e-03 & 1.00e-04 & 1600 & 4.88 \\
Scaled Orthogonal & 1.00e-03 & 1.00e-04 & 1600 & 3.82 \\
Sampled Lyapunov Gaussian & 1.00e-02 & 1.00e-04 & 400 & 3.42 \\
Sampled Lyapunov Orthogonal & 1.00e-02 & 1.00e-04 & 1600 & 2.96 \\
\bottomrule
\end{tabular}
\end{table} 

\paragraph{Model Setup.} 

In our experiment we learn the score of a mixture of three Gaussians on the 2d interval $[-8,+8]^2$. We consider a mixture of three Gaussians in $2d$ with means 
\[
\mu_1 = \begin{pmatrix} -3 \\ 3 \end{pmatrix},\quad
\mu_2 = \begin{pmatrix} 3 \\ -3 \end{pmatrix},\quad
\mu_3 = \begin{pmatrix} 0 \\ 0 \end{pmatrix},
\]
and covariances
\[
\Sigma_1 = \begin{pmatrix}
1 & 0 \\
0 & 1
\end{pmatrix},\quad
\Sigma_2 = \begin{pmatrix}
2 & 1 \\
1 & 2
\end{pmatrix},\quad
\Sigma_3 = \begin{pmatrix}
0.5 & 0 \\
0 & 0.5
\end{pmatrix}.
\]
The mixture's density at $x \in \R^2$ is given by 
$$ p_{mixture}(x) = 0.4 p_1(x)+0.4 p_2(x) + 0.2 p_3(x),$$
where $p_i(x)$ denotes the density of the Gaussian with mean $\mu_i$ and covariance $\Sigma_i$.

We learn the score with a network of width $d=2$, and depth $30$, which is a challenging task and therefore many learning steps are necessary.

\paragraph{Hyperparameter tuning.} We use the same learning rate decay as for the polynomial experiment. We tune over initial learning rates $\mathrm{lr}_\mathrm{init} \in \{ 10^{-2},\, 10^{-3}\}$, final learning rates $\mathrm{lr}_\mathrm{final} \in \{10^{-3},\, 10^{-4}\}$ and batch sizes    $\{ 1600,400 \}$, yielding 8 combinations per method. The best hyperparameters with respect to the final training step are shown in Table~\ref{tab:best-hyperparams:score}. 

The training time for a single seed and hyperparameter on an A100 GPU was around 10 minutes. 

\paragraph{Moving average.}

When reporting loss curves, we apply a moving average of 10 for the score test losses until training step 1,000 in Figure~\ref{fig:exp:score:losses} and 1000 for the score test losses up to training step 130,000.

\paragraph{Score Results: Short Training.}  We depict the short time scale ($1,000$ learning steps) in \Cref{fig:exp:score:losses}, and observe that the losses of both sampled Lyapunov methods decrease faster than with existing initialization methods. Consistently with the MNIST and polynomial experiments, Sampled Lyapunov Orthogonal is the fastest learning initialization.

\paragraph{Score Results: Long training.}
\label{sec:app:score-long}

We depict in \Cref{fig:loss-score-long} the behavior of the loss for more training steps. While we stress that learning a very deep network is an inherently hard task and further improvements could be made in other areas, such as optimization algorithms, we observe that Sampled Lyapunov Orthogonal initialization behaves best for all scales. Interestingly, Orthogonal initialization overtakes after a few thousand training steps even Sampled Lyapunov Gaussian initialization, yet after longer scales Sampled Lyapunov Gaussian initialization outperforms Orthogonal initialization. This is somewhat consistent with the established results that orthogonal initialization performs better than Gaussian initialization for deep networks \cite{SaxeOrthogonal,Huorthogonal}.

\begin{figure}
    \centering
    \includegraphics[width=0.6\linewidth]{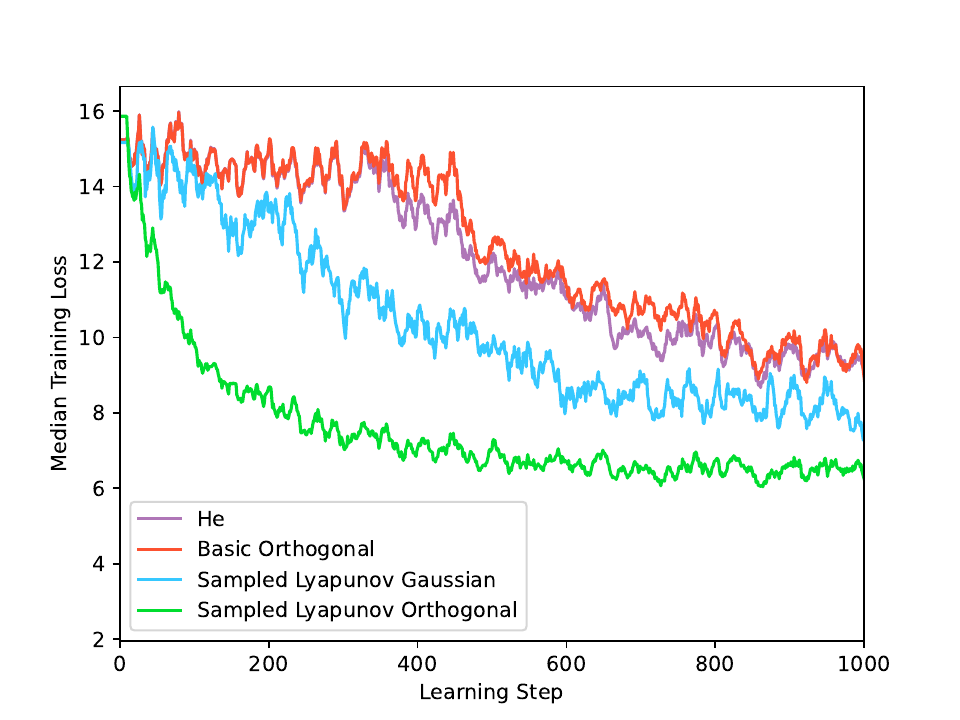}
    \caption{Improved initial phase for learning a score: Both Sampled Lyapunov Gaussian (blue) and Sampled Lyapunov Orthogonal (green) show lower losses in the first $1,000$ steps while learning the score of a Gaussian mixture.}
    \label{fig:exp:score:losses}
\end{figure}

\begin{figure}[h!]
\centering
\begin{minipage}{0.69\textwidth}
    \centering    
    \includegraphics[width=\linewidth]{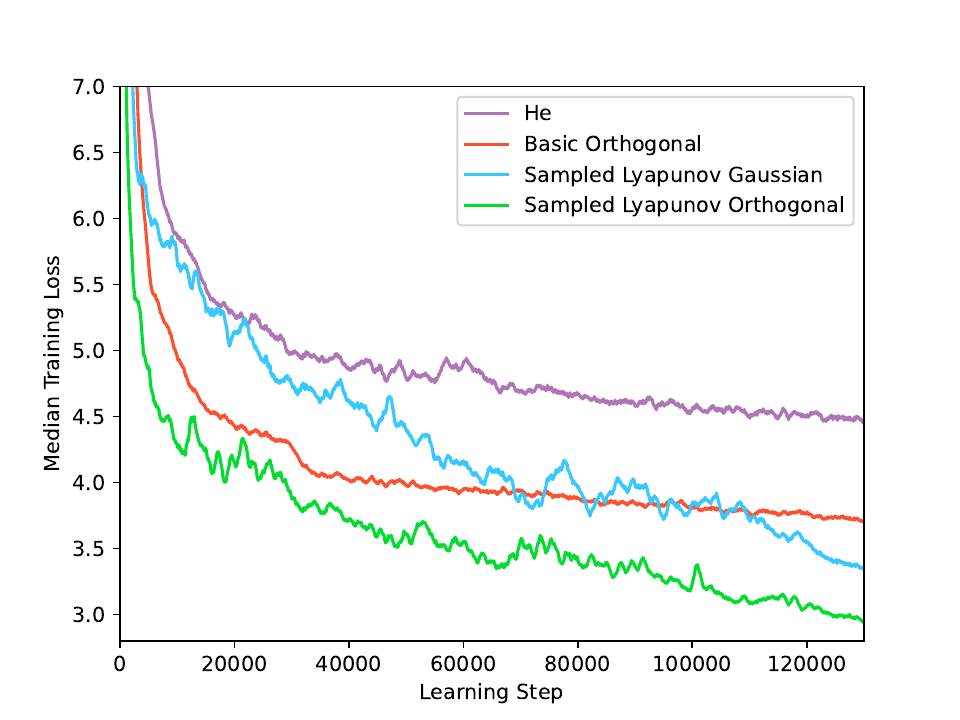}
    \end{minipage}\\
    \begin{minipage}{0.69\textwidth}
    \centering
    \scriptsize
    \begin{tabular}{lcccccc}
\toprule
Method \quad\quad\quad\quad\quad\quad\quad\quad Epoch: & 100 & 1000 & 20000 & 60000 & 100000 & 120000 \\ 
\midrule
He & 12.14 & 12.14 & 5.27 & 4.90 & 4.56 & 4.51 \\
Orthogonal & 12.45 & 12.44 & 4.43 & 3.95 & 3.81 & 3.77 \\
\textbf{Sampled Lyapunov Gaussian} & 10.35 & 10.34 & 5.14 & 4.16 & 3.84 & 3.55 \\
\textbf{Sampled Lyapunov Orthogonal} & 7.56 & 7.55 & 4.15 & 3.55 & 3.24 & 3.06 \\
\bottomrule
\end{tabular}
\end{minipage}
\caption{Long training of the depth-30 width-2 neural network for learning the score: We observe that Sampled Lyapunov Orthogonal behaves best.}
\label{fig:loss-score-long}
\end{figure}

\newpage

\clearpage

\end{document}